\RequirePackage{fix-cm}
\documentclass{article}

\usepackage{iclr2027_conference,times}
\iclrfinalcopy %

\usepackage{booktabs}
\usepackage{longtable}
\usepackage{amssymb}
\usepackage{pifont}
\usepackage{amsmath,amssymb}
\usepackage[dvipsnames,table]{xcolor}

\usepackage{tikz}
\usetikzlibrary{arrows.meta,calc,positioning,decorations.pathreplacing}
\usepackage[utf8]{inputenc}
\usepackage[T1]{fontenc}
\usepackage[colorlinks=true,
            linkcolor=purple,
            citecolor=blue,
            urlcolor=BrickRed]{hyperref}
\usepackage{url}
\usepackage{booktabs}
\usepackage{pifont}
\usepackage{colortbl}
\usepackage{makecell}
\usepackage{graphicx}
\usepackage{xspace}

\definecolor{headerblue}{RGB}{218,235,247}
\definecolor{headerpurple}{RGB}{229,224,244}
\definecolor{headeryellow}{RGB}{252,239,205}
\definecolor{headerorange}{RGB}{248,224,213}
\definecolor{headergreen}{RGB}{220,238,225}
\definecolor{lightgray}{RGB}{247,247,247}
\definecolor{edisgreen}{RGB}{229,242,233}
\usepackage{booktabs}
\usepackage{tabularx}
\usepackage{array}

\definecolor{headerblue}{HTML}{DCEAF7}
\definecolor{headerpeach}{HTML}{F8E1D6}
\definecolor{sectionblue}{HTML}{EDF5FB}
\definecolor{sectiongreen}{HTML}{EDF7F0}
\definecolor{sectionpeach}{HTML}{FCF1EB}
\definecolor{sectiongray}{HTML}{F3F4F5}

\usepackage{graphicx}
\usepackage{booktabs}
\usepackage{multirow}
\usepackage{colortbl}
\usepackage{colortbl}
\usepackage{rotating}
\usepackage{array}
\usepackage{amsmath}
\usepackage{amssymb}
\usepackage{amsfonts}
\usepackage{amsthm}
\usepackage[ruled,vlined,linesnumbered]{algorithm2e}
\usepackage{float}
\usepackage{wrapfig}
\usepackage{needspace}
\usepackage[section]{placeins}
\usepackage{booktabs}
\usepackage{multirow}
\usepackage{graphicx}
\usepackage{adjustbox}
\usepackage{soul}
\usepackage{enumitem}
\usepackage[most]{tcolorbox}
\newtcolorbox{keyquestion}{
  colback=sectiongray,
  colframe=black,
  boxrule=0.8pt,
  left=6pt, right=6pt, top=2pt, bottom=2pt,
  arc=3pt,
  before skip=2pt, after skip=2pt
}
\theoremstyle{plain}
\newtheorem{theorem}{Theorem}

\newtheorem{proposition}[theorem]{Proposition}
\newtheorem{corollary}[theorem]{Corollary}

\newtheorem{fact}{Fact}
\theoremstyle{definition}

\newtheorem{definition}{Definition}

\newcommand{\edis}{\texttt{EDiS}\xspace}

\title{EDiS: Edge Disjoint Subgraph Sparsification Framework for Graph Neural Networks}

\author{\parbox{\dimexpr\textwidth-2\tabcolsep\relax}{\raggedright
Sai Karthik Navuluru$^{1}$, Siddhartha Shankar Das$^{2}$, Franck Dernoncourt$^{3}$,
S M Ferdous$^{4}$, Ryan A.\ Rossi$^{3}$, Nesreen K.\ Ahmed$^{5}$,
Baris Coskunuzer$^{1}$, Alex Pothen$^{6}$, Lakshman Tamil$^{1}$,
Mahantesh M Halappanavar$^{2}$\\[6pt]
\normalfont\small
$^{1}$University of Texas at Dallas \quad
$^{2}$Pacific Northwest National Laboratory \quad
$^{3}$Adobe Research\\
$^{4}$University of North Carolina at Charlotte \quad
$^{5}$Cisco \quad
$^{6}$Purdue University}}

\usepackage[font=small]{caption}
\usepackage{etoolbox}
\makeatletter
\patchcmd{\l@section}{\addvspace{1.0em \@plus\p@}}{\addvspace{0.02em \@plus\p@}}{}{}
\makeatother

\begin{document}

\maketitle
\lhead{EDiS} %

\begin{abstract}
Sparse GNN training reduces computation, but deciding which edges to
keep can be costly. Reusing one sparse graph is cheap but locks
training to a fixed topology, while varying it across epochs can
require repeated sampling or recomputation. We introduce
\edis{} (\textbf{E}dge-\textbf{Di}sjoint \textbf{S}ubgraph
sparsification framework), which separates \emph{one-time structural
extraction} from \emph{per-epoch graph composition}. \edis{} decomposes the
graph once into cacheable edge-disjoint subgraphs, then
recombines them into graphs with edge-budget constraints across epochs
and retention ratios without re-extracting structure.

Our default construction uses feature-based scores and successive
maximum-score covering forests, while the same composition mechanism also
supports alternative edge-selection rules. We provide a combinatorial
analysis of the per-epoch sampler, the composition step that draws a
training graph from the cached decomposition. We show that, under the
default covering-forest selector, the stored decomposition
deterministically preserves high-score cut edges, and we derive a
selector-agnostic conditional bound on high-score cut survival in
composed training graphs.
Across 19 homophilic, heterophilic, and large-scale node-classification
benchmarks against 17 baselines under the same edge-budget, EDiS
achieves the highest mean benchmark score (accuracy/ROC-AUC) and the
lowest average rank and gap-to-best among ranked methods. Ablations show the clearest benefits of structural decomposition and epoch variation at tight edge-budgets. 
\end{abstract}

\let\savedaddcontentsline\addcontentsline
\renewcommand{\addcontentsline}[3]{}

\section{Introduction}
\label{sec:introduction}

\begingroup
\setlength{\parskip}{4.5pt}
Graph neural networks (GNNs) propagate information along the edges of the graph \citep{Kipf:2017tc}, which determines the amount of information computed, exchanged, and stored at each epoch during training of the GNN.
Sparsifying the graph, by selectively removing edges, can therefore reduce the computational costs \citep{Chen:2021ugs}, but adds a different cost of deciding \emph{which} edges to keep (or remove). 
In some sampling-based methods, this cost dominates training itself. For example, on Reddit, a graph with 233K nodes and 11.6M edges, the sampling-based method AS-GCN \citep{Huang:2018asgcn} spends 20.7 seconds per epoch for sampling neighbors, versus 3.5 seconds for the forward and backward pass \citep{Zeng:2020graphsaint}. 
Thus, a sparsified graph does not guarantee economical training if the graph must be rebuilt frequently.

One option is to build a sparse graph once and reuse it every epoch to amortize the cost of sparsification. However, this also locks the GNN into the same set of edges for the entire training run \citep{Chen:2021ugs,TEDDY}.
Another option is to change the sparse graph over time using different approaches such as repeated subgraph sampling in GraphSAINT \citep{Zeng:2020graphsaint}, incrementally grown spanning subgraph in SpanGNN \citep{Gu:2024spangnn}, or computing a fresh random spanning tree at each epoch in GERN \citep{Bonchi:2025gern}. 
While these methods show the value of varying sparse views during training, each couples graph variation to a particular sampling or structural-update mechanism at a fixed sparsity level.
To the best of our knowledge, existing methods do not formulate the problem as a persistent, edge-disjoint structural library that is built once and efficiently recombined into different sparse graphs during training. This leads to an important question: 
\emph{Can we extract graph structure once and reuse it to cheaply generate many different training graphs with specific edge-budget constraints?}

\edis{} (\textbf{E}dge-\textbf{Di}sjoint \textbf{S}ubgraph sparsification framework) addresses this question by turning edge-disjointness from a one-off computational step into a \emph{cacheable training paradigm}. It decomposes the original graph into persistent and non-overlapping structural subgraphs only once.
It then repeatedly recombines them into training graphs with edge-budget constraints across epochs and retention ratios, without recomputing the decomposition.
In our default setup, subgraphs are successive connectivity-preserving sparse backbones, extracted from residual edges and cached before the GNN training begins.

While GNN architecture, parameters, and training objectives remain the same, the sparse message-passing graph changes across epochs. For example, a $30\%$ edge budget does not mean the model is stuck with the same $30\%$ of edges for the entire run, but that it processes $30\%$ of edges per epoch, while different epochs are exposed to different parts of the original graph (Figure~\ref{fig:intro-edis-motivation}).

\begin{wrapfigure}{r}{2.1in}
\vspace*{-.15in}
\centering
\includegraphics[width=\linewidth]{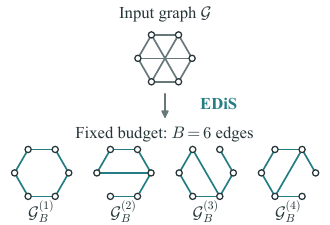}
\caption{\footnotesize \textbf{\edis{} generates sparse training graphs under a fixed edge budget.}}
\label{fig:intro-edis-motivation}
\vspace{-.15in}
\end{wrapfigure}

\edis{} has three main components: \emph{edge scoring}, \emph{structural extraction}, and \emph{budgeted composition}.
While the first two are a one-time computation, budgeted composition is repeated at every epoch with specific edge-budgets.
Edge scores rank connections, a residual selector groups them into cacheable subgraphs, and composition selects a training graph from those subgraphs. The subgraphs in extraction are disjoint, but edges can be repeated over time in the composed graphs ($\mathcal G_B^{(t)}$).

Our default algorithm pairs feature-based cosine scores with successive
maximum-score covering forests. The first forest provides a high-score, connectivity-preserving backbone of the original graph, and once these edges are removed, later forests span the connected components of the residual graph using previously unused edges.
During training, the stored subgraphs are sampled with score-dependent priority and combined until the target budget is reached, trimming excess edges and filling any shortfall from the residual pool (detailed in Section~\ref{sec:methodology}).

\edis{} is not tied to one scoring or selection function: edge scoring and residual selection functions are modular, so any combination shares the same edge-budget-constrained procedure (Section~\ref{sec:methodology}). Separating extraction from composition also clarifies what we can guarantee: for maximum-score residual forests, stored prefixes preserve score-threshold cut certificates, and since a single epoch's graph may omit or trim these forests, we separately characterize conditional edge retention and fixed-cut survival under composition for any residual selector.
Matched-budget ablations provide empirical evidence that structural grouping and epoch resampling contribute to predictive performance (Section~\ref{sec:ablations}).

We evaluate \edis{} against 17 baselines on 19 datasets spanning homophilic, heterophilic, and large-scale graphs, separating sparse-graph inference (\edis{}-Lite) from full-graph inference (\edis{}-Full), while isolating structural decomposition, epoch resampling, selector choice, and edge scoring through matched-edge-budget ablations.
At $30\%$ retention, \edis{}-Lite achieves the highest mean benchmark score among ranked methods (average rank $2.53$, gap $1.25$ points), while \edis{}-Full serves as a reference (Section~\ref{sec:experiments}).
Cacheable subgraphs and graph variation give the largest gains at tight edge-budgets, while fixed graphs or simpler sampling remain competitive elsewhere.

Our contributions are as follows.
\begin{itemize}[leftmargin=*, itemsep=2pt, topsep=2pt, parsep=0pt, partopsep=0pt]
\item \textbf{A cacheable, edge-budget-agnostic graph decomposition:}
\edis{} extracts structural subgraphs once, then recomposes them into
training graphs with edge-budget constraints across epochs and sparsity
levels, without rerunning extraction or changing the GNN architecture
or loss.

\item \textbf{A novel, selector-agnostic composition procedure:}
To the best of our knowledge, \edis{} is the first method to separate
extraction from composition: composition is independent of the
extraction methods that build the subgraphs (e.g., maximum spanning
forests or random selection).

\item \textbf{A combinatorial analysis of the sampler:}
For maximum-score residual forests, we prove score-aware cut
preservation for the stored decomposition; for any residual selector,
we characterize conditional retention and high-score cut
survival in the composed graph.

\item \textbf{A controlled empirical study of structural reuse:}
Matched-edge-budget experiments across 17 baselines and 19 datasets
test whether a cacheable structural decomposition and epoch variation
justify their construction cost.

\end{itemize}
\endgroup

\vspace{-.15in}

\section{Related Work}
\label{sec:RelatedWork}

Existing methods either decide which part of the graph to use every
step, or decide once and fix the graph beforehand via
sparsification. \edis{} sits between these two settings, deciding once
but still varying the graph every epoch. It builds a cacheable
edge-disjoint structural library once, then composes an
exact-budget graph at each epoch. This section relates
\edis{} to four lines of work.

\textbf{Sampling-based GNN training.} \quad
Neighbor sampling bounds neighbors per node
\citep{Hamilton:2017tp}. Layer-wise sampling instead scores whole node layers
\citep{Chen:2018fastgcn}, and LADIES conditions each layer
on the layer above it to stay connected
\citep{Zou:2019ladies}. Subgraph methods sample
subgraphs directly \citep{Chiang:2019clustergcn} or correct
the resulting bias \citep{Zeng:2020graphsaint}, and attribute-guided
sampling prioritizes edges by similarity and diversity \citep{Das:2024agsgnn}. All
of these resample the full topology at every step,
rather than keeping an altered one.
Other work avoids resampling by caching past embeddings
\citep{Fey:2021gnnautoscale}, quantizing or compressing
activations \citep{Ding:2021vqgnn,Liu:2022exact}, repartitioning
across workers \citep{Wan:2022bnsgcn}, or compensating for
missing cross-partition messages \citep{Shi:2023lmc}. \edis{} instead organizes the topology
once and reuses precomputed subgraphs under an explicit edge-budget,
rather than resampling every step.

\textbf{Graph sparsification.} \quad
A large family of methods target edge removal before training begins, which includes degree- and similarity-based heuristics
\citep{Hamann:2016sparsification,Satuluri:2011lsim,Liu:2023dspar},
spectral and distance-preserving sparsifiers
\citep{Spielman:2008effres,Peleg:1989spanners}, and cut
sparsifiers built by repeatedly extracting a maximum spanning forest
\citep{Benczur:1996cutsparsification,ForsterDeVos2023}. In contrast, some methods learn which edges to retain jointly with graph learning
\citep{Chen:2021ugs,TEDDY,Zhang:2024adaglt,SGS-GNN,Zhang:2025mog}. However, 
no single method effectively preserves multiple graph properties of interest well
\citep{Chen:2023sparsifybench}. \edis{} employs similar sparsification principles, such as degree-,
neighbor/similarity-, spanner-, and forest-based residual selection,
and its composition step treats them identically regardless of the
method used to build the library.

\textbf{Stochastic topology variation.} \quad
DropEdge \citep{Rong:2020dropedge} and dropout-induced subgraphs
\citep{Luo:2025dropoutgcn} vary the topology by removing
edges independently at random at each epoch. Other methods instead learn
edge predictors \citep{Chen:2025adedgedrop} or stabilize message
passing under a changing structure \citep{Lee:2025aggbuffer}. These
structures overlap substantially across epochs and carry no edge-budget or
structural guarantee. By contrast, \edis{}'s epoch-to-epoch variation
comes from a controlled, cacheable partition rather than independent
per-edge noise.

\textbf{Cacheable edge-disjoint decomposition.} \quad
A few methods train over a sequence of sparse structures. For example, SpanGNN
\citep{Gu:2024spangnn} trains on grown subgraphs and GERN
\citep{Bonchi:2025gern} trains on random spanning trees, but neither is
persistent. Prior edge-disjoint
methods use disjoint structures as sampled or sequential
objects, not as a persistent, budget-agnostic decomposition
recomposed into exact-budget graphs across epochs and retention
ratios. 
To the best of our knowledge, \edis{} is the first approach to decouple
one-time extraction from per-epoch composition, so it stays
cached and reused throughout training.
Appendix~\ref{app:related_work}
(Table~\ref{tab:relwork-comparison}) compares these methods by
budget, reusability, exposure, and disjointness. Appendix~\ref{app:gern_spangnn}
adds quantitative comparisons against SpanGNN and GERN, and discusses our
attempted AGS-GNN evaluation.

\section{EDiS: A Structured Sparsification Framework}
\label{sec:methodology}

Sparse GNN training poses a tradeoff between computational cost and graph coverage. 
While reusing a sparse graph built once for the entire training is cheap, it significantly limits learning by fixing the graph structure for every epoch. 
Thus, changing the sparse graph over time exposes the GNN to more of the original topology. However, rebuilding from scratch at every epoch can make sparsification as expensive as the training it was meant to speed up.

\edis{} addresses this tradeoff using two main components: \emph{Decomposition}, a one-time step that builds the cached structural library; and \emph{Composition}, a per-epoch step that composes a graph from the library. 
We extract structural information once, organize it into cacheable edge-disjoint subgraphs, and recombine them into an exact-budget graph each epoch. 
The GNN then processes only a fixed number of edges at a time, while seeing many structured subsets of the graph. The same GNN parameters are trained throughout, and only the sparse message-passing graph varies across epochs, illustrated in Figure~\ref{fig:edis_pipeline}.

\begin{figure}[t]
    \centering
    \includegraphics[width=\linewidth]{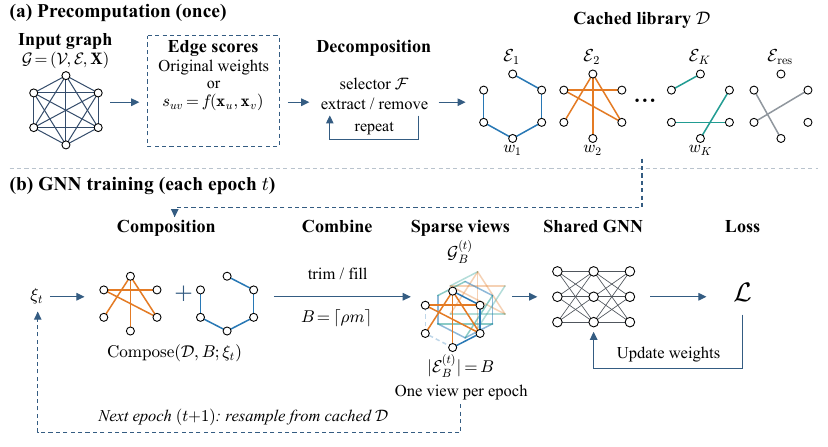}
    \caption{\textbf{Overview of the \edis{} pipeline.}}
    \label{fig:edis_pipeline}
\end{figure}

\subsection{One GNN, Many Sparse Views}
\label{subsec:edis_overview}

Let $\mathcal G=(\mathcal V,\mathcal E,\mathbf X)$
be an undirected graph with $n$ nodes, $m$ edges, and node features
$\mathbf X$. An edge-retention ratio $\rho\in(0,1]$ sets the training
budget $B=\lceil \rho m\rceil$, the maximum number of edges used for a given epoch.
\edis{} \emph{does not pick one $B$-edge graph
and train on it for the whole run}. Instead, the same GNN trains on a
sequence of different sparse graphs, a different one for each epoch $t$,
\begin{equation}
\mathcal G_B^{(t)}
=
\bigl(\mathcal V,\mathcal E_B^{(t)},\mathbf X\bigr),
\qquad
|\mathcal E_B^{(t)}|=B,
\label{eq:edis_epoch_graph}
\end{equation}
where $\mathcal E_B^{(t)}$ is freshly composed each epoch.

The GNN's parameters are shared across all epochs, and a GCN layer
at epoch $t$ still uses
\[    \mathbf H_{\ell+1}
    =
    \sigma\!\left(
    \widehat{\mathbf A}_B^{(t)}
    \mathbf H_{\ell}
    \mathbf W_{\ell}
    \right),
    \qquad
    \mathbf H_{0}=\mathbf X,
\]
where only the sparse adjacency $\widehat{\mathbf A}_B^{(t)}$ changes
from epoch to epoch, while the learned weights $\mathbf W_{\ell}$
are updated through backpropagation. Over $T$ epochs, the total number
of distinct edges seen, $\bigl|\bigcup_{t=1}^T \mathcal
E_B^{(t)}\bigr|$, can therefore be much larger than $B$, even though
each epoch is limited by $B$.

The challenge is to generate new sparse graphs without
rerunning extraction for each epoch. \edis{} splits the method into two
stages, a one-time decomposition (parametrized by edge score $s$ and
residual selector $\mathcal F$, defined next), and an efficient per-epoch
composition.
\begin{equation}
\boxed{
\mathcal D=\operatorname{Decompose}(\mathcal G;s,\mathcal F),
\qquad
\mathcal E_B^{(t)}
=
\operatorname{Compose}(\mathcal D,B;\xi_t).
}
\label{eq:edis_factorization}
\end{equation}
The decomposition $\mathcal D$ is computed once, and each epoch then
needs only randomness $\xi_t$ to compose a new $B$-edge graph
from it. This composition is \emph{not} adapted by the GNN loss,
since the edge scores, subgraphs, and weights it draws on are all fixed
before training begins.

\subsection{One-Time Structural Decomposition}
\label{subsec:budget_construction}

Each original edge $e=(u,v)$ gets a fixed score $s_e$,
with higher scores preferred by the selector. Our default
score is a shifted cosine similarity between endpoint features,
\begin{equation}
s_{uv}
=
\frac12
\left(
1+
\frac{\mathbf x_u^\top\mathbf x_v}
{\|\mathbf x_u\|_2\,\|\mathbf x_v\|_2}
\right).
\end{equation}
Thus, the default score also depends on both the node features and graph topology. In
Section~\ref{sec:experiments} we evaluate other scoring methods. Scores alone,
however, do not decide the subgraphs. A \emph{residual selector} $\mathcal
F$ picks one subgraph at a time from remaining edges, mapping it to a
subset at round $k$,
\begin{equation}
\mathcal E_k=\mathcal F\bigl(\mathcal V,\mathcal E_{\mathrm{rem}}^{(k-1)},s\bigr)
\subseteq\mathcal E_{\mathrm{rem}}^{(k-1)}.
\end{equation}

The selector may use $s$ or ignore it entirely, and need not return a
forest or a fixed edge count. \edis{} treats all selectors the same way
during training, supporting degree-based, neighbor-based,
spanner, and random selectors (detailed in Appendix~\ref{app:selector_family}).

Our default selector, $\mathcal F_{\max}$, groups edges into
\emph{structural subgraphs} rather than picking high-score edges one
at a time. At each round, it extracts a maximum-score covering forest
(\textbf{MaxCF}) from the remaining edges. At round $k$,
\begin{equation}
\mathcal E_k
=
\mathcal F_{\max}\bigl(\mathcal V,\mathcal E_{\mathrm{rem}}^{(k-1)},s\bigr)
\in
\operatorname*{arg\,max}_{F\text{ covering forest}}
\sum_{e\in F}s_e,
\qquad
\mathcal E_{\mathrm{rem}}^{(k)}
=
\mathcal E_{\mathrm{rem}}^{(k-1)}
\setminus\mathcal E_k.
\end{equation}
Extraction stops when the per-round limit is reached, the coverage target is
met, or no edges remain, giving $K$ subgraphs. These
choices are fixed once, independent of $B$. This independence, rather than
the absence of choices, is what \emph{budget-agnostic} means in this context. The
first subgraph $\mathcal E_1$ is thus a high-score, connectivity-preserving
backbone. After removing those edges, $\mathcal E_2$ gives an
alternative backbone from remaining edges, and so on. The result is an edge-disjoint
collection,
\begin{equation}
\mathcal E
=
\mathcal E_1
\sqcup\cdots\sqcup
\mathcal E_K
\sqcup
\mathcal E_{\mathrm{res}},
\label{eq:prelim_partition}
\end{equation}
where $\mathcal E_{\mathrm{res}}$ holds any edges not assigned to a
stored subgraph. Each subgraph also receives a weight,
\begin{equation}
w_k=\max\Bigl\{\textstyle\sum_{e\in\mathcal E_k}s_e,\ \varepsilon\Bigr\},
\end{equation}
so $w_k$ is the subgraph's total score, floored at a small $\varepsilon>0$
to keep sampling weights strictly positive. This floor is not a
maximum over individual edges, so $w_k$ always reflects the subgraph's
aggregate score, not that of any single edge. The subgraphs, weights, and
residual pool are computed once and stay fixed.

\subsection{An Exact-Budget Graph at Every Epoch}
\label{subsec:composition}

At epoch $t$, \edis{} decides which subgraphs to include, and in what
order, by drawing a weighted random ordering without replacement. If
$R_r$ is the set of subgraphs still available before draw $r$,
\begin{equation}
\pi^{(t)}=(k_1,\ldots,k_K),
\qquad
\Pr(k_r=k\mid R_r)
=
\frac{w_k}{\sum_{j\in R_r}w_j}.
\end{equation}
Higher-weighted subgraphs are thus more likely to appear early, but no
fixed order is imposed. We weight by total score, not average score.
We add subgraphs one at a time in this order, stopping once the edge
count reaches or surpasses $B$. If it overshoots $B$, we remove edges
starting with the lowest-weighted subgraph. Each low-weight
subgraph is removed in full until the remaining excess is smaller
than the next subgraph, which is then trimmed by that amount and
kept. At most one subgraph ends up partially trimmed. Every
higher-weighted subgraph remains in its entirety.
If instead there is a shortfall, it is filled uniformly from
$\mathcal E_{\mathrm{res}}$. Either way, the result satisfies
\begin{equation}
\boxed{|\mathcal E_B^{(t)}|=B.}
\end{equation}
A deterministic alternative that always takes the top subgraphs will repeat the same subgraphs for every epoch, thus collapsing the epoch-to-epoch diversity that drives most of \edis{}'s gain at tight budgets (Appendix~\ref{app:trimming}, Appendix~\ref{app:coverage-reuse}).
Exact construction and boundary cases appear in Appendix~\ref{app:edis_composition}.
A worked example in Appendix~\ref{app:selector-worked-example} shows different selectors producing different subgraphs that still compose into the same edge budget. 
At small budgets, one high-weight subgraph may already fill $B$, so variation comes mainly from trimming within that subgraph, not the choice of subgraph itself. 
As $B$ grows and several subgraphs are needed, epoch-to-epoch variation increasingly comes from the differences at the subgraph-level.

The distinction from a fixed sparsifier matters. A one-shot method
trains the GNN on the same set of edges every epoch,
\begin{equation}
\mathcal E_B,\mathcal E_B,\ldots,\mathcal E_B,
\end{equation}
whereas \edis{} trains the same GNN on a different graph at
each epoch,
\begin{equation}
\mathcal E_B^{(1)},
\mathcal E_B^{(2)},
\ldots,
\mathcal E_B^{(T)}.
\end{equation}
\edis{} thus pairs sparse per-epoch computation with exposure to
many structured views of the graph. Under the implementation conditions
of Appendix~\ref{app:complexity}, per-epoch composition scales with $B$ and $K$
rather than $m$, so decomposition costs amortize across epochs and budgets.

\subsection{Structural Preservation for Decomposition and Composition}
\label{subsec:theory}

\begingroup
\setlength{\abovedisplayskip}{6pt}
\setlength{\belowdisplayskip}{6pt}
\setlength{\abovedisplayshortskip}{4pt}
\setlength{\belowdisplayshortskip}{4pt}

Since composition can omit or trim subgraphs for a smaller budget, the structural guarantees for the cached decomposition $\mathcal D$ may not hold for the composed graph $\mathcal E_B^{(t)}$, which is processed by the GNN at each epoch. We therefore analyze these two objects separately.

Both guarantees below concern cuts carrying many high-score edges,
so we index by a score threshold $\tau$.
Let $U_k=\mathcal E_1\cup\cdots\cup\mathcal E_k$, and let
$\mathcal E_{\ge\tau}=\{e\in\mathcal E:s_e\ge\tau\}$,
$\mathcal G_{\ge\tau}=(\mathcal V,\mathcal E_{\ge\tau})$, and
$U_{k,\ge\tau}=U_k\cap\mathcal E_{\ge\tau}$. For a nontrivial node set
$S\subsetneq\mathcal V$ and graph $H$, $\delta_H(S)$ denotes the
edges of $H$ crossing the cut induced by $S$.

\textbf{The stored decomposition.}\quad
MaxCFs satisfy a bottleneck property (Fact~\ref{fact:bottleneck}).
Together with the successive-forest cut certificate
(Fact~\ref{fact:cut_certificate}), which yields the guarantee below.
\vspace{4pt}
\begin{theorem}[\underline{Score-threshold cut preservation}]
\label{thm:score_threshold_certificate}
If each subgraph is a MaxCF of the current
residual graph, then for every completed round $1\le k\le K$,
threshold $\tau$, and a nontrivial cut $S$,
\begin{equation}
    \bigl|\delta_{U_{k,\ge\tau}}(S)\bigr|
    \ge
    \min\bigl\{k,\,|\delta_{\mathcal G_{\ge\tau}}(S)|\bigr\}.
    \label{eq:score_threshold_certificate}
\end{equation}
\end{theorem}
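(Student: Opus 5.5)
We argue by fixing $\tau$ and the cut $S$, writing $c=|\delta_{\mathcal G_{\ge\tau}}(S)|$, and showing by induction on $k$ that each completed round contributes a fresh high-score crossing edge as long as one remains. The key local claim is: \emph{if the residual graph $\mathcal E_{\mathrm{rem}}^{(k-1)}$ still contains an edge $f$ with $s_f\ge\tau$ crossing $S$, then the MaxCF $\mathcal E_k$ also contains an edge of score $\ge\tau$ crossing $S$.} This is where we invoke the bottleneck property (Fact~\ref{fact:bottleneck}).

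\textbf{Proof of the local claim.} Suppose $f=(u,v)$ is such a residual edge and $f\notin\mathcal E_k$. Because $\mathcal E_k$ is a covering (spanning) forest of the residual graph, $u$ and $v$ lie in the same tree. Let $P$ be the tree path joining them. By maximality of $\mathcal E_k$, every edge of $P$ has score $\ge s_f\ge\tau$; otherwise swapping the minimum-score path edge for $f$ would give a forest of strictly larger total score. Since $u$ and $v$ are on opposite sides of $S$, the path $P$ must cross the cut. Hence $P$ contains an edge of score $\ge\tau$ in $\delta(S)$, and that edge lies in $\mathcal E_k$. If instead $f\in\mathcal E_k$, the claim holds trivially with $f$ itself.

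\textbf{Counting.} The rounds are edge-disjoint, so each round $j\le k$ that fires the claim contributes a distinct edge of $\delta_{U_{k,\ge\tau}}(S)$. Let $a_j$ be the number of high-score crossing edges in $U_j$. As long as $a_{j-1}<c$, some high-score crossing edge of $\mathcal G_{\ge\tau}$ has not yet been extracted. It is therefore still in the residual at round $j$, since $\mathcal E_{\mathrm{rem}}^{(j-1)}=\mathcal E\setminus U_{j-1}$. The local claim then gives $a_j\ge a_{j-1}+1$. Induction yields $a_k\ge\min\{k,c\}$, which is exactly \eqref{eq:score_threshold_certificate}. This packaging is the successive-forest certificate of Fact~\ref{fact:cut_certificate}, applied to the threshold-restricted crossing set. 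The phrase ``completed round'' ensures that each $\mathcal E_j$, $j\le k$, is a genuine MaxCF of the full residual graph and not a truncated one.

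\textbf{Main obstacle.} The delicate step is the exchange argument. We must use that $\mathcal E_k$ maximizes \emph{total} score among covering forests of the \emph{entire} residual graph, not of $\mathcal G_{\ge\tau}$, and still conclude that the path edges have score $\ge\tau$. The cycle property handles this, but it requires that adding $f$ and deleting a path edge yields another covering forest. This holds because the component structure is unchanged.

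A further point to check is ties and negative-free scores. Scores lie in $[0,1]$, so a maximum-score covering forest is spanning within each component (adding any edge never decreases the score). If ties allow a non-spanning maximizer, I would first note that the paper's ``covering'' requirement enforces spanning each residual component, and rely on that definition.
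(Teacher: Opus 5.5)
Your proof is correct, but it takes a different route from the paper.

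The paper also fixes $\tau$ and uses the same cycle-exchange argument, but applies it to a different object. It shows that each restriction $F_i^\tau=\mathcal E_i\cap\mathcal E_{\ge\tau}$ is itself a covering forest of the threshold residual graph $(\mathcal V,\mathcal E_{\mathrm{rem}}^{(i-1)}\cap\mathcal E_{\ge\tau})$. Since that residual equals $\mathcal E_{\ge\tau}\setminus\bigcup_{j<i}F_j^\tau$, the sets $F_1^\tau,\ldots,F_k^\tau$ form a sequence of successive residual covering forests of $\mathcal G_{\ge\tau}$. The paper then applies Fact~\ref{fact:cut_certificate} inside $\mathcal G_{\ge\tau}$ as a black box.

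You instead work one cut at a time. The bottleneck/cycle property yields a fresh high-score crossing edge in each round that still has one available in the residual. Your induction on $a_j$ then re-derives the Nagamochi--Ibaraki count directly rather than citing it.

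Your version is more self-contained. It never needs Fact~\ref{fact:cut_certificate}, and with $\tau$ below every score the same counting recovers that fact. The paper's version buys a stronger, cut-independent structural statement: the threshold restriction of a MaxCF sequence is again a covering-forest sequence of $\mathcal G_{\ge\tau}$. As a result, any property of successive covering forests, not only the cut count, transfers to the threshold graph at once.

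Your closing remark about scores lying in $[0,1]$ is unnecessary. The exchange uses only two facts: $\mathcal E_k$ maximizes total score among covering forests, and covering forests span each residual component by definition. So arbitrary real scores and any tie-breaking are covered, as the paper also notes.
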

\vspace{-6pt}
\noindent(Proof and top-$k$ corollary are in
Appendices~\ref{app:score_certificate}--\ref{app:topk_corollaries}).
Informally, cuts that mattered in $\mathcal G$ retain crossing edges
once enough subgraphs are kept. This holds for the default
covering-forest selector.

\textbf{The composed training graph.}\quad
Once $B<|U_K|$, the sparse-budget regime this paper targets, $\mathcal
E_B^{(t)}$ need not contain any $U_k$. We therefore bound retention
for the composed graph directly, conditioning on the sampled subgraph
order $\pi^{(t)}$. Index the residual edge set as $k=0$ alongside the
$K$ stored subgraphs. Let $\theta_k$ be set $k$'s retained fraction
after trimming and $x_k(S,\tau)$ its edges crossing $S$ with score at
least $\tau$.
\vspace{4pt}
\begin{theorem}[\underline{Conditional retention under composition}]
\label{thm:app_budget_retention}
Condition on the subgraph order $\pi^{(t)}$ and trim each subgraph
uniformly at random to its retained size. For any nontrivial cut $S$
and threshold $\tau$, the probability that the composed edge set
$\mathcal E_B^{(t)}$ keeps a high-score edge crossing $S$ satisfies
\begin{equation}
    \Pr\!\left(\mathcal E_B^{(t)}\cap\delta_{\mathcal G}(S)
    \cap\mathcal E_{\ge\tau}\ne\varnothing\;\middle|\;\pi^{(t)}\right)
    \ge
    1-\exp\!\left(-\sum_{k=0}^{K}\theta_k\,x_k(S,\tau)\right).
    \label{eq:highscore_survival}
\end{equation}
\end{theorem}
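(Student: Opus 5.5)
The plan is to condition on $\pi^{(t)}$, which fixes for every set $k\in\{0,1,\ldots,K\}$ its total size $n_k=|\mathcal E_k|$ (with $\mathcal E_0=\mathcal E_{\mathrm{res}}$) and its retained size $r_k=\theta_k n_k$. Under the theorem's assumption, the randomness that remains is that each set $k$ keeps a uniformly random $r_k$-subset of its edges, independently across sets. This covers full inclusion ($\theta_k=1$), omission ($\theta_k=0$), the single trimmed subgraph, and the uniform residual fill. Because the sets are edge-disjoint (Eq.~\eqref{eq:prelim_partition}), the sets of retained edges are disjoint, so
\[
\mathcal E_B^{(t)}\cap\delta_{\mathcal G}(S)\cap\mathcal E_{\ge\tau}=\varnothing
\]
holds exactly when, for every $k$, none of the $x_k=x_k(S,\tau)$ marked edges in set $k$ is retained. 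Conditional independence across sets then factors the failure probability as
\[
\prod_{k=0}^{K}\Pr(\text{no marked edge of set }k\text{ kept}\mid\pi^{(t)}).
\]

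Next I bound each factor. For a uniform $r_k$-subset of $n_k$ items, the probability of avoiding all $x_k$ marked ones is hypergeometric:
\[
\binom{n_k-x_k}{r_k}\Big/\binom{n_k}{r_k}=\prod_{i=0}^{x_k-1}\frac{n_k-r_k-i}{n_k-i}.
\]
Each factor satisfies $\frac{n_k-r_k-i}{n_k-i}=1-\frac{r_k}{n_k-i}\le 1-\frac{r_k}{n_k}=1-\theta_k$. The factor is taken as $0$ when the numerator is nonpositive, so the bound still holds. Hence the avoidance probability is at most $(1-\theta_k)^{x_k}\le e^{-\theta_k x_k}$, using $1-z\le e^{-z}$. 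Multiplying over $k$ gives failure probability at most $\exp(-\sum_k\theta_k x_k)$. Taking complements yields \eqref{eq:highscore_survival}.

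The step I expect to need the most care is justifying that, conditional on $\pi^{(t)}$, the retained sizes $r_k$ are deterministic and the within-set selections are independent uniform subsets. Under the composition rule of Section~\ref{subsec:composition}, given the order, the prefix of subgraphs added, the overshoot, the removal of lowest-weight subgraphs, and the residual shortfall are all deterministic functions of $\pi^{(t)}$ and the fixed sizes $n_k$. So the only remaining randomness is the uniform trim of at most one subgraph and the uniform residual fill. The theorem's hypothesis that each set is trimmed uniformly at random to its retained size, which I read as independent draws, supplies the rest. Edge cases such as $n_k=0$ or $x_k=0$ contribute a factor of $1$ and need no special handling.
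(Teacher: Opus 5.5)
Your proposal is correct and follows the paper's proof: conditioning on $\pi^{(t)}$ fixes every retained count $r_k$, the no-survival event factors into the hypergeometric avoidance probabilities $\binom{n_k-x_k}{r_k}/\binom{n_k}{r_k}$, and each factor is bounded by $e^{-\theta_k x_k}$. The differences are cosmetic. You expand the ratio over the $x_k$ marked edges to get $(1-\theta_k)^{x_k}$, while the paper expands over the $r_k$ retained draws to get $(1-x_k/n_k)^{r_k}$. You also assume independence across pools, whereas the paper notes it holds automatically: given the order, at most one pool is actually random (either the single trimmed subgraph, or the residual when $B>|U_K|$).
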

\vspace{-6pt}
\noindent(Proof and the expected-retention identity are detailed in
Appendix~\ref{app:budget_theory}.) Informally, a cut with enough
high-score edges is unlikely to be fully severed, so the message-passing
channel across it usually survives composition too, though no single
subgraph's survival is guaranteed. Unlike
Theorem~\ref{thm:score_threshold_certificate}, this bound is
selector-agnostic, needing only a uniform trim within each subgraph, not
forest structure, but it gives no guarantee that $\mathcal G_B^{(t)}$
stays connected at any fixed $B$ (Appendix~\ref{app:budget_theory}).

Together, Theorems~\ref{thm:score_threshold_certificate}
and~\ref{thm:app_budget_retention} deliberately decouple two
guarantees typically bundled together, each scoped to where it
holds. The stored decomposition carries a strong, selector-specific
certificate established once at extraction time. The composed graph
instead carries a selector-agnostic bound that holds for any
selector, trading certificate strength for that generality, which is
exactly what lets \edis{} swap selectors freely.
\endgroup

\section{Experimental Results}
\label{sec:experiments}

\subsection{Experimental Setup}
\label{sec:setup}

We perform empirical evaluation on 19 benchmark datasets (six heterophilic,
eight homophilic, and five large-scale) and node-classification as the task, which is similar in spirit to \citet{Luo:2024classicgnns}.
We use accuracy as the metric, except for Minesweeper, Questions, and ogbn-proteins where we use ROC-AUC as the metric. Complete details are listed in
Table~\ref{tab:datasets} (Appendix~\ref{app:experimental}). We compare \edis{} against 17 other methods, including 3 reference anchors (graph-free MLP, spanning tree, and tuned full-graph GCN), 11 topology-guided sparsifiers, and 3 semantic-guided sparsifiers (Appendix~\ref{app:experimental}).

Our default pairs a GCN backbone with the MaxCF selector
and cosine-similarity as edge scores
(Theorem~\ref{thm:score_threshold_certificate}).
\textbf{EDiS-Lite} and \textbf{EDiS-Full} share this pipeline and
differ only at inference, using the sparse composed graph or the
complete graph. \textbf{Full GCN} is a separately tuned
complete-graph reference, distinct from \edis{} at $100\%$
retention. Every method shares the same architecture, optimizer, and epoch
budget per dataset (Appendix~\ref{app:hyperparameters}). Full-batch
training recomputes normalized adjacency from the current epoch
graph, with scores guiding selection only (not propagation)
(Appendix~\ref{app:propagation_details}). Predictive runs, matched in
count across methods, are averaged per dataset.

\subsection{Predictive Performance}
\label{sec:results}

\textbf{(RQ1, EDiS vs.\ 17 baselines) Does decoupling decomposition
from composition cost accuracy at a matched budget?} We benchmark \edis{}
against all 17 baselines at $\rho=0.30$ on 19 datasets, reporting the
geometric mean (GM) and arithmetic mean (AM) across all 19 datasets while excluding reference
anchors. Table~\ref{tab:edis-main-results-all-datasets} shows EDiS-Lite
attaining the best GM and AM, ahead of every
topology- and semantic-guided sparsifier at the same budget.

\begin{table*}[t]
\centering
\fontsize{6.3pt}{6.8pt}\selectfont
\setlength{\tabcolsep}{0.6pt}
\renewcommand{\arraystretch}{0.8}
\begin{tabular*}{\textwidth}{@{}c@{\hspace{3pt}}l|@{\extracolsep{\fill}}*{10}{c}@{}}
\toprule
\rowcolor[rgb]{0.90,0.93,0.97}
& \textbf{Method} & \textbf{Cora} & \textbf{CiteSeer} & \textbf{\shortstack{Amazon\\Photo}} & \textbf{WikiCS} & \textbf{\shortstack{Amazon\\Comp.}} & \textbf{\shortstack{Coauthor\\CS}} & \textbf{PubMed} & \textbf{Chameleon} & \textbf{Squirrel} & \textbf{Minesweeper} \\
\midrule
\multirow{4}{*}{\rotatebox[origin=c]{90}{\fontsize{5.3pt}{5.5pt}\selectfont Ref.}} & MLP (No graph) & 60.14{\fontsize{4.8pt}{5.0pt}\selectfont\,\textpm2.62} & 57.00{\fontsize{4.8pt}{5.0pt}\selectfont\,\textpm1.13} & 92.16{\fontsize{4.8pt}{5.0pt}\selectfont\,\textpm0.24} & 73.10{\fontsize{4.8pt}{5.0pt}\selectfont\,\textpm0.19} & 86.99{\fontsize{4.8pt}{5.0pt}\selectfont\,\textpm0.39} & 95.11{\fontsize{4.8pt}{5.0pt}\selectfont\,\textpm0.05} & 69.82{\fontsize{4.8pt}{5.0pt}\selectfont\,\textpm1.86} & 36.59{\fontsize{4.8pt}{5.0pt}\selectfont\,\textpm3.32} & 39.77{\fontsize{4.8pt}{5.0pt}\selectfont\,\textpm1.59} & 51.34{\fontsize{4.8pt}{5.0pt}\selectfont\,\textpm0.77} \\
 & Spanning Tree & 80.86{\fontsize{4.8pt}{5.0pt}\selectfont\,\textpm1.17} & 70.50{\fontsize{4.8pt}{5.0pt}\selectfont\,\textpm0.64} & 92.46{\fontsize{4.8pt}{5.0pt}\selectfont\,\textpm0.12} & 75.05{\fontsize{4.8pt}{5.0pt}\selectfont\,\textpm0.23} & 89.76{\fontsize{4.8pt}{5.0pt}\selectfont\,\textpm0.30} & 94.68{\fontsize{4.8pt}{5.0pt}\selectfont\,\textpm0.14} & 80.40{\fontsize{4.8pt}{5.0pt}\selectfont\,\textpm0.99} & 34.72{\fontsize{4.8pt}{5.0pt}\selectfont\,\textpm2.81} & 36.93{\fontsize{4.8pt}{5.0pt}\selectfont\,\textpm1.58} & 70.73{\fontsize{4.8pt}{5.0pt}\selectfont\,\textpm0.21} \\
 & Full GCN & 84.26{\fontsize{4.8pt}{5.0pt}\selectfont\,\textpm0.66} & 72.50{\fontsize{4.8pt}{5.0pt}\selectfont\,\textpm0.36} & 95.95{\fontsize{4.8pt}{5.0pt}\selectfont\,\textpm0.14} & 80.57{\fontsize{4.8pt}{5.0pt}\selectfont\,\textpm0.06} & 93.70{\fontsize{4.8pt}{5.0pt}\selectfont\,\textpm0.31} & 95.61{\fontsize{4.8pt}{5.0pt}\selectfont\,\textpm0.36} & 80.76{\fontsize{4.8pt}{5.0pt}\selectfont\,\textpm0.55} & 44.67{\fontsize{4.8pt}{5.0pt}\selectfont\,\textpm3.84} & 44.79{\fontsize{4.8pt}{5.0pt}\selectfont\,\textpm1.82} & 97.49{\fontsize{4.8pt}{5.0pt}\selectfont\,\textpm0.18} \\
 & \textbf{EDiS-Full (Ours)} & 84.08{\fontsize{4.8pt}{5.0pt}\selectfont\,\textpm0.83} & 70.88{\fontsize{4.8pt}{5.0pt}\selectfont\,\textpm0.78} & 96.14{\fontsize{4.8pt}{5.0pt}\selectfont\,\textpm0.19} & 80.39{\fontsize{4.8pt}{5.0pt}\selectfont\,\textpm0.29} & 93.59{\fontsize{4.8pt}{5.0pt}\selectfont\,\textpm0.05} & 95.98{\fontsize{4.8pt}{5.0pt}\selectfont\,\textpm0.15} & 80.20{\fontsize{4.8pt}{5.0pt}\selectfont\,\textpm0.46} & 43.13{\fontsize{4.8pt}{5.0pt}\selectfont\,\textpm2.53} & 43.26{\fontsize{4.8pt}{5.0pt}\selectfont\,\textpm2.43} & 82.62{\fontsize{4.8pt}{5.0pt}\selectfont\,\textpm1.09} \\
\midrule
\multirow{11}{*}{\rotatebox[origin=c]{90}{\fontsize{5.3pt}{5.5pt}\selectfont Topology-guided}} & Random & 73.40{\fontsize{4.8pt}{5.0pt}\selectfont\,\textpm0.51} & 65.86{\fontsize{4.8pt}{5.0pt}\selectfont\,\textpm0.95} & 94.38{\fontsize{4.8pt}{5.0pt}\selectfont\,\textpm0.16} & 77.04{\fontsize{4.8pt}{5.0pt}\selectfont\,\textpm0.63} & 91.05{\fontsize{4.8pt}{5.0pt}\selectfont\,\textpm0.15} & 94.71{\fontsize{4.8pt}{5.0pt}\selectfont\,\textpm0.04} & 74.26{\fontsize{4.8pt}{5.0pt}\selectfont\,\textpm0.62} & 37.85{\fontsize{4.8pt}{5.0pt}\selectfont\,\textpm3.35} & 40.90{\fontsize{4.8pt}{5.0pt}\selectfont\,\textpm1.70} & 72.39{\fontsize{4.8pt}{5.0pt}\selectfont\,\textpm1.04} \\
 & Rank Deg. & \cellcolor[rgb]{0.87,0.88,0.91}\underline{76.34}{\fontsize{4.8pt}{5.0pt}\selectfont\,\textpm0.78} & 65.10{\fontsize{4.8pt}{5.0pt}\selectfont\,\textpm0.24} & 93.79{\fontsize{4.8pt}{5.0pt}\selectfont\,\textpm0.05} & 76.13{\fontsize{4.8pt}{5.0pt}\selectfont\,\textpm0.53} & 88.04{\fontsize{4.8pt}{5.0pt}\selectfont\,\textpm0.28} & 94.68{\fontsize{4.8pt}{5.0pt}\selectfont\,\textpm0.06} & 76.38{\fontsize{4.8pt}{5.0pt}\selectfont\,\textpm0.46} & 40.35{\fontsize{4.8pt}{5.0pt}\selectfont\,\textpm3.82} & 40.17{\fontsize{4.8pt}{5.0pt}\selectfont\,\textpm1.89} & 71.09{\fontsize{4.8pt}{5.0pt}\selectfont\,\textpm0.21} \\
 & Local Deg. & 73.52{\fontsize{4.8pt}{5.0pt}\selectfont\,\textpm0.47} & 64.76{\fontsize{4.8pt}{5.0pt}\selectfont\,\textpm0.84} & 94.49{\fontsize{4.8pt}{5.0pt}\selectfont\,\textpm0.03} & 75.69{\fontsize{4.8pt}{5.0pt}\selectfont\,\textpm0.48} & 89.28{\fontsize{4.8pt}{5.0pt}\selectfont\,\textpm0.33} & 94.93{\fontsize{4.8pt}{5.0pt}\selectfont\,\textpm0.22} & \cellcolor[rgb]{0.87,0.88,0.91}\underline{77.32}{\fontsize{4.8pt}{5.0pt}\selectfont\,\textpm1.08} & 41.37{\fontsize{4.8pt}{5.0pt}\selectfont\,\textpm2.59} & 41.80{\fontsize{4.8pt}{5.0pt}\selectfont\,\textpm1.66} & 72.53{\fontsize{4.8pt}{5.0pt}\selectfont\,\textpm1.15} \\
 & Forest Fire & 71.98{\fontsize{4.8pt}{5.0pt}\selectfont\,\textpm0.39} & 63.82{\fontsize{4.8pt}{5.0pt}\selectfont\,\textpm1.05} & 95.40{\fontsize{4.8pt}{5.0pt}\selectfont\,\textpm0.22} & \cellcolor[rgb]{0.96,0.87,0.70}79.20{\fontsize{4.8pt}{5.0pt}\selectfont\,\textpm0.58} & 92.57{\fontsize{4.8pt}{5.0pt}\selectfont\,\textpm0.12} & 94.67{\fontsize{4.8pt}{5.0pt}\selectfont\,\textpm0.33} & 73.46{\fontsize{4.8pt}{5.0pt}\selectfont\,\textpm0.85} & 41.28{\fontsize{4.8pt}{5.0pt}\selectfont\,\textpm1.96} & 43.15{\fontsize{4.8pt}{5.0pt}\selectfont\,\textpm1.90} & 70.76{\fontsize{4.8pt}{5.0pt}\selectfont\,\textpm0.29} \\
 & G-Spar & 72.02{\fontsize{4.8pt}{5.0pt}\selectfont\,\textpm1.05} & 64.28{\fontsize{4.8pt}{5.0pt}\selectfont\,\textpm0.57} & 93.42{\fontsize{4.8pt}{5.0pt}\selectfont\,\textpm0.37} & 74.91{\fontsize{4.8pt}{5.0pt}\selectfont\,\textpm0.39} & 91.02{\fontsize{4.8pt}{5.0pt}\selectfont\,\textpm0.21} & 94.78{\fontsize{4.8pt}{5.0pt}\selectfont\,\textpm0.08} & 73.48{\fontsize{4.8pt}{5.0pt}\selectfont\,\textpm0.90} & 38.62{\fontsize{4.8pt}{5.0pt}\selectfont\,\textpm4.13} & 39.77{\fontsize{4.8pt}{5.0pt}\selectfont\,\textpm1.57} & \cellcolor[rgb]{0.78,0.87,0.97}\textbf{78.97}{\fontsize{4.8pt}{5.0pt}\selectfont\,\textpm0.51} \\
 & L-Sim & \cellcolor[rgb]{0.96,0.87,0.70}75.18{\fontsize{4.8pt}{5.0pt}\selectfont\,\textpm0.72} & \cellcolor[rgb]{0.96,0.87,0.70}66.06{\fontsize{4.8pt}{5.0pt}\selectfont\,\textpm0.42} & 94.88{\fontsize{4.8pt}{5.0pt}\selectfont\,\textpm0.13} & 78.71{\fontsize{4.8pt}{5.0pt}\selectfont\,\textpm0.96} & \cellcolor[rgb]{0.96,0.87,0.70}93.04{\fontsize{4.8pt}{5.0pt}\selectfont\,\textpm0.07} & 95.02{\fontsize{4.8pt}{5.0pt}\selectfont\,\textpm0.29} & 76.98{\fontsize{4.8pt}{5.0pt}\selectfont\,\textpm0.74} & 39.71{\fontsize{4.8pt}{5.0pt}\selectfont\,\textpm3.97} & 39.23{\fontsize{4.8pt}{5.0pt}\selectfont\,\textpm2.38} & 77.17{\fontsize{4.8pt}{5.0pt}\selectfont\,\textpm0.41} \\
 & SCAN & 71.38{\fontsize{4.8pt}{5.0pt}\selectfont\,\textpm0.67} & 64.54{\fontsize{4.8pt}{5.0pt}\selectfont\,\textpm0.37} & 94.01{\fontsize{4.8pt}{5.0pt}\selectfont\,\textpm0.22} & 75.08{\fontsize{4.8pt}{5.0pt}\selectfont\,\textpm0.44} & 91.60{\fontsize{4.8pt}{5.0pt}\selectfont\,\textpm0.18} & \cellcolor[rgb]{0.96,0.87,0.70}95.09{\fontsize{4.8pt}{5.0pt}\selectfont\,\textpm0.02} & 74.32{\fontsize{4.8pt}{5.0pt}\selectfont\,\textpm0.50} & 36.05{\fontsize{4.8pt}{5.0pt}\selectfont\,\textpm3.41} & 39.50{\fontsize{4.8pt}{5.0pt}\selectfont\,\textpm1.10} & \cellcolor[rgb]{0.87,0.88,0.91}\underline{78.53}{\fontsize{4.8pt}{5.0pt}\selectfont\,\textpm0.35} \\
 & L-Spar & 71.38{\fontsize{4.8pt}{5.0pt}\selectfont\,\textpm1.48} & 64.32{\fontsize{4.8pt}{5.0pt}\selectfont\,\textpm0.57} & 94.07{\fontsize{4.8pt}{5.0pt}\selectfont\,\textpm0.20} & 75.10{\fontsize{4.8pt}{5.0pt}\selectfont\,\textpm0.25} & 90.95{\fontsize{4.8pt}{5.0pt}\selectfont\,\textpm0.22} & 95.03{\fontsize{4.8pt}{5.0pt}\selectfont\,\textpm0.22} & 73.48{\fontsize{4.8pt}{5.0pt}\selectfont\,\textpm0.90} & 36.11{\fontsize{4.8pt}{5.0pt}\selectfont\,\textpm3.06} & 39.17{\fontsize{4.8pt}{5.0pt}\selectfont\,\textpm1.31} & \cellcolor[rgb]{0.96,0.87,0.70}78.45{\fontsize{4.8pt}{5.0pt}\selectfont\,\textpm0.86} \\
 & Spectral & 72.86{\fontsize{4.8pt}{5.0pt}\selectfont\,\textpm0.85} & \cellcolor[rgb]{0.78,0.87,0.97}\textbf{66.82}{\fontsize{4.8pt}{5.0pt}\selectfont\,\textpm0.44} & 95.24{\fontsize{4.8pt}{5.0pt}\selectfont\,\textpm0.29} & 78.80{\fontsize{4.8pt}{5.0pt}\selectfont\,\textpm0.47} & 92.13{\fontsize{4.8pt}{5.0pt}\selectfont\,\textpm0.11} & 94.68{\fontsize{4.8pt}{5.0pt}\selectfont\,\textpm0.13} & 75.04{\fontsize{4.8pt}{5.0pt}\selectfont\,\textpm1.43} & \cellcolor[rgb]{0.96,0.87,0.70}42.98{\fontsize{4.8pt}{5.0pt}\selectfont\,\textpm3.60} & \cellcolor[rgb]{0.78,0.87,0.97}\textbf{43.94}{\fontsize{4.8pt}{5.0pt}\selectfont\,\textpm1.18} & 70.48{\fontsize{4.8pt}{5.0pt}\selectfont\,\textpm1.94} \\
 & $t$-Spanner & 72.60{\fontsize{4.8pt}{5.0pt}\selectfont\,\textpm0.90} & 65.70{\fontsize{4.8pt}{5.0pt}\selectfont\,\textpm0.63} & 94.95{\fontsize{4.8pt}{5.0pt}\selectfont\,\textpm0.24} & 78.07{\fontsize{4.8pt}{5.0pt}\selectfont\,\textpm0.23} & 91.51{\fontsize{4.8pt}{5.0pt}\selectfont\,\textpm0.11} & 94.64{\fontsize{4.8pt}{5.0pt}\selectfont\,\textpm0.27} & 74.74{\fontsize{4.8pt}{5.0pt}\selectfont\,\textpm0.59} & 38.71{\fontsize{4.8pt}{5.0pt}\selectfont\,\textpm3.72} & 42.65{\fontsize{4.8pt}{5.0pt}\selectfont\,\textpm2.01} & 71.13{\fontsize{4.8pt}{5.0pt}\selectfont\,\textpm1.03} \\
 & DSpar & 73.90{\fontsize{4.8pt}{5.0pt}\selectfont\,\textpm0.36} & 64.70{\fontsize{4.8pt}{5.0pt}\selectfont\,\textpm1.22} & \cellcolor[rgb]{0.78,0.87,0.97}\textbf{95.62}{\fontsize{4.8pt}{5.0pt}\selectfont\,\textpm0.33} & \cellcolor[rgb]{0.87,0.88,0.91}\underline{79.47}{\fontsize{4.8pt}{5.0pt}\selectfont\,\textpm0.68} & \cellcolor[rgb]{0.87,0.88,0.91}\underline{93.17}{\fontsize{4.8pt}{5.0pt}\selectfont\,\textpm0.08} & 94.52{\fontsize{4.8pt}{5.0pt}\selectfont\,\textpm0.12} & \cellcolor[rgb]{0.78,0.87,0.97}\textbf{77.86}{\fontsize{4.8pt}{5.0pt}\selectfont\,\textpm0.46} & 39.73{\fontsize{4.8pt}{5.0pt}\selectfont\,\textpm3.87} & \cellcolor[rgb]{0.87,0.88,0.91}\underline{43.48}{\fontsize{4.8pt}{5.0pt}\selectfont\,\textpm1.56} & 72.56{\fontsize{4.8pt}{5.0pt}\selectfont\,\textpm2.00} \\
\midrule
\multirow{3}{*}{\raisebox{-6pt}{\rotatebox[origin=c]{90}{\fontsize{5.3pt}{5.5pt}\selectfont\shortstack{Semantic\\[-2pt]guided}}}} & Unified-LTH & 72.12{\fontsize{4.8pt}{5.0pt}\selectfont\,\textpm3.47} & 63.76{\fontsize{4.8pt}{5.0pt}\selectfont\,\textpm1.19} & 91.94{\fontsize{4.8pt}{5.0pt}\selectfont\,\textpm0.36} & 75.49{\fontsize{4.8pt}{5.0pt}\selectfont\,\textpm0.17} & 86.88{\fontsize{4.8pt}{5.0pt}\selectfont\,\textpm0.74} & 94.99{\fontsize{4.8pt}{5.0pt}\selectfont\,\textpm0.37} & 75.26{\fontsize{4.8pt}{5.0pt}\selectfont\,\textpm0.67} & 37.61{\fontsize{4.8pt}{5.0pt}\selectfont\,\textpm3.67} & 38.17{\fontsize{4.8pt}{5.0pt}\selectfont\,\textpm2.09} & 69.70{\fontsize{4.8pt}{5.0pt}\selectfont\,\textpm1.12} \\
 & AdaGLT & 72.74{\fontsize{4.8pt}{5.0pt}\selectfont\,\textpm0.63} & 65.26{\fontsize{4.8pt}{5.0pt}\selectfont\,\textpm1.00} & 94.16{\fontsize{4.8pt}{5.0pt}\selectfont\,\textpm0.19} & 76.48{\fontsize{4.8pt}{5.0pt}\selectfont\,\textpm0.13} & 91.53{\fontsize{4.8pt}{5.0pt}\selectfont\,\textpm0.42} & \cellcolor[rgb]{0.87,0.88,0.91}\underline{95.13}{\fontsize{4.8pt}{5.0pt}\selectfont\,\textpm0.08} & 75.84{\fontsize{4.8pt}{5.0pt}\selectfont\,\textpm0.56} & 38.03{\fontsize{4.8pt}{5.0pt}\selectfont\,\textpm4.12} & 42.30{\fontsize{4.8pt}{5.0pt}\selectfont\,\textpm3.91} & 62.60{\fontsize{4.8pt}{5.0pt}\selectfont\,\textpm1.27} \\
 & MoG & 73.66{\fontsize{4.8pt}{5.0pt}\selectfont\,\textpm0.77} & 65.28{\fontsize{4.8pt}{5.0pt}\selectfont\,\textpm1.50} & \cellcolor[rgb]{0.96,0.87,0.70}95.45{\fontsize{4.8pt}{5.0pt}\selectfont\,\textpm0.29} & 77.48{\fontsize{4.8pt}{5.0pt}\selectfont\,\textpm0.66} & 92.55{\fontsize{4.8pt}{5.0pt}\selectfont\,\textpm0.05} & 94.94{\fontsize{4.8pt}{5.0pt}\selectfont\,\textpm0.20} & 71.56{\fontsize{4.8pt}{5.0pt}\selectfont\,\textpm0.54} & \cellcolor[rgb]{0.87,0.88,0.91}\underline{43.40}{\fontsize{4.8pt}{5.0pt}\selectfont\,\textpm3.13} & \cellcolor[rgb]{0.96,0.87,0.70}43.25{\fontsize{4.8pt}{5.0pt}\selectfont\,\textpm1.12} & 74.02{\fontsize{4.8pt}{5.0pt}\selectfont\,\textpm0.32} \\
\midrule
 & \textbf{EDiS-Lite (Ours)} & \cellcolor[rgb]{0.78,0.87,0.97}\textbf{77.16}{\fontsize{4.8pt}{5.0pt}\selectfont\,\textpm0.98} & \cellcolor[rgb]{0.87,0.88,0.91}\underline{66.08}{\fontsize{4.8pt}{5.0pt}\selectfont\,\textpm1.23} & \cellcolor[rgb]{0.87,0.88,0.91}\underline{95.60}{\fontsize{4.8pt}{5.0pt}\selectfont\,\textpm0.15} & \cellcolor[rgb]{0.78,0.87,0.97}\textbf{80.25}{\fontsize{4.8pt}{5.0pt}\selectfont\,\textpm0.79} & \cellcolor[rgb]{0.78,0.87,0.97}\textbf{93.35}{\fontsize{4.8pt}{5.0pt}\selectfont\,\textpm0.21} & \cellcolor[rgb]{0.78,0.87,0.97}\textbf{95.72}{\fontsize{4.8pt}{5.0pt}\selectfont\,\textpm0.06} & \cellcolor[rgb]{0.96,0.87,0.70}77.28{\fontsize{4.8pt}{5.0pt}\selectfont\,\textpm0.57} & \cellcolor[rgb]{0.78,0.87,0.97}\textbf{43.75}{\fontsize{4.8pt}{5.0pt}\selectfont\,\textpm4.14} & 42.63{\fontsize{4.8pt}{5.0pt}\selectfont\,\textpm2.04} & 73.82{\fontsize{4.8pt}{5.0pt}\selectfont\,\textpm0.97} \\
\bottomrule
\end{tabular*}
\vspace{1pt}
\begin{tabular*}{\textwidth}{@{}c@{\hspace{3pt}}l|@{\extracolsep{\fill}}*{9}{c}@{\hspace{4pt}}|@{\hspace{4pt}}cc@{}}
\toprule
\rowcolor[rgb]{0.90,0.93,0.97}
& \textbf{Method} & \textbf{\shortstack{Coauthor\\Phys.}} & \textbf{Roman} & \textbf{\shortstack{Amazon\\Ratings}} & \textbf{Questions} & \textbf{Proteins} & \textbf{Arxiv} & \textbf{Reddit} & \textbf{Pokec} & \textbf{Products} & \textbf{AM} & \textbf{GM} \\
\midrule
\multirow{4}{*}{\rotatebox[origin=c]{90}{\fontsize{5.3pt}{5.5pt}\selectfont Ref.}} & MLP (No graph) & 96.36{\fontsize{4.8pt}{5.0pt}\selectfont\,\textpm0.01} & 66.23{\fontsize{4.8pt}{5.0pt}\selectfont\,\textpm0.07} & 50.32{\fontsize{4.8pt}{5.0pt}\selectfont\,\textpm0.24} & 69.69{\fontsize{4.8pt}{5.0pt}\selectfont\,\textpm0.35} & 72.91{\fontsize{4.8pt}{5.0pt}\selectfont\,\textpm0.17} & 55.60{\fontsize{4.8pt}{5.0pt}\selectfont\,\textpm0.16} & 69.66{\fontsize{4.8pt}{5.0pt}\selectfont\,\textpm0.09} & 62.31{\fontsize{4.8pt}{5.0pt}\selectfont\,\textpm0.02} & 45.46{\fontsize{4.8pt}{5.0pt}\selectfont\,\textpm0.19}& 65.82 & 63.49 \\
 & Spanning Tree & 96.94{\fontsize{4.8pt}{5.0pt}\selectfont\,\textpm0.02} & 87.36{\fontsize{4.8pt}{5.0pt}\selectfont\,\textpm0.23} & 51.73{\fontsize{4.8pt}{5.0pt}\selectfont\,\textpm0.37} & 74.85{\fontsize{4.8pt}{5.0pt}\selectfont\,\textpm0.24} & 74.13{\fontsize{4.8pt}{5.0pt}\selectfont\,\textpm0.16} & 67.27{\fontsize{4.8pt}{5.0pt}\selectfont\,\textpm0.12} & 86.07{\fontsize{4.8pt}{5.0pt}\selectfont\,\textpm0.14} & 69.36{\fontsize{4.8pt}{5.0pt}\selectfont\,\textpm0.01} & 67.79{\fontsize{4.8pt}{5.0pt}\selectfont\,\textpm0.09}& 73.77 & 71.31 \\
 & Full GCN & 97.51{\fontsize{4.8pt}{5.0pt}\selectfont\,\textpm0.07} & 90.74{\fontsize{4.8pt}{5.0pt}\selectfont\,\textpm0.03} & 53.52{\fontsize{4.8pt}{5.0pt}\selectfont\,\textpm0.42} & 78.49{\fontsize{4.8pt}{5.0pt}\selectfont\,\textpm0.18} & 77.48{\fontsize{4.8pt}{5.0pt}\selectfont\,\textpm0.06} & 72.83{\fontsize{4.8pt}{5.0pt}\selectfont\,\textpm0.20} & 95.44{\fontsize{4.8pt}{5.0pt}\selectfont\,\textpm0.05} & 86.24{\fontsize{4.8pt}{5.0pt}\selectfont\,\textpm0.04} & 80.67{\fontsize{4.8pt}{5.0pt}\selectfont\,\textpm0.10}& 80.17 & 78.17 \\
 & \textbf{EDiS-Full (Ours)} & 97.45{\fontsize{4.8pt}{5.0pt}\selectfont\,\textpm0.03} & 75.00{\fontsize{4.8pt}{5.0pt}\selectfont\,\textpm0.51} & 53.16{\fontsize{4.8pt}{5.0pt}\selectfont\,\textpm0.68} & 77.70{\fontsize{4.8pt}{5.0pt}\selectfont\,\textpm0.53} & 76.56{\fontsize{4.8pt}{5.0pt}\selectfont\,\textpm0.22} & 73.29{\fontsize{4.8pt}{5.0pt}\selectfont\,\textpm0.30} & 95.16{\fontsize{4.8pt}{5.0pt}\selectfont\,\textpm0.01} & 84.45{\fontsize{4.8pt}{5.0pt}\selectfont\,\textpm0.19} & 81.38{\fontsize{4.8pt}{5.0pt}\selectfont\,\textpm0.13} & 78.13 & 76.16 \\
\midrule
\multirow{11}{*}{\rotatebox[origin=c]{90}{\fontsize{5.3pt}{5.5pt}\selectfont Topology-guided}} & Random & 96.66{\fontsize{4.8pt}{5.0pt}\selectfont\,\textpm0.05} & 67.07{\fontsize{4.8pt}{5.0pt}\selectfont\,\textpm0.24} & 48.40{\fontsize{4.8pt}{5.0pt}\selectfont\,\textpm0.43} & 71.89{\fontsize{4.8pt}{5.0pt}\selectfont\,\textpm0.90} & 76.83{\fontsize{4.8pt}{5.0pt}\selectfont\,\textpm0.08} & 67.77{\fontsize{4.8pt}{5.0pt}\selectfont\,\textpm0.18} & 94.69{\fontsize{4.8pt}{5.0pt}\selectfont\,\textpm0.02} & 76.33{\fontsize{4.8pt}{5.0pt}\selectfont\,\textpm0.05} & 79.08{\fontsize{4.8pt}{5.0pt}\selectfont\,\textpm0.06} & 73.71 & 71.47 \\
 & Rank Deg. & 96.40{\fontsize{4.8pt}{5.0pt}\selectfont\,\textpm0.17} & 70.53{\fontsize{4.8pt}{5.0pt}\selectfont\,\textpm0.33} & 51.23{\fontsize{4.8pt}{5.0pt}\selectfont\,\textpm0.52} & 71.49{\fontsize{4.8pt}{5.0pt}\selectfont\,\textpm0.48} & 66.01{\fontsize{4.8pt}{5.0pt}\selectfont\,\textpm0.78} & 62.59{\fontsize{4.8pt}{5.0pt}\selectfont\,\textpm0.07} & 77.28{\fontsize{4.8pt}{5.0pt}\selectfont\,\textpm0.04} & 67.29{\fontsize{4.8pt}{5.0pt}\selectfont\,\textpm0.10} & 63.13{\fontsize{4.8pt}{5.0pt}\selectfont\,\textpm0.02} & 70.95 & 69.05 \\
 & Local Deg. & 96.30{\fontsize{4.8pt}{5.0pt}\selectfont\,\textpm0.18} & 70.79{\fontsize{4.8pt}{5.0pt}\selectfont\,\textpm0.31} & \cellcolor[rgb]{0.96,0.87,0.70}51.47{\fontsize{4.8pt}{5.0pt}\selectfont\,\textpm0.71} & \cellcolor[rgb]{0.87,0.88,0.91}\underline{74.39}{\fontsize{4.8pt}{5.0pt}\selectfont\,\textpm0.83} & 71.10{\fontsize{4.8pt}{5.0pt}\selectfont\,\textpm0.00} & 68.34{\fontsize{4.8pt}{5.0pt}\selectfont\,\textpm0.17} & 90.82{\fontsize{4.8pt}{5.0pt}\selectfont\,\textpm0.15} & \cellcolor[rgb]{0.87,0.88,0.91}\underline{80.07}{\fontsize{4.8pt}{5.0pt}\selectfont\,\textpm0.09} & 77.62{\fontsize{4.8pt}{5.0pt}\selectfont\,\textpm0.16} & 74.03 & 72.11 \\
 & Forest Fire & \cellcolor[rgb]{0.96,0.87,0.70}96.91{\fontsize{4.8pt}{5.0pt}\selectfont\,\textpm0.03} & 67.43{\fontsize{4.8pt}{5.0pt}\selectfont\,\textpm0.62} & 48.37{\fontsize{4.8pt}{5.0pt}\selectfont\,\textpm0.22} & 71.34{\fontsize{4.8pt}{5.0pt}\selectfont\,\textpm0.46} & \cellcolor[rgb]{0.78,0.87,0.97}\textbf{78.25}{\fontsize{4.8pt}{5.0pt}\selectfont\,\textpm0.32} & 68.75{\fontsize{4.8pt}{5.0pt}\selectfont\,\textpm0.21} & \cellcolor[rgb]{0.87,0.88,0.91}\underline{95.41}{\fontsize{4.8pt}{5.0pt}\selectfont\,\textpm0.00} & 79.43{\fontsize{4.8pt}{5.0pt}\selectfont\,\textpm0.10} & \cellcolor[rgb]{0.87,0.88,0.91}\underline{80.65}{\fontsize{4.8pt}{5.0pt}\selectfont\,\textpm0.05} & 74.36 & 72.27 \\
 & G-Spar & 96.73{\fontsize{4.8pt}{5.0pt}\selectfont\,\textpm0.06} & 72.20{\fontsize{4.8pt}{5.0pt}\selectfont\,\textpm0.42} & 49.64{\fontsize{4.8pt}{5.0pt}\selectfont\,\textpm0.61} & 70.35{\fontsize{4.8pt}{5.0pt}\selectfont\,\textpm1.77} & 70.39{\fontsize{4.8pt}{5.0pt}\selectfont\,\textpm0.56} & 65.86{\fontsize{4.8pt}{5.0pt}\selectfont\,\textpm0.06} & 90.05{\fontsize{4.8pt}{5.0pt}\selectfont\,\textpm0.01} & 67.35{\fontsize{4.8pt}{5.0pt}\selectfont\,\textpm0.01} & 72.55{\fontsize{4.8pt}{5.0pt}\selectfont\,\textpm0.05} & 72.44 & 70.31 \\
 & L-Sim & 96.83{\fontsize{4.8pt}{5.0pt}\selectfont\,\textpm0.06} & 71.33{\fontsize{4.8pt}{5.0pt}\selectfont\,\textpm0.21} & \cellcolor[rgb]{0.87,0.88,0.91}\underline{51.66}{\fontsize{4.8pt}{5.0pt}\selectfont\,\textpm0.09} & \cellcolor[rgb]{0.78,0.87,0.97}\textbf{75.39}{\fontsize{4.8pt}{5.0pt}\selectfont\,\textpm0.60} & \cellcolor[rgb]{0.87,0.88,0.91}\underline{77.95}{\fontsize{4.8pt}{5.0pt}\selectfont\,\textpm0.11} & \cellcolor[rgb]{0.87,0.88,0.91}\underline{70.69}{\fontsize{4.8pt}{5.0pt}\selectfont\,\textpm0.08} & 94.35{\fontsize{4.8pt}{5.0pt}\selectfont\,\textpm0.05} & 76.56{\fontsize{4.8pt}{5.0pt}\selectfont\,\textpm0.06} & 79.20{\fontsize{4.8pt}{5.0pt}\selectfont\,\textpm0.09} & \cellcolor[rgb]{0.87,0.88,0.91}\underline{75.26} & \cellcolor[rgb]{0.87,0.88,0.91}\underline{73.06} \\
 & SCAN & 96.84{\fontsize{4.8pt}{5.0pt}\selectfont\,\textpm0.03} & \cellcolor[rgb]{0.87,0.88,0.91}\underline{73.37}{\fontsize{4.8pt}{5.0pt}\selectfont\,\textpm0.23} & 50.25{\fontsize{4.8pt}{5.0pt}\selectfont\,\textpm0.61} & 72.72{\fontsize{4.8pt}{5.0pt}\selectfont\,\textpm0.45} & 70.62{\fontsize{4.8pt}{5.0pt}\selectfont\,\textpm0.42} & 67.36{\fontsize{4.8pt}{5.0pt}\selectfont\,\textpm0.37} & 90.28{\fontsize{4.8pt}{5.0pt}\selectfont\,\textpm0.01} & 68.56{\fontsize{4.8pt}{5.0pt}\selectfont\,\textpm0.02} & 72.56{\fontsize{4.8pt}{5.0pt}\selectfont\,\textpm0.15} & 72.77 & 70.50 \\
 & L-Spar & 96.74{\fontsize{4.8pt}{5.0pt}\selectfont\,\textpm0.05} & \cellcolor[rgb]{0.96,0.87,0.70}72.24{\fontsize{4.8pt}{5.0pt}\selectfont\,\textpm0.66} & 49.84{\fontsize{4.8pt}{5.0pt}\selectfont\,\textpm0.19} & 70.29{\fontsize{4.8pt}{5.0pt}\selectfont\,\textpm1.82} & 70.80{\fontsize{4.8pt}{5.0pt}\selectfont\,\textpm0.39} & 65.82{\fontsize{4.8pt}{5.0pt}\selectfont\,\textpm0.05} & 90.06{\fontsize{4.8pt}{5.0pt}\selectfont\,\textpm0.01} & 67.42{\fontsize{4.8pt}{5.0pt}\selectfont\,\textpm0.02} & 72.64{\fontsize{4.8pt}{5.0pt}\selectfont\,\textpm0.02} & 72.31 & 70.03 \\
 & Spectral & 96.74{\fontsize{4.8pt}{5.0pt}\selectfont\,\textpm0.08} & 66.72{\fontsize{4.8pt}{5.0pt}\selectfont\,\textpm0.42} & 46.86{\fontsize{4.8pt}{5.0pt}\selectfont\,\textpm0.41} & 74.00{\fontsize{4.8pt}{5.0pt}\selectfont\,\textpm0.88} & 77.09{\fontsize{4.8pt}{5.0pt}\selectfont\,\textpm0.66} & \cellcolor[rgb]{0.96,0.87,0.70}69.88{\fontsize{4.8pt}{5.0pt}\selectfont\,\textpm0.15} & \cellcolor[rgb]{0.87,0.88,0.91}\underline{95.41}{\fontsize{4.8pt}{5.0pt}\selectfont\,\textpm0.03} & \cellcolor[rgb]{0.96,0.87,0.70}79.65{\fontsize{4.8pt}{5.0pt}\selectfont\,\textpm0.07} & \cellcolor[rgb]{0.96,0.87,0.70}80.00{\fontsize{4.8pt}{5.0pt}\selectfont\,\textpm0.15} & \cellcolor[rgb]{0.96,0.87,0.70}74.70 & \cellcolor[rgb]{0.96,0.87,0.70}72.69 \\
 & $t$-Spanner & 96.75{\fontsize{4.8pt}{5.0pt}\selectfont\,\textpm0.10} & 67.09{\fontsize{4.8pt}{5.0pt}\selectfont\,\textpm0.45} & 47.96{\fontsize{4.8pt}{5.0pt}\selectfont\,\textpm1.01} & 72.42{\fontsize{4.8pt}{5.0pt}\selectfont\,\textpm0.61} & \cellcolor[rgb]{0.96,0.87,0.70}77.43{\fontsize{4.8pt}{5.0pt}\selectfont\,\textpm0.12} & 67.51{\fontsize{4.8pt}{5.0pt}\selectfont\,\textpm0.09} & 94.72{\fontsize{4.8pt}{5.0pt}\selectfont\,\textpm0.01} & 76.38{\fontsize{4.8pt}{5.0pt}\selectfont\,\textpm0.07} & 79.30{\fontsize{4.8pt}{5.0pt}\selectfont\,\textpm0.01} & 73.91 & 71.74 \\
 & DSpar & 96.84{\fontsize{4.8pt}{5.0pt}\selectfont\,\textpm0.05} & 67.07{\fontsize{4.8pt}{5.0pt}\selectfont\,\textpm0.09} & 47.71{\fontsize{4.8pt}{5.0pt}\selectfont\,\textpm0.61} & 72.77{\fontsize{4.8pt}{5.0pt}\selectfont\,\textpm0.47} & 51.94{\fontsize{4.8pt}{5.0pt}\selectfont\,\textpm1.16} & 55.19{\fontsize{4.8pt}{5.0pt}\selectfont\,\textpm0.13} & \cellcolor[rgb]{0.96,0.87,0.70}95.22{\fontsize{4.8pt}{5.0pt}\selectfont\,\textpm0.01} & 69.39{\fontsize{4.8pt}{5.0pt}\selectfont\,\textpm0.03} & 77.43{\fontsize{4.8pt}{5.0pt}\selectfont\,\textpm0.08} & 72.03 & 69.64 \\
\midrule
\multirow{3}{*}{\raisebox{-6pt}{\rotatebox[origin=c]{90}{\fontsize{5.3pt}{5.5pt}\selectfont\shortstack{Semantic\\[-2pt]guided}}}} & Unified-LTH & 96.49{\fontsize{4.8pt}{5.0pt}\selectfont\,\textpm0.00} & 65.43{\fontsize{4.8pt}{5.0pt}\selectfont\,\textpm0.58} & 40.49{\fontsize{4.8pt}{5.0pt}\selectfont\,\textpm0.52} & 72.42{\fontsize{4.8pt}{5.0pt}\selectfont\,\textpm0.64} & 67.15{\fontsize{4.8pt}{5.0pt}\selectfont\,\textpm0.00} & 56.37{\fontsize{4.8pt}{5.0pt}\selectfont\,\textpm0.00} & 92.68{\fontsize{4.8pt}{5.0pt}\selectfont\,\textpm0.00} & 68.23{\fontsize{4.8pt}{5.0pt}\selectfont\,\textpm0.00} & 75.26{\fontsize{4.8pt}{5.0pt}\selectfont\,\textpm0.00} & 70.55 & 68.02 \\
 & AdaGLT & 96.65{\fontsize{4.8pt}{5.0pt}\selectfont\,\textpm0.00} & \cellcolor[rgb]{0.78,0.87,0.97}\textbf{81.43}{\fontsize{4.8pt}{5.0pt}\selectfont\,\textpm0.00} & 48.05{\fontsize{4.8pt}{5.0pt}\selectfont\,\textpm0.35} & 72.34{\fontsize{4.8pt}{5.0pt}\selectfont\,\textpm0.33} & 52.31{\fontsize{4.8pt}{5.0pt}\selectfont\,\textpm0.00} & 55.93{\fontsize{4.8pt}{5.0pt}\selectfont\,\textpm0.00} & 85.09{\fontsize{4.8pt}{5.0pt}\selectfont\,\textpm0.00} & 64.42{\fontsize{4.8pt}{5.0pt}\selectfont\,\textpm0.00} & 66.74{\fontsize{4.8pt}{5.0pt}\selectfont\,\textpm0.00} & 70.37 & 68.05 \\
 & MoG & \cellcolor[rgb]{0.87,0.88,0.91}\underline{96.93}{\fontsize{4.8pt}{5.0pt}\selectfont\,\textpm0.10} & 70.44{\fontsize{4.8pt}{5.0pt}\selectfont\,\textpm0.60} & 51.24{\fontsize{4.8pt}{5.0pt}\selectfont\,\textpm0.33} & 68.90{\fontsize{4.8pt}{5.0pt}\selectfont\,\textpm1.68} & 72.72{\fontsize{4.8pt}{5.0pt}\selectfont\,\textpm1.75} & 56.42{\fontsize{4.8pt}{5.0pt}\selectfont\,\textpm0.16} & 90.13{\fontsize{4.8pt}{5.0pt}\selectfont\,\textpm0.08} & 66.94{\fontsize{4.8pt}{5.0pt}\selectfont\,\textpm0.07} & 68.27{\fontsize{4.8pt}{5.0pt}\selectfont\,\textpm0.05} & 72.29 & 70.39 \\
\midrule
 & \textbf{EDiS-Lite (Ours)} & \cellcolor[rgb]{0.78,0.87,0.97}\textbf{97.06}{\fontsize{4.8pt}{5.0pt}\selectfont\,\textpm0.09} & 68.58{\fontsize{4.8pt}{5.0pt}\selectfont\,\textpm0.12} & \cellcolor[rgb]{0.78,0.87,0.97}\textbf{52.31}{\fontsize{4.8pt}{5.0pt}\selectfont\,\textpm0.85} & \cellcolor[rgb]{0.96,0.87,0.70}74.12{\fontsize{4.8pt}{5.0pt}\selectfont\,\textpm0.88} & 76.43{\fontsize{4.8pt}{5.0pt}\selectfont\,\textpm0.22} & \cellcolor[rgb]{0.78,0.87,0.97}\textbf{71.41}{\fontsize{4.8pt}{5.0pt}\selectfont\,\textpm0.26} & \cellcolor[rgb]{0.78,0.87,0.97}\textbf{95.61}{\fontsize{4.8pt}{5.0pt}\selectfont\,\textpm0.03} & \cellcolor[rgb]{0.78,0.87,0.97}\textbf{81.19}{\fontsize{4.8pt}{5.0pt}\selectfont\,\textpm0.08} & \cellcolor[rgb]{0.78,0.87,0.97}\textbf{80.78}{\fontsize{4.8pt}{5.0pt}\selectfont\,\textpm0.42} & \cellcolor[rgb]{0.78,0.87,0.97}\textbf{75.95} & \cellcolor[rgb]{0.78,0.87,0.97}\textbf{74.01} \\
\bottomrule
\end{tabular*}

\caption{Mean $\pm$ standard deviation (\%) at retention ratio 0.30. The metric is accuracy, except ROC-AUC on Minesweeper, Questions, and Proteins. Best, second-best, and third-best results are shaded blue/bold, gray/underlined, and tan, respectively. AM and GM denote the arithmetic and geometric means across all 19 datasets. The first four reference rows are excluded from ranking.}
\label{tab:edis-main-results-all-datasets}
\end{table*}

\begin{table}[t]
\caption{\textbf{Summary of Table~\ref{tab:edis-main-results-all-datasets} over all 19 datasets} at $\rho=0.3$/$0.5$. Score is the mean of per-dataset accuracy or ROC-AUC. Rank and Gap are to the best ranked method per dataset, averaged; shading as in Table~\ref{tab:edis-main-results-all-datasets}.}
\label{tab:main-results-summary}
\centering
\scriptsize
\setlength{\tabcolsep}{4pt}
\renewcommand{\arraystretch}{0.8}

\begin{tabular}{@{}ll ccc | ccc@{}}
\toprule
\rowcolor[rgb]{0.90,0.93,0.97}
& & \multicolumn{3}{c}{$\boldsymbol{\rho=0.3}$}
& \multicolumn{3}{c}{$\boldsymbol{\rho=0.5}$} \\

\rowcolor[rgb]{0.90,0.93,0.97}
& \textbf{Method}
& \textbf{Score}\,$\uparrow$
& \textbf{Rank}\,$\downarrow$
& \textbf{Gap}\,$\downarrow$
& \textbf{Score}\,$\uparrow$
& \textbf{Rank}\,$\downarrow$
& \textbf{Gap}\,$\downarrow$ \\

\midrule

\multirow{11}{*}{\rotatebox[origin=c]{90}{\scriptsize Topology-guided}}

& Random
& 73.71 & 8.50 & 3.49
& 75.95 & 7.55 & 3.04 \\

& Local Degree
& 74.03 & 7.05 & 3.17
& 76.60 & 7.18 & 2.39 \\

& Rank Degree
& 70.95 & 10.24 & 6.26
& 73.36 & 9.87 & 5.63 \\

& Forest Fire
& 74.36 & 7.08 & 2.84
& \cellcolor[rgb]{0.96,0.87,0.70}77.10 & \cellcolor[rgb]{0.87,0.88,0.91}\underline{4.42} & \cellcolor[rgb]{0.96,0.87,0.70}1.89 \\

& SCAN
& 72.77 & 8.79 & 4.43
& 75.71 & 8.08 & 3.28 \\

& L-Spar
& 72.31 & 10.24 & 4.89
& 74.60 & 10.50 & 4.39 \\

& G-Spar
& 72.44 & 10.55 & 4.76
& 74.49 & 10.92 & 4.50 \\

& L-Sim
& \cellcolor[rgb]{0.87,0.88,0.91}\underline{75.26} & \cellcolor[rgb]{0.87,0.88,0.91}\underline{4.74} & \cellcolor[rgb]{0.87,0.88,0.91}\underline{1.94}
& \cellcolor[rgb]{0.87,0.88,0.91}\underline{77.62} & \cellcolor[rgb]{0.96,0.87,0.70}4.63 & \cellcolor[rgb]{0.87,0.88,0.91}\underline{1.37} \\

& Spectral
& \cellcolor[rgb]{0.96,0.87,0.70}74.70 & \cellcolor[rgb]{0.96,0.87,0.70}6.13 & \cellcolor[rgb]{0.96,0.87,0.70}2.50
& 76.39 & 6.16 & 2.60 \\

& $t$-Spanner
& 73.91 & 7.66 & 3.30
& 75.89 & 7.50 & 3.10 \\

& DSpar
& 72.03 & 7.16 & 5.17
& 74.40 & 7.16 & 4.60 \\

\midrule

\multirow{3}{*}{\rotatebox[origin=c]{90}{\scriptsize\shortstack{Sem\\guided}}}

& Unified-LTH
& 70.55 & 11.87 & 6.65
& 71.19 & 12.76 & 7.81 \\

& AdaGLT
& 70.37 & 9.89 & 6.83
& 71.76 & 11.37 & 7.23 \\

& MoG
& 72.29 & 7.58 & 4.91
& 74.02 & 8.58 & 4.97 \\

\midrule

& \textbf{EDiS-Lite}
& \cellcolor[rgb]{0.78,0.87,0.97}\textbf{75.95} & \cellcolor[rgb]{0.78,0.87,0.97}\textbf{2.53} & \cellcolor[rgb]{0.78,0.87,0.97}\textbf{1.25}
& \cellcolor[rgb]{0.78,0.87,0.97}\textbf{77.76} & \cellcolor[rgb]{0.78,0.87,0.97}\textbf{3.32} & \cellcolor[rgb]{0.78,0.87,0.97}\textbf{1.23} \\

\bottomrule
\end{tabular}
\end{table}

\textbf{(RQ2) How large is the average margin?}
Table~\ref{tab:main-results-summary} aggregates results over all 19 datasets. The mean benchmark score averages per-dataset scores, which are accuracy on 16 datasets and ROC-AUC on Minesweeper, Questions, and ogbn-proteins. Among the ranked methods, EDiS-Lite achieves the highest mean benchmark score at both budgets. Since rank and gap are computed per dataset relative to the best ranked method and then averaged, EDiS-Lite attains average ranks of $2.53$/$3.32$ and average gaps of only $1.25$/$1.23$ points at $\rho=0.3$/$0.5$, respectively. Per-dataset ranks are easier to interpret across the two metrics.

\textbf{(RQ3, EDiS-Lite vs.\ EDiS-Full) What does full-graph
inference recover over sparse-graph inference, and what does it
cost?}
\label{sec:full-graph-gap}

Both variants share the same trained model and differ only at
inference. At tight budgets, EDiS-Lite can trail significantly behind
the independently tuned Full GCN reference, most visibly on
Minesweeper and Pokec. Switching the evaluation graph to the full
graph (EDiS-Full) recovers much of it without retraining, and the
gap shrinks as $\rho$ grows (Figure~\ref{fig:full-graph-gap}).
Reddit is a mild exception, where EDiS-Lite already sits close to
Full GCN. That recovery is not free. Evaluation time rises and peak
GPU memory increases mildly, while training cost is unchanged. These
results make EDiS-Lite the default, with EDiS-Full reserved for
cases where the accuracy gain is worth the extra inference cost.
Full curves and the runtime and memory breakdown are in
Appendices~\ref{app:inference-profile} and~\ref{app:runtime-memory}.

\begin{figure}[h]
\centering
\includegraphics[width=\linewidth]{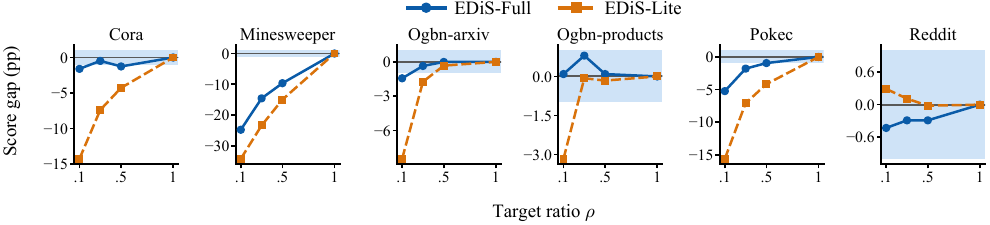}
\caption{\textbf{Full-graph evaluation across regimes.}  \textbf{MaxCF} with cosine scores.}
\label{fig:full-graph-gap}
\end{figure}

\vspace{-.15in}

\subsection{Ablations and Generality}
\label{sec:ablations}

\setlength{\intextsep}{3pt}

\textbf{(RQ4, EDiS-Lite vs.\ matched-budget controls) Do cacheable
subgraphs and per-epoch resampling drive the gains, or would any
same-budget sampling do as well?} We compare EDiS-Lite against three
controls that fix the backbone, scores, and budget. \textbf{\underline{Frozen}}
reuses one sampled graph for the entire run, \textbf{\underline{Direct}} samples
edges by score without forming subgraphs, and \textbf{\underline{Shuffled groups}}
keeps subgraph sizes constant but scrambles which edges belong to each
subgraph, thus separating structure from size.

\begin{figure}[H]
\centering
\includegraphics[width=\linewidth]{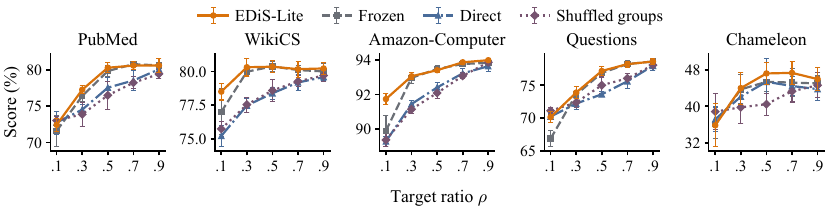}
\caption{\textbf{Decomposition and resampling across edge budgets.}}
\label{fig:mechanism-profile}
\end{figure}

Figure~\ref{fig:mechanism-profile} shows EDiS-Lite matching or
leading all three controls across the measured budget range, with
the largest margin at tight budgets where each epoch graph holds few
edges. Removing epoch variation (Frozen) hurts most under small
budgets. Dropping subgraph formation (Direct) leaves a gap that closes
only near the full graph. Shuffled groups, despite matching
EDiS's subgraph sizes exactly, is consistently the weakest control.
These results indicate that the gain comes from the structure of the
decomposition and from resampling it every epoch, not merely from
the number or size
of edges available. Full results are given in
Appendix~\ref{app:mechanism-profile}.

\textbf{(RQ5, selectors and edge scores) How sensitive is \edis{}'s predictive performance to the residual selector and edge-scoring function?}
Theorem~\ref{thm:app_budget_retention} applies across selectors, but does not establish comparable predictive performance, so we evaluate this empirically
by sweeping seven selectors (MaxCF, MinCF, Degree, $k$-NN,
Spanner, Random, Hybrid) and five edge scores (cosine, Euclidean,
Dot, Uniform, MLP-score) through the same composition procedure, changing nothing else.
\begin{figure}[!ht]
\centering
\includegraphics[width=0.8\linewidth]{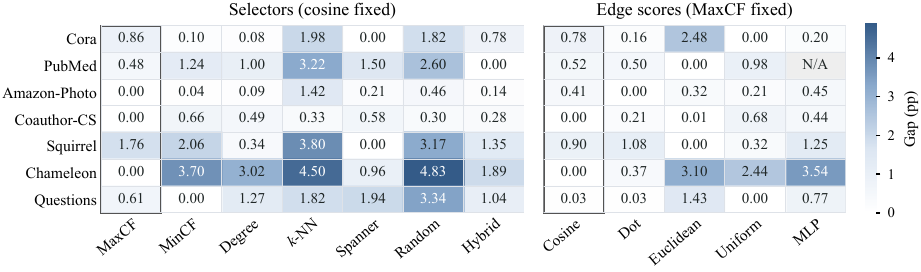}
\caption{\textbf{Selector and score sensitivity.} GCN at $\rho=0.3$.
Cells show the gap to the best mean per row and panel (pp, lower is
better). Outlines mark defaults. N/A marks PubMed--MLP-score, which is
not applicable.}
\label{fig:selector-gap}
\end{figure}
Figure~\ref{fig:selector-gap} reports the observed sensitivity. Across all seven
selectors and five scores, most cells sit within a couple of points
of the best choice on a given dataset, and no single selector or
score dominates. The default MaxCF and cosine pairing stays
competitive throughout. This extends to other
datasets and to the GraphSAGE backbone
(Appendix~\ref{app:selector-profiles}--\ref{app:score-profiles},
\ref{app:sage-robustness}).

\textbf{(RQ6) How does extraction depth affect a fixed-budget
graph?} The cap $K_{\max}$ sets how many subgraphs we extract, and the
budget $B$ sets how many edges each epoch uses. A larger cap moves
edges from the residual pool into subgraphs. We vary $K_{\max}$ on five
large graphs at $\rho=0.3$ (Figure~\ref{fig:kmax-sensitivity}). More
subgraphs help on some datasets and hurt on others, and Lite and Full
do not always agree. Thus, additional subgraphs do not always give a better
graph, and once extraction stops before the cap, a larger $K_{\max}$
changes nothing (Appendix~\ref{app:kmax-sensitivity}).

\begin{figure}[H]
\centering
\includegraphics[width=\linewidth]{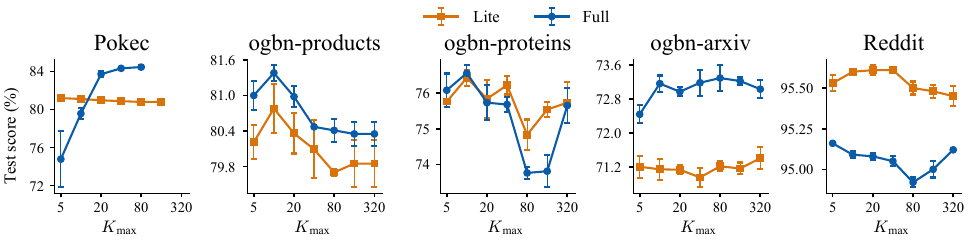}
\caption{\textbf{Extraction depth at $\rho=0.3$.} Reported means
with standard-deviation bars. Proteins uses ROC-AUC, others
accuracy. Each panel has its own scale.}
\label{fig:kmax-sensitivity}
\end{figure}

\textbf{(RQ7) Does the method work only with GCN?} We fix MaxCF and
cosine-similarity but switch the backbone from GCN to GraphSAGE
(Figure~\ref{fig:backbone-profile}). Both backbones improve with the
budget and stay close, and neither is better overall. \edis{}
therefore works with either backbone (Appendix~\ref{app:backbone-profile}).

\begin{figure}[H]
\centering
\includegraphics[width=0.9\linewidth]{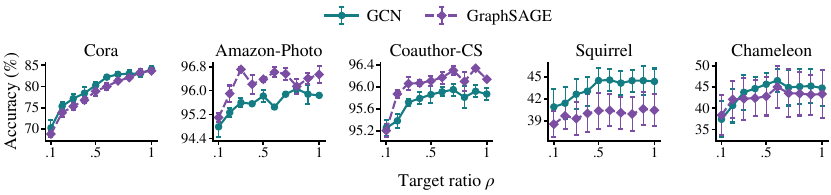}
\caption{\textbf{Backbone comparison within EDiS-Lite.}}
\label{fig:backbone-profile}
\end{figure}

\textbf{Further analysis.}\quad The appendix reports more
experiments. It covers runtime and memory by budget
(Appendix~\ref{app:runtime-memory}), a cost analysis of construction and
composition (Appendix~\ref{app:complexity}), the full ablation study
(Appendix~\ref{app:additional-profiles}), complete per-dataset results
(Appendix~\ref{app:experimental}), and its limitations
(Appendix~\ref{app:limitations}).

\section{Conclusion}

We introduced \edis{}, a framework that separates one-time structural
extraction from per-epoch graph composition. 
We extract structure once, then compose many sparse
graphs from it for training, avoiding both a fixed topology and repeated
extraction. The same cached library supplies exact-budget graphs at
every epoch and retention level without changing the GNN architecture
or objective. Our default MaxCF construction is just one instance of
this framework, which stays modular across scores and selectors.

Across 19 node-classification benchmarks under the same edge-budget,
EDiS-Lite achieves the highest mean benchmark score among the ranked
methods at both retention levels.
EDiS-Full, reported as a reference using full-graph for inference,
shows how much accuracy can be recovered when sparsification
is restricted to training, and at a modest extra cost for inference. 
Structural decomposition and epoch-to-epoch variation matter most at tighter
edge-budgets. More broadly, a cacheable structural representation that generates
many sparse views can be more effective than searching for a single
sparse graph to replace the original.
Our future work will consider task-aware scores for decomposition, dynamic (temporal) composition, and extensions to graph- and link-level
prediction.

\newpage

\section*{AI Use Disclosure}
Generative AI tools were used to assist with polishing the manuscript and
supporting research execution, including software implementation and experimental workflows. All AI-assisted content and code were
reviewed, revised, and validated by the authors. The authors independently
verified the reported results and claims and take full responsibility for the
final content of this work.

\section*{Ethics Statement}
This work does not involve human subjects, personal data, or new data collection. All experiments
use publicly available benchmark graphs under their original licenses. We are not aware of ethical
risks specific to this methodology beyond those inherent to downstream applications of graph
learning.

\bibliographystyle{iclr2027_conference}
\bibliography{reference}

\let\addcontentsline\savedaddcontentsline
\renewcommand{\contentsname}{Appendix Contents}

\newpage
{\footnotesize\linespread{0.85}\selectfont
\hypersetup{linkcolor=black}
\tableofcontents
}
\newpage
\appendix
\section{Notation}
\label{app:notation}

This appendix collects the notation used throughout \edis{}, including
the symbols introduced for the proofs in Appendix~\ref{app:proofs}.
Table~\ref{tab:notation} lists the main symbols, grouped by where each
is introduced. Symbols that appear only in a single appendix result
are instead defined locally, at the point where they are used. The
letter $H$ is intentionally reused as a generic placeholder for the
current graph.

\begin{table}[H]
\centering
\footnotesize
\setlength{\tabcolsep}{5pt}
\renewcommand{\arraystretch}{0.9}

\rowcolors{2}{white}{black!3}
\begin{tabularx}{\columnwidth}{
    >{\raggedright\arraybackslash}p{0.33\columnwidth}
    >{\raggedright\arraybackslash}X
}
\toprule

\textbf{Symbol} & \textbf{Meaning} \\

\midrule

\multicolumn{2}{@{}l}{\textit{\textbf{Graph and edge-budget}}} \\
\addlinespace[0.5pt]

$\mathcal{G}=(\mathcal{V},\mathcal{E},\mathbf{X})$
&
Attributed undirected graph ($\mathcal V,\mathbf X$ fixed; only the
edge set varies below)
\\

$n=|\mathcal{V}|$, $m=|\mathcal{E}|$
&
Number of nodes and edges
\\

$\rho$
&
Edge-retention ratio
\\

$B=\lceil \rho m\rceil$
&
Number of active edges per epoch
\\

\addlinespace[0.5pt]
\midrule
\multicolumn{2}{@{}l}{\textit{\textbf{One-time edge decomposition}}} \\
\addlinespace[0.5pt]

$s_e$
&
Score assigned to edge $e$
\\

$\mathcal F$, $\mathcal F_k$
&
Residual selector ($\mathcal F_k$: round-dependent schedule)
\\

$\mathcal{E}_{\mathrm{rem}}^{(k)}$
&
Edges remaining after decomposition round $k$
\\

$\mathcal{E}_k$
&
Subgraph extracted at round $k$
\\

$K$
&
Number of extracted subgraphs
\\

$K_{\max}$
&
Cap on extraction rounds ($K\le K_{\max}$)
\\

$\mathcal{E}_{\mathrm{res}}$
&
Residual edges after the final round
\\

$U_k=\mathcal E_1\cup\cdots\cup\mathcal E_k$
&
Union of the first $k$ subgraphs ($U_K$: full stored union)
\\

\addlinespace[0.5pt]
\midrule
\multicolumn{2}{@{}l}{\textit{\textbf{Per-epoch graph construction}}} \\
\addlinespace[0.5pt]

$T$ & Total number of training epochs \\
$\operatorname{Decompose}$ & One-time decomposition of $\mathcal G$ under
score $s$ and selector $\mathcal F$ into the stored $\mathcal D$ \\
$\operatorname{Compose}$ & Budgeted composition, mapping the stored decomposition and $B$
to $\mathcal{E}_B^{(t)}$ \\
$\mathcal D$ & Edge partition with cached subgraph weights, reused across edge-budgets \\
$w_k$ & Positive subgraph weight $\max\{\sum_{e\in\mathcal E_k}s_e,\varepsilon\}$ \\
$R_r$, $k_r$ & Remaining subgraph indices and the subgraph selected at draw $r$ \\
$\xi_t$ & Epoch randomness, including $\pi^{(t)}$ and within-pool draws \\
$A^{(t)}$ & Subgraph indices selected at epoch $t$ \\
$\widetilde{\mathcal{E}}_k^{(t)}$ & Selected subgraph edges
remaining after trimming \\
$Q_B^{(t)}$ & Residual edges used to complete the per-epoch edge-budget \\
$\pi^{(t)}$ & Weighted subgraph order for epoch $t$ \\

$\mathcal{E}_B^{(t)}$
&
Active edge set at epoch $t$
\\

$\mathcal{G}_B^{(t)}$
&
Graph for message passing at epoch $t$ ($\mathcal V,\mathbf X$ unchanged)
\\

$\widehat{\mathbf A}_B^{(t)}$
&
Normalized adjacency of $\mathcal G_B^{(t)}$ used for message passing
\\

\addlinespace[0.5pt]
\midrule
\multicolumn{2}{@{}l}{\textit{\textbf{Score-aware guarantee}}} \\
\addlinespace[0.5pt]

$\tau$
&
Edge-score threshold
\\

$\mathcal{E}_{\ge\tau}
 =\{e\in\mathcal{E}:s_e\ge\tau\}$
&
Edges with score at least $\tau$
\\

$\mathcal{G}_{\ge\tau}=(\mathcal V,\mathcal{E}_{\ge\tau})$,
$U_{k,\ge\tau}=U_k\cap\mathcal{E}_{\ge\tau}$
&
Threshold graph and the stored union restricted to it
\\

$a_j(S)$, $b_j(S)$
&
$j$-th largest crossing score at cut $S$ in $\mathcal G$ and in $U_k$
\\

$T_k(S;A)=\textstyle\sum\nolimits_{j=1}^{\min\{k,|\delta_A(S)|\}}s_{(j)}(\delta_A(S))$
&
Sum of the $\min\{k,\cdot\}$ largest scores crossing $S$ in $A$
\\

$\delta_H(S)$
&
Edges of $H$ crossing the cut $(S,\mathcal{V}\setminus S)$
\\

$d_H(v)$ & Degree of $v$ in $H$ \\

\addlinespace[0.5pt]
\midrule
\multicolumn{2}{@{}l}{\textit{\textbf{Proof-specific notation
(Appendix~\ref{app:proofs})}}} \\
\addlinespace[0.5pt]

$s$ & Additive score function, extending $s_e$ to edge sets \\
$\omega(F)=\sum_{e\in F}\omega_e$ & Generic additive edge-weight function \\
$D_k$ & Stored subgraph $k$, with $D_0=\mathcal E_{\mathrm{res}}$ and
$D_k=\mathcal E_k$ for $k\ge1$ \\
$n_k=|D_k|$ & Size of subgraph $k$ \\
$r_k$ & Number of edges retained from subgraph $k$ \\
$\theta_k=r_k/n_k$ & Retained fraction / inclusion probability given
the order (0 if the pool is empty) \\
$x_k(S,\tau)=|D_k\cap\delta_{\mathcal G}(S)\cap\mathcal E_{\ge\tau}|$ &
Edges of subgraph $k$ crossing $S$ with score at least $\tau$ \\

\bottomrule
\end{tabularx}
\caption{Main notation used in \edis{}.}
\label{tab:notation}
\end{table}

\section{Extended Related Work}
\label{app:related_work}

\paragraph{Sampling-based GNN training.}

Many GNN methods reduce training cost by processing only part of the
graph at a time rather than constructing a new topology. Neighbor
sampling selects a limited number of neighbors per node at every GNN
layer \citep{Hamilton:2017tp}. Layer-wise sampling instead selects a
fixed number of nodes by importance \citep{Chen:2018fastgcn}. LADIES
extends this idea by picking each layer's nodes based on which nodes
were kept in the layer above it, so the sampled nodes stay connected
from one layer to the next \citep{Zou:2019ladies}. Other
methods divide the graph into clusters and train on these smaller
parts \citep{Chiang:2019clustergcn}, or sample training subgraphs and
pass messages only within them \citep{Zeng:2020graphsaint}. AGS-GNN
\citep{Das:2024agsgnn} guides node and subgraph sampling by feature
similarity and diversity, and reappears below as an edge-disjoint
variant. Related approaches reuse historical node representations
\citep{Fey:2021gnnautoscale},
quantize activations \citep{Ding:2021vqgnn} or compress them further
\citep{Liu:2022exact}, or repartition the graph across workers
\citep{Wan:2022bnsgcn} or compensate for missing cross-partition
messages \citep{Shi:2023lmc}.

These methods reduce how much of the graph is processed at once, but
the resulting structures are repeatedly resampled from the original
topology. \edis{} instead organizes the topology itself and controls
which precomputed subgraphs are used under a specified
edge-retention budget.

\paragraph{Graph sparsification and topology modification.}

A second group of methods changes which edges are used for message
passing. Sparsification methods remove edges while preserving
structural or task-relevant information. Classical topology-based
approaches include uniform random removal, Local Degree
\citep{Hamann:2016sparsification}, Rank Degree
\citep{Voudigari:2016rankdegree}, Forest Fire
\citep{Leskovec:2006sampling}, and SCAN \citep{Xu:2007scan}.
Similarity-based approaches include L-Spar, G-Spar, and L-Sim, which
retain edges using neighborhood similarity \citep{Satuluri:2011lsim}.
Other structural approaches include spectral sparsification via
effective resistance \citep{Spielman:2008effres}, $t$-spanners
\citep{Peleg:1989spanners}, and degree-based DSpar
\citep{Liu:2023dspar}. A classical line of work builds cut sparsifiers
by repeatedly extracting a maximum spanning forest from what is left
of the graph, which gives provable guarantees on how well cut sizes
are preserved \citep{Benczur:1996cutsparsification,ForsterDeVos2023}. \edis{} reuses this same
repeated-forest-extraction idea, but keeps each extracted forest as a
separate, cacheable training subgraph instead of merging them into
one fixed sparsifier.

A separate line learns or adapts which elements to retain. UGS
\citep{Chen:2021ugs}, from the Unified Lottery Ticket framework,
prunes edges and model parameters together over repeated training
rounds. TEDDY \citep{TEDDY} instead removes edges in a single pass
using a degree-based rule, avoiding UGS's repeated pruning cycles.
AdaGLT \citep{Zhang:2024adaglt} adds adaptive, automated
sparsification. SGS-GNN \citep{SGS-GNN} learns which edges to keep
using supervision and lets the user directly set the target sparsity.
MoG \citep{Zhang:2025mog} combines sparsification experts with
different pruning criteria and sparsity levels into task-adaptive
graphs.

Different sparsification methods preserve different graph properties,
and no single method preserves all of them well
\citep{Chen:2023sparsifybench}. Removing edges therefore affects not
only training cost but also how information propagates.
Sparsification has also been used against oversmoothing and
oversquashing, via spectrum-preserving pruning
\citep{Liang:2025spectralsparse,Jamadandi:2024spectralpruning} or a
truss-based approach that removes edges outside dense subgraphs
\citep{Hossain:2024trusssparse}. These methods
generally focus on identifying or adapting a useful sparse topology
for message passing, rather than on how that topology is exposed to
the GNN across training steps.

A related line changes the topology stochastically during training.
DropEdge removes a random subset of edges at each iteration
\citep{Rong:2020dropedge}. Even feature dropout, which does not
explicitly remove edges, induces different
stochastic subgraphs across feature dimensions. A theoretical
analysis attributes part of dropout's ability to mitigate
oversmoothing in GCNs to this variation \citep{Luo:2025dropoutgcn}.
Other methods replace random removal with learned edge predictors
\citep{Chen:2025adedgedrop}, or add parameters to stabilize message
passing under changing structures \citep{Lee:2025aggbuffer}.

Structures generated by DropEdge or dropout-induced subgraphs at
different training steps can substantially overlap in the edges they
use. \edis{} instead decomposes the graph once and reuses the
resulting non-overlapping subgraphs, composing them under an
explicit edge-retention budget that lets the message-passing topology
vary while directly controlling the number of retained edges.

\paragraph{Cacheable edge-disjoint decomposition.}

A smaller body of work trains GNNs using multiple graph structures
rather than one sparse topology. The AGS-GNN sampling framework
\citep{Das:2024agsgnn} introduced above also includes AGS-GS-Disjoint,
which sequentially constructs sparse subgraphs with different
heuristics, removing selected edges before constructing the next.

Our objective still differs, and so does the intent behind the
construction itself. AGS-GS-Disjoint's edge-disjoint subgraphs feed a
dual-channel sampler that uses submodular function maximization to
select similar and diverse neighborhoods, adapting to homophily and
heterophily. Its per-epoch sample draws a fixed number of these
cached subgraphs padded with random residual edges, with no
exact-edge-count guarantee or attached structural theory. \edis{}
instead studies edge-disjoint decomposition as a general
sparsification primitive. Any residual selector composes into an
exact-$B$-edge training graph, with a proven cut-preservation
guarantee for the stored decomposition and a retention guarantee for
the composed graph that holds regardless of selector. This lets us
directly compare organizing a fixed number of retained edges into
non-overlapping subgraphs against alternative ways of using the same
budget, rather than optimizing for informative neighborhoods.

SpanGNN \citep{Gu:2024spangnn} trains using spanning subgraphs to
reduce memory, and MoG \citep{Zhang:2025mog} combines sparsification
experts to produce adaptive sparse graphs. GERN \citep{Bonchi:2025gern}
also exposes a GNN to multiple sparse structures, progressively
training over a sequence of random spanning trees transformed into
path graphs. Unlike \edis{}, it does not build a persistent
edge-disjoint repository, progressing instead through independently
generated sparse path structures toward efficient training rather
than budget-controlled composition of a fixed repository. To our
knowledge, none of them partitions the original edge set into a
persistent, mutually edge-disjoint collection that is later
recombined under a retained-edge constraint.

These approaches confirm that exposing a GNN to multiple sparse
structures can help, though each still addresses a different question
from \edis{}, namely how to allocate a fixed retained-edge budget across a
cacheable collection of non-overlapping subgraphs. The graph is
decomposed once, unused edges remain explicitly accounted for in a
residual subgraph, and the subgraphs are recomposed under the same
edge-budget constraint at every epoch, so different training-graph
constructions can be compared under the same retained-edge count.
Table~\ref{tab:relwork-comparison} compares these methods along these
axes.

\providecommand{\good}{\ding{51}}
\providecommand{\bad}{\ding{55}}
\begin{table}[H]
\centering
\tiny
\setlength{\tabcolsep}{3.5pt}
\renewcommand{\arraystretch}{0.87}
\rowcolors{2}{white}{black!3}
\begin{tabularx}{\linewidth}{@{}X l c c c c@{}}
\toprule
\textbf{Method} & \textbf{Type} &
\makecell{\textbf{Edge}\\\textbf{budget}} &
\textbf{Reusable} &
\makecell{\textbf{Multi-}\\\textbf{struct.}} &
\makecell{\textbf{Edge-}\\\textbf{disjoint}} \\
\midrule
$t$-Spanner \citep{Peleg:1989spanners} & Sparsification & \good & \good & \bad & N/A \\
Forest Fire \citep{Leskovec:2006sampling} & Sampling & \bad & \bad & \bad & N/A \\
SCAN \citep{Xu:2007scan} & Sparsification & \good & \good & \bad & N/A \\
Spectral \citep{Spielman:2008effres} & Sparsification & \good & \good & \bad & N/A \\
L-Spar \citep{Satuluri:2011lsim} & Sparsification & \good & \good & \bad & N/A \\
G-Spar \citep{Satuluri:2011lsim} & Sparsification & \good & \good & \bad & N/A \\
L-Sim \citep{Chen:2023sparsifybench} & Sparsification & \good & \good & \bad & N/A \\
Local Degree \citep{Hamann:2016sparsification} & Sparsification & \good & \good & \bad & N/A \\
Rank Degree \citep{Voudigari:2016rankdegree} & Sampling & \bad & \bad & \bad & N/A \\
Neighbor Sampling \citep{Hamilton:2017tp} & Sampling & \bad & \bad & \good & \bad \\
FastGCN \citep{Chen:2018fastgcn} & Sampling & \bad & \bad & \good & \bad \\
LADIES \citep{Zou:2019ladies} & Sampling & \bad & \bad & \good & \bad \\
Cluster-GCN \citep{Chiang:2019clustergcn} & Sampling & \bad & \good & \good & \bad \\
DropEdge \citep{Rong:2020dropedge} & Stochastic & \bad & \bad & \good & \bad \\
GraphSAINT \citep{Zeng:2020graphsaint} & Sampling & \bad & \bad & \good & \bad \\
Unified-LTH / UGS \citep{Chen:2021ugs} & Learned sparsif. & \good & \good & \bad & N/A \\
Cut Sparsification \citep{ForsterDeVos2023} & Sparsification & \good & \good & \bad & N/A \\
DSpar \citep{Liu:2023dspar} & Sparsification & \good & \good & \bad & N/A \\
AGS-GS-Disjoint \citep{Das:2024agsgnn} & Sampling & \bad & \good & \good & \good \\
TEDDY \citep{TEDDY} & Learned sparsif. & \good & \good & \bad & N/A \\
AdaGLT \citep{Zhang:2024adaglt} & Learned sparsif. & \good & \good & \bad & N/A \\
SpanGNN \citep{Gu:2024spangnn} & Multi-structure & \bad & \good & \good & \bad \\
Feature Dropout \citep{Luo:2025dropoutgcn} & Stochastic & \bad & \bad & \good & \bad \\
Learned Edge Predictors \citep{Chen:2025adedgedrop} & Learned stochastic & \bad & \bad & \good & \bad \\
Aggregation Buffering \citep{Lee:2025aggbuffer} & Stabilization & N/A & N/A & N/A & N/A \\
SGS-GNN \citep{SGS-GNN} & Learned sparsif. & \good & \good & \bad & N/A \\
MoG \citep{Zhang:2025mog} & Learned sparsif. & \good & \good & \good & \bad \\
GERN \citep{Bonchi:2025gern} & Multi-structure & \bad & \good & \good & \bad \\
\midrule
\textbf{EDiS (ours)} & Sparsification & \good & \good & \good & \good \\
\bottomrule
\end{tabularx}
\caption{Comparison of graph sampling, sparsification, and
multi-structure training methods surveyed in
Section~\ref{sec:RelatedWork} and Appendix~\ref{app:related_work}.
\emph{Budget}: exposes an explicit target edge count or ratio.
\emph{Reusable}: the structure is computed once and reused rather
than resampled from scratch every step. \emph{Multi-struct.}:
training exposes the GNN to more than one structural view.
\emph{Edge-disjoint}: multiple structures, when used, share no
edges. ``N/A'' marks a column that does not apply to a given method.}
\label{tab:relwork-comparison}
\end{table}

\section{Formal Definitions and Construction}
\label{app:edis_technical}

This appendix gives the formal definitions and exact composition rules
underlying the intuitive description in Section~\ref{sec:methodology}.

\subsection{Graph, GNN, and Edge Budget}
\label{app:edis_setup}

\begin{definition}[\underline{Graph}]
\label{def:graph}
Let $\mathcal G=(\mathcal V,\mathcal E,\mathbf X)$ be a finite,
loopless, undirected graph with $n=|\mathcal V|$ nodes,
$m=|\mathcal E|$ edges, and
$\mathbf X\in\mathbb R^{n\times d_0}$.
Each undirected edge is counted once.
\end{definition}

\begin{definition}[\underline{Graph neural network}]
\label{def:gnn}
A message-passing GNN updates node representations using their
neighbors. A graph convolutional layer \citep{Kipf:2017tc} uses
\begin{equation}
    \mathbf H_{\ell+1}
    =
    \sigma\bigl(
    \hat{\mathbf A}\mathbf H_{\ell}\mathbf W_{\ell}
    \bigr),
    \qquad
    \mathbf H_{0}=\mathbf X,
    \label{eq:prelim_gcn_layer}
\end{equation}
where $\hat{\mathbf A}$ is normalized adjacency,
$\mathbf W_{\ell}$ is a learned weight matrix, and $\sigma$ is a
nonlinearity.
\end{definition}

\begin{definition}[\underline{Edge budget}]
\label{def:edge_budget}
At epoch $t$, nodes and features remain fixed while training uses
\begin{equation}
    \mathcal G_B^{(t)}
    =
    (\mathcal V,\mathcal E_B^{(t)},\mathbf X),
    \qquad
    \mathcal E_B^{(t)}\subseteq\mathcal E,
    \qquad
    |\mathcal E_B^{(t)}|=B.
    \label{eq:prelim_epoch_budget}
\end{equation}
The integer budget satisfies $0\le B\le m$. A retention ratio
$\rho\in(0,1]$ sets $B=\lceil\rho m\rceil$. Indices $k$
and $t$ denote extraction rounds and training epochs.
\end{definition}

\subsection{Edge Scores and Residual Selectors}
\label{app:edis_scores}

\edis{} separates edge scoring from structural extraction.

\begin{definition}[\underline{Edge score}]
\label{def:edge_score}
A fixed score $s:\mathcal E\rightarrow\mathbb R$ assigns larger values
to preferred edges, computed once before extraction. Uniform scores
set $s_e=1$. Our default is shifted
cosine similarity between endpoint features
(Section~\ref{subsec:budget_construction}).
\end{definition}

\begin{definition}[\underline{Residual selector}]
\label{def:residual_selector}
At extraction round $k$, a residual selector $\mathcal F_k$ maps the
current residual graph and edge scores to
\begin{equation}
    \mathcal E_k
    =
    \mathcal F_k
    \bigl(
    \mathcal V,
    \mathcal E_{\mathrm{rem}}^{(k-1)},
    s
    \bigr)
    \subseteq
    \mathcal E_{\mathrm{rem}}^{(k-1)},
\end{equation}
after which
\begin{equation}
    \mathcal E_{\mathrm{rem}}^{(k)}
    =
    \mathcal E_{\mathrm{rem}}^{(k-1)}
    \setminus
    \mathcal E_k.
    \label{eq:prelim_residual_extraction}
\end{equation}
The selector may use $s$ or ignore it and need not return a forest or a
fixed number of edges.
\end{definition}

The framework initializes the residual edge set to
$\mathcal E_{\mathrm{rem}}^{(0)}=\mathcal E$ and pairs any fixed score
with any residual selector. Implemented scores include cosine,
Euclidean, dot-product, uniform, and learned preferences. Implemented
selectors include maximum- and minimum-weight covering forests,
uniform sampling, degree-based selection, top-$k$ neighbors,
spanners, and a hybrid chaining several of these
(Appendix~\ref{app:selector_family}).

\subsection{Maximum-Score Residual Forests}
\label{app:edis_forests}

\begin{definition}[\underline{Covering forest}]
Given $H=(\mathcal V,E_H)$, let
\begin{equation}
    \mathfrak F(H)
    =
    \left\{
    F\subseteq E_H:
    F\text{ is acyclic and connects every connected component of }H
    \right\}.
    \label{eq:spanning_forest_set}
\end{equation}
\end{definition}

\begin{definition}[\underline{Principal selector}]
\label{def:principal_selector}
The default selector $\mathcal F_{\max}$ returns a maximum-score
covering forest,
\begin{equation}
    F^\star
    \in
    \operatorname*{arg\,max}_{F\in\mathfrak F(H)}
    \sum_{e\in F}s_e.
    \label{eq:max_spanning_forest}
\end{equation}
\end{definition}

This forest is obtained by classical maximum-weight graphic-matroid
greedy \citep{Edmonds1971}, equivalently by Kruskal's algorithm in
decreasing score order \citep{Kruskal1956}. Repeating this extraction
produces a partition into successive residual forests, giving the
edge-disjoint decomposition and subgraph weights of
Section~\ref{subsec:budget_construction}
(Eq.~\eqref{eq:prelim_partition}). It does not assert
a minimum-cardinality forest partition. That stronger guarantee instead
comes from matroid partition \citep{Edmonds1968}.

The subgraphs, their weights, and the
residual pool are computed once and reused across all training epochs
and all feasible budgets. Because composition
(Appendix~\ref{app:edis_composition}) never re-invokes the selector,
this holds regardless of how many budget-epoch pairs $(B,t)$ are
subsequently queried, and distinct epochs need not compose the same
graph (Proposition~\ref{prop:reuse_variation},
Appendix~\ref{app:reuse_feasibility}).

\subsection{Classical Forest Properties}
\label{app:edis_forest_properties}

\begin{fact}[Bottleneck property \citep{Kruskal1956,Edmonds1971}]
\label{fact:bottleneck}
Let $e=(u,v)$ be an edge omitted by a maximum-score covering forest
$F^\star$, and let $P_{F^\star}(u,v)$ denote the unique path between
$u$ and $v$ inside the corresponding tree. Then
\begin{equation}
    \min_{a\in P_{F^\star}(u,v)} s_a
    \ge s_e.
    \label{eq:prelim_bottleneck}
\end{equation}
\end{fact}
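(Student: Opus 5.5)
The bottleneck property is the cycle (exchange) property of maximum-weight spanning forests, and I will prove it by contradiction via a single edge swap. Let $H$ be the graph on which $F^\star$ is a maximum-score covering forest, and let $e=(u,v)\in E_H\setminus F^\star$. The first step is to show that $u$ and $v$ lie in the same tree of $F^\star$, so that $P_{F^\star}(u,v)$ is well defined. Since $e\in E_H$, the endpoints $u,v$ lie in the same connected component of $H$. By definition of $\mathfrak F(H)$, $F^\star$ connects every component of $H$, so $u$ and $v$ are joined in $F^\star$. Because $F^\star$ is acyclic, the joining path is unique.

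The main step is the exchange. Suppose, for contradiction, that some $a\in P_{F^\star}(u,v)$ has $s_a<s_e$. Set $F'=(F^\star\setminus\{a\})\cup\{e\}$.

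\textbf{$F'$ is acyclic.} Removing $a$ splits the tree containing $u,v$ into two subtrees. Since $a$ lies on the unique $u$–$v$ path, $u$ and $v$ end up in different subtrees. Adding $e$ then joins two distinct components, so no cycle is created.

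\textbf{$F'$ preserves connectivity.} Every pair of vertices connected in $F^\star$ remains connected in $F'$. Any path that used $a$ can be rerouted through $e$ together with the remaining segments of $P_{F^\star}(u,v)$. Hence $F'$ still connects every component of $H$, and $F'\in\mathfrak F(H)$.

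\textbf{$F'$ has larger score.} We have $s(F')=s(F^\star)-s_a+s_e>s(F^\star)$. This contradicts the maximality of $F^\star$ in Eq.~\eqref{eq:max_spanning_forest}. Therefore every $a$ on the path satisfies $s_a\ge s_e$, which is Eq.~\eqref{eq:prelim_bottleneck}.

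The only point needing care is the first step: checking that the path exists, so the minimum is over a nonempty set, and that the swapped set is again a \emph{covering} forest rather than just a forest. Both follow from the covering requirement in the definition of $\mathfrak F(H)$. Ties ($s_a=s_e$) cause no trouble, because the claim is a non-strict inequality and the argument never needs $F^\star$ to be unique.
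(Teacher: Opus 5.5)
Your proof is correct. It is the standard cycle-property exchange argument: the paper gives no separate proof of this Fact and simply cites Kruskal and Edmonds, but it uses the same single-edge swap $F-f+e$ with $s(F-f+e)=s(F)-s_f+s_e>s(F)$ in its proof of Theorem~\ref{thm:score_threshold_certificate}, and your checks that $u,v$ share a tree and that the swapped set is again a covering forest supply details the paper leaves implicit.
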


\begin{fact}[Cut certificate
\citep{NagamochiIbaraki1992,FrankIbarakiNagamochi1993}]
\label{fact:cut_certificate}
Let $\mathcal E_1,\ldots,\mathcal E_k$ be covering forests extracted
successively from the residual graph. For every nonempty proper
$S\subsetneq\mathcal V$,
\begin{equation}
    |\delta_{U_k}(S)|
    \ge
    \min
    \left\{
    k,
    |\delta_{\mathcal G}(S)|
    \right\}.
    \label{eq:prelim_certificate}
\end{equation}
\end{fact}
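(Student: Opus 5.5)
The plan is a direct counting argument over the extraction rounds, using only that each $\mathcal E_j$ is a covering forest of the residual graph $H_{j-1}=(\mathcal V,\mathcal E_{\mathrm{rem}}^{(j-1)})$ and that the forests are edge-disjoint. Maximality of the score plays no role, so the argument applies to any sequence of covering forests.

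\textbf{Key lemma.} If $\delta_{H_{j-1}}(S)\neq\varnothing$, then $\mathcal E_j\cap\delta_{H_{j-1}}(S)\neq\varnothing$. To see this, pick an edge $(u,v)\in\delta_{H_{j-1}}(S)$ with $u\in S$ and $v\notin S$. Then $u$ and $v$ lie in the same connected component of $H_{j-1}$. By the definition of $\mathfrak F(H_{j-1})$, the forest $\mathcal E_j$ connects that component, so it contains a $u$--$v$ path. A path that starts in $S$ and ends outside $S$ must use an edge with exactly one endpoint in $S$. That edge belongs to $\mathcal E_j\subseteq\mathcal E_{\mathrm{rem}}^{(j-1)}$ and crosses $S$, which proves the lemma.

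\textbf{Case split over rounds $1,\dots,k$.} In the first case, $\delta_{H_{j-1}}(S)\neq\varnothing$ for every $j\le k$. The lemma then gives one crossing edge in each $\mathcal E_j$. The $\mathcal E_j$ are pairwise disjoint by Eq.~\eqref{eq:prelim_residual_extraction}, so these $k$ edges are distinct and $|\delta_{U_k}(S)|\ge k$. In the second case, there is a first round $j\le k$ with $\delta_{H_{j-1}}(S)=\varnothing$. Since $\mathcal E_{\mathrm{rem}}^{(j-1)}=\mathcal E\setminus U_{j-1}$, no crossing edge of $\mathcal G$ remains outside $U_{j-1}$. Hence $\delta_{\mathcal G}(S)\subseteq U_{j-1}\subseteq U_k$, and $|\delta_{U_k}(S)|=|\delta_{\mathcal G}(S)|$. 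Combining the two cases yields Eq.~\eqref{eq:prelim_certificate}.

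\textbf{Expected difficulty.} There is no serious obstacle. The step needing the most care is bookkeeping at the boundary, where extraction halts early or the residual set becomes empty (an empty forest). These situations fall into the second case, provided we read ``completed round'' as meaning that $\mathcal E_j$ is a genuine covering forest of the current residual graph. I would state explicitly that the lemma relies only on the covering property, not on acyclicity or score maximality. That is also what later lets Theorem~\ref{thm:score_threshold_certificate} follow by combining this argument with Fact~\ref{fact:bottleneck} on the threshold graph $\mathcal G_{\ge\tau}$.
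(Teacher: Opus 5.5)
Your proof is correct. The paper does not prove this statement: it records it as a Fact and defers to the Nagamochi--Ibaraki and Frank--Ibaraki--Nagamochi references. Your round-by-round count is the standard self-contained argument behind that citation rather than a competing route. While $\delta_{\mathcal G}(S)$ still meets the residual, each covering forest contributes a distinct crossing edge. Once the residual no longer meets it, $\delta_{\mathcal G}(S)\subseteq U_{j-1}$.

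Writing it out makes explicit the point you flag. The argument uses only the covering property and the edge-disjointness of the $\mathcal E_j$, never acyclicity or score optimality. That is exactly the hypothesis the paper needs in its proof of Theorem~\ref{thm:score_threshold_certificate}, where it applies this Fact inside $\mathcal G_{\ge\tau}$ to the restricted forests $F_i^\tau=F_i\cap\mathcal E_{\ge\tau}$.

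The paper shows that $F_i^\tau$ is a covering forest of the thresholded residual by an exchange argument: a low-score cycle edge is swapped for a high-score non-forest edge. Your suggestion to get this from Fact~\ref{fact:bottleneck} is equivalent. For any omitted residual edge $e$ with $s_e\ge\tau$, the bottleneck property puts the whole $F_i$-path between its endpoints inside $F_i^\tau$.
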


The bottleneck property concerns a single maximum-score forest, whereas
the cut certificate concerns the union of successive residual forests.
Neither claims that $U_k$ is jointly optimal among all edge sets of the
same size.

\subsection{Exact Stochastic Budgeted Composition}
\label{app:edis_composition}

At epoch $t$, \edis{} draws an ordering $\pi^{(t)}=(k_1,\ldots,k_K)$
without replacement, where, with $R_r$ the subgraph indices remaining
before draw $r$,
\begin{equation}
    \Pr(k_r=k\mid R_r)
    =
    \frac{w_k}
    {\sum_{j\in R_r}w_j},
    \qquad k\in R_r.
    \label{eq:app_group_draw_probability}
\end{equation}
With $N=|U_K|$, subgraphs accumulate in sampled order until their total
size first reaches or exceeds $B$, at round
\begin{equation}
    r_B^{(t)}
    =
    \min
    \left\{
    r\in\{1,\ldots,K\}:
    \textstyle\sum_{\ell=1}^{r}
    |\mathcal E_{k_\ell}|
    \ge B
    \right\},
    \label{eq:app_stopping_index}
\end{equation}
giving the selected subgraphs and their candidate edges
\begin{equation}
A^{(t)}
=
\begin{cases}
\varnothing, & B=0,\\
\{k_1,\ldots,k_{r_B^{(t)}}\}, & 0<B\le N,\\
\{1,\ldots,K\}, & N<B\le m,
\end{cases}
\qquad
C_B^{(t)}
=
\bigsqcup_{k\in A^{(t)}}\mathcal E_k.
\label{eq:app_selected_groups}
\end{equation}

If $|C_B^{(t)}|>B$, the excess $\Delta_B^{(t)}=|C_B^{(t)}|-B$ is
removed from selected subgraphs in increasing $(w_k,k)$ order, dropping
whole subgraphs until the remaining excess fits inside one subgraph, then
trimming that subgraph uniformly without replacement, so at most one
selected subgraph is partially retained. If $B>N$ instead, all stored
edges are kept and the remaining $B-N$ edges are sampled uniformly
without replacement from $\mathcal E_{\mathrm{res}}$ into a residual
completion set $Q_B^{(t)}$, otherwise $Q_B^{(t)}=\varnothing$.

Writing $\widetilde{\mathcal E}_k^{(t)}$ for the retained part of
subgraph $k$ after trimming (empty for $k\notin A^{(t)}$), the final
epoch graph is
\begin{equation}
    \boxed{
    \mathcal E_B^{(t)}
    =
    \left(
    \bigsqcup_{k=1}^{K}
    \widetilde{\mathcal E}_k^{(t)}
    \right)
    \sqcup
    Q_B^{(t)}.
    }
    \label{eq:app_epoch_composition}
\end{equation}

\begin{proposition}[Exact-budget feasibility]
\label{prop:exact_budget}
For every integer $0\le B\le m$,
\begin{equation}
    |\mathcal E_B^{(t)}|=B.
\end{equation}
\end{proposition}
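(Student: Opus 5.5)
The plan is a case analysis following the three branches of $A^{(t)}$ in Eq.~\eqref{eq:app_selected_groups}, together with the facts that the stored pieces are pairwise disjoint and that the completion set $Q_B^{(t)}$ is disjoint from them. I first record that, by the partition Eq.~\eqref{eq:prelim_partition}, the sets $\mathcal E_1,\ldots,\mathcal E_K,\mathcal E_{\mathrm{res}}$ are pairwise disjoint with total size $m$, so $|\mathcal E_{\mathrm{res}}|=m-N$. Since $\widetilde{\mathcal E}_k^{(t)}\subseteq\mathcal E_k$ and $Q_B^{(t)}\subseteq\mathcal E_{\mathrm{res}}$, Eq.~\eqref{eq:app_epoch_composition} is a genuine disjoint union. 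Therefore
\[
|\mathcal E_B^{(t)}|=\sum_{k}|\widetilde{\mathcal E}_k^{(t)}|+|Q_B^{(t)}|,
\]
and it suffices to count these pieces in each case.

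\textbf{Case $B=0$.} Here $A^{(t)}=\varnothing$, all retained parts are empty, and $Q_B^{(t)}=\varnothing$, so the count is $0$.

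\textbf{Case $N<B\le m$.} Every subgraph is kept whole, so the stored part contributes $N$. The remaining $B-N$ edges are drawn without replacement from $\mathcal E_{\mathrm{res}}$. This draw is feasible because $B-N\le m-N=|\mathcal E_{\mathrm{res}}|$, so $|Q_B^{(t)}|=B-N$ and the total is $B$.

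\textbf{Case $0<B\le N$.} This is the main case. First, $r_B^{(t)}$ in Eq.~\eqref{eq:app_stopping_index} is well defined: the cumulative sum at $r=K$ equals $N\ge B$, so the minimum is over a nonempty set. Hence $|C_B^{(t)}|\ge B$, and $Q_B^{(t)}=\varnothing$. If $|C_B^{(t)}|=B$, nothing is trimmed and we are done. Otherwise the excess $\Delta=|C_B^{(t)}|-B$ satisfies $0<\Delta\le|C_B^{(t)}|$.

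The trimming step needs an invariant. Process the selected subgraphs in increasing $(w_k,k)$ order, keeping a remaining excess $\Delta_j$ with $\Delta_0=\Delta$. At each subgraph $k$:
\begin{itemize}
\item if $|\mathcal E_k|\le\Delta_j$, drop it entirely and set $\Delta_{j+1}=\Delta_j-|\mathcal E_k|$;
\item otherwise remove exactly $\Delta_j$ uniformly chosen edges from it, which is possible since $\Delta_j<|\mathcal E_k|$, and stop.
\end{itemize}
The invariant is that the edges removed so far plus $\Delta_j$ always equal $\Delta$, so on termination exactly $\Delta$ edges have been removed. Termination with $\Delta_j=0$ is guaranteed because the total size of the selected subgraphs is $|C_B^{(t)}|\ge\Delta$, so the excess cannot outlast the subgraphs. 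There is also a direct argument: minimality of $r_B^{(t)}$ gives $|C_B^{(t)}|-|\mathcal E_{k_{r_B}}|<B$, so $\Delta<|\mathcal E_{k_{r_B}}|$. The retained count is then $|C_B^{(t)}|-\Delta=B$.

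The only delicate points are this termination argument and the handling of empty subgraphs or ties in the ordering. Neither affects the counting, since zero-size subgraphs contribute nothing, and the $(w_k,k)$ tie-break makes the order total.
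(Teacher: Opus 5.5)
Your proposal is correct and takes the same route as the paper's proof. Both rest on the edge-disjointness of the stored partition and the same three-way split: $B=0$; $0<B\le N$, where trimming removes exactly $|C_B^{(t)}|-B$ edges; and $N<B\le m$, which uses $|\mathcal E_{\mathrm{res}}|=m-N\ge B-N$. You simply make explicit what the paper asserts in one line, namely that $r_B^{(t)}$ is well defined and that the $(w_k,k)$-ordered trimming removes exactly the excess (your invariant argument already suffices; the aside bounding $\Delta$ by the last-drawn subgraph is not needed, since trimming follows weight order rather than draw order).
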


The edge-disjoint partition prevents duplicates, and when $B>N$ the
residual pool always contains at least $B-N$ edges because
$m-N\ge B-N$. A complete proof is given in
Appendix~\ref{app:reuse_feasibility}.

\subsection{Structural Preservation of the Stored Decomposition}
\label{app:edis_score_certificate}

To track only the edges a threshold cares about, fix $\tau\in\mathbb
R$ and focus only on the high-score subgraph and its stored
counterpart,
\[
    \mathcal E_{\ge\tau}
    =
    \{e\in\mathcal E:s_e\ge\tau\},
    \qquad
    \mathcal G_{\ge\tau}
    =
    (\mathcal V,\mathcal E_{\ge\tau}),
    \qquad
    U_{k,\ge\tau}
    =
    U_k\cap\mathcal E_{\ge\tau}.
\]
Theorem~\ref{thm:score_threshold_certificate} in
Section~\ref{sec:methodology} states that every stored prefix
certifies a lower bound on the number of threshold-qualified crossing
edges within this restricted view. For every
completed round, threshold, and nontrivial cut,
\[
    |\delta_{U_{k,\ge\tau}}(S)|
    \ge
    \min
    \left\{
    k,
    |\delta_{\mathcal G_{\ge\tau}}(S)|
    \right\}.
\]
The exchange proof and the treatment of ties are given in
Appendix~\ref{app:score_certificate}.

The threshold certificate above only bounds how many high-score
edges survive. To state a stronger guarantee on their exact values,
let $a_j(S)$ and $b_j(S)$ denote the $j$-th largest crossing score
at cut $S$ in $\mathcal E$ and in $U_k$, respectively.

\begin{corollary}[Top-$k$ cut-score preservation]
\label{cor:topk_cutscore}
Under the assumptions of
Theorem~\ref{thm:score_threshold_certificate},
\begin{equation}
    b_j(S)=a_j(S),
    \qquad
    1\le
    j\le
    \min\{k,|\delta_{\mathcal G}(S)|\}.
    \label{eq:topk_cutscore}
\end{equation}
\end{corollary}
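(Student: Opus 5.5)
The plan is to derive the corollary directly from Theorem~\ref{thm:score_threshold_certificate} by choosing the threshold $\tau$ cut by cut and index by index. We prove two inequalities, $b_j(S)\ge a_j(S)$ and $b_j(S)\le a_j(S)$. Throughout, $a_j(S)$ and $b_j(S)$ are read as order statistics of multisets: the crossing scores $\{s_e:e\in\delta_{\mathcal G}(S)\}$ and $\{s_e:e\in\delta_{U_k}(S)\}$, each sorted in nonincreasing order and with repeated values kept. This convention handles ties automatically.

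Fix a nontrivial cut $S$ and an index $j$ with $1\le j\le\min\{k,|\delta_{\mathcal G}(S)|\}$. Set $\tau=a_j(S)$. By definition of the $j$-th largest value, at least $j$ edges of $\delta_{\mathcal G}(S)$ have score $\ge\tau$, so $|\delta_{\mathcal G_{\ge\tau}}(S)|\ge j$. Theorem~\ref{thm:score_threshold_certificate} then gives
\[
|\delta_{U_{k,\ge\tau}}(S)|\ge\min\{k,|\delta_{\mathcal G_{\ge\tau}}(S)|\}\ge\min\{k,j\}=j,
\]
using $j\le k$. Hence $U_k$ contains at least $j$ crossing edges, each with score at least $a_j(S)$. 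This has two consequences. First, $b_j(S)$ is well defined, since $|\delta_{U_k}(S)|\ge j$. Second, $b_j(S)\ge a_j(S)$, because the $j$-th largest element of a multiset containing $j$ values $\ge\tau$ is itself $\ge\tau$.

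For the reverse inequality, note that $\delta_{U_k}(S)\subseteq\delta_{\mathcal G}(S)$ because $U_k\subseteq\mathcal E$. The crossing scores in $U_k$ therefore form a sub-multiset of those in $\mathcal G$. The $j$-th largest element of a sub-multiset is at most the $j$-th largest element of the full multiset: if $j$ elements of the subset are $\ge b_j$, then the full set also has $j$ elements $\ge b_j$, so $a_j\ge b_j$. Combining the two inequalities gives $b_j(S)=a_j(S)$.

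The only delicate step is ties. When several crossing edges share the value $a_j(S)$, the threshold graph $\mathcal G_{\ge\tau}$ may contain more than $j$ crossing edges, and the specific edges realizing the top $j$ scores in $U_k$ need not coincide with those in $\mathcal G$. This is why the corollary must be stated as equality of score values rather than of edge identities. The multiset convention above, together with the non-strict threshold $\ge\tau$ in Theorem~\ref{thm:score_threshold_certificate}, is exactly what makes the argument go through regardless of how ties were broken inside the successive MaxCFs.
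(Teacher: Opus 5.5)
Your proof is correct and follows the paper's argument essentially step for step: set $\tau=a_j(S)$, observe $|\delta_{\mathcal G_{\ge\tau}}(S)|\ge j$, apply Theorem~\ref{thm:score_threshold_certificate} with $j\le k$ to get $b_j(S)\ge a_j(S)$, and use $U_k\subseteq\mathcal E$ for the reverse inequality. The paper additionally sums these equalities to obtain $T_k(S;U_k)=T_k(S;\mathcal G)$, which the stated corollary does not need; your explicit check that $b_j(S)$ is well defined is a small extra point of rigor the paper leaves implicit.
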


Thus every cut's top-$k$ crossing-score values are preserved with
multiplicity. The proof is given in
Appendix~\ref{app:topk_corollaries}.

\subsection{Retention After Budgeted Composition}
\label{app:edis_retention}

The deterministic certificates above concern the stored prefix $U_k$.
A composed epoch graph may omit or trim subgraphs, so it need not contain
that prefix. Theorem~\ref{thm:app_budget_retention} in
Section~\ref{sec:methodology} instead bounds the conditional probability
that a fixed cut keeps at least one qualifying crossing edge in $\mathcal
E_B^{(t)}$. We now state the exact identity behind that guarantee.

Condition on the sampled order $\pi^{(t)}$ together with the
decomposition, subgraph weights, and budget, leaving uniform trimming
or residual completion as the remaining source of randomness. We index the residual
set alongside the $K$ stored subgraphs by writing
\[
    D_0=\mathcal E_{\mathrm{res}},
    \qquad
    D_k=\mathcal E_k\ (1\le k\le K),
    \qquad
    n_k=|D_k|,
    \qquad
    r_k=
    |D_k\cap\mathcal E_B^{(t)}|,
\]
so $D_0,\ldots,D_K$ are the $K+1$ sets composition draws from, $n_k$
is each one's size, and $r_k$ counts how many of its edges the
composed graph actually keeps. The retained fraction is
\begin{equation}
    \theta_k
    =
    \begin{cases}
        r_k/n_k,&n_k>0,\\
        0,&n_k=0,
    \end{cases}
\end{equation}
and, for cut $S$ and threshold $\tau$, the number of subgraph-$k$ edges
that would matter for that cut is
\begin{equation}
    x_k(S,\tau)
    =
    |
    D_k
    \cap
    \delta_{\mathcal G}(S)
    \cap
    \mathcal E_{\ge\tau}
    |.
    \label{eq:composition_defs_1}
\end{equation}
Given the sampled order, every $n_k$ and $r_k$ is fixed, and at most
one subgraph contributes a randomly trimmed subset.

\begin{theorem}[\underline{Conditional retention under composition, in full}]
\label{thm:budget_retention_full}
For any fixed additive edge weight $\omega$,
\begin{equation}
    \mathbb E
    \left[
    \omega(\mathcal E_B^{(t)})
    \mid
    \pi^{(t)}
    \right]
    =
    \sum_{k=0}^{K}
    \theta_k\omega(D_k).
\end{equation}
Moreover,
\begin{equation}
\Pr\!\left(
\mathcal E_B^{(t)}
\cap
\delta_{\mathcal G}(S)
\cap
\mathcal E_{\ge\tau}
=
\varnothing
\;\middle|\;
\pi^{(t)}
\right)
=
\prod_{k=0}^{K}
\frac{
\binom{n_k-x_k(S,\tau)}{r_k}
}{
\binom{n_k}{r_k}
}.
\label{eq:budget_retention_cut}
\end{equation}
\end{theorem}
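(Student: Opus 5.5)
The plan is to condition on $\pi^{(t)}$ (together with the fixed decomposition, weights and $B$) and to show that $\mathcal E_B^{(t)}$ then has a simple product structure. Concretely, $\mathcal E_B^{(t)}\cap D_k$ should be a uniformly random $r_k$-subset of $D_k$, with these subsets mutually independent across $k=0,\ldots,K$ and every $r_k$ deterministic.

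To check this, read off the construction in Appendix~\ref{app:edis_composition}. Given $\pi^{(t)}$, the stopping index $r_B^{(t)}$, the selected set $A^{(t)}$, the excess $\Delta_B^{(t)}$ and the trimming order by $(w_k,k)$ are all functions of $\pi^{(t)}$ and $B$ alone. Hence the number of edges kept from each $D_k$ is fixed, and we get three cases.
\begin{itemize}
\item If $D_k$ is kept whole, then $r_k=n_k$ and the subset is trivially uniform.
\item If $D_k$ is dropped, then $r_k=0$.
\item If $D_k$ is the single partially trimmed subgraph, it is trimmed uniformly without replacement, which is the hypothesis of Theorem~\ref{thm:app_budget_retention}.
\end{itemize}
For the residual pool $D_0$, either $Q_B^{(t)}=\varnothing$, so $r_0=0$, or $Q_B^{(t)}$ is a uniform $(B-N)$-subset. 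At most one stored subgraph and the residual pool carry randomness, and their draws use independent randomness. Since the $D_k$ are disjoint, the product structure follows.

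\textbf{Expectation identity.} By linearity, $\mathbb E[\omega(\mathcal E_B^{(t)})\mid\pi^{(t)}]=\sum_k\sum_{e\in D_k}\omega_e\Pr(e\in\mathcal E_B^{(t)}\mid\pi^{(t)})$. For a uniform $r_k$-subset of an $n_k$-set, each element is included with probability $r_k/n_k=\theta_k$. When $n_k=0$ the inner sum is empty, which matches the convention $\theta_k=0$.

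\textbf{Cut identity.} The event that no edge of $\delta_{\mathcal G}(S)\cap\mathcal E_{\ge\tau}$ is kept is the intersection over $k$ of the events that the $r_k$-subset of $D_k$ avoids the $x_k=x_k(S,\tau)$ qualifying edges in $D_k$. By independence the probability factorizes, and each factor counts the $r_k$-subsets drawn from the $n_k-x_k$ non-qualifying edges, giving $\binom{n_k-x_k}{r_k}/\binom{n_k}{r_k}$. This equals $0$ when $r_k>n_k-x_k$, under the convention that a binomial coefficient with top smaller than bottom is zero. Empty pools contribute the factor $1$. This proves \eqref{eq:budget_retention_cut}.

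For the main-text bound \eqref{eq:highscore_survival}, one bounds each factor as
\[
\frac{\binom{n-x}{r}}{\binom{n}{r}}=\prod_{i=0}^{x-1}\Bigl(1-\frac{r}{n-i}\Bigr)\le(1-\theta)^x\le e^{-\theta x},
\]
and then multiplies over $k$.

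The main obstacle is justifying the independence and deterministic-count claims from the trimming rule. In particular, one must confirm that the choice of which subgraph is trimmed, and by how much, depends only on $\pi^{(t)}$, so that the only remaining randomness is the within-pool uniform draws. Everything after that is a standard hypergeometric computation.
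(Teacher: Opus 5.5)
Your proposal is correct and matches the paper's proof essentially step for step. Conditioning on $\pi^{(t)}$ fixes every retained count $r_k$, and each pool $D_k$ then contributes a uniform $r_k$-subset; linearity gives the expectation identity, and the hypergeometric count $\binom{n_k-x_k}{r_k}/\binom{n_k}{r_k}$ multiplied over independent pools gives the cut identity. The paper adds one refinement: residual completion occurs only when every stored subgraph is kept whole, so at most one pool is non-degenerate under the conditioning, and the independence you assume for the draws holds automatically.
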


\begin{corollary}[Per-cut high-score edge survival]
\label{cor:highscore_survival}
Under the same conditioning,
\begin{equation}
\Pr\!\left(
\mathcal E_B^{(t)}
\cap
\delta_{\mathcal G}(S)
\cap
\mathcal E_{\ge\tau}
\neq
\varnothing
\;\middle|\;
\pi^{(t)}
\right)
\ge
1-
\exp
\left(
-\sum_{k=0}^{K}
\theta_kx_k(S,\tau)
\right).
\label{eq:highscore_survival_cor}
\end{equation}
This is the bound displayed as
Theorem~\ref{thm:app_budget_retention} in
Section~\ref{sec:methodology}.
\end{corollary}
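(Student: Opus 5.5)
The plan is to derive Corollary~\ref{cor:highscore_survival} directly from the exact product formula~\eqref{eq:budget_retention_cut} in Theorem~\ref{thm:budget_retention_full}. That formula says that, given $\pi^{(t)}$, the probability that no qualifying crossing edge survives is
\[
\prod_{k=0}^{K}\binom{n_k-x_k}{r_k}\Big/\binom{n_k}{r_k},
\]
where $x_k=x_k(S,\tau)$. So it suffices to show, factor by factor,
\[
\binom{n_k-x_k}{r_k}\Big/\binom{n_k}{r_k}\;\le\;\exp(-\theta_k x_k),
\]
and then multiply over $k$ and take complements. The factors can be treated separately because, once we condition on the order, each pool $D_k$ is either kept whole, dropped, or trimmed uniformly at random. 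The residual completion draws from $D_0$ independently of the other pools, so the probabilities really do factor.

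\textbf{Degenerate cases.} I first dispose of these so the main estimate only concerns $0<r_k<n_k$.
\begin{itemize}
\item If $n_k=0$, then $\theta_k=0$ and $x_k=0$, and the factor is $1=e^0$.
\item If $r_k=0$, the factor is $1$ and $\theta_k=0$.
\item If $r_k=n_k$, then $\theta_k=1$. The factor equals $\mathbf 1[x_k=0]$, which is at most $e^{-x_k}$.
\item If $x_k=0$, the factor is $1$ and the bound is trivial.
\item If $n_k-x_k<r_k$, the binomial in the numerator vanishes and the bound holds trivially.
\end{itemize}

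\textbf{Main estimate.} Write the hypergeometric zero-probability as a telescoping product:
\[
\binom{n_k-x_k}{r_k}\Big/\binom{n_k}{r_k}=\prod_{i=0}^{x_k-1}\frac{n_k-r_k-i}{n_k-i}.
\]
This counts ways of choosing the $r_k$ retained edges to avoid all $x_k$ qualifying ones, equivalently the probability that each of the $x_k$ qualifying edges lands among the $n_k-r_k$ discarded slots. Each factor satisfies
\[
\frac{n_k-r_k-i}{n_k-i}=1-\frac{r_k}{n_k-i}\le 1-\frac{r_k}{n_k}=1-\theta_k,
\]
since $n_k-i\le n_k$. Applying $1-\theta\le e^{-\theta}$ to each of the $x_k$ factors gives $(1-\theta_k)^{x_k}\le e^{-\theta_k x_k}$.

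Multiplying over $k=0,\dots,K$ yields
\[
\Pr(\text{no qualifying edge}\mid\pi^{(t)})\le\exp\Bigl(-\sum_k\theta_k x_k\Bigr),
\]
and the complement gives~\eqref{eq:highscore_survival_cor}. That is the statement of Theorem~\ref{thm:app_budget_retention}.

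\textbf{Main obstacle.} The one point needing care is already absorbed by Theorem~\ref{thm:budget_retention_full}: the product form requires conditional independence across pools given $\pi^{(t)}$. If I had to verify it myself, I would note two facts. First, the sizes $r_k$ are deterministic functions of $\pi^{(t)}$ and $B$. Second, only the single partially trimmed subgraph, or the residual pool $D_0$, carries randomness, and never both at once. Trimming happens only when $B\le N$, while residual completion happens only when $B>N$. So at most one factor differs from $0$ or $1$, and the product is trivially exact.
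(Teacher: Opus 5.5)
Your proof is correct and follows the paper's route: start from the exact product formula in Theorem~\ref{thm:budget_retention_full}, handle the degenerate cases ($n_k=0$, $r_k=0$, $r_k>n_k-x_k$), bound each hypergeometric factor by $\exp(-r_kx_k/n_k)$, then multiply and take the complement. The only difference is cosmetic: you expand the ratio as a product over the $x_k$ marked edges and bound each factor by $1-r_k/n_k$, whereas the paper expands it over the $r_k$ retained draws and bounds each factor by $1-x_k/n_k$; both give the same exponent $\theta_kx_k$.
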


The quantity $\sum_{k=0}^{K}\theta_kx_k(S,\tau)$ is the conditional
expected number of retained high-score crossings. Neither result uses
any property specific to maximum-score forests, since composition
retains a uniform random subset within each partially trimmed subgraph
regardless of selector. Both therefore hold unchanged for every
selector in Appendix~\ref{app:selector_family}. These results
characterize what survives budgeted composition. They do not by
themselves imply that weighted ordering is superior to uniform
ordering, or that spreading edges across subgraphs is always beneficial.
Those questions are tested empirically in
Section~\ref{sec:experiments}, with proofs in
Appendix~\ref{app:budget_theory}.

\section{Worked Examples of Structural Selectors}
\label{app:selector-worked-example}
Seven selectors share the same scored input, three extraction rounds,
and six-edge sample budget. Figure~\ref{fig:selector-worked-example}
shows the stored subgraphs and two six-edge samples from the same decomposition.

\begin{figure}[H]
\centering
\includegraphics[width=\linewidth]{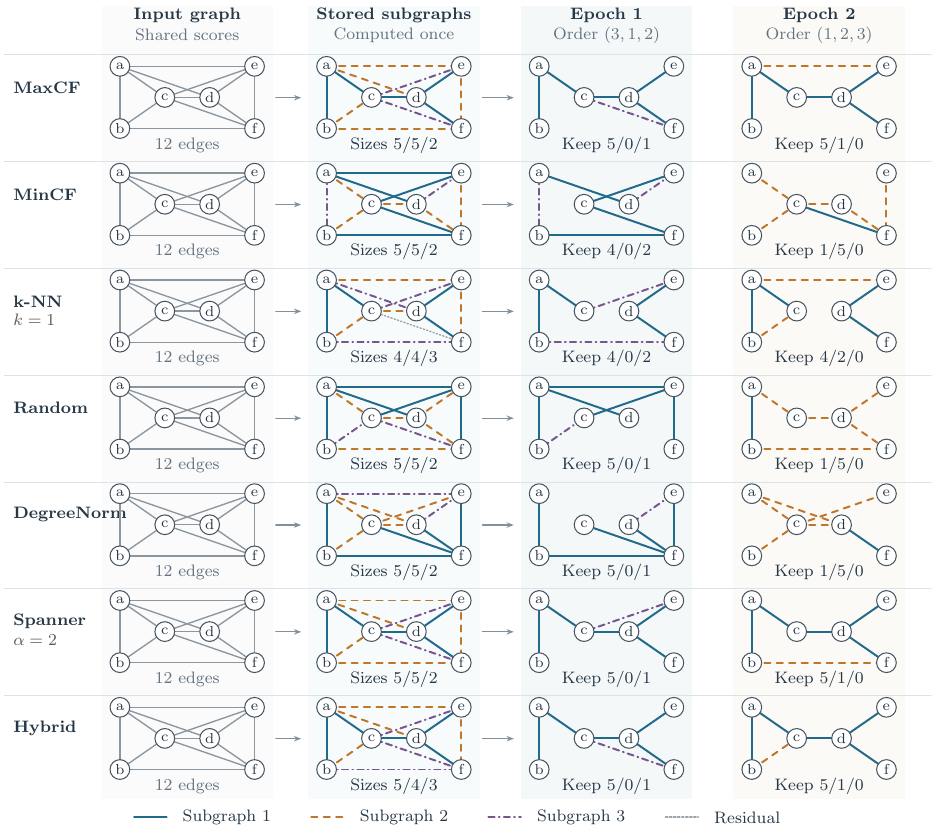}
\caption{\textbf{Structural selection and exact-budget composition in EDiS.}
Colors and line styles identify edge-disjoint subgraphs, extracted in
order $(1,2,3)$ and reused across epochs. The two illustrative epochs
sample subgraph orders $(3,1,2)$ and $(1,2,3)$, adding subgraphs until six
edges are reached and trimming any excess uniformly from the
lowest-weight subgraph. Samples may overlap and are not guaranteed to
differ. These examples illustrate the rules, not predictive rankings.}
\label{fig:selector-worked-example}
\end{figure}

\noindent\textbf{Settings.}
MaxCF selects a residual covering forest with the largest total score.
MinCF selects one with the smallest total score. The $k$-NN example uses
$k=1$ without adjusting the edge count. Random and DegreeNorm select up
to five edges per round. Random gives each residual edge equal weight.
DegreeNorm uses $1/d_H(u)+1/d_H(v)$, held fixed during each round.
The example spanner processes edges from shortest to longest using
length $1/s_e$. It adds an edge when the current path between its
endpoints is longer than $2/s_e$. This illustrative procedure differs
from the randomized benchmark routine. Hybrid applies MaxCF, then
$1$-NN, then DegreeNorm.

The table gives the shared edge scores. Each subgraph weight is the sum
of its edge scores.
\begin{center}
\small
\setlength{\tabcolsep}{5pt}
\begin{tabular}{lrrrrrrrrrrrr}
\toprule
Edge & $ab$ & $ac$ & $bc$ & $de$ & $df$ & $ef$ & $cd$ & $ae$ & $bf$ & $ad$ & $ce$ & $cf$ \\
Score & 12 & 11 & 10 & 9 & 8 & 7 & 6 & 5 & 4 & 3 & 2 & 1 \\
\bottomrule
\end{tabular}
\end{center}
MaxCF and Spanner produce the same subgraphs in this example.
The $k$-NN decomposition leaves one residual edge. No sample needs residual completion. Guarantees for stored
subgraphs may not hold after composition.
Appendix~\ref{app:selector_family} describes each selector's procedure.

\clearpage
\section{Theoretical Understanding and Proofs}
\label{app:proofs}
\begingroup
\small
\setlength{\abovedisplayskip}{3pt plus 1pt minus 1pt}
\setlength{\belowdisplayskip}{3pt plus 1pt minus 1pt}
\setlength{\abovedisplayshortskip}{1pt plus 1pt}
\setlength{\belowdisplayshortskip}{1pt plus 1pt}
\setlength{\topsep}{0pt plus 1pt minus 1pt}
\setlength{\partopsep}{0pt}

A cut's crossing edges are the only channel through which one layer
of message passing can move information between its two sides, so
the results below characterize threshold-qualified crossings in stored
maximum-score forest prefixes and their conditional retention under
composition.

All graphs are finite, undirected, and loopless, with $n\ge2$
vertices, and every undirected edge is counted once. Decomposition
results apply to each realized extraction sequence satisfying the
corresponding hypotheses. Ordinary cut certificates require successive
residual covering forests. Score-threshold, top-$k$ score, and
score-qualified path results require maximum-score residual covering
forests under fixed real scores. Randomized tie-breaking among maximizers
is allowed. Arbitrary randomized or non-forest selectors do not inherit
these guarantees. Sampling results condition on the realized
decomposition, cached weights, and edge-budget.

\subsection{Reuse and Exact-Budget Feasibility}
\label{app:reuse_feasibility}

\begin{proposition}[Cacheable stochastic support generation]
\label{prop:reuse_variation}
Fix a stored decomposition $\mathcal D$. Any number of subsequent
edge-budget--epoch queries can be answered without reevaluating the structural
selector. For fixed $B$, let $\mu_B$ be the support distribution induced
by composition. If epochs are independent draws from this same
distribution and $\mu_B$ is non-degenerate, then for $t\ne t'$,
\begin{equation}
\Pr(\mathcal E_B^{(t)}\ne\mathcal E_B^{(t')})
=1-\sum_{\mathcal S}\mu_B(\mathcal S)^2>0.
\label{eq:variation_identity}
\end{equation}
\end{proposition}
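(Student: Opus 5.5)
The proof splits into two independent parts, matching the two assertions of Proposition~\ref{prop:reuse_variation}: a structural (algorithmic) claim about reuse, and a probabilistic identity for disagreement between two epochs.

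\textbf{Reuse.} Inspect the definition of $\operatorname{Compose}$ in Appendix~\ref{app:edis_composition} and check that every input it reads is part of the stored $\mathcal D$ or is the query budget $B$. The inputs are:
\begin{itemize}[leftmargin=*, itemsep=1pt, topsep=2pt]
\item the cached weights $w_k$ used in the draw probabilities \eqref{eq:app_group_draw_probability};
\item the sizes $|\mathcal E_k|$ used in the stopping index \eqref{eq:app_stopping_index};
\item the edge sets $\mathcal E_k$ and the order $(w_k,k)$ used for trimming;
\item the residual pool $\mathcal E_{\mathrm{res}}$ used for completion.
\end{itemize}
The only other ingredient is the fresh randomness $\xi_t$. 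Nowhere does the procedure call $\mathcal F$ or $s$ beyond these cached values. The map $(B,\xi_t)\mapsto \mathcal E_B^{(t)}$ is therefore well defined from $\mathcal D$ alone, and an induction over any finite or countable sequence of queries $(B_i,t_i)$ shows that no query ever re-evaluates the selector. Proposition~\ref{prop:exact_budget} confirms that each such query returns a valid $B$-edge set, so every answer is legitimate.

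\textbf{Variation identity.} Fix $B$. Let $\mathcal E_B^{(t)}$ and $\mathcal E_B^{(t')}$ be independent with common law $\mu_B$ on the finite family of $B$-subsets of $\mathcal E$. Independence and finiteness give
\[
\Pr(\mathcal E_B^{(t)}=\mathcal E_B^{(t')})=\sum_{\mathcal S}\Pr(\mathcal E_B^{(t)}=\mathcal S)\Pr(\mathcal E_B^{(t')}=\mathcal S)=\sum_{\mathcal S}\mu_B(\mathcal S)^2 ,
\]
and taking the complement yields the equality in \eqref{eq:variation_identity}. For strict positivity, use $\mu_B(\mathcal S)^2\le \mu_B(\mathcal S)\max_{\mathcal S'}\mu_B(\mathcal S')$, so
\[
\sum_{\mathcal S}\mu_B(\mathcal S)^2\le \max_{\mathcal S}\mu_B(\mathcal S).
\]
Non-degeneracy, meaning that no single support carries mass $1$, makes the maximum strictly less than $1$, which gives the strict inequality.

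\textbf{Main obstacle.} The only delicate point is stating precisely what ``non-degenerate'' means and noting that it is an assumption, not a consequence. For some $B$ it can fail: with $B=0$ or $B=m$ the output is forced, and with a single stored subgraph of size exactly $B$ the output is deterministic. I would handle this by defining non-degeneracy as $\max_{\mathcal S}\mu_B(\mathcal S)<1$ and remarking on these boundary cases, rather than trying to derive non-degeneracy from the construction.
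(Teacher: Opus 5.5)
Your proposal is correct and follows the paper's proof: composition reads only the cached partition, weights, and residual pool, so no query re-invokes the selector, and the identity comes from computing the collision probability $\sum_{\mathcal S}\mu_B(\mathcal S)^2$ of two independent draws and taking its complement. Your bound $\sum_{\mathcal S}\mu_B(\mathcal S)^2\le\max_{\mathcal S}\mu_B(\mathcal S)<1$, together with your remarks on the degenerate boundary cases, spells out what the paper's one-line appeal to non-degeneracy leaves implicit.
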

\begin{proof}
The selector is invoked only during decomposition, while composition
operates on the stored partition and weights. Independent draws from
$\mu_B$ coincide with probability $\sum_{\mathcal S}\mu_B(\mathcal S)^2$.
Taking its complement proves the identity. Non-degeneracy makes this
collision probability strictly less than one.
\end{proof}

\begin{proof}[Proof of Proposition~\ref{prop:exact_budget}]
The stored subgraphs and residual form an edge-disjoint partition of
$\mathcal E$, so every returned edge is original and no edge is counted
twice. For $B=0$, composition returns the empty set. If
$0<B\le |U_K|$, the selected subgraphs reach the edge-budget and trimming removes
exactly $|C_B^{(t)}|-B$ edges. If $|U_K|<B\le m$, all structural subgraphs
are retained and the residual supplies the deficit, since
$|\mathcal E_{\mathrm{res}}|=m-|U_K|\ge B-|U_K|$.
These cases include $K=0$ and yield exactly $B$ edges.
\end{proof}

\subsection{Score-Threshold Certificate}
\label{app:score_certificate}

Throughout this proof, $F_i$ denotes the $i$-th extracted subgraph
$\mathcal E_i$ and $U_k$ its stored union $\bigcup_{i\le k}\mathcal E_i$,
matching the maximum-score covering-forest notation of
Fact~\ref{fact:cut_certificate} under the default $\mathcal F_{\max}$
selector.

\begin{proof}[Proof of Theorem~\ref{thm:score_threshold_certificate}]
Fix $\tau$ and let $F_i^\tau:=F_i\cap\mathcal E_{\ge\tau}$.

Suppose $F_i^\tau$ does not span a component of
$\bigl(\mathcal V,\mathcal E_{\mathrm{rem}}^{(i-1)}\cap\mathcal E_{\ge\tau}\bigr)$.
Then some $e\in\mathcal E_{\mathrm{rem}}^{(i-1)}\cap\mathcal E_{\ge\tau}$
joins two components of $F_i^\tau$. Since $F_i$ is a covering forest of
the residual graph, $F_i+e$ contains a unique cycle, and since $e$
creates no cycle in $F_i^\tau$, that cycle contains some
$f\in F_i\setminus F_i^\tau$, so $s_f<\tau\le s_e$. Hence
\begin{equation}
 s(F_i-f+e)=s(F_i)-s_f+s_e>s(F_i),
 \label{eq:score_gain_app}
\end{equation}
contradicting the maximum-score optimality of $F_i$. Therefore
\begin{equation}
 F_i^\tau\in\mathfrak F\bigl(\mathcal V,\
 \mathcal E_{\mathrm{rem}}^{(i-1)}\cap\mathcal E_{\ge\tau}\bigr).
 \label{eq:restriction_is_forest}
\end{equation}

Also
\begin{equation}
 \mathcal E_{\mathrm{rem}}^{(i-1)}\cap\mathcal E_{\ge\tau}
 =\mathcal E_{\ge\tau}\setminus\bigcup_{j<i}F_j^\tau,
 \label{eq:residual_equality_app}
\end{equation}
so $F_1^\tau,\ldots,F_k^\tau$ are successive residual covering forests
of $\mathcal G_{\ge\tau}$ for every completed round $1\le k\le K$.
Applying Fact~\ref{fact:cut_certificate} inside $\mathcal G_{\ge\tau}$
to $U_{k,\ge\tau}=\bigcup_{i\le k}F_i^\tau$,
\begin{equation}
 \boxed{|\delta_{U_{k,\ge\tau}}(S)|\ \ge\ \min\{k,|\delta_{\mathcal G_{\ge\tau}}(S)|\}.}
 \label{eq:score_threshold_certificate_app}
 \qedhere
\end{equation}
\end{proof}

This exchange uses maximum-score optimality alone, so no sign
assumption on $s$ and no tie-breaking rule enter the argument. A
non-maximum-score covering forest still gives the ordinary count
certificate, but not the score-threshold form above.

The guarantee holds simultaneously for every nontrivial cut in the
stored prefix. Its crossing edges provide potential one-hop
message-passing connections between the two sides. Whether the retained
connections carry useful evidence for node classification depends on
the features, task, and model. The certificate itself concerns edge scores
and cut counts.

\subsection{Top-\texorpdfstring{$k$}{k} Cut-Score Corollary}
\label{app:topk_corollaries}

For an edge set $A$, define the sum of its largest crossing scores by
\[
 T_k(S;A)=\sum_{j=1}^{\min\{k,|\delta_A(S)|\}}s_{(j)}(\delta_A(S)).
\]
For a graph, $A$ denotes its edge set. Scores are ordered with
multiplicity, so tied edges contribute separately. Let
$a_1(S)\ge a_2(S)\ge\cdots$ be the scores of the original crossing edges
of a cut $S$ and $b_1(S)\ge b_2(S)\ge\cdots$ those kept in $U_k$.

\begin{corollary}[Top-$k$ cut-score preservation, restated]
\label{cor:topk_cutscore_restated}
For every completed round $1\le k\le K$ and nonempty proper cut $S$,
\begin{equation}
b_j(S)=a_j(S),\qquad 1\le j\le\min\{k,|\delta_{\mathcal G}(S)|\}.
\label{eq:topk_cutscore_restated}
\end{equation}
\end{corollary}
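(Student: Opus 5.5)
The plan is to derive the corollary from Theorem~\ref{thm:score_threshold_certificate} by a thresholding argument. Since $U_k\subseteq\mathcal E$, the crossing edges of $S$ in $U_k$ form a sub-multiset of those in $\mathcal G$. The order statistics therefore satisfy $b_j(S)\le a_j(S)$ for every $j\le|\delta_{U_k}(S)|$. It remains to prove the reverse inequality, and to check that $U_k$ has at least $\min\{k,|\delta_{\mathcal G}(S)|\}$ crossing edges so that each $b_j(S)$ is defined. The latter follows from Fact~\ref{fact:cut_certificate}, or equivalently from Theorem~\ref{thm:score_threshold_certificate} with $\tau=-\infty$ (any $\tau\le\min_e s_e$).

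For the reverse inequality, fix $1\le j\le\min\{k,|\delta_{\mathcal G}(S)|\}$ and set $\tau=a_j(S)$. Because scores are counted with multiplicity, at least $j$ original crossing edges have score $\ge\tau$, so $|\delta_{\mathcal G_{\ge\tau}}(S)|\ge j$. Applying Theorem~\ref{thm:score_threshold_certificate} at this threshold gives
\[
|\delta_{U_{k,\ge\tau}}(S)|\ge\min\{k,|\delta_{\mathcal G_{\ge\tau}}(S)|\}\ge\min\{k,j\}=j .
\]
Hence $U_k$ contains at least $j$ crossing edges with score $\ge a_j(S)$, which means $b_j(S)\ge a_j(S)$. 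Combined with the upper bound, this yields $b_j(S)=a_j(S)$.

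The main point needing care is the tie handling. Choosing the threshold equal to the $j$-th value, rather than strictly between consecutive values, makes the counting argument work even when several crossing edges share the same score. The step ``$\ge j$ edges of score $\ge\tau$ in $U_k$ implies $b_j\ge\tau$'' is just the definition of order statistics with multiplicity. Since Theorem~\ref{thm:score_threshold_certificate} holds for every real $\tau$ with no tie-breaking assumptions, no further conditions are needed. As a byproduct, the sums also agree: $T_k(S;U_k)=T_k(S;\mathcal E)$.
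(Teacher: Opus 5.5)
Your proposal is correct and matches the paper's proof essentially step for step. Both set $\tau=a_j(S)$ and apply Theorem~\ref{thm:score_threshold_certificate} to obtain at least $j$ crossing edges of $U_k$ with score at least $a_j(S)$, which gives $b_j(S)\ge a_j(S)$. Both then close with $b_j(S)\le a_j(S)$ from $U_k\subseteq\mathcal E$, and both use a threshold below every score to show $b_j(S)$ is defined and to conclude $T_k(S;U_k)=T_k(S;\mathcal G)$.
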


\begin{proof}[Proof of Corollary~\ref{cor:topk_cutscore_restated}]
Fix $S$ and write $M=|\delta_{\mathcal G}(S)|$. For $j\le\min\{k,M\}$
set $\tau=a_j(S)$. Then
\begin{equation}
 |\delta_{\mathcal G_{\ge\tau}}(S)|\ge j,
 \label{eq:topk_step1}
\end{equation}
so by Theorem~\ref{thm:score_threshold_certificate},
$|\delta_{U_{k,\ge\tau}}(S)|\ge j$, giving $b_j(S)\ge a_j(S)$. Since
$U_k\subseteq\mathcal E$, also $b_j(S)\le a_j(S)$. Hence
\begin{equation}
 \boxed{b_j(S)=a_j(S),\qquad j\le\min\{k,M\}.}
 \label{eq:topk_equal}
\end{equation}
Taking $\tau$ below every score gives
$M\ge|\delta_{U_k}(S)|\ge\min\{k,M\}$, so
$\min\{k,|\delta_{U_k}(S)|\}=\min\{k,M\}$, and summing
\eqref{eq:topk_equal} term by term,
\begin{equation}
 \boxed{T_k(S;U_k)=T_k(S;\mathcal G).}
 \label{eq:topk_sum}
 \qedhere
\end{equation}
\end{proof}

The equality preserves the top-$k$ crossing-score values with
multiplicity. With tied scores, it need not preserve particular edge
identities. For the cosine default, this means preserving the largest
feature-similarity scores across each cut. The usefulness of that
preference for prediction is evaluated empirically.

\subsection{Equivalent Interpretations}
\label{app:submodular}

\begin{proposition}[Submodular reading \citep{NemhauserWolseyFisher1978}]
\label{prop:submodular_certificate}
For a fixed cut $S$ and integer $k\ge1$, let $g(A)=|\delta_A(S)|$ and
$\phi(z)=\min\{z,k\}$. Then $f_{S,k}:=\phi\circ g$ is monotone
submodular. If $1\le k\le K$ and $U_k$ is a prefix of successive
residual covering forests, then $f_{S,k}(U_k)=f_{S,k}(\mathcal E)$.
\end{proposition}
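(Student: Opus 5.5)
The claim has two parts, monotone submodularity of $f_{S,k}=\phi\circ g$ and the equality $f_{S,k}(U_k)=f_{S,k}(\mathcal E)$. I will treat them separately.

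For submodularity, I first note that $g(A)=|\delta_A(S)|=|A\cap\delta_{\mathcal G}(S)|$ is a modular, nonnegative, integer-valued set function. It counts the edges of $A$ lying in the fixed set $\delta_{\mathcal G}(S)$, so $g(A\cup\{e\})-g(A)=\mathbf 1[e\in\delta_{\mathcal G}(S)\setminus A]$. Monotonicity of $f_{S,k}$ follows because $g$ is monotone and $\phi(z)=\min\{z,k\}$ is nondecreasing.

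For the diminishing-returns inequality I will use the standard fact that a concave nondecreasing function of a nonnegative modular function is submodular. The direct check is short, so I will give it. Take $A\subseteq A'$ and $e\notin A'$. If $e\notin\delta_{\mathcal G}(S)$, both marginal gains are zero. Otherwise both gains are $\phi(z+1)-\phi(z)$, evaluated at $z=g(A)\le g(A')=z'$. This increment equals $\mathbf 1[z<k]$, which is nonincreasing in $z$, so the marginal gain at $A$ is at least the marginal gain at $A'$.

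For the equality, I reduce to Fact~\ref{fact:cut_certificate} applied to the prefix $U_k$ of successive residual covering forests. The fact gives $g(U_k)\ge\min\{k,g(\mathcal E)\}$, where $g(\mathcal E)=|\delta_{\mathcal G}(S)|$.
\begin{itemize}
\item If $g(\mathcal E)\ge k$, then $g(U_k)\ge k$, so $f_{S,k}(U_k)=k=f_{S,k}(\mathcal E)$.
\item If $g(\mathcal E)<k$, then $g(U_k)\ge g(\mathcal E)$. Combined with $U_k\subseteq\mathcal E$ and monotonicity, this gives $g(U_k)=g(\mathcal E)$, hence equal values of $f_{S,k}$.
\end{itemize}
In both cases $f_{S,k}(U_k)=f_{S,k}(\mathcal E)$.

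There is no real obstacle here. The only point needing care is the hypothesis of Fact~\ref{fact:cut_certificate}: it applies to successive residual covering forests with no maximality requirement, which is exactly the hypothesis of the proposition. The restriction $1\le k\le K$ guarantees that $U_k$ is a completed prefix. The case $k>|\delta_{\mathcal G}(S)|$ is handled by the second case above.
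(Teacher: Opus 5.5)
Your proof is correct and follows the paper's route. You get submodularity by composing the nondecreasing concave $\phi$ with the modular count $g$, and the equality $f_{S,k}(U_k)=f_{S,k}(\mathcal E)$ from the cut certificate (Fact~\ref{fact:cut_certificate}), making explicit the marginal-gain check and the two-case comparison that the paper's short proof leaves implicit.
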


\begin{proof}
$g$ is modular and $\phi$ is nondecreasing and concave, so
\begin{equation}
 f_{S,k}(A)=\min\{|\delta_A(S)|,k\}
 \label{eq:fsk_def}
\end{equation}
is monotone submodular, and so is any nonnegative combination
$\sum_S\alpha_Sf_{S,k}$. Fact~\ref{fact:cut_certificate} gives
\begin{equation}
 \boxed{f_{S,k}(U_k)=f_{S,k}(\mathcal E).}
 \label{eq:submod_cert_app}
 \qedhere
\end{equation}
\end{proof}

This is an interpretation of the stored certificate, not the
objective \edis{} optimizes.

\begin{proposition}[Edge-disjoint path reading]
\label{prop:path_certificate}
Under a pure $\mathcal F_{\max}$ schedule with fixed real scores, let
$1\le k\le K$ and $e=(u,v)\in\mathcal E\setminus U_k$. Then $U_k$
contains $k$ edge-disjoint $u$--$v$ paths whose edges all have score
at least $s_e$.
\end{proposition}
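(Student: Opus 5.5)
Fix $k$ with $1\le k\le K$ and an edge $e=(u,v)\in\mathcal E\setminus U_k$, and set $\tau=s_e$. The plan is to reduce the statement to a connectivity question inside the threshold graph $\mathcal G_{\ge\tau}$. Then Theorem~\ref{thm:score_threshold_certificate} and Menger's theorem finish it.

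By Menger's theorem, applied to the graph $(\mathcal V,U_{k,\ge\tau})$, it suffices to show that every nontrivial cut $S$ separating $u$ from $v$ satisfies $|\delta_{U_{k,\ge\tau}}(S)|\ge k$. Such a family of paths uses only edges of $U_k$ with score at least $s_e$, which is exactly the claim.

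To get the cut bound, invoke Theorem~\ref{thm:score_threshold_certificate} with this $\tau$. It gives
\[
|\delta_{U_{k,\ge\tau}}(S)|\ge\min\{k,|\delta_{\mathcal G_{\ge\tau}}(S)|\}.
\]
So it remains to show $|\delta_{\mathcal G_{\ge\tau}}(S)|\ge k$ for every cut separating $u$ and $v$. This is the main obstacle, since the bound must be derived from the forest structure rather than from $\mathcal G$ itself.

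For this I would use the bottleneck property round by round. Since $e\notin U_k$, $e$ lies in $\mathcal E_{\mathrm{rem}}^{(i-1)}$ for every $i\le k$, and it was omitted by the maximum-score covering forest $F_i$ of that residual graph. Because $F_i$ covers the component of the residual graph containing $e$, $u$ and $v$ lie in the same tree of $F_i$. Fact~\ref{fact:bottleneck} then gives a $u$--$v$ path $P_i\subseteq F_i$ all of whose edges have score at least $s_e=\tau$.

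Each $P_i$ crosses $S$, so $F_i\cap\delta_{\mathcal G_{\ge\tau}}(S)\neq\varnothing$ for each $i\le k$. The forests $F_1,\ldots,F_k$ are pairwise edge-disjoint, so this yields $k$ distinct crossing edges, all in $U_{k,\ge\tau}$. This argument in fact bounds $|\delta_{U_{k,\ge\tau}}(S)|\ge k$ directly, without needing the theorem. Even more simply, the paths $P_1,\ldots,P_k$ are themselves $k$ edge-disjoint $u$--$v$ paths in $U_k$ with scores at least $s_e$, so Menger's theorem can also be bypassed.

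The only care needed is ties, where $s_a=s_e$ on a path edge. Fact~\ref{fact:bottleneck} is stated with $\ge$, so ties are harmless. The argument applies to any maximizer, including under randomized tie-breaking.
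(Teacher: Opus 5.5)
Your proof is correct and, once you drop the Menger/Theorem~\ref{thm:score_threshold_certificate} scaffolding as you yourself suggest, it is the paper's argument: $e\notin U_k$ keeps $e$ in every residual graph $\mathcal E_{\mathrm{rem}}^{(i-1)}$ for $i\le k$, Fact~\ref{fact:bottleneck} gives a path $P_i\subseteq F_i$ with all scores at least $s_e$, and disjointness of the forests makes $P_1,\ldots,P_k$ edge-disjoint. Your added remark that $u$ and $v$ lie in a common tree of $F_i$, because $F_i$ covers the residual component containing $e$, fills in a step the paper leaves implicit.
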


\begin{proof}
$e\notin U_k$ gives
\begin{equation}
 e\in\mathcal E_{\mathrm{rem}}^{(i-1)}\qquad\forall\,i\le k.
 \label{eq:path_residual_app}
\end{equation}
By the bottleneck property (Fact~\ref{fact:bottleneck}), each $F_i$
carries a path $P_i(u,v)\subseteq F_i$ with
$\min_{f\in P_i(u,v)}s_f\ge s_e$. Since $F_i\cap F_j=\varnothing$ for
$i\ne j$, the paths $P_1,\ldots,P_k$ are edge-disjoint. Hence $U_k$
holds $k$ edge-disjoint $u$--$v$ paths of score at least $s_e$.
\end{proof}

In plain terms, for any original edge $e$ outside the stored prefix $U_k$,
the stored prefix $U_k$ already contains $k$ different routes between
its two endpoints, no two of which share an edge, and every edge on
every route scores at least as well as $e$ itself. Dropping $e$
cannot disconnect $u$ from $v$ through routes this good unless at
least $k$ of those routes are also removed. This is the path-side
mirror of the cut certificate above. The cut version counts surviving
edges across a fixed cut, while the path version exhibits the actual
routes behind that count. Two of the $k$ paths may still share an
intermediate vertex, since the guarantee covers edges rather than
vertices, and it holds only for the pure $\mathcal F_{\max}$ schedule
with fixed real scores.

These paths establish alternative routes in the stored prefix. The
proposition does not bound their hop lengths or their survival after
composition. A path can support propagation in an $L$-layer
message-passing GNN only if its length is at most $L$ and all its edges
are retained. Even then, reachability alone does not guarantee useful
influence on the prediction.

\subsection{Composition Retention}
\label{app:budget_theory}

All probabilities condition on the realized decomposition, cached subgraph
weights, and edge-budget in addition to the displayed $\pi^{(t)}$.
Trimming uses increasing $(w_k,k)$ order, so this conditioning fixes
all retained counts. For $B=0$, the identities hold trivially. The
displayed ordering may then be regarded as an unused latent weighted
permutation. For $K=0$, this permutation is empty. Write
$A^{(t)}$ for the selected structural indices.
For $n_k=0$, set $r_k=0$ and $\theta_k=0$. We use
$\binom00=1$ and $\binom ab=0$ for integers $b>a\ge0$.
Thus an empty pool contributes a factor of one to cut avoidance.

Fix $\mathcal D$, its weights, and $B$, and condition on $\pi^{(t)}$. Set
$D_0=\mathcal E_{\mathrm{res}}$ and $D_k=\mathcal E_k$ for $1\le k\le K$,
with $n_k=|D_k|$, retained count $r_k$, and $\theta_k=r_k/n_k$ for
$n_k>0$ (zero otherwise). For a cut $S$ and threshold $\tau$, let
$x_k(S,\tau)=|D_k\cap\delta_{\mathcal G}(S)\cap\mathcal E_{\ge\tau}|$ count
its high-score crossing edges.
\smallskip
\begin{theorem}[Conditional retention under composition, restated]
\label{thm:app_budget_retention_restated}
For any fixed additive edge weight $\omega$, cut $S$, and threshold $\tau$,
\begin{equation}
\mathbb E[\omega(\mathcal E_B^{(t)})\mid\pi^{(t)}]
=\sum_{k=0}^{K}\theta_k\,\omega(D_k),
\qquad
\Pr\bigl(\mathcal E_B^{(t)}\cap\delta_{\mathcal G}(S)\cap\mathcal E_{\ge\tau}=\varnothing\mid\pi^{(t)}\bigr)
=\prod_{k=0}^{K}\frac{\binom{n_k-x_k(S,\tau)}{r_k}}{\binom{n_k}{r_k}}.
\label{eq:budget_retention_cut_restated}
\end{equation}
\end{theorem}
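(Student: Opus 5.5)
The plan is to first pin down the conditional structure of $\mathcal E_B^{(t)}$ given $\pi^{(t)}$ (together with the fixed decomposition, weights, and $B$). The stopping index $r_B^{(t)}$ and the selected set $A^{(t)}$ are deterministic functions of the order and the sizes $|\mathcal E_k|$. The trimming rule in increasing $(w_k,k)$ order is also deterministic in which subgraphs are dropped and by how much. Hence every retained count $r_k$ is fixed. There are three cases:
\begin{itemize}
\item a fully kept subgraph has $r_k=n_k$;
\item a dropped or unselected subgraph has $r_k=0$;
\item at most one partially trimmed subgraph has $0<r_k<n_k$, and its retained part is a uniformly random $r_k$-subset of $D_k$.
\end{itemize}
The residual pool $D_0$ has $r_0=B-N$ when $B>N$ and $r_0=0$ otherwise. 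Its completion set $Q_B^{(t)}$ is a uniform $r_0$-subset of $D_0$. In every case, conditionally, $D_k\cap\mathcal E_B^{(t)}$ is a uniformly random $r_k$-subset of $D_k$, since the deterministic cases are degenerate uniform draws. These draws are independent across $k$ because only the trimmed subgraph and the residual completion involve fresh randomness, drawn separately. Because the $D_k$ partition $\mathcal E$, we have $\mathcal E_B^{(t)}=\bigsqcup_k (D_k\cap\mathcal E_B^{(t)})$.

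For the expectation identity, write $\omega(\mathcal E_B^{(t)})=\sum_{k}\sum_{e\in D_k}\omega_e\mathbf 1[e\in\mathcal E_B^{(t)}]$ and use linearity. Under a uniform $r_k$-subset of an $n_k$-set, each element is included with probability $r_k/n_k=\theta_k$. This gives $\sum_k\theta_k\omega(D_k)$. Empty pools contribute $0$, consistent with the convention $\theta_k=0$.

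For the cut-avoidance identity, the event that no high-score crossing edge survives is the intersection over $k$ of the events that the $r_k$-subset of $D_k$ avoids the $x_k(S,\tau)$ qualifying edges in $D_k$. By independence across $k$, its probability is the product of per-pool probabilities. The number of $r_k$-subsets avoiding a fixed $x_k$-set is $\binom{n_k-x_k}{r_k}$ out of $\binom{n_k}{r_k}$. The stated conventions ($\binom00=1$, $\binom ab=0$ for $b>a$) cover three edge cases:
\begin{itemize}
\item empty pools contribute a factor of one;
\item fully kept pools with $x_k>0$ contribute a factor of zero;
\item the $B=0$ and $K=0$ cases follow directly.
\end{itemize}

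The main obstacle I expect is the bookkeeping in the first step: verifying rigorously that conditioning on $\pi^{(t)}$ fixes all $r_k$, and that the randomness factorizes independently across pools. This requires walking through each branch of Eq.~\eqref{eq:app_selected_groups}: $B=0$, $0<B\le N$ with trimming, and $B>N$ with residual completion. In particular, one must check that overshoot and residual completion never co-occur, so at most two pools are random and they use separate draws. Once that structural description is in place, both identities are standard hypergeometric computations.
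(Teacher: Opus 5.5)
Your proposal is correct and follows the paper's proof essentially step for step: conditioning on $\pi^{(t)}$ fixes every retained count $r_k$, and each pool $D_k$ contributes a (possibly degenerate) uniform $r_k$-subset independently of the others. Linearity then gives the expectation identity, and counting the $\binom{n_k-x_k(S,\tau)}{r_k}$ avoiding subsets together with independence gives the product. One small sharpening, which the paper makes explicit: residual completion occurs only when every structural subgraph is fully kept, so trimming and completion are mutually exclusive and at most one pool is non-degenerate rather than ``at most two,'' though your independence argument works either way.
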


\begin{proof}[Proof of Theorem~\ref{thm:app_budget_retention_restated}]
Condition on $\pi^{(t)}$, fixing $A^{(t)}$ and the retained counts
$0\le r_k\le n_k$, $0\le k\le K$, with $\sum_{k=0}^Kr_k=B$.
Depletion removes whole subgraphs before partially trimming at most one
subgraph. Uniform deletion leaves a uniform retained subset of that subgraph.
All other structural pool subsets are deterministic. Residual completion
occurs only when every structural subgraph is fully retained, so only the
residual subset can then be non-degenerate. Thus at most one pool subset
is non-degenerate under the conditioning, and the pool subsets are
conditionally independent. Each pool contributes a uniform $r_k$-subset,
including the unique empty or full subset when $r_k=0$ or $r_k=n_k$. Hence
\begin{equation}
 \Pr(e\in\mathcal E_B^{(t)}\mid\pi^{(t)})=\theta_k\qquad\forall\,e\in D_k.
 \label{eq:retention_prob}
\end{equation}
By linearity of expectation on
$\omega(\mathcal E_B^{(t)})=\sum_e\omega_e\mathbf 1\{e\in\mathcal E_B^{(t)}\}$,
\begin{equation}
 \boxed{\mathbb E\bigl[\omega(\mathcal E_B^{(t)})\mid\pi^{(t)}\bigr]
 =\sum_{k=0}^K\theta_k\,\omega(D_k).}
 \label{eq:budget_retention_score_app}
\end{equation}

For a cut $S$ and threshold $\tau$, mark the
$x_k(S,\tau)=|D_k\cap\delta_{\mathcal G}(S)\cap\mathcal E_{\ge\tau}|$
crossing edges in each $D_k$. Exactly $\binom{n_k-x_k(S,\tau)}{r_k}$
of the $\binom{n_k}{r_k}$ retained subsets avoid them, so
\begin{equation}
 h_k:=h\bigl(n_k,r_k,x_k(S,\tau)\bigr)
 =\frac{\binom{n_k-x_k(S,\tau)}{r_k}}{\binom{n_k}{r_k}}
 \label{eq:hk_def}
\end{equation}
is the probability that subgraph $k$ retains none of them. Independence
across subgraphs gives
\begin{equation}
 \boxed{\Pr\bigl(\text{no retained }\tau\text{-qualified crossing of }S
 \mid\pi^{(t)}\bigr)=\prod_{k=0}^Kh_k.}
 \label{eq:budget_retention_cut_app}
 \qedhere
\end{equation}
\end{proof}

A vertex is isolated exactly when its singleton cut is empty, so
$S=\{v\}$ with $\tau$ below every score gives its isolation
probability. An isolated vertex receives no neighbor messages that
epoch, so this same identity also bounds how often a node's
aggregation degenerates to its own features alone. These identities
bound retention at each cut individually rather than the joint event
of every cut retaining an edge, so connectivity of $\mathcal
G_B^{(t)}$ at a given $B$ is a separate, cut-by-cut question.

\smallskip

\begin{corollary}[High-score cut survival, restated]
\label{cor:highscore_survival_restated}
Under the same conditions,
\begin{equation}
\Pr\bigl(\mathcal E_B^{(t)}\cap\delta_{\mathcal G}(S)\cap\mathcal E_{\ge\tau}\neq\varnothing\mid\pi^{(t)}\bigr)
\ge1-\exp\Bigl(-\textstyle\sum_{k=0}^{K}\theta_k x_k(S,\tau)\Bigr).
\label{eq:highscore_survival_restated}
\end{equation}
\end{corollary}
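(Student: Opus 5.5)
The plan is to start from the exact avoidance identity of Theorem~\ref{thm:app_budget_retention_restated} and bound each factor separately. Conditioning on $\pi^{(t)}$, the probability that no $\tau$-qualified crossing of $S$ survives equals $\prod_{k=0}^K h_k$, with $h_k=\binom{n_k-x_k}{r_k}/\binom{n_k}{r_k}$ and $x_k=x_k(S,\tau)$. It therefore suffices to show that
\[
h_k\le \exp(-\theta_k x_k)
\]
for each $k$. Multiplying these bounds gives $\prod_k h_k\le \exp\bigl(-\sum_k\theta_k x_k\bigr)$, and taking complements yields the claim.

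To bound a single factor, handle the degenerate cases first. If $n_k=0$, then $\theta_k=0$, $x_k=0$ and $h_k=1$, so the bound holds with equality. If $r_k=0$, then $h_k=1=e^{0}$. If $x_k=0$, both sides equal $1$. If $r_k>n_k-x_k$, then $h_k=0$ under the paper's binomial convention, which is trivially below the exponential. For the main case, write the hypergeometric avoidance probability as a product over the $x_k$ marked edges:
\[
h_k=\prod_{i=0}^{x_k-1}\frac{n_k-r_k-i}{n_k-i}.
\]
This follows by swapping roles, i.e., counting the retained $r_k$-subsets whose complement contains all marked edges, and it is a standard rearrangement of the binomial ratio. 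Each factor satisfies $\frac{n_k-r_k-i}{n_k-i}=1-\frac{r_k}{n_k-i}\le 1-\frac{r_k}{n_k}=1-\theta_k$. Hence $h_k\le(1-\theta_k)^{x_k}$, and the inequality $1-y\le e^{-y}$ finishes this step.

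The only step needing care is the product rewriting of $h_k$ and its boundary cases, namely when some factor $n_k-r_k-i$ becomes nonpositive. I would verify the rewriting directly from
\[
\binom{n-x}{r}\Big/\binom{n}{r}=\frac{(n-x)!\,(n-r)!}{n!\,(n-r-x)!},
\]
which equals the stated product. When $n-r<x$, some factor in the product is zero, and this matches $h_k=0$. Everything else is immediate. In particular, the uniform-trimming assumption enters only through Theorem~\ref{thm:app_budget_retention_restated}, so no selector-specific structure is used.
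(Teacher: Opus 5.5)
Your proof is correct and follows essentially the same route as the paper. Both start from the exact product $\prod_k h_k$ of Theorem~\ref{thm:app_budget_retention_restated}, bound each factor by $e^{-\theta_k x_k}$, and take complements. The only difference is cosmetic: the paper factors $h_k$ over the $r_k$ retained draws, giving $\prod_{i=0}^{r_k-1}\frac{n_k-x_k-i}{n_k-i}\le(1-x_k/n_k)^{r_k}$, whereas you factor over the $x_k$ marked edges with each term at most $1-r_k/n_k$. By the hypergeometric symmetry, both give the same bound $e^{-r_kx_k/n_k}$.
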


\begin{proof}[Proof of Corollary~\ref{cor:highscore_survival_restated}]
For $n=0$ or $r=0$ the factor and its exponential bound are one.
For $n>0$ and $r>n-x$,
$h(n,r,x)=0$ and the bound is immediate. Otherwise $0\le r\le n-x$ and
\begin{equation}
 h(n,r,x)=\frac{\binom{n-x}{r}}{\binom nr}
 =\prod_{i=0}^{r-1}\frac{n-x-i}{n-i}.
 \label{eq:h_expand}
\end{equation}
Since $\frac{n-x-i}{n-i}\le1-\frac xn$ for $0\le i<n$,
\begin{equation}
 h(n,r,x)\le\Bigl(1-\frac xn\Bigr)^r\le e^{-rx/n}.
 \label{eq:h_bound}
\end{equation}
Hence
\begin{equation}
 \prod_{k=0}^Kh_k\le\exp\Bigl(-\sum_{k=0}^K\theta_kx_k(S,\tau)\Bigr),
 \label{eq:prod_bound}
\end{equation}
and complementing~\eqref{eq:budget_retention_cut_app},
\begin{equation}
 \boxed{\Pr\bigl(\exists\text{ retained edge crossing }S\text{ with }
 s_e\ge\tau\mid\pi^{(t)}\bigr)\ \ge\ 1-\exp\Bigl(-\sum_{k=0}^K
 \theta_kx_k(S,\tau)\Bigr).}
 \label{eq:highscore_survival_app}
 \qedhere
\end{equation}
\end{proof}

Writing $\mu=\sum_{k=0}^K\theta_kx_k(S,\tau)$, the bound gives
conditional survival probability at least $1-\delta$ when
$\mu\ge\log(1/\delta)$, for $0<\delta<1$. It is weak for small
$\mu$ and equals zero when $\mu=0$. This is a fixed-cut statement,
not a simultaneous connectivity or path-survival guarantee.

\subsection{Optional Composition Variants}
\label{app:trimming}

Two theory-driven variants extend the composition step. Both are
analyzed here as additional theoretical guarantees available within
the framework, alongside rather than in place of the main experiments.

\paragraph{Certificate-core composition.} Under the maximum-score
residual-forest assumptions, set $U_0=\varnothing$. For edge-budget $B$ let
$q(B)=\max\{q\in\{0,\ldots,K\}:|U_q|\le B\}$ and always retain the prefix $U_{q(B)}$,
filling the remaining $B-|U_{q(B)}|$ edges stochastically without
replacement from $\mathcal E\setminus U_{q(B)}$. Since
$\mathcal E_B^{(t)}\supseteq U_{q(B)}$ and $\delta_A(S)$ is monotone in
$A$, Fact~\ref{fact:cut_certificate} and
Theorem~\ref{thm:score_threshold_certificate} give every epoch graph
the deterministic guarantees
\begin{equation}
 \boxed{|\delta_{\mathcal E_B^{(t)}}(S)|\ge\min\{q(B),|\delta_{\mathcal G}(S)|\},
 \qquad
 |\delta_{\mathcal E_B^{(t)}}(S)\cap\mathcal E_{\ge\tau}|
 \ge\min\{q(B),|\delta_{\mathcal G_{\ge\tau}}(S)|\}.}
 \label{eq:certificate_core}
\end{equation}

This trades cross-epoch variability for a deterministic per-epoch
certificate.
\bigskip

\begin{proposition}[Average-score trimming is expectation-optimal]
\label{prop:avg_score_trimming}
Fix available nonempty subgraphs indexed by $k$, let
$\bar s_k=s(\mathcal E_k)/n_k$, and let $0\le B\le\sum_kn_k$ be an
integer. Among allocations $r_k\in\{0,\ldots,n_k\}$ with
$\sum_kr_k=B$, uniform within-subgraph retention maximizes
$\mathbb E\bigl[\sum_ks(\widetilde{\mathcal E}_k)\bigr]=\sum_kr_k\bar s_k$
by filling subgraphs in decreasing order of $\bar s_k$, with
at most one subgraph partially filled. Other allocations can tie when
subgraphs have equal average score. The main method instead trims by
cached total weight $w_k$ rather than this expectation-optimal
average-score rule. That $w_k$-based greedy fill is equally
deterministic in its own retained counts given fixed tie-breaking,
though uniform sampling within the one partially filled subgraph can
still vary which edges are kept.
This score-optimality statement does not establish an accuracy-optimal
allocation. The main method's weighted ordering and total-weight trimming
are evaluated through the empirical ablations.
\end{proposition}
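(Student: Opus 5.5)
The plan has two parts. First I establish the displayed expectation identity. Then I solve the resulting integer allocation problem by an exchange argument. For the identity, fix the allocation $(r_k)$ and suppose each available subgraph $\mathcal E_k$ retains a uniformly random $r_k$-subset. This is exactly the situation of Theorem~\ref{thm:app_budget_retention_restated}, with the retained counts fixed. Equation~\eqref{eq:retention_prob} then gives $\Pr(e\in\widetilde{\mathcal E}_k)=r_k/n_k$ for every $e\in\mathcal E_k$. Linearity of expectation with the additive weight $\omega=s$ yields
\[
\mathbb E\bigl[s(\widetilde{\mathcal E}_k)\bigr]=\frac{r_k}{n_k}\,s(\mathcal E_k)=r_k\bar s_k,
\]
and summing over $k$ gives the objective $\Phi(r)=\sum_k r_k\bar s_k$. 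Since the expectation depends on the allocation only through the counts $r_k$, the problem reduces to a deterministic integer program:
\[
\max \Phi(r)\quad\text{subject to}\quad \sum_k r_k=B,\qquad 0\le r_k\le n_k .
\]

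Next I show that greedy filling is optimal. Relabel the subgraphs so that $\bar s_1\ge\bar s_2\ge\cdots$. Let $g$ be the greedy allocation, which fills subgraphs $1,2,\dots$ completely until the budget runs out; at most one subgraph, say index $p$, is partially filled. Take any feasible $r\neq g$. I would use a unit-exchange argument. There is a smallest index $i$ with $r_i\ne g_i$. By construction of greedy $g_i\ge r_i$, so $r_i<g_i\le n_i$. Since both allocations sum to $B$, some $j>i$ has $r_j>g_j\ge 0$. Moving one unit from $j$ to $i$ keeps $r$ feasible and changes $\Phi$ by $\bar s_i-\bar s_j\ge 0$. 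Each such move strictly decreases $\sum_k|r_k-g_k|$, so after finitely many moves $r$ becomes $g$ with $\Phi$ never decreasing. Hence $\Phi(g)\ge\Phi(r)$.

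The tie statement also follows from this argument. A move changes $\Phi$ by zero exactly when $\bar s_i=\bar s_j$, so distinct optimal allocations can exist only when they differ on subgraphs of equal average score. Conversely, if all $\bar s_k$ are distinct, every move is strictly improving and $g$ is the unique maximizer.

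The remaining sentences of the statement are observations rather than claims needing proof. The total-weight rule is deterministic in its counts $r_k$ once ties are broken by $(w_k,k)$, with randomness only in which edges are kept inside the single partially trimmed subgraph. I would also record the one-line example showing it can differ from greedy: a large subgraph with low average score has large $w_k$, so it is favored by $w_k$ but not by $\bar s_k$. The only step needing care is the exchange step, in particular making sure a differing index $j$ with $r_j>g_j$ always exists and that greedy's "at most one partial" structure guarantees $g_i\ge r_i$ at the first disagreement. Everything else is linearity of expectation.
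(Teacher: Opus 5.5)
Your proposal is correct and follows essentially the paper's route: linearity of expectation under uniform within-subgraph retention reduces the objective to $\sum_k r_k\bar s_k$, and a single-unit exchange between subgraphs of different average score settles the integer allocation. The difference is one of direction. The paper argues by contradiction that an optimum admits no improving swap. You instead transform an arbitrary feasible allocation into the greedy one with nondecreasing objective, which shows directly that the greedy fill attains the maximum even when averages tie.
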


\begin{proof}
Uniform retention gives $\mathbb E[s(\widetilde{\mathcal E}_k)]=r_k\bar s_k$,
so $\sum_kr_k\bar s_k$ is linear in $(r_k)_k$ subject to $0\le r_k\le n_k$
and $\sum_kr_k=B$. If an optimum has $r_a<n_a$, $r_b>0$, and
$\bar s_a>\bar s_b$, moving one unit from $b$ to $a$ changes the
objective by $\bar s_a-\bar s_b>0$, contradicting optimality. Hence
\begin{equation}
 \boxed{\text{an optimum fills subgraphs in decreasing }\bar s_k
 \text{ order, at most one subgraph split.}}
 \qedhere
\end{equation}
\end{proof}

The results establish exact-budget reuse for any cached partition and
deterministic cut and score-qualified path certificates for maximum-score
forest prefixes. They also characterize conditional edge retention and
fixed-cut survival under stochastic composition. The last
property does not transfer the full prefix certificates to every epoch
graph. The optional certificate-core variant retains a prefix chosen
according to $B$ and therefore inherits its deterministic cut guarantees.

These combinatorial results describe available graph connections without
assuming a particular GNN or loss. The benchmark ablations provide
complementary evidence for predictive utility. At $\rho=0.3$, the reported
means favor \edis{}-Lite over Frozen on 13 of 14 datasets, Direct on 11,
and Shuffled groups on 12 (Table~\ref{tab:mechanism-main}). This supports
the practical contribution of decomposition and resampling in the tested
settings, while leaving open how much of the gain is specifically
attributable to the cut and path certificates rather than other changes
in the induced sampling distribution.
\endgroup

\section{Selectors and Computational Cost}
\label{app:implementation}

\subsection{Family of Residual Selectors}
\label{app:selector_family}

Each of the seven selectors below implements the same residual-selector
interface $\mathcal F$ from a residual graph $H$ and its fixed edge
scores $s$, so any one of them can be dropped into \edis{} without
changing anything else in the pipeline. What differs is only how
each one decides which edges to keep, and consequently its round
cost $T_i(n,m_i)$, explained selector by selector below and
summarized in Table~\ref{tab:selector-costs} at the end of this
appendix.

\paragraph{\underline{Maximum/minimum covering forest} (\texttt{maxst}/\texttt{minst}, \textbf{MaxCF}/\textbf{MinCF} in Section~\ref{sec:experiments}).}
The implementation names \texttt{maxst}/\texttt{minst} reflect that
each round's forest spans every component of the current residual
graph. We call it a \emph{covering forest} instead because that
spanning property holds only within one round, not for the
budget-limited graph actually composed for training
(Section~\ref{subsec:theory}).
This selector puts every remaining edge in order by score, either
strongest first or weakest first, and adds edges one at a time down
that list, skipping any edge that would close a loop back to nodes
already connected. This is Kruskal's algorithm
\citep{Kruskal1956,Edmonds1971} run on our scores instead of raw
edge weights, with ties kept in their original order. Because it
only adds loop-free edges, it ends up touching every part of the
residual graph. \texttt{maxst} and \texttt{minst} differ only in
which direction we sort. Each round sorts the remaining edges and
tracks which nodes are already connected, costing a little more
than one plain scan.

\paragraph{\underline{$k$-nearest neighbor} (\texttt{knn}).}
This selector lets every node claim its $k$ best-scoring remaining
edges, or all of them if fewer than $k$ are left, and keeps the
union of every node's claims. Asking for a fixed total $M$ instead
of a fixed $k$ trims or tops up that union by score until it hits
$M$ exactly, which can take away an edge a node had already claimed,
so no node is guaranteed to keep its own top choices. This touches
each remaining edge from both ends.

\paragraph{\underline{Graph spanner} (\texttt{spanner}).}
This selector works on each disconnected piece of the residual graph
on its own, building a smaller subgraph using edge lengths that
shrink as the score grows, so the best-scoring edges look shortest.
The result must keep every pair of nodes within a stretch factor
$\alpha$ of their true distance in these lengths
\citep{Peleg:1989spanners}, a bound on path length rather than hop
count. Construction is randomized, and its cost depends on the
spanner routine used. 

\paragraph{\underline{Degree-normalized} (\texttt{degree\_norm}).}
This selector draws a set number $M$ of edges at random without
repeats, weighting each edge $uv$ by $w_{uv}=1/d_H(u)+1/d_H(v)$ so
edges touching low-degree nodes are more likely to be picked,
recomputing degrees every round. It ignores the score entirely
and guarantees neither connectivity nor a minimum degree. One pass
over the edges is enough to compute the weights.

\paragraph{\underline{Random} (\texttt{random}).}
This selector shuffles the remaining edges uniformly at random and
keeps the first $M$. Ignoring both score and structure, it serves as
the score-free baseline the others are compared against, and needs
only one pass over the edges.

\paragraph{\underline{Scheduled} (Hybrid).}
A schedule chains several selectors together, using a different one
each round on whatever edges remain, and repeating the last one once
the list runs out. Any of the five selectors above can appear, in
any order, and can repeat, so this is a general way to combine
selectors rather than one fixed recipe. The schedule used in our
experiments runs \texttt{maxst}, then \texttt{knn}, then
\texttt{degree\_norm} for every later round, first extracting a
connectivity-preserving backbone, then giving each node its own best
local edges, then spreading coverage to whichever nodes are still
thin. It stops under the same conditions as the other selectors.

\paragraph{Selectors outside this family.}
Effective-resistance sampling \citep{Spielman:2008effres} needs
spectral information recomputed as the residual changes. Forest-fire
sampling \citep{Leskovec:2006sampling}, SCAN \citep{Xu:2007scan}, and
rank-degree sampling \citep{Voudigari:2016rankdegree} likewise need
traversal, clustering, or ranking steps beyond a residual degree
pass. None of these are implemented here as candidate selectors
inside \edis{}'s own construction, though several are compared
against \edis{} directly as full competing methods in
Section~\ref{sec:experiments}.

\clearpage
\subsection{Cost of Construction and Epoch Composition}
\label{app:complexity}

We now analyze where \edis{}'s computational cost is spent, writing
$n=|\mathcal V|$, $m=|\mathcal E|$, and $d_0$ for the input feature
dimension. Every edge is scored once, before construction begins, at
a cost of $T_{\mathrm{score}}=O(md_0)$, and this cost cannot be
skipped even by a score-free selector, since the same scores also
determine subgraph importance.

Building the decomposition runs the selector once per round over
whatever edges remain, so its total cost is just the sum of each
round's selector cost and residual bookkeeping. For the repeated-sorting
\texttt{maxst} construction, writing $m_i$ for the residual edge count
at round $i$, this gives
\[
 T_{\mathrm{construct}}
 =O\!\left(Kn+\sum_{i=1}^K m_i\log(m_i+1)\right)
 \subseteq O\!\left(Kn+Km\log(m+1)\right).
\]
The dependence on $K$ can be absorbed only when the round cap is treated
as a fixed constant. Fixed scores also permit implementations that reuse
an initial ordering. Storing the resulting subgraphs and the residual adds
only $O(m+K)$ extra space, on top of the graph itself and its
features.

Composition reuses the partition and scores without repeating extraction.
An implementation can determine retained counts from cached group sizes
before materializing edges. With an $O(K\log(K+1))$ weighted ordering
and trimming-order sort, random-access pools, and uniform subset sampling
in time proportional to the returned subset size, this yields
$O(K\log(K+1)+B)$ composition time. This is an implementation-dependent
bound. Copying entire candidate pools or generating full-pool permutations
can introduce additional cost depending on their sizes, up to $m$.
Expanding the chosen edges into their reciprocal
directed pairs for message passing adds a further
$T_{\mathrm{reciprocal}}=O(B)$ per epoch.

Summing over all $T$ training epochs, the total cost of a job is the
one-time scoring and construction cost plus the per-epoch
composition, reciprocal-expansion, training, transfer, and
validation costs added up across epochs,
\begin{equation*}
 T_{\mathrm{job}} = T_{\mathrm{score}} + T_{\mathrm{construct}}
 + \sum_{t=1}^{T}\Bigl(T_{\mathrm{compose}}^{(t)}
 + T_{\mathrm{reciprocal}}^{(t)} + T_{\mathrm{train}}^{(t)}
 + T_{\mathrm{transfer}}^{(t)} + T_{\mathrm{val}}^{(t)}\Bigr).
\end{equation*}
Full-graph validation and test inference, where used, process the
original graph $\mathcal G=(\mathcal V,\mathcal E)$ instead of a
composed one, so their cost depends on $(n,m)$ rather than $B$, and
is charged separately from training. The headline point is that
scoring and construction are paid only once, while composition and
reciprocal expansion recur every epoch. Budget-proportional edge
materialization is possible under the implementation conditions above.
Measured runtime is reported separately in Appendix~\ref{app:runtime-memory}.

\begin{center}
\small
\setlength{\tabcolsep}{3pt}
\rowcolors{2}{white}{black!3}
\begin{tabular}{lll}
\toprule
\textbf{Selector} & \textbf{Round cost $T_i(n,m_i)$} & \textbf{Structural guarantee} \\
\midrule
\texttt{maxst} / \texttt{minst} & $O(n+m_i\log(m_i+1))$ & covering forest of every component \\
\texttt{knn} & $O(m_i\log(m_i+1))$ & top-$k$ per node, unless trimmed to hit $M$ \\
\texttt{spanner} & depends on the spanner routine used & distance stretch at most $\alpha$ \\
\texttt{degree\_norm} & $O(m_i)$ & none \\
\texttt{random} & $O(m_i)$ & none \\
Hybrid & cost of whichever selector is active that round & inherits from the active selector \\
\bottomrule
\end{tabular}
\captionof{table}{Round cost and structural guarantee of each
residual selector from Section~\ref{app:selector_family}.}
\label{tab:selector-costs}
\end{center}

\section{Experimental Details}
\label{app:experimental}

\subsection{Dataset}

The experiments use 19 node-classification benchmarks, spanning six
heterophilic small/medium graphs, eight homophilic small/medium
graphs, and five large-scale graphs (Table~\ref{tab:datasets}). Node
and edge counts, and average degree, are computed after
treating each graph as undirected and removing self-loops. Adjusted
homophily measures how often connected nodes share a class after
correcting for class proportions \citep{Platonov:2023critical}. 
Classes/Tasks reports the number of prediction classes for every
dataset except ogbn-proteins, a multi-label dataset with 112 binary
prediction tasks, and Features gives the input feature dimension.
Most datasets are evaluated with accuracy. Minesweeper, Questions,
and ogbn-proteins use AUC instead, since their prediction tasks are
binary.

\begin{table*}[!htbp]
\centering

\small
\setlength{\tabcolsep}{5pt}
\renewcommand{\arraystretch}{1.05}

\begin{adjustbox}{max width=\linewidth}
\begin{tabular}{
@{\extracolsep{\fill}}
l
r
r
r
r
r
r
c
@{}}
\toprule

& \multicolumn{3}{c}{\textbf{Graph size}}
& \multicolumn{1}{c}{\textbf{Structure}}
& \multicolumn{3}{c}{\textbf{Prediction task}} \\

\cmidrule(lr){2-4}
\cmidrule(lr){5-5}
\cmidrule(l){6-8}

\textbf{Dataset}
& \textbf{Nodes}
& \textbf{Edges}
& \textbf{Avg. degree}
& \textbf{Adj. homophily}
& \textbf{Classes/Tasks}
& \textbf{Features}
& \textbf{Metric} \\

\midrule

\rowcolor[gray]{0.93}
\multicolumn{8}{@{}l@{}}{
\textit{Heterophilic small/medium}
~\citep{Platonov:2023critical}
} \\

Squirrel
& 2.22K
& 47.00K
& 42.28
& 0.01
& 5
& 2.09K
& Acc \\

\rowcolor[gray]{0.975}
Chameleon
& 890
& 8.85K
& 19.90
& 0.03
& 5
& 2.33K
& Acc \\

Roman-Empire
& 22.66K
& 32.93K
& 2.91
& -0.05
& 18
& 300
& Acc \\

\rowcolor[gray]{0.975}
Amazon-Ratings
& 24.49K
& 93.05K
& 7.60
& 0.14
& 5
& 300
& Acc \\

Minesweeper
& 10.00K
& 39.40K
& 7.88
& 0.01
& 2
& 7
& AUC \\

\rowcolor[gray]{0.975}
Questions
& 48.92K
& 153.54K
& 6.28
& 0.02
& 2
& 301
& AUC \\

\addlinespace[2pt]

\rowcolor[gray]{0.93}
\multicolumn{8}{@{}l@{}}{
\textit{Homophilic small/medium}
} \\

Cora~\citep{Yang:2016planetoid}
& 2.71K
& 5.28K
& 3.90
& 0.77
& 7
& 1.43K
& Acc \\

\rowcolor[gray]{0.975}
CiteSeer~\citep{Yang:2016planetoid}
& 3.33K
& 4.55K
& 2.74
& 0.67
& 6
& 3.70K
& Acc \\

PubMed~\citep{Yang:2016planetoid}
& 19.72K
& 44.32K
& 4.50
& 0.69
& 3
& 500
& Acc \\

\rowcolor[gray]{0.975}
Amazon-Computer~\citep{Shchur:2018pitfalls}
& 13.75K
& 245.86K
& 35.76
& 0.68
& 10
& 767
& Acc \\

Amazon-Photo~\citep{Shchur:2018pitfalls}
& 7.65K
& 119.08K
& 31.13
& 0.79
& 8
& 745
& Acc \\

\rowcolor[gray]{0.975}
Coauthor-CS~\citep{Shchur:2018pitfalls}
& 18.33K
& 81.89K
& 8.93
& 0.78
& 15
& 6.81K
& Acc \\

Coauthor-Physics~\citep{Shchur:2018pitfalls}
& 34.49K
& 247.96K
& 14.38
& 0.87
& 5
& 8.42K
& Acc \\

\rowcolor[gray]{0.975}
WikiCS~\citep{Mernyei:2020wikics}
& 11.70K
& 215.60K
& 36.85
& 0.58
& 10
& 300
& Acc \\

\addlinespace[2pt]

\rowcolor[gray]{0.93}
\multicolumn{8}{@{}l@{}}{
\textit{Large-scale}
} \\

ogbn-arxiv~\citep{Hu:2020ogb}
& 169.34K
& 1.17M
& 13.77
& 0.42
& 40
& 128
& Acc \\

\rowcolor[gray]{0.975}
ogbn-proteins~\citep{Hu:2020ogb}
& 132.53K
& 39.56M
& 597.00
& 0.05
& 112
& 8
& AUC \\

ogbn-products~\citep{Hu:2020ogb}
& 2.45M
& 61.86M
& 50.52
& 0.46
& 47
& 100
& Acc \\

\rowcolor[gray]{0.975}
Pokec~\citep{Lim:2021nonhomophilous}
& 1.63M
& 22.30M
& 27.32
& 0.42
& 2
& 65
& Acc \\

Reddit~\citep{Hamilton:2017tp}
& 232.97K
& 57.31M
& 492.00
& 0.74
& 41
& 602
& Acc \\

\bottomrule
\end{tabular}
\end{adjustbox}
\caption{
Dataset statistics for the 19 node-classification benchmarks used in
the current experiments.
}
\label{tab:datasets}
\end{table*}

\subsection{Baseline Settings}

We compare \edis{} against baselines drawn from three families,
grouped by the structural signal each relies on rather than by
implementation detail. Anchors bound the comparison
without adapting to the graph, and comprise a feature-only MLP (No
Graph), a Spanning Tree, and Full Graph with the same tuned GNN
configuration \citep{Luo:2024classicgnns}. Semantic baselines learn
which edges to keep jointly with the GNN parameters, and comprise
Unified-LTH \citep{Chen:2021ugs}, AdaGLT \citep{Zhang:2024adaglt},
and MoG \citep{Zhang:2025mog}. Topological baselines score edges
directly from the graph, independent of the downstream task, and
comprise Random, Local Degree \citep{Hamann:2016sparsification},
Rank Degree \citep{Voudigari:2016rankdegree}, Forest Fire
\citep{Leskovec:2006sampling}, SCAN \citep{Xu:2007scan}, L-Spar and
G-Spar \citep{Satuluri:2011lsim}, L-Sim
\citep{Chen:2023sparsifybench}, Spectral (ER)
\citep{Spielman:2008effres}, $t$-Spanner \citep{Peleg:1989spanners},
and DSpar \citep{Liu:2023dspar}.

\textbf{Sampling-based related methods.}\quad Neighborhood- and
subgraph-sampling methods such as GraphSAGE \citep{Hamilton:2017tp},
GraphSAINT \citep{Zeng:2020graphsaint}, and Cluster-GCN
\citep{Chiang:2019clustergcn} control a different resource, the
minibatch or neighborhood size used for scalability, rather than a
global edge-retention budget. They are therefore not part of the
matched-edge-budget comparison, and GraphSAGE appears only as the
alternate backbone in RQ7.

\subsection{Propagation and Evaluation Settings}
\label{app:propagation_details}

For full-batch GCN training, we add each retained undirected edge in
both directions. We add one self-loop per node. The edge-budget $B$
counts only the undirected non-loop edges. All layers use the same epoch
graph, with normalized adjacency
$\hat{\mathbf A}^{(t)}=(\tilde{\mathbf D}^{(t)})^{-1/2}
(\mathbf A^{(t)}+\mathbf I)(\tilde{\mathbf D}^{(t)})^{-1/2}$.
Here $\tilde{\mathbf D}^{(t)}$ holds the degrees of
$\mathbf A^{(t)}+\mathbf I$. We recompute normalization for the
current topology. Edge scores guide selection. They are not
propagation weights. The architecture and training objective stay
the same.

\subsection{Hyperparameter Settings}
\label{app:hyperparameters}

Every baseline and sparsifier method uses the same per-dataset
training setup for a given backbone. This keeps accuracy differences
tied to the sparsification method, not the training setup. Per-dataset
settings follow the defaults of \citet{Luo:2024classicgnns} for every
dataset except Reddit, which that work does not include. We use two
backbones. GCN is the default for every method. GraphSAGE is used
only by Full GraphSAGE and \edis{}-GraphSAGE. In both tables below,
$L$ is the number of GNN layers, $H$ is the hidden dimension, LR is
the learning rate, and WD is the weight decay.

\paragraph{\underline{$K_{\max}$}.} The round cap $K_{\max}$ on subgraph
extraction is itself a hyperparameter, chosen per dataset rather than
fixed globally, and applied the same way regardless of backbone.
Every small- and medium-scale dataset, homophilic or heterophilic,
uses $K_{\max}=10$. The five large-scale graphs instead use
$K_{\max}=80$ for Pokec, $K_{\max}=40$ for ogbn-arxiv, $K_{\max}=20$
for Reddit, and $K_{\max}=10$ for ogbn-products and ogbn-proteins.
These are the same values used throughout the main experiments and
the extended ablations, where Section~\ref{sec:experiments} (RQ6) and
Appendix~\ref{app:kmax-sensitivity} study how sensitive results are
to this choice.

\paragraph{\underline{MLP edge score}.} The learned score used in the
edge-score ablation (Section~\ref{sec:experiments} RQ5,
Appendix~\ref{app:score-profiles}) trains a small MLP on the concatenated
features of each edge's two endpoints to predict an edge score.
Training uses only edges whose two endpoints both carry a training
label, with target 1 if the endpoints share a label and 0 otherwise,
and an equal number of same-label and different-label edges sampled
per batch. PubMed's semi-supervised split labels too few training
nodes to assemble this balanced sample, so the MLP score is marked
N/A for PubMed throughout.

\begin{table*}[!htbp]
\centering

\begin{adjustbox}{max width=\linewidth}
\small
\setlength{\tabcolsep}{4pt}
\renewcommand{\arraystretch}{0.50}
\begin{tabular}{lrrrrlrllc}
\toprule
& \multicolumn{2}{c}{\textbf{Protocol}}
& \multicolumn{3}{c}{\textbf{Architecture}}
& \multicolumn{3}{c}{\textbf{Optimization}}
& \multicolumn{1}{c}{\textbf{Evaluation}} \\
\cmidrule(lr){2-3}\cmidrule(lr){4-6}\cmidrule(lr){7-9}\cmidrule(l){10-10}
\textbf{Dataset} & \textbf{Epochs} & \textbf{Runs} & $L$ & $H$ &
\textbf{Norm./Res.} & \textbf{Dropout} & \textbf{LR} & \textbf{WD} & \textbf{Metric} \\
\midrule
\rowcolor[gray]{0.93}\multicolumn{10}{@{}l@{}}{\textit{Heterophilic small/medium}} \\
Squirrel         & 500  & 10 & 4  & 256 & bn, res       & 0.70 & 1e-2 & 5e-4 & Acc \\
\rowcolor[gray]{0.975}Chameleon      & 200  & 10 & 5  & 512 & --            & 0.20 & 5e-3 & 1e-3 & Acc \\
Roman-Empire     & 2500 & 3  & 9  & 512 & bn, res       & 0.50 & 1e-3 & 0    & Acc \\
\rowcolor[gray]{0.975}Amazon-Ratings & 2500 & 3  & 4  & 512 & bn, res       & 0.50 & 1e-3 & 0    & Acc \\
Minesweeper      & 2000 & 3  & 12 & 64  & bn, res       & 0.20 & 1e-2 & 0    & AUC \\
\rowcolor[gray]{0.975}Questions      & 1500 & 3  & 10 & 512 & res           & 0.30 & 3e-5 & 0    & AUC \\
\rowcolor[gray]{0.93}\multicolumn{10}{@{}l@{}}{\textit{Homophilic small/medium}} \\
Cora             & 500  & 5  & 3  & 512 & --            & 0.70 & 1e-3 & 5e-4 & Acc \\
\rowcolor[gray]{0.975}CiteSeer       & 500  & 5  & 2  & 512 & --            & 0.50 & 1e-3 & 1e-2 & Acc \\
PubMed           & 500  & 5  & 2  & 256 & --            & 0.70 & 5e-3 & 5e-4 & Acc \\
\rowcolor[gray]{0.975}Amazon-Computer & 1000 & 3  & 3  & 512 & ln           & 0.50 & 1e-3 & 5e-5 & Acc \\
Amazon-Photo     & 1000 & 3  & 6  & 256 & ln, res       & 0.50 & 1e-3 & 5e-5 & Acc \\
\rowcolor[gray]{0.975}Coauthor-CS    & 1500 & 3  & 2  & 512 & ln, res       & 0.30 & 1e-3 & 5e-4 & Acc \\
Coauthor-Physics & 1500 & 3  & 2  & 64  & ln, res       & 0.30 & 1e-3 & 5e-4 & Acc \\
\rowcolor[gray]{0.975}WikiCS         & 1000 & 3  & 3  & 256 & ln            & 0.50 & 1e-3 & 0    & Acc \\
\rowcolor[gray]{0.93}\multicolumn{10}{@{}l@{}}{\textit{Large-scale}} \\
ogbn-arxiv       & 2000 & 2 & 5 & 512 & bn, res & 0.50 & 5e-4 & 5e-4 & Acc \\
\rowcolor[gray]{0.975}ogbn-proteins  & 100  & 2 & 3 & 80  & --      & 0.25 & 1e-2 & 0    & AUC \\
ogbn-products    & 300  & 2 & 5 & 200 & ln      & 0.50 & 3e-3 & 0    & Acc \\
\rowcolor[gray]{0.975}Pokec          & 2000 & 2 & 7 & 256 & bn, res & 0.20 & 5e-4 & 0    & Acc \\
Reddit           & 300  & 2 & 5 & 200 & ln      & 0.50 & 3e-3 & 0    & Acc \\
\bottomrule
\end{tabular}
\end{adjustbox}
\caption{Per-dataset training configuration for the GCN backbone.}
\label{tab:presets}
\end{table*}

\begin{table*}[!htbp]
\centering

\begin{adjustbox}{max width=\linewidth}
\small
\setlength{\tabcolsep}{4pt}
\renewcommand{\arraystretch}{0.50}
\begin{tabular}{lrrrrlrllc}
\toprule
& \multicolumn{2}{c}{\textbf{Protocol}}
& \multicolumn{3}{c}{\textbf{Architecture}}
& \multicolumn{3}{c}{\textbf{Optimization}}
& \multicolumn{1}{c}{\textbf{Evaluation}} \\
\cmidrule(lr){2-3}\cmidrule(lr){4-6}\cmidrule(lr){7-9}\cmidrule(l){10-10}
\textbf{Dataset} & \textbf{Epochs} & \textbf{Runs} & $L$ & $H$ &
\textbf{Norm./Res.} & \textbf{Dropout} & \textbf{LR} & \textbf{WD} & \textbf{Metric} \\
\midrule
\rowcolor[gray]{0.93}\multicolumn{10}{@{}l@{}}{\textit{Heterophilic small/medium}} \\
Squirrel         & 500  & 10 & 3  & 256 & bn, res & 0.70 & 1e-2 & --   & Acc \\
\rowcolor[gray]{0.975}Chameleon      & 200  & 10 & 4  & 256 & bn, res & 0.70 & 1e-2 & --   & Acc \\
Roman-Empire     & 2500 & 3  & 9  & 256 & bn      & 0.30 & 1e-3 & --   & Acc \\
\rowcolor[gray]{0.975}Amazon-Ratings & 2500 & 3  & 9  & 512 & bn, res & 0.50 & 1e-3 & --   & Acc \\
Minesweeper      & 2000 & 3  & 15 & 64  & bn, res & 0.20 & 1e-2 & --   & AUC \\
\rowcolor[gray]{0.975}Questions      & 1500 & 3  & 6  & 512 & ln      & 0.20 & 1e-3 & --   & AUC \\
\rowcolor[gray]{0.93}\multicolumn{10}{@{}l@{}}{\textit{Homophilic small/medium}} \\
Cora             & 500  & 5  & 3 & 256 & --      & 0.70 & 1e-3 & --   & Acc \\
\rowcolor[gray]{0.975}CiteSeer       & 500  & 5  & 3 & 512 & --      & 0.20 & 1e-3 & --   & Acc \\
PubMed           & 500  & 5  & 4 & 512 & --      & 0.70 & 5e-3 & --   & Acc \\
\rowcolor[gray]{0.975}Amazon-Computer & 1000 & 3  & 4 & 64  & ln      & 0.30 & 1e-3 & --   & Acc \\
Amazon-Photo     & 1000 & 3  & 6 & 64  & ln, res & 0.20 & 1e-3 & --   & Acc \\
\rowcolor[gray]{0.975}Coauthor-CS    & 1500 & 3  & 2 & 512 & ln, res & 0.50 & 1e-3 & --   & Acc \\
Coauthor-Physics & 1500 & 3  & 2 & 64  & bn, res & 0.70 & 1e-3 & --   & Acc \\
\rowcolor[gray]{0.975}WikiCS         & 1000 & 3  & 2 & 256 & ln      & 0.70 & 1e-3 & --   & Acc \\
\rowcolor[gray]{0.93}\multicolumn{10}{@{}l@{}}{\textit{Large-scale}} \\
ogbn-arxiv       & 2000 & 2 & 4 & 256 & bn, res & 0.50 & 5e-4 & --   & Acc \\
\rowcolor[gray]{0.975}ogbn-proteins  & 1000 & 2 & 6 & 512 & bn, res & 0.30 & 1e-2 & --   & AUC \\
ogbn-products    & 1000 & 2 & 5 & 256 & ln      & 0.50 & 3e-3 & --   & Acc \\
\rowcolor[gray]{0.975}Pokec          & 2000 & 2 & 7 & 256 & bn, res & 0.20 & 5e-4 & --   & Acc \\
Reddit           & 300  & 2 & 5 & 200 & ln      & 0.50 & 3e-3 & --   & Acc \\
\bottomrule
\end{tabular}
\end{adjustbox}
\caption{Per-dataset training configuration for the GraphSAGE backbone.}
\label{tab:graphsage_hyperparameters}
\end{table*}

\FloatBarrier

\subsection{Complete numerical results}
\label{app:complete-numerical-results}

Tables~\ref{tab:main-hom-acc}--\ref{tab:main-large-acc} report full
results (mean $\pm$ standard deviation) on the homophilic,
heterophilic, and large-scale datasets.

\begingroup
\definecolor{homBest}{RGB}{183,220,228}
\definecolor{homBand}{RGB}{235,235,235}
\definecolor{homHeader}{RGB}{222,226,230}
\definecolor{homInk}{RGB}{20,20,20}
\definecolor{homSecond}{RGB}{249,231,191}
\definecolor{homThird}{RGB}{231,221,242}
\newcommand{\homValue}[2]{\shortstack{#1\\[-1pt]{\fontsize{5.5pt}{6pt}\selectfont\color{black!80}\textpm#2}}}
\newlength{\homPageHeight}
\setlength{\homPageHeight}{0.98\textwidth}
\begin{sidewaystable}[p]
\centering
\begin{adjustbox}{max totalheight=\homPageHeight}
\begin{minipage}{0.98\textheight}
\centering
\caption{\textbf{Test accuracy on homophilic node-classification benchmarks.} Results are reported as mean and standard deviation (\%) at retained-edge fractions $\rho\in\{0.3,0.5,0.7\}$; higher values indicate better performance.}
\label{tab:main-hom-acc}
\label{app:full_hom_acc}
\vspace{3pt}
\fontfamily{ptm}\fontsize{7.5pt}{8pt}\selectfont
\color{homInk}
\setlength{\tabcolsep}{1.5pt}
\renewcommand{\arraystretch}{1.0}
\begin{tabular*}{0.98\textheight}{@{\extracolsep{\fill}}lc*{19}{c}@{}}
\toprule
\rowcolor{homHeader}[3pt][3pt]
\textbf{Dataset} & $\boldsymbol{\rho}$ & \multicolumn{4}{c}{\textbf{Reference anchors}} & \multicolumn{11}{c}{\textbf{Topology-based}} & \multicolumn{3}{c}{\textbf{Semantic-based}} & \multicolumn{1}{c}{\textbf{EDiS (Ours)}} \\
\cmidrule(lr){1-2} \cmidrule(lr){3-6} \cmidrule(lr){7-17} \cmidrule(lr){18-20} \cmidrule(lr){21-21}
 &  & \shortstack{MLP\\(no graph)} & \shortstack{Spanning\\Tree} & \shortstack{Full\\GCN} & \shortstack{EDiS-\\Full} & \shortstack{Random} & \shortstack{Local\\Degree} & \shortstack{Rank\\Degree} & \shortstack{Forest\\Fire} & \shortstack{SCAN} & \shortstack{L-Spar} & \shortstack{G-Spar} & \shortstack{L-Sim} & \shortstack{Spectral} & \shortstack{$t$-Spanner} & \shortstack{DSpar} & \shortstack{Unified-\\LTH} & \shortstack{AdaGLT} & \shortstack{MoG} & \shortstack{EDiS-\\Lite} \\
\midrule
\rowcolor{white}[3pt][3pt]  \multicolumn{1}{l}{\cellcolor{white}} & 0.3 & \multicolumn{1}{c}{\cellcolor{white}} & \multicolumn{1}{c}{\cellcolor{white}} & \multicolumn{1}{c}{\cellcolor{white}} & \homValue{84.08}{0.83} & \homValue{73.40}{0.51} & \homValue{73.52}{0.47} & \cellcolor{homSecond}\homValue{\underline{76.34}}{0.78} & \homValue{71.98}{0.39} & \homValue{71.38}{0.67} & \homValue{71.38}{1.48} & \homValue{72.02}{1.05} & \cellcolor{homThird}\homValue{\textit{75.18}}{0.72} & \homValue{72.86}{0.85} & \homValue{72.60}{0.90} & \homValue{73.90}{0.36} & \homValue{72.12}{3.47} & \homValue{72.74}{0.63} & \homValue{73.66}{0.77} & \cellcolor{homBest}\homValue{\textbf{77.16}}{0.98} \\
\rowcolor{homBand}[3pt][3pt]  \multicolumn{1}{l}{\cellcolor{white}\textbf{Cora}} & 0.5 & \multicolumn{1}{c}{\cellcolor{white}\homValue{60.14}{2.62}} & \multicolumn{1}{c}{\cellcolor{white}\homValue{80.86}{1.17}} & \multicolumn{1}{c}{\cellcolor{white}\homValue{84.26}{0.66}} & \homValue{83.30}{0.61} & \homValue{79.10}{0.80} & \cellcolor{homThird}\homValue{\textit{80.68}}{0.56} & \homValue{80.60}{0.76} & \cellcolor{homSecond}\homValue{\underline{80.98}}{0.58} & \homValue{78.32}{0.83} & \homValue{74.68}{0.82} & \homValue{74.68}{0.82} & \cellcolor{homBest}\homValue{\textbf{81.14}}{0.72} & \homValue{75.37}{2.53} & \homValue{75.67}{1.95} & \homValue{77.33}{0.75} & \homValue{73.57}{1.92} & \homValue{76.80}{0.86} & \homValue{75.90}{1.47} & \homValue{80.28}{0.75} \\
\rowcolor{white}[3pt][3pt]  \multicolumn{1}{l}{\cellcolor{white}} & 0.7 & \multicolumn{1}{c}{\cellcolor{white}} & \multicolumn{1}{c}{\cellcolor{white}} & \multicolumn{1}{c}{\cellcolor{white}} & \homValue{84.28}{0.55} & \homValue{81.10}{0.72} & \cellcolor{homThird}\homValue{\textit{82.74}}{0.68} & \homValue{82.46}{0.57} & \homValue{80.60}{0.40} & \homValue{81.82}{0.63} & \homValue{78.02}{0.67} & \homValue{77.92}{0.74} & \homValue{80.42}{0.35} & \cellcolor{homSecond}\homValue{\underline{82.78}}{0.72} & \homValue{80.56}{0.27} & \homValue{81.20}{0.86} & \homValue{79.72}{2.26} & \homValue{80.20}{0.95} & \homValue{81.50}{0.81} & \cellcolor{homBest}\homValue{\textbf{82.88}}{0.50} \\
\midrule[0.25pt]
\rowcolor{homBand}[3pt][3pt]  \multicolumn{1}{l}{\cellcolor{white}} & 0.3 & \multicolumn{1}{c}{\cellcolor{white}} & \multicolumn{1}{c}{\cellcolor{white}} & \multicolumn{1}{c}{\cellcolor{white}} & \homValue{70.88}{0.78} & \homValue{65.86}{0.95} & \homValue{64.76}{0.84} & \homValue{65.10}{0.24} & \homValue{63.82}{1.05} & \homValue{64.54}{0.37} & \homValue{64.32}{0.57} & \homValue{64.28}{0.57} & \cellcolor{homThird}\homValue{\textit{66.06}}{0.42} & \cellcolor{homBest}\homValue{\textbf{66.82}}{0.44} & \homValue{65.70}{0.63} & \homValue{64.70}{1.22} & \homValue{63.76}{1.19} & \homValue{65.26}{1.00} & \homValue{65.28}{1.50} & \cellcolor{homSecond}\homValue{\underline{66.08}}{1.23} \\
\rowcolor{white}[3pt][3pt]  \multicolumn{1}{l}{\cellcolor{white}\textbf{CiteSeer}} & 0.5 & \multicolumn{1}{c}{\cellcolor{white}\homValue{57.00}{1.13}} & \multicolumn{1}{c}{\cellcolor{white}\homValue{70.50}{0.64}} & \multicolumn{1}{c}{\cellcolor{white}\homValue{72.50}{0.36}} & \homValue{70.88}{0.47} & \homValue{67.34}{0.60} & \cellcolor{homThird}\homValue{\textit{69.04}}{0.58} & \cellcolor{homBest}\homValue{\textbf{69.74}}{0.33} & \homValue{68.60}{1.10} & \homValue{68.74}{0.59} & \homValue{65.80}{0.72} & \homValue{65.80}{0.72} & \cellcolor{homSecond}\homValue{\underline{69.42}}{0.55} & \homValue{66.30}{0.22} & \homValue{66.90}{0.71} & \homValue{68.63}{0.52} & \homValue{66.23}{0.59} & \homValue{66.03}{0.54} & \homValue{67.73}{0.97} & \homValue{68.20}{0.61} \\
\rowcolor{homBand}[3pt][3pt]  \multicolumn{1}{l}{\cellcolor{white}} & 0.7 & \multicolumn{1}{c}{\cellcolor{white}} & \multicolumn{1}{c}{\cellcolor{white}} & \multicolumn{1}{c}{\cellcolor{white}} & \homValue{71.14}{0.29} & \homValue{70.16}{0.53} & \homValue{70.20}{0.55} & \homValue{70.40}{0.45} & \cellcolor{homBest}\homValue{\textbf{71.50}}{0.48} & \homValue{70.08}{0.24} & \homValue{69.64}{0.79} & \homValue{69.66}{0.77} & \homValue{70.56}{1.18} & \homValue{70.34}{0.82} & \cellcolor{homSecond}\homValue{\underline{70.72}}{0.61} & \homValue{70.44}{0.37} & \homValue{69.18}{0.95} & \homValue{69.46}{0.43} & \homValue{70.14}{0.99} & \cellcolor{homThird}\homValue{\textit{70.64}}{0.84} \\
\midrule[0.25pt]
\rowcolor{white}[3pt][3pt]  \multicolumn{1}{l}{\cellcolor{white}} & 0.3 & \multicolumn{1}{c}{\cellcolor{white}} & \multicolumn{1}{c}{\cellcolor{white}} & \multicolumn{1}{c}{\cellcolor{white}} & \homValue{80.20}{0.46} & \homValue{74.26}{0.62} & \cellcolor{homSecond}\homValue{\underline{77.32}}{1.08} & \homValue{76.38}{0.46} & \homValue{73.46}{0.85} & \homValue{74.32}{0.50} & \homValue{73.48}{0.90} & \homValue{73.48}{0.90} & \homValue{76.98}{0.74} & \homValue{75.04}{1.43} & \homValue{74.74}{0.59} & \cellcolor{homBest}\homValue{\textbf{77.86}}{0.46} & \homValue{75.26}{0.67} & \homValue{75.84}{0.56} & \homValue{71.56}{0.54} & \cellcolor{homThird}\homValue{\textit{77.28}}{0.57} \\
\rowcolor{homBand}[3pt][3pt]  \multicolumn{1}{l}{\cellcolor{white}\textbf{PubMed}} & 0.5 & \multicolumn{1}{c}{\cellcolor{white}\homValue{69.82}{1.86}} & \multicolumn{1}{c}{\cellcolor{white}\homValue{80.40}{0.99}} & \multicolumn{1}{c}{\cellcolor{white}\homValue{80.76}{0.55}} & \homValue{80.24}{0.61} & \homValue{76.52}{0.71} & \cellcolor{homSecond}\homValue{\underline{79.64}}{0.77} & \cellcolor{homThird}\homValue{\textit{79.56}}{0.40} & \homValue{78.86}{0.51} & \homValue{77.10}{0.66} & \homValue{74.76}{0.72} & \homValue{74.76}{0.72} & \homValue{79.12}{0.66} & \homValue{77.53}{0.75} & \homValue{77.17}{0.56} & \homValue{76.73}{0.45} & \homValue{73.53}{0.45} & \homValue{76.83}{0.63} & \homValue{76.93}{0.53} & \cellcolor{homBest}\homValue{\textbf{80.32}}{0.72} \\
\rowcolor{white}[3pt][3pt]  \multicolumn{1}{l}{\cellcolor{white}} & 0.7 & \multicolumn{1}{c}{\cellcolor{white}} & \multicolumn{1}{c}{\cellcolor{white}} & \multicolumn{1}{c}{\cellcolor{white}} & \homValue{81.02}{0.38} & \homValue{76.82}{0.55} & \cellcolor{homThird}\homValue{\textit{80.56}}{0.83} & \homValue{80.18}{0.48} & \homValue{78.52}{0.99} & \homValue{79.02}{0.63} & \homValue{77.90}{1.08} & \homValue{77.90}{1.08} & \homValue{80.42}{0.26} & \homValue{79.04}{0.41} & \homValue{78.20}{0.58} & \cellcolor{homBest}\homValue{\textbf{81.00}}{0.59} & \homValue{77.00}{1.77} & \homValue{77.94}{1.72} & \homValue{76.72}{1.62} & \cellcolor{homSecond}\homValue{\underline{80.62}}{0.54} \\
\midrule[0.25pt]
\rowcolor{homBand}[3pt][3pt]  \multicolumn{1}{l}{\cellcolor{white}} & 0.3 & \multicolumn{1}{c}{\cellcolor{white}} & \multicolumn{1}{c}{\cellcolor{white}} & \multicolumn{1}{c}{\cellcolor{white}} & \homValue{93.59}{0.05} & \homValue{91.05}{0.15} & \homValue{89.28}{0.33} & \homValue{88.04}{0.28} & \homValue{92.57}{0.12} & \homValue{91.60}{0.18} & \homValue{90.95}{0.22} & \homValue{91.02}{0.21} & \cellcolor{homThird}\homValue{\textit{93.04}}{0.07} & \homValue{92.13}{0.11} & \homValue{91.51}{0.11} & \cellcolor{homSecond}\homValue{\underline{93.17}}{0.08} & \homValue{86.88}{0.74} & \homValue{91.53}{0.42} & \homValue{92.55}{0.05} & \cellcolor{homBest}\homValue{\textbf{93.35}}{0.21} \\
\rowcolor{white}[3pt][3pt]  \multicolumn{1}{l}{\cellcolor{white}\textbf{Amazon-Computer}} & 0.5 & \multicolumn{1}{c}{\cellcolor{white}\homValue{86.99}{0.39}} & \multicolumn{1}{c}{\cellcolor{white}\homValue{89.76}{0.30}} & \multicolumn{1}{c}{\cellcolor{white}\homValue{93.70}{0.31}} & \homValue{93.75}{0.17} & \homValue{92.17}{0.30} & \homValue{90.59}{0.40} & \homValue{89.06}{0.08} & \cellcolor{homThird}\homValue{\textit{93.55}}{0.17} & \homValue{92.37}{0.05} & \homValue{91.98}{0.14} & \homValue{92.03}{0.20} & \homValue{93.34}{0.15} & \homValue{93.32}{0.06} & \homValue{92.39}{0.11} & \cellcolor{homBest}\homValue{\textbf{93.75}}{0.13} & \homValue{88.76}{0.55} & \homValue{91.45}{0.06} & \homValue{92.23}{0.24} & \cellcolor{homSecond}\homValue{\underline{93.60}}{0.14} \\
\rowcolor{homBand}[3pt][3pt]  \multicolumn{1}{l}{\cellcolor{white}} & 0.7 & \multicolumn{1}{c}{\cellcolor{white}} & \multicolumn{1}{c}{\cellcolor{white}} & \multicolumn{1}{c}{\cellcolor{white}} & \homValue{93.87}{0.24} & \homValue{93.30}{0.23} & \homValue{91.48}{0.13} & \homValue{90.63}{0.25} & \homValue{93.29}{0.36} & \homValue{93.19}{0.11} & \homValue{92.58}{0.22} & \homValue{92.34}{0.05} & \cellcolor{homThird}\homValue{\textit{93.58}}{0.22} & \homValue{93.41}{0.17} & \homValue{93.51}{0.18} & \cellcolor{homBest}\homValue{\textbf{94.07}}{0.08} & \homValue{91.08}{0.07} & \homValue{92.32}{0.21} & \homValue{93.55}{0.14} & \cellcolor{homSecond}\homValue{\underline{93.88}}{0.11} \\
\midrule[0.25pt]
\rowcolor{white}[3pt][3pt]  \multicolumn{1}{l}{\cellcolor{white}} & 0.3 & \multicolumn{1}{c}{\cellcolor{white}} & \multicolumn{1}{c}{\cellcolor{white}} & \multicolumn{1}{c}{\cellcolor{white}} & \homValue{96.14}{0.19} & \homValue{94.38}{0.16} & \homValue{94.49}{0.03} & \homValue{93.79}{0.05} & \homValue{95.40}{0.22} & \homValue{94.01}{0.22} & \homValue{94.07}{0.20} & \homValue{93.42}{0.37} & \homValue{94.88}{0.13} & \homValue{95.24}{0.29} & \homValue{94.95}{0.24} & \cellcolor{homBest}\homValue{\textbf{95.62}}{0.33} & \homValue{91.94}{0.36} & \homValue{94.16}{0.19} & \cellcolor{homThird}\homValue{\textit{95.45}}{0.29} & \cellcolor{homSecond}\homValue{\underline{95.60}}{0.15} \\
\rowcolor{homBand}[3pt][3pt]  \multicolumn{1}{l}{\cellcolor{white}\textbf{Amazon-Photo}} & 0.5 & \multicolumn{1}{c}{\cellcolor{white}\homValue{92.16}{0.24}} & \multicolumn{1}{c}{\cellcolor{white}\homValue{92.46}{0.12}} & \multicolumn{1}{c}{\cellcolor{white}\homValue{95.95}{0.14}} & \homValue{96.06}{0.31} & \homValue{95.45}{0.16} & \homValue{94.42}{0.45} & \homValue{94.29}{0.11} & \cellcolor{homSecond}\homValue{\underline{95.62}}{0.16} & \homValue{94.60}{0.13} & \homValue{94.23}{0.27} & \homValue{94.55}{0.26} & \homValue{95.25}{0.27} & \homValue{95.19}{0.29} & \homValue{95.14}{0.29} & \cellcolor{homThird}\homValue{\textit{95.51}}{0.11} & \homValue{93.27}{0.70} & \homValue{94.99}{0.30} & \homValue{95.45}{0.36} & \cellcolor{homBest}\homValue{\textbf{95.82}}{0.19} \\
\rowcolor{white}[3pt][3pt]  \multicolumn{1}{l}{\cellcolor{white}} & 0.7 & \multicolumn{1}{c}{\cellcolor{white}} & \multicolumn{1}{c}{\cellcolor{white}} & \multicolumn{1}{c}{\cellcolor{white}} & \homValue{95.71}{0.11} & \cellcolor{homThird}\homValue{\textit{95.84}}{0.25} & \homValue{94.75}{0.29} & \homValue{94.92}{0.20} & \homValue{95.64}{0.06} & \homValue{95.16}{0.11} & \homValue{95.10}{0.21} & \homValue{94.86}{0.43} & \homValue{95.51}{0.11} & \cellcolor{homBest}\homValue{\textbf{96.33}}{0.32} & \cellcolor{homThird}\homValue{\textit{95.84}}{0.32} & \homValue{95.73}{0.13} & \homValue{93.97}{0.32} & \homValue{94.95}{0.29} & \homValue{95.64}{0.08} & \cellcolor{homSecond}\homValue{\underline{95.88}}{0.05} \\
\midrule[0.25pt]
\rowcolor{homBand}[3pt][3pt]  \multicolumn{1}{l}{\cellcolor{white}} & 0.3 & \multicolumn{1}{c}{\cellcolor{white}} & \multicolumn{1}{c}{\cellcolor{white}} & \multicolumn{1}{c}{\cellcolor{white}} & \homValue{95.98}{0.15} & \homValue{94.71}{0.04} & \homValue{94.93}{0.22} & \homValue{94.68}{0.06} & \homValue{94.67}{0.33} & \cellcolor{homThird}\homValue{\textit{95.09}}{0.02} & \homValue{95.03}{0.22} & \homValue{94.78}{0.08} & \homValue{95.02}{0.29} & \homValue{94.68}{0.13} & \homValue{94.64}{0.27} & \homValue{94.52}{0.12} & \homValue{94.99}{0.37} & \cellcolor{homSecond}\homValue{\underline{95.13}}{0.08} & \homValue{94.94}{0.20} & \cellcolor{homBest}\homValue{\textbf{95.72}}{0.06} \\
\rowcolor{white}[3pt][3pt]  \multicolumn{1}{l}{\cellcolor{white}\textbf{Coauthor-CS}} & 0.5 & \multicolumn{1}{c}{\cellcolor{white}\homValue{95.11}{0.05}} & \multicolumn{1}{c}{\cellcolor{white}\homValue{94.68}{0.14}} & \multicolumn{1}{c}{\cellcolor{white}\homValue{95.61}{0.36}} & \homValue{95.95}{0.13} & \homValue{95.42}{0.16} & \homValue{94.97}{0.11} & \homValue{94.97}{0.07} & \homValue{95.29}{0.22} & \homValue{95.04}{0.20} & \homValue{95.28}{0.12} & \homValue{95.12}{0.14} & \homValue{95.18}{0.14} & \cellcolor{homSecond}\homValue{\underline{95.45}}{0.16} & \homValue{95.18}{0.15} & \homValue{95.06}{0.08} & \homValue{95.04}{0.60} & \cellcolor{homThird}\homValue{\textit{95.44}}{0.20} & \homValue{95.41}{0.17} & \cellcolor{homBest}\homValue{\textbf{95.86}}{0.15} \\
\rowcolor{homBand}[3pt][3pt]  \multicolumn{1}{l}{\cellcolor{white}} & 0.7 & \multicolumn{1}{c}{\cellcolor{white}} & \multicolumn{1}{c}{\cellcolor{white}} & \multicolumn{1}{c}{\cellcolor{white}} & \homValue{95.85}{0.05} & \homValue{95.57}{0.27} & \homValue{95.48}{0.01} & \homValue{95.63}{0.08} & \homValue{95.55}{0.11} & \homValue{95.60}{0.19} & \cellcolor{homThird}\homValue{\textit{95.70}}{0.10} & \homValue{95.47}{0.04} & \homValue{95.59}{0.12} & \homValue{95.46}{0.13} & \homValue{95.31}{0.09} & \cellcolor{homSecond}\homValue{\underline{95.76}}{0.14} & \homValue{95.26}{0.14} & \homValue{95.53}{0.10} & \homValue{95.56}{0.10} & \cellcolor{homBest}\homValue{\textbf{95.95}}{0.11} \\
\midrule[0.25pt]
\rowcolor{white}[3pt][3pt]  \multicolumn{1}{l}{\cellcolor{white}} & 0.3 & \multicolumn{1}{c}{\cellcolor{white}} & \multicolumn{1}{c}{\cellcolor{white}} & \multicolumn{1}{c}{\cellcolor{white}} & \homValue{97.45}{0.03} & \homValue{96.66}{0.05} & \homValue{96.30}{0.18} & \homValue{96.40}{0.17} & \cellcolor{homThird}\homValue{\textit{96.91}}{0.03} & \homValue{96.84}{0.03} & \homValue{96.74}{0.05} & \homValue{96.73}{0.06} & \homValue{96.83}{0.06} & \homValue{96.74}{0.08} & \homValue{96.75}{0.10} & \homValue{96.84}{0.05} & \homValue{96.49}{0.00} & \homValue{96.65}{0.00} & \cellcolor{homSecond}\homValue{\underline{96.93}}{0.10} & \cellcolor{homBest}\homValue{\textbf{97.06}}{0.09} \\
\rowcolor{homBand}[3pt][3pt]  \multicolumn{1}{l}{\cellcolor{white}\textbf{Coauthor-Physics}} & 0.5 & \multicolumn{1}{c}{\cellcolor{white}\homValue{96.36}{0.01}} & \multicolumn{1}{c}{\cellcolor{white}\homValue{96.94}{0.02}} & \multicolumn{1}{c}{\cellcolor{white}\homValue{97.51}{0.07}} & \homValue{97.43}{0.10} & \homValue{96.86}{0.05} & \homValue{96.85}{0.07} & \homValue{96.77}{0.14} & \cellcolor{homSecond}\homValue{\underline{97.22}}{0.04} & \homValue{97.02}{0.06} & \homValue{96.92}{0.07} & \homValue{96.95}{0.05} & \homValue{97.01}{0.04} & \homValue{97.15}{0.07} & \homValue{97.00}{0.09} & \cellcolor{homThird}\homValue{\textit{97.21}}{0.12} & \homValue{96.74}{0.00} & \homValue{96.81}{0.00} & \homValue{97.01}{0.12} & \cellcolor{homBest}\homValue{\textbf{97.34}}{0.05} \\
\rowcolor{white}[3pt][3pt]  \multicolumn{1}{l}{\cellcolor{white}} & 0.7 & \multicolumn{1}{c}{\cellcolor{white}} & \multicolumn{1}{c}{\cellcolor{white}} & \multicolumn{1}{c}{\cellcolor{white}} & \homValue{97.47}{0.03} & \homValue{97.21}{0.05} & \homValue{97.04}{0.07} & \homValue{96.77}{0.04} & \cellcolor{homThird}\homValue{\textit{97.29}}{0.08} & \homValue{97.21}{0.08} & \homValue{97.10}{0.04} & \homValue{97.01}{0.09} & \homValue{97.24}{0.04} & \homValue{97.11}{0.05} & \homValue{97.27}{0.04} & \cellcolor{homSecond}\homValue{\underline{97.37}}{0.11} & \homValue{96.67}{0.00} & \homValue{97.01}{0.00} & \cellcolor{homThird}\homValue{\textit{97.29}}{0.09} & \cellcolor{homBest}\homValue{\textbf{97.47}}{0.07} \\
\midrule[0.25pt]
\rowcolor{homBand}[3pt][3pt]  \multicolumn{1}{l}{\cellcolor{white}} & 0.3 & \multicolumn{1}{c}{\cellcolor{white}} & \multicolumn{1}{c}{\cellcolor{white}} & \multicolumn{1}{c}{\cellcolor{white}} & \homValue{80.39}{0.29} & \homValue{77.04}{0.63} & \homValue{75.69}{0.48} & \homValue{76.13}{0.53} & \cellcolor{homThird}\homValue{\textit{79.20}}{0.58} & \homValue{75.08}{0.44} & \homValue{75.10}{0.25} & \homValue{74.91}{0.39} & \homValue{78.71}{0.96} & \homValue{78.80}{0.47} & \homValue{78.07}{0.23} & \cellcolor{homSecond}\homValue{\underline{79.47}}{0.68} & \homValue{75.49}{0.17} & \homValue{76.48}{0.13} & \homValue{77.48}{0.66} & \cellcolor{homBest}\homValue{\textbf{80.25}}{0.79} \\
\rowcolor{white}[3pt][3pt]  \multicolumn{1}{l}{\cellcolor{white}\textbf{WikiCS}} & 0.5 & \multicolumn{1}{c}{\cellcolor{white}\homValue{73.10}{0.19}} & \multicolumn{1}{c}{\cellcolor{white}\homValue{75.05}{0.23}} & \multicolumn{1}{c}{\cellcolor{white}\homValue{80.57}{0.06}} & \homValue{80.33}{0.32} & \homValue{78.99}{0.09} & \homValue{76.59}{0.15} & \homValue{76.44}{0.30} & \cellcolor{homBest}\homValue{\textbf{80.67}}{0.16} & \homValue{75.02}{0.41} & \homValue{75.40}{0.49} & \homValue{75.34}{0.62} & \homValue{79.32}{0.21} & \homValue{79.49}{0.62} & \homValue{78.38}{0.50} & \cellcolor{homThird}\homValue{\textit{80.13}}{0.51} & \homValue{76.73}{0.57} & \homValue{77.85}{0.32} & \homValue{78.33}{0.66} & \cellcolor{homSecond}\homValue{\underline{80.19}}{0.33} \\
\rowcolor{homBand}[3pt][3pt]  \multicolumn{1}{l}{\cellcolor{white}} & 0.7 & \multicolumn{1}{c}{\cellcolor{white}} & \multicolumn{1}{c}{\cellcolor{white}} & \multicolumn{1}{c}{\cellcolor{white}} & \homValue{80.17}{0.40} & \homValue{78.57}{0.54} & \homValue{78.38}{0.12} & \homValue{77.26}{0.32} & \homValue{79.54}{0.66} & \homValue{77.50}{0.39} & \homValue{77.15}{0.35} & \homValue{77.21}{0.24} & \homValue{79.45}{0.24} & \homValue{79.51}{0.34} & \cellcolor{homThird}\homValue{\textit{79.75}}{0.55} & \cellcolor{homBest}\homValue{\textbf{80.31}}{0.49} & \homValue{77.74}{0.60} & \homValue{78.40}{0.32} & \homValue{79.53}{0.62} & \cellcolor{homSecond}\homValue{\underline{80.17}}{0.43} \\
\bottomrule
\end{tabular*}
\par\vspace{4pt}
{\fontsize{7pt}{8pt}\selectfont\raggedright  Each cell reports the mean above $\pm$ standard deviation. \textbf{Bold} (teal), \underline{underline} (gold), and \textit{italic} (lavender) identify the highest, second-highest, and third-highest distinct means within each row, respectively; ties share a rank. Reference anchors are excluded from ranking. EDiS-Lite and EDiS-Full use sampled-graph and full-graph evaluation, respectively.\par}
\end{minipage}
\end{adjustbox}
\end{sidewaystable}
\endgroup

\begingroup
\definecolor{homBest}{RGB}{183,220,228}
\definecolor{homBand}{RGB}{235,235,235}
\definecolor{homHeader}{RGB}{222,226,230}
\definecolor{homInk}{RGB}{20,20,20}
\definecolor{homSecond}{RGB}{249,231,191}
\definecolor{homThird}{RGB}{231,221,242}
\newcommand{\homValue}[2]{\shortstack{#1\\[-1pt]{\fontsize{5.5pt}{6pt}\selectfont\color{black!80}\textpm#2}}}
\newlength{\hetPageHeight}
\setlength{\hetPageHeight}{0.98\textwidth}
\begin{sidewaystable}[p]
\centering
\begin{adjustbox}{max totalheight=\hetPageHeight}
\begin{minipage}{0.98\textheight}
\centering
\caption{\textbf{Predictive performance on heterophilic node-classification benchmarks.} The metric is accuracy for Squirrel, Chameleon, Roman-Empire, and Amazon-Ratings, and ROC-AUC for Minesweeper and Questions. Results are reported as mean and standard deviation (\%) at retained-edge fractions $\rho\in\{0.3,0.5,0.7\}$; higher values indicate better performance.}
\label{tab:main-het-acc}
\label{app:full_het_acc}
\label{tab:main-het-rocauc}
\label{app:full_het_rocauc}
\vspace{3pt}
\fontfamily{ptm}\fontsize{7.5pt}{8pt}\selectfont
\color{homInk}
\setlength{\tabcolsep}{1.5pt}
\renewcommand{\arraystretch}{1.0}
\begin{tabular*}{0.98\textheight}{@{\extracolsep{\fill}}lc*{19}{c}@{}}
\toprule
\rowcolor{homHeader}[3pt][3pt]
\textbf{Dataset} & $\boldsymbol{\rho}$ & \multicolumn{4}{c}{\textbf{Reference anchors}} & \multicolumn{11}{c}{\textbf{Topology-based}} & \multicolumn{3}{c}{\textbf{Semantic-based}} & \multicolumn{1}{c}{\textbf{EDiS (Ours)}} \\
\cmidrule(lr){1-2} \cmidrule(lr){3-6} \cmidrule(lr){7-17} \cmidrule(lr){18-20} \cmidrule(lr){21-21}
 & & \shortstack{MLP\\(no graph)} & \shortstack{Spanning\\Tree} & \shortstack{Full\\GCN} & \shortstack{EDiS-\\Full} & \shortstack{Random} & \shortstack{Local\\Degree} & \shortstack{Rank\\Degree} & \shortstack{Forest\\Fire} & \shortstack{SCAN} & \shortstack{L-Spar} & \shortstack{G-Spar} & \shortstack{L-Sim} & \shortstack{Spectral} & \shortstack{$t$-Spanner} & \shortstack{DSpar} & \shortstack{Unified-\\LTH} & \shortstack{AdaGLT} & \shortstack{MoG} & \shortstack{EDiS-\\Lite} \\
\midrule
\rowcolor{white}[3pt][3pt] \multicolumn{1}{l}{\cellcolor{white}} & 0.3 & \multicolumn{1}{c}{\cellcolor{white}} & \multicolumn{1}{c}{\cellcolor{white}} & \multicolumn{1}{c}{\cellcolor{white}} & \homValue{43.26}{2.43} & \homValue{40.90}{1.70} & \homValue{41.80}{1.66} & \homValue{40.17}{1.89} & \homValue{43.15}{1.90} & \homValue{39.50}{1.10} & \homValue{39.17}{1.31} & \homValue{39.77}{1.57} & \homValue{39.23}{2.38} & \cellcolor{homBest}\homValue{\textbf{43.94}}{1.18} & \homValue{42.65}{2.01} & \cellcolor{homSecond}\homValue{\underline{43.48}}{1.56} & \homValue{38.17}{2.09} & \homValue{42.30}{3.91} & \cellcolor{homThird}\homValue{\textit{43.25}}{1.12} & \homValue{42.63}{2.04} \\
\rowcolor{homBand}[3pt][3pt] \multicolumn{1}{l}{\cellcolor{white}\textbf{Squirrel}} & 0.5 & \multicolumn{1}{c}{\cellcolor{white}\homValue{39.77}{1.59}} & \multicolumn{1}{c}{\cellcolor{white}\homValue{36.93}{1.58}} & \multicolumn{1}{c}{\cellcolor{white}\homValue{44.79}{1.82}} & \multicolumn{1}{c}{\cellcolor{white}\homValue{43.89}{2.09}} & \homValue{42.54}{2.39} & \homValue{42.49}{1.69} & \homValue{41.17}{2.36} & \homValue{44.61}{1.98} & \homValue{39.08}{1.64} & \homValue{38.92}{1.59} & \homValue{38.96}{1.62} & \homValue{40.07}{1.84} & \homValue{44.08}{2.02} & \cellcolor{homThird}\homValue{\textit{44.75}}{1.84} & \cellcolor{homBest}\homValue{\textbf{46.83}}{1.62} & \homValue{39.26}{0.38} & \cellcolor{homSecond}\homValue{\underline{45.42}}{1.27} & \homValue{41.69}{2.58} & \homValue{44.50}{1.69} \\
\rowcolor{white}[3pt][3pt] \multicolumn{1}{l}{\cellcolor{white}} & 0.7 & \multicolumn{1}{c}{\cellcolor{white}} & \multicolumn{1}{c}{\cellcolor{white}} & \multicolumn{1}{c}{\cellcolor{white}} & \multicolumn{1}{c}{\cellcolor{white}\homValue{44.03}{2.18}} & \homValue{43.06}{1.63} & \homValue{43.45}{1.79} & \homValue{42.02}{2.03} & \homValue{43.15}{2.40} & \homValue{38.80}{1.70} & \homValue{38.22}{1.83} & \homValue{38.60}{1.71} & \homValue{43.31}{1.07} & \cellcolor{homBest}\homValue{\textbf{45.58}}{1.32} & \homValue{43.11}{2.30} & \cellcolor{homSecond}\homValue{\underline{44.55}}{1.55} & \homValue{36.66}{2.89} & \homValue{43.25}{1.78} & \cellcolor{homThird}\homValue{\textit{44.19}}{2.22} & \homValue{44.15}{1.65} \\
\midrule[0.25pt]
\rowcolor{homBand}[3pt][3pt] \multicolumn{1}{l}{\cellcolor{white}} & 0.3 & \multicolumn{1}{c}{\cellcolor{white}} & \multicolumn{1}{c}{\cellcolor{white}} & \multicolumn{1}{c}{\cellcolor{white}} & \homValue{43.13}{2.53} & \homValue{37.85}{3.35} & \homValue{41.37}{2.59} & \homValue{40.35}{3.82} & \homValue{41.28}{1.96} & \homValue{36.05}{3.41} & \homValue{36.11}{3.06} & \homValue{38.62}{4.13} & \homValue{39.71}{3.97} & \cellcolor{homThird}\homValue{\textit{42.98}}{3.60} & \homValue{38.71}{3.72} & \homValue{39.73}{3.87} & \homValue{37.61}{3.67} & \homValue{38.03}{4.12} & \cellcolor{homSecond}\homValue{\underline{43.40}}{3.13} & \cellcolor{homBest}\homValue{\textbf{43.75}}{4.14} \\
\rowcolor{white}[3pt][3pt] \multicolumn{1}{l}{\cellcolor{white}\textbf{Chameleon}} & 0.5 & \multicolumn{1}{c}{\cellcolor{white}\homValue{36.59}{3.32}} & \multicolumn{1}{c}{\cellcolor{white}\homValue{34.72}{2.81}} & \multicolumn{1}{c}{\cellcolor{white}\homValue{44.67}{3.84}} & \multicolumn{1}{c}{\cellcolor{white}\homValue{44.14}{4.15}} & \homValue{39.70}{3.51} & \homValue{43.58}{4.85} & \homValue{42.76}{2.99} & \cellcolor{homThird}\homValue{\textit{44.82}}{3.43} & \homValue{37.69}{3.95} & \homValue{38.99}{4.06} & \homValue{38.88}{4.64} & \homValue{41.40}{3.50} & \homValue{42.12}{3.40} & \homValue{40.71}{2.17} & \cellcolor{homBest}\homValue{\textbf{46.14}}{0.86} & \homValue{34.25}{3.09} & \homValue{36.67}{1.40} & \homValue{42.78}{3.18} & \cellcolor{homSecond}\homValue{\underline{45.64}}{2.72} \\
\rowcolor{homBand}[3pt][3pt] \multicolumn{1}{l}{\cellcolor{white}} & 0.7 & \multicolumn{1}{c}{\cellcolor{white}} & \multicolumn{1}{c}{\cellcolor{white}} & \multicolumn{1}{c}{\cellcolor{white}} & \multicolumn{1}{c}{\cellcolor{white}\homValue{43.50}{4.27}} & \homValue{44.08}{3.81} & \homValue{44.37}{4.33} & \homValue{43.10}{2.08} & \homValue{42.93}{4.54} & \homValue{35.87}{2.91} & \homValue{36.92}{3.22} & \homValue{36.09}{3.61} & \homValue{41.21}{2.69} & \cellcolor{homBest}\homValue{\textbf{45.95}}{1.83} & \cellcolor{homSecond}\homValue{\underline{45.47}}{3.95} & \cellcolor{homBest}\homValue{\textbf{45.95}}{4.48} & \homValue{36.55}{4.18} & \homValue{36.72}{3.10} & \homValue{44.64}{2.60} & \cellcolor{homThird}\homValue{\textit{44.88}}{4.25} \\
\midrule[0.25pt]
\rowcolor{white}[3pt][3pt] \multicolumn{1}{l}{\cellcolor{white}} & 0.3 & \multicolumn{1}{c}{\cellcolor{white}} & \multicolumn{1}{c}{\cellcolor{white}} & \multicolumn{1}{c}{\cellcolor{white}} & \homValue{75.00}{0.51} & \homValue{67.07}{0.24} & \homValue{70.79}{0.31} & \homValue{70.53}{0.33} & \homValue{67.43}{0.62} & \cellcolor{homSecond}\homValue{\underline{73.37}}{0.23} & \cellcolor{homThird}\homValue{\textit{72.24}}{0.66} & \homValue{72.20}{0.42} & \homValue{71.33}{0.21} & \homValue{66.72}{0.42} & \homValue{67.09}{0.45} & \homValue{67.07}{0.09} & \homValue{65.43}{0.58} & \cellcolor{homBest}\homValue{\textbf{81.43}}{0.00} & \homValue{70.44}{0.60} & \homValue{68.58}{0.12} \\
\rowcolor{homBand}[3pt][3pt] \multicolumn{1}{l}{\cellcolor{white}\textbf{Roman-Empire}} & 0.5 & \multicolumn{1}{c}{\cellcolor{white}\homValue{66.23}{0.07}} & \multicolumn{1}{c}{\cellcolor{white}\homValue{87.36}{0.23}} & \multicolumn{1}{c}{\cellcolor{white}\homValue{90.74}{0.03}} & \multicolumn{1}{c}{\cellcolor{white}\homValue{79.61}{0.28}} & \homValue{71.38}{0.29} & \cellcolor{homThird}\homValue{\textit{76.52}}{0.15} & \homValue{75.72}{0.28} & \homValue{72.66}{0.21} & \cellcolor{homSecond}\homValue{\underline{76.53}}{0.41} & \homValue{76.17}{0.36} & \homValue{76.04}{0.14} & \cellcolor{homBest}\homValue{\textbf{78.39}}{0.17} & \homValue{70.83}{0.25} & \homValue{71.18}{0.38} & \homValue{70.91}{0.14} & \homValue{61.90}{0.64} & \homValue{68.13}{0.00} & \homValue{73.91}{0.63} & \homValue{73.71}{0.57} \\
\rowcolor{white}[3pt][3pt] \multicolumn{1}{l}{\cellcolor{white}} & 0.7 & \multicolumn{1}{c}{\cellcolor{white}} & \multicolumn{1}{c}{\cellcolor{white}} & \multicolumn{1}{c}{\cellcolor{white}} & \multicolumn{1}{c}{\cellcolor{white}\homValue{86.19}{0.53}} & \homValue{77.92}{0.41} & \cellcolor{homSecond}\homValue{\underline{83.35}}{0.22} & \homValue{82.73}{0.19} & \homValue{78.76}{0.17} & \homValue{80.49}{0.74} & \homValue{79.97}{0.27} & \homValue{80.46}{0.18} & \homValue{82.33}{0.25} & \homValue{76.26}{0.63} & \homValue{78.17}{0.19} & \homValue{77.26}{0.66} & \homValue{70.81}{0.37} & \cellcolor{homBest}\homValue{\textbf{90.08}}{0.00} & \homValue{79.65}{0.37} & \cellcolor{homThird}\homValue{\textit{83.03}}{0.32} \\
\midrule[0.25pt]
\rowcolor{homBand}[3pt][3pt] \multicolumn{1}{l}{\cellcolor{white}} & 0.3 & \multicolumn{1}{c}{\cellcolor{white}} & \multicolumn{1}{c}{\cellcolor{white}} & \multicolumn{1}{c}{\cellcolor{white}} & \homValue{53.16}{0.68} & \homValue{48.40}{0.43} & \cellcolor{homThird}\homValue{\textit{51.47}}{0.71} & \homValue{51.23}{0.52} & \homValue{48.37}{0.22} & \homValue{50.25}{0.61} & \homValue{49.84}{0.19} & \homValue{49.64}{0.61} & \cellcolor{homSecond}\homValue{\underline{51.66}}{0.09} & \homValue{46.86}{0.41} & \homValue{47.96}{1.01} & \homValue{47.71}{0.61} & \homValue{40.49}{0.52} & \homValue{48.05}{0.35} & \homValue{51.24}{0.33} & \cellcolor{homBest}\homValue{\textbf{52.31}}{0.85} \\
\rowcolor{white}[3pt][3pt] \multicolumn{1}{l}{\cellcolor{white}\textbf{Amazon-Ratings}} & 0.5 & \multicolumn{1}{c}{\cellcolor{white}\homValue{50.32}{0.24}} & \multicolumn{1}{c}{\cellcolor{white}\homValue{51.73}{0.37}} & \multicolumn{1}{c}{\cellcolor{white}\homValue{53.52}{0.42}} & \multicolumn{1}{c}{\cellcolor{white}\homValue{54.12}{0.77}} & \homValue{51.43}{0.14} & \homValue{52.06}{0.17} & \homValue{51.38}{0.48} & \homValue{51.16}{0.28} & \homValue{51.80}{0.41} & \homValue{51.95}{0.21} & \homValue{51.75}{0.31} & \cellcolor{homThird}\homValue{\textit{52.16}}{0.20} & \homValue{50.16}{0.25} & \homValue{50.59}{0.18} & \homValue{50.63}{0.33} & \homValue{43.36}{0.79} & \homValue{48.66}{0.22} & \cellcolor{homBest}\homValue{\textbf{53.86}}{0.66} & \cellcolor{homSecond}\homValue{\underline{53.62}}{0.58} \\
\rowcolor{homBand}[3pt][3pt] \multicolumn{1}{l}{\cellcolor{white}} & 0.7 & \multicolumn{1}{c}{\cellcolor{white}} & \multicolumn{1}{c}{\cellcolor{white}} & \multicolumn{1}{c}{\cellcolor{white}} & \multicolumn{1}{c}{\cellcolor{white}\homValue{53.27}{0.84}} & \homValue{52.26}{0.09} & \homValue{52.14}{0.69} & \homValue{51.59}{0.35} & \homValue{52.08}{0.32} & \homValue{52.34}{0.12} & \homValue{52.60}{0.20} & \homValue{52.49}{0.11} & \homValue{52.45}{0.26} & \homValue{51.40}{0.40} & \cellcolor{homThird}\homValue{\textit{52.65}}{0.61} & \homValue{52.38}{0.20} & \homValue{47.15}{0.27} & \homValue{51.64}{0.33} & \cellcolor{homBest}\homValue{\textbf{53.28}}{0.34} & \cellcolor{homSecond}\homValue{\underline{53.14}}{0.27} \\
\midrule[0.25pt]
\rowcolor{white}[3pt][3pt] \multicolumn{1}{l}{\cellcolor{white}} & 0.3 & \multicolumn{1}{c}{\cellcolor{white}} & \multicolumn{1}{c}{\cellcolor{white}} & \multicolumn{1}{c}{\cellcolor{white}} & \homValue{82.62}{1.09} & \homValue{72.39}{1.04} & \homValue{72.53}{1.15} & \homValue{71.09}{0.21} & \homValue{70.76}{0.29} & \cellcolor{homSecond}\homValue{\underline{78.53}}{0.35} & \cellcolor{homThird}\homValue{\textit{78.45}}{0.86} & \cellcolor{homBest}\homValue{\textbf{78.97}}{0.51} & \homValue{77.17}{0.41} & \homValue{70.48}{1.94} & \homValue{71.13}{1.03} & \homValue{72.56}{2.00} & \homValue{69.70}{1.12} & \homValue{62.60}{1.27} & \homValue{74.02}{0.32} & \homValue{73.82}{0.97} \\
\rowcolor{homBand}[3pt][3pt] \multicolumn{1}{l}{\cellcolor{white}\textbf{Minesweeper}} & 0.5 & \multicolumn{1}{c}{\cellcolor{white}\homValue{51.34}{0.77}} & \multicolumn{1}{c}{\cellcolor{white}\homValue{70.73}{0.21}} & \multicolumn{1}{c}{\cellcolor{white}\homValue{97.49}{0.18}} & \multicolumn{1}{c}{\cellcolor{white}\homValue{87.54}{0.32}} & \homValue{80.23}{0.11} & \homValue{80.12}{0.46} & \homValue{78.67}{0.76} & \homValue{78.94}{0.63} & \homValue{90.61}{0.12} & \cellcolor{homThird}\homValue{\textit{90.87}}{0.12} & \cellcolor{homSecond}\homValue{\underline{90.98}}{0.20} & \cellcolor{homBest}\homValue{\textbf{91.02}}{0.31} & \homValue{78.34}{0.60} & \homValue{79.47}{1.41} & \homValue{80.79}{0.54} & \homValue{73.37}{2.39} & \homValue{69.93}{1.52} & \homValue{82.53}{0.43} & \homValue{82.21}{0.60} \\
\rowcolor{white}[3pt][3pt] \multicolumn{1}{l}{\cellcolor{white}} & 0.7 & \multicolumn{1}{c}{\cellcolor{white}} & \multicolumn{1}{c}{\cellcolor{white}} & \multicolumn{1}{c}{\cellcolor{white}} & \multicolumn{1}{c}{\cellcolor{white}\homValue{89.84}{0.28}} & \homValue{85.18}{0.50} & \homValue{86.30}{0.37} & \homValue{86.26}{0.81} & \homValue{85.26}{0.54} & \cellcolor{homThird}\homValue{\textit{90.20}}{0.33} & \cellcolor{homSecond}\homValue{\underline{90.43}}{0.22} & \cellcolor{homBest}\homValue{\textbf{90.46}}{0.10} & \homValue{89.28}{0.07} & \homValue{83.95}{0.53} & \homValue{85.22}{0.72} & \homValue{85.51}{0.53} & \homValue{82.40}{0.81} & \homValue{81.28}{1.45} & \homValue{87.53}{0.14} & \homValue{85.37}{0.59} \\
\midrule[0.25pt]
\rowcolor{homBand}[3pt][3pt] \multicolumn{1}{l}{\cellcolor{white}} & 0.3 & \multicolumn{1}{c}{\cellcolor{white}} & \multicolumn{1}{c}{\cellcolor{white}} & \multicolumn{1}{c}{\cellcolor{white}} & \homValue{77.70}{0.53} & \homValue{71.89}{0.90} & \cellcolor{homSecond}\homValue{\underline{74.39}}{0.83} & \homValue{71.49}{0.48} & \homValue{71.34}{0.46} & \homValue{72.72}{0.45} & \homValue{70.29}{1.82} & \homValue{70.35}{1.77} & \cellcolor{homBest}\homValue{\textbf{75.39}}{0.60} & \homValue{74.00}{0.88} & \homValue{72.42}{0.61} & \homValue{72.77}{0.47} & \homValue{72.42}{0.64} & \homValue{72.34}{0.33} & \homValue{68.90}{1.68} & \cellcolor{homThird}\homValue{\textit{74.12}}{0.88} \\
\rowcolor{white}[3pt][3pt] \multicolumn{1}{l}{\cellcolor{white}\textbf{Questions}} & 0.5 & \multicolumn{1}{c}{\cellcolor{white}\homValue{69.69}{0.35}} & \multicolumn{1}{c}{\cellcolor{white}\homValue{74.85}{0.24}} & \multicolumn{1}{c}{\cellcolor{white}\homValue{78.49}{0.18}} & \multicolumn{1}{c}{\cellcolor{white}\homValue{78.07}{0.13}} & \homValue{71.92}{0.48} & \homValue{77.16}{0.22} & \homValue{76.28}{0.31} & \homValue{73.45}{0.96} & \homValue{75.36}{0.29} & \homValue{70.87}{1.22} & \homValue{69.46}{0.39} & \cellcolor{homBest}\homValue{\textbf{78.12}}{0.20} & \cellcolor{homSecond}\homValue{\underline{77.35}}{0.69} & \homValue{73.25}{0.49} & \homValue{77.07}{0.59} & \homValue{74.89}{0.84} & \homValue{73.87}{0.73} & \homValue{73.13}{0.14} & \cellcolor{homThird}\homValue{\textit{77.17}}{0.71} \\
\rowcolor{homBand}[3pt][3pt] \multicolumn{1}{l}{\cellcolor{white}} & 0.7 & \multicolumn{1}{c}{\cellcolor{white}} & \multicolumn{1}{c}{\cellcolor{white}} & \multicolumn{1}{c}{\cellcolor{white}} & \multicolumn{1}{c}{\cellcolor{white}\homValue{78.53}{0.38}} & \homValue{75.65}{0.28} & \homValue{77.98}{0.38} & \homValue{77.15}{0.34} & \homValue{75.37}{0.65} & \homValue{77.17}{0.68} & \homValue{73.56}{0.69} & \homValue{73.57}{0.70} & \cellcolor{homSecond}\homValue{\underline{78.37}}{0.44} & \homValue{78.10}{0.54} & \homValue{76.61}{0.69} & \cellcolor{homBest}\homValue{\textbf{78.47}}{0.16} & \homValue{75.97}{0.29} & \homValue{73.36}{0.59} & \homValue{76.00}{0.24} & \cellcolor{homThird}\homValue{\textit{78.21}}{0.46} \\
\bottomrule
\end{tabular*}
\par\vspace{4pt}
{\fontsize{7pt}{8pt}\selectfont\raggedright  Each cell reports the mean above $\pm$ standard deviation. \textbf{Bold} (teal), \underline{underline} (gold), and \textit{italic} (lavender) identify the highest, second-highest, and third-highest distinct means within each row, respectively ties share a rank. Reference anchors are excluded from ranking. EDiS-Full is reported as a full-graph inference reference, while EDiS-Lite uses sampled-graph evaluation.\par}
\end{minipage}
\end{adjustbox}
\end{sidewaystable}
\endgroup

\begingroup
\definecolor{homBest}{RGB}{183,220,228}
\definecolor{homBand}{RGB}{235,235,235}
\definecolor{homHeader}{RGB}{222,226,230}
\definecolor{homInk}{RGB}{20,20,20}
\definecolor{homSecond}{RGB}{249,231,191}
\definecolor{homThird}{RGB}{231,221,242}
\newcommand{\homValue}[2]{\shortstack{#1\\[-1pt]{\fontsize{5.5pt}{6pt}\selectfont\color{black!80}\textpm#2}}}
\newcommand{\homSingle}[1]{#1\textsuperscript{\textdagger}}
\newlength{\largePageHeight}
\setlength{\largePageHeight}{0.98\textwidth}
\begin{sidewaystable}[p]
\centering
\begin{adjustbox}{max totalheight=\largePageHeight}
\begin{minipage}{0.98\textheight}
\centering
\caption{\textbf{Predictive performance on large-scale node-classification benchmarks.} The metric is accuracy for Reddit, ogbn-products, ogbn-arxiv, and Pokec, and ROC-AUC for ogbn-proteins. Results are reported as mean and standard deviation (\%) at retained-edge fractions $\rho\in\{0.1,0.3,0.5\}$; higher values indicate better performance.}
\label{tab:main-large-acc}
\label{app:full_large_acc}
\vspace{3pt}
\fontfamily{ptm}\fontsize{7.5pt}{8pt}\selectfont
\color{homInk}
\setlength{\tabcolsep}{1.5pt}
\renewcommand{\arraystretch}{1.0}
\begin{tabular*}{0.98\textheight}{@{\extracolsep{\fill}}lc*{19}{c}@{}}
\toprule
\rowcolor{homHeader}[3pt][3pt]
\textbf{Dataset} & $\boldsymbol{\rho}$ & \multicolumn{4}{c}{\textbf{Reference anchors}} & \multicolumn{11}{c}{\textbf{Topology-based}} & \multicolumn{3}{c}{\textbf{Semantic-based}} & \multicolumn{1}{c}{\textbf{EDiS (Ours)}} \\
\cmidrule(lr){1-2} \cmidrule(lr){3-6} \cmidrule(lr){7-17} \cmidrule(lr){18-20} \cmidrule(lr){21-21}
 & & \shortstack{MLP\\(no graph)} & \shortstack{Spanning\\Tree} & \shortstack{Full\\GCN} & \shortstack{EDiS-\\Full} & \shortstack{Random} & \shortstack{Local\\Degree} & \shortstack{Rank\\Degree} & \shortstack{Forest\\Fire} & \shortstack{SCAN} & \shortstack{L-Spar} & \shortstack{G-Spar} & \shortstack{L-Sim} & \shortstack{Spectral} & \shortstack{$t$-Spanner} & \shortstack{DSpar} & \shortstack{Unified-\\LTH} & \shortstack{AdaGLT} & \shortstack{MoG} & \shortstack{EDiS-\\Lite} \\
\midrule
\rowcolor{white}[3pt][3pt] \multicolumn{1}{l}{\cellcolor{white}} & 0.1 & \multicolumn{1}{c}{\cellcolor{white}} & \multicolumn{1}{c}{\cellcolor{white}} & \multicolumn{1}{c}{\cellcolor{white}} & \multicolumn{1}{c}{\cellcolor{white}\homValue{95.01}{0.00}} & \homValue{93.13}{0.08} & \homValue{81.85}{0.16} & \homValue{71.85}{0.45} & \homValue{94.97}{0.04} & \homValue{83.96}{0.02} & \homValue{83.61}{0.03} & \homValue{83.54}{0.04} & \homValue{93.39}{0.01} & \cellcolor{homThird}\homValue{\textit{95.06}}{0.01} & \homValue{93.47}{0.00} & \cellcolor{homSecond}\homValue{\underline{95.09}}{0.02} & \homValue{91.90}{0.00} & \homValue{71.45}{0.00} & \homValue{85.15}{0.07} & \cellcolor{homBest}\homValue{\textbf{95.73}}{0.02} \\
\rowcolor{homBand}[3pt][3pt] \multicolumn{1}{l}{\cellcolor{white}\textbf{Reddit}} & 0.3 & \multicolumn{1}{c}{\cellcolor{white}\homValue{69.66}{0.09}} & \multicolumn{1}{c}{\cellcolor{white}\homValue{86.07}{0.14}} & \multicolumn{1}{c}{\cellcolor{white}\homValue{95.44}{0.05}} & \homValue{95.16}{0.01} & \homValue{94.69}{0.02} & \homValue{90.82}{0.15} & \homValue{77.28}{0.04} & \cellcolor{homSecond}\homValue{\underline{95.41}}{0.00} & \homValue{90.28}{0.01} & \homValue{90.06}{0.01} & \homValue{90.05}{0.01} & \homValue{94.35}{0.05} & \cellcolor{homSecond}\homValue{\underline{95.41}}{0.03} & \homValue{94.72}{0.01} & \cellcolor{homThird}\homValue{\textit{95.22}}{0.01} & \homValue{92.68}{0.00} & \homValue{85.09}{0.00} & \homValue{90.13}{0.08} & \cellcolor{homBest}\homValue{\textbf{95.61}}{0.03} \\
\rowcolor{white}[3pt][3pt] \multicolumn{1}{l}{\cellcolor{white}} & 0.5 & \multicolumn{1}{c}{\cellcolor{white}} & \multicolumn{1}{c}{\cellcolor{white}} & \multicolumn{1}{c}{\cellcolor{white}} & \multicolumn{1}{c}{\cellcolor{white}\homValue{95.15}{0.01}} & \homValue{95.04}{0.01} & \homValue{93.32}{0.05} & \homValue{79.04}{0.10} & \cellcolor{homBest}\homValue{\textbf{95.50}}{0.04} & \homValue{92.13}{0.01} & \homValue{91.84}{0.00} & \homValue{91.81}{0.03} & \homValue{94.82}{0.02} & \cellcolor{homSecond}\homValue{\underline{95.44}}{0.05} & \homValue{95.27}{0.02} & \homValue{95.26}{0.01} & \homValue{92.96}{0.00} & \homValue{89.44}{0.00} & \homValue{91.61}{0.09} & \cellcolor{homThird}\homValue{\textit{95.42}}{0.03} \\
\midrule[0.25pt]
\rowcolor{homBand}[3pt][3pt] \multicolumn{1}{l}{\cellcolor{white}} & 0.1 & \multicolumn{1}{c}{\cellcolor{white}} & \multicolumn{1}{c}{\cellcolor{white}} & \multicolumn{1}{c}{\cellcolor{white}} & \multicolumn{1}{c}{\cellcolor{white}\homValue{80.76}{0.00}} & \homValue{71.93}{0.03} & \homValue{73.40}{0.08} & \homValue{61.09}{0.01} & \homValue{73.72}{0.13} & \homValue{65.85}{0.04} & \homValue{65.85}{0.08} & \homValue{65.76}{0.10} & \homValue{73.66}{0.05} & \cellcolor{homThird}\homValue{\textit{73.86}}{0.14} & \homValue{71.84}{0.17} & \cellcolor{homSecond}\homValue{\underline{77.18}}{0.26} & \homValue{71.72}{0.00} & \homValue{61.17}{0.00} & \homValue{63.45}{0.09} & \cellcolor{homBest}\homValue{\textbf{77.50}}{0.00} \\
\rowcolor{white}[3pt][3pt] \multicolumn{1}{l}{\cellcolor{white}\textbf{ogbn-products}} & 0.3 & \multicolumn{1}{c}{\cellcolor{white}\homValue{45.46}{0.19}} & \multicolumn{1}{c}{\cellcolor{white}\homValue{67.79}{0.09}} & \multicolumn{1}{c}{\cellcolor{white}\homValue{80.67}{0.10}} & \homValue{81.38}{0.13} & \homValue{79.08}{0.06} & \homValue{77.62}{0.16} & \homValue{63.13}{0.02} & \cellcolor{homSecond}\homValue{\underline{80.65}}{0.05} & \homValue{72.56}{0.15} & \homValue{72.64}{0.02} & \homValue{72.55}{0.05} & \homValue{79.20}{0.09} & \cellcolor{homThird}\homValue{\textit{80.00}}{0.15} & \homValue{79.30}{0.01} & \homValue{77.43}{0.08} & \homValue{75.26}{0.00} & \homValue{66.74}{0.00} & \homValue{68.27}{0.05} & \cellcolor{homBest}\homValue{\textbf{80.78}}{0.42} \\
\rowcolor{homBand}[3pt][3pt] \multicolumn{1}{l}{\cellcolor{white}} & 0.5 & \multicolumn{1}{c}{\cellcolor{white}} & \multicolumn{1}{c}{\cellcolor{white}} & \multicolumn{1}{c}{\cellcolor{white}} & \multicolumn{1}{c}{\cellcolor{white}\homValue{80.76}{0.00}} & \homValue{80.66}{0.09} & \homValue{80.12}{0.03} & \homValue{65.47}{0.17} & \cellcolor{homBest}\homValue{\textbf{82.10}}{0.22} & \homValue{77.19}{0.15} & \homValue{76.55}{0.08} & \homValue{76.49}{0.10} & \homValue{80.68}{0.07} & \cellcolor{homSecond}\homValue{\underline{81.89}}{0.02} & \cellcolor{homThird}\homValue{\textit{80.70}}{0.11} & \homValue{76.90}{0.23} & \homValue{77.05}{0.00} & \homValue{72.57}{0.00} & \homValue{70.97}{0.17} & \homValue{80.51}{0.00} \\
\midrule[0.25pt]
\rowcolor{white}[3pt][3pt] \multicolumn{1}{l}{\cellcolor{white}} & 0.1 & \multicolumn{1}{c}{\cellcolor{white}} & \multicolumn{1}{c}{\cellcolor{white}} & \multicolumn{1}{c}{\cellcolor{white}} & \multicolumn{1}{c}{\cellcolor{white}\homValue{71.39}{0.12}} & \homValue{61.64}{0.04} & \homValue{61.87}{0.05} & \homValue{58.47}{0.26} & \cellcolor{homThird}\homValue{\textit{62.41}}{0.18} & \homValue{61.59}{0.07} & \homValue{60.90}{0.02} & \homValue{60.93}{0.08} & \cellcolor{homBest}\homValue{\textbf{64.56}}{0.08} & \homValue{62.13}{0.19} & \homValue{61.80}{0.02} & \homValue{54.85}{0.08} & \homValue{55.53}{0.00} & \homValue{55.94}{0.00} & \homValue{56.25}{0.15} & \cellcolor{homSecond}\homValue{\underline{64.34}}{0.07} \\
\rowcolor{homBand}[3pt][3pt] \multicolumn{1}{l}{\cellcolor{white}\textbf{ogbn-arxiv}} & 0.3 & \multicolumn{1}{c}{\cellcolor{white}\homValue{55.60}{0.16}} & \multicolumn{1}{c}{\cellcolor{white}\homValue{67.27}{0.12}} & \multicolumn{1}{c}{\cellcolor{white}\homValue{72.83}{0.20}} & \homValue{73.29}{0.30} & \homValue{67.77}{0.18} & \homValue{68.34}{0.17} & \homValue{62.59}{0.07} & \homValue{68.75}{0.21} & \homValue{67.36}{0.37} & \homValue{65.82}{0.05} & \homValue{65.86}{0.06} & \cellcolor{homSecond}\homValue{\underline{70.69}}{0.08} & \cellcolor{homThird}\homValue{\textit{69.88}}{0.15} & \homValue{67.51}{0.09} & \homValue{55.19}{0.13} & \homValue{56.37}{0.00} & \homValue{55.93}{0.00} & \homValue{56.42}{0.16} & \cellcolor{homBest}\homValue{\textbf{71.41}}{0.26} \\
\rowcolor{white}[3pt][3pt] \multicolumn{1}{l}{\cellcolor{white}} & 0.5 & \multicolumn{1}{c}{\cellcolor{white}} & \multicolumn{1}{c}{\cellcolor{white}} & \multicolumn{1}{c}{\cellcolor{white}} & \multicolumn{1}{c}{\cellcolor{white}\homValue{72.82}{0.01}} & \homValue{70.55}{0.42} & \homValue{69.74}{0.37} & \homValue{63.14}{0.12} & \homValue{71.10}{0.10} & \homValue{70.64}{0.14} & \homValue{67.68}{0.02} & \homValue{67.64}{0.05} & \cellcolor{homThird}\homValue{\textit{71.67}}{0.18} & \cellcolor{homSecond}\homValue{\underline{71.93}}{0.02} & \homValue{70.25}{0.20} & \homValue{55.95}{0.11} & \homValue{57.01}{0.00} & \homValue{56.62}{0.00} & \homValue{56.65}{0.15} & \cellcolor{homBest}\homValue{\textbf{72.50}}{0.05} \\
\midrule[0.25pt]
\rowcolor{homBand}[3pt][3pt] \multicolumn{1}{l}{\cellcolor{white}} & 0.1 & \multicolumn{1}{c}{\cellcolor{white}} & \multicolumn{1}{c}{\cellcolor{white}} & \multicolumn{1}{c}{\cellcolor{white}} & \multicolumn{1}{c}{\cellcolor{white}\homValue{75.98}{0.62}} & \homValue{76.50}{0.72} & \homValue{69.09}{0.16} & \homValue{55.64}{0.91} & \cellcolor{homBest}\homValue{\textbf{78.20}}{0.14} & \homValue{64.55}{0.28} & \homValue{64.55}{0.21} & \homValue{64.43}{0.20} & \homValue{76.38}{0.48} & \cellcolor{homThird}\homValue{\textit{77.23}}{0.35} & \cellcolor{homSecond}\homValue{\underline{78.01}}{0.50} & \homValue{53.43}{0.01} & \homValue{66.46}{0.00} & \homValue{63.58}{0.00} & \homValue{71.12}{0.08} & \homValue{76.46}{0.45} \\
\rowcolor{white}[3pt][3pt] \multicolumn{1}{l}{\cellcolor{white}\textbf{ogbn-proteins}} & 0.3 & \multicolumn{1}{c}{\cellcolor{white}\homValue{72.91}{0.17}} & \multicolumn{1}{c}{\cellcolor{white}\homValue{74.13}{0.16}} & \multicolumn{1}{c}{\cellcolor{white}\homValue{77.48}{0.06}} & \homValue{76.56}{0.22} & \homValue{76.83}{0.08} & \homValue{71.10}{0.00} & \homValue{66.01}{0.78} & \cellcolor{homBest}\homValue{\textbf{78.25}}{0.32} & \homValue{70.62}{0.42} & \homValue{70.80}{0.39} & \homValue{70.39}{0.56} & \cellcolor{homSecond}\homValue{\underline{77.95}}{0.11} & \homValue{77.09}{0.66} & \cellcolor{homThird}\homValue{\textit{77.43}}{0.12} & \homValue{51.94}{1.16} & \homValue{67.15}{0.00} & \homValue{52.31}{0.00} & \homValue{72.72}{1.75} & \homValue{76.43}{0.22} \\
\rowcolor{homBand}[3pt][3pt] \multicolumn{1}{l}{\cellcolor{white}} & 0.5 & \multicolumn{1}{c}{\cellcolor{white}} & \multicolumn{1}{c}{\cellcolor{white}} & \multicolumn{1}{c}{\cellcolor{white}} & \multicolumn{1}{c}{\cellcolor{white}\homValue{76.42}{0.47}} & \cellcolor{homThird}\homValue{\textit{77.17}}{0.92} & \homValue{75.21}{0.63} & \homValue{68.20}{1.01} & \cellcolor{homBest}\homValue{\textbf{77.54}}{0.58} & \homValue{74.62}{0.05} & \homValue{74.87}{0.18} & \homValue{74.30}{0.38} & \homValue{76.75}{0.24} & \homValue{77.08}{0.02} & \cellcolor{homSecond}\homValue{\underline{77.22}}{0.13} & \homValue{54.99}{1.48} & \homValue{64.73}{0.00} & \homValue{59.33}{0.00} & \homValue{70.79}{0.05} & \homValue{76.45}{0.63} \\
\midrule[0.25pt]
\rowcolor{white}[3pt][3pt] \multicolumn{1}{l}{\cellcolor{white}} & 0.1 & \multicolumn{1}{c}{\cellcolor{white}} & \multicolumn{1}{c}{\cellcolor{white}} & \multicolumn{1}{c}{\cellcolor{white}} & \multicolumn{1}{c}{\cellcolor{white}\homValue{81.05}{0.14}} & \homValue{68.86}{0.01} & \cellcolor{homBest}\homValue{\textbf{75.48}}{0.05} & \homValue{64.09}{0.05} & \cellcolor{homSecond}\homValue{\underline{72.07}}{0.04} & \homValue{65.08}{0.00} & \homValue{64.82}{0.01} & \homValue{64.81}{0.00} & \homValue{70.74}{0.01} & \homValue{70.95}{0.03} & \homValue{68.89}{0.02} & \homValue{60.46}{0.04} & \homValue{65.50}{0.00} & \homValue{62.99}{0.00} & \homValue{64.14}{0.01} & \cellcolor{homThird}\homValue{\textit{71.92}}{0.11} \\
\rowcolor{homBand}[3pt][3pt] \multicolumn{1}{l}{\cellcolor{white}\textbf{Pokec}} & 0.3 & \multicolumn{1}{c}{\cellcolor{white}\homValue{62.31}{0.02}} & \multicolumn{1}{c}{\cellcolor{white}\homValue{69.36}{0.01}} & \multicolumn{1}{c}{\cellcolor{white}\homValue{86.24}{0.04}} & \homValue{84.45}{0.19} & \homValue{76.33}{0.05} & \cellcolor{homSecond}\homValue{\underline{80.07}}{0.09} & \homValue{67.29}{0.10} & \homValue{79.43}{0.10} & \homValue{68.56}{0.02} & \homValue{67.42}{0.02} & \homValue{67.35}{0.01} & \homValue{76.56}{0.06} & \cellcolor{homThird}\homValue{\textit{79.65}}{0.07} & \homValue{76.38}{0.07} & \homValue{69.39}{0.03} & \homValue{68.23}{0.00} & \homValue{64.42}{0.00} & \homValue{66.94}{0.07} & \cellcolor{homBest}\homValue{\textbf{81.19}}{0.08} \\
\rowcolor{white}[3pt][3pt] \multicolumn{1}{l}{\cellcolor{white}} & 0.5 & \multicolumn{1}{c}{\cellcolor{white}} & \multicolumn{1}{c}{\cellcolor{white}} & \multicolumn{1}{c}{\cellcolor{white}} & \multicolumn{1}{c}{\cellcolor{white}\homValue{85.49}{0.14}} & \homValue{80.61}{0.02} & \homValue{82.26}{0.08} & \homValue{70.64}{0.08} & \cellcolor{homThird}\homValue{\textit{82.31}}{0.03} & \homValue{74.64}{0.03} & \homValue{69.71}{0.05} & \homValue{69.70}{0.00} & \homValue{79.97}{0.01} & \cellcolor{homSecond}\homValue{\underline{82.46}}{0.01} & \homValue{80.70}{0.09} & \homValue{73.69}{0.04} & \homValue{69.87}{0.00} & \homValue{66.62}{0.00} & \homValue{69.47}{0.03} & \cellcolor{homBest}\homValue{\textbf{84.19}}{0.06} \\
\bottomrule
\end{tabular*}
\par\vspace{4pt}
{\fontsize{7pt}{8pt}\selectfont\raggedright  Each cell reports the mean above $\pm$ standard deviation. \textbf{Bold} (teal), \underline{underline} (gold), and \textit{italic} (lavender) identify the highest, second-highest, and third-highest distinct means among ranked methods within each row, respectively; ties share a rank. Reference methods are excluded from ranking. MLP, Spanning Tree, and Full GCN are shared across retention ratios. EDiS-Full is reported as a full-graph inference reference at each training budget, while EDiS-Lite uses sampled-graph evaluation.\par}
\end{minipage}
\end{adjustbox}
\end{sidewaystable}
\endgroup

\clearpage
\section{Runtime and Memory Details}
\label{app:runtime-memory}

This appendix quantifies what \edis{}'s sparsity costs and saves in
practice. It first covers the inference-time trade-off between
\edis{}-Lite and \edis{}-Full (Figure~\ref{fig:evaluation-cost}), then
a full end-to-end runtime and memory breakdown across
retention budgets
(Tables~\ref{tab:profiling-rho01}--\ref{tab:profiling-rho05}), and
finally a comparison against external baselines under matched hardware
(Tables~\ref{tab:runtime-sparsification} and~\ref{tab:runtime-end-to-end}).

\subsection{Lite versus Full Evaluation Trade-off}
\label{app:lite-full-tradeoff}

Switching to full-graph evaluation mainly pays off on comparatively
sparse graphs, where \edis{}-Lite drops enough structure at tight
budgets to cost real accuracy, and the full graph restores it. On
already dense graphs, a sparse subgraph retains enough structure to
stay competitive on its own, so evaluating on the full graph adds
little or no accuracy while still paying a real evaluation-time cost,
sometimes several times higher. The Lite/Full choice is therefore
something to weigh per graph, based on how sparse or dense it already
is, rather than a switch that pays off everywhere.

\begin{figure}[H]
\centering
\includegraphics[width=\linewidth]{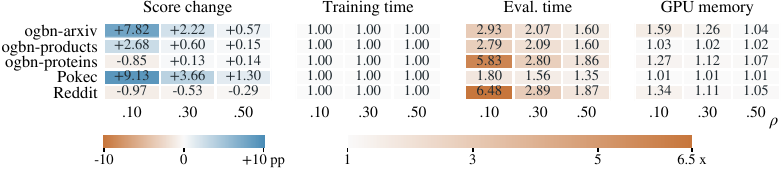}
\caption{\textbf{Full-graph evaluation tradeoffs.} Cells show Full minus Lite score (percentage points) or Full/Lite cost ratios at retention $\rho$ (MaxCF, cosine). The cost panels share a color scale.}
\label{fig:evaluation-cost}
\end{figure}
\vspace{-10pt}

Each variant is divided into construction
(support generation, graph preparation, and composition), training,
and evaluation, together with peak GPU and system memory.
Construction and training are nearly identical between variants,
since both share the same decomposition and training graphs. The gap
lives entirely in evaluation, where Lite reuses the already-composed
sparse graph while Full reprocesses the complete graph, shrinking as
$\rho$ grows until the two variants match at $\rho=1$.
Tables~\ref{tab:profiling-rho01}--\ref{tab:profiling-rho05} give the
full numbers behind this trade-off, in seconds and GiB, measured on a
single H200 80GB GPU. This internal comparison and the cross-method
comparison against baselines (Section~\ref{app:end-to-end-baselines})
were run on different GPU generations, so absolute times should be
compared only within each table, not across the two.

\begingroup
\setlength{\floatsep}{4pt plus 1pt minus 1pt}
\setlength{\textfloatsep}{4pt plus 1pt minus 1pt}
\setlength{\intextsep}{4pt plus 1pt minus 1pt}
\setlength{\abovecaptionskip}{2pt}
\setlength{\belowcaptionskip}{2pt}

\begin{table}[H]
\centering
\tiny
\setlength{\tabcolsep}{2pt}
\renewcommand{\arraystretch}{0.9}
\rowcolors{2}{white}{black!3}
\begin{tabular*}{\linewidth}{@{\extracolsep{\fill}}l rrrrrr rrrrrr@{}}
\toprule
& \multicolumn{6}{c}{\textbf{\edis-Lite}} & \multicolumn{6}{c}{\textbf{\edis-Full}} \\
\cmidrule(lr){2-7}\cmidrule(lr){8-13}
\textbf{Dataset} & Constr. & Train & Eval & Total & GPU & RAM & Constr. & Train & Eval & Total & GPU & RAM \\
\midrule
Reddit & 1400.2 & 126.1 & 49.9 & 1576.2 & 7.05 & 12.21 & 1365.5 & 126.0 & 323.6 & 1815.1 & 9.43 & 14.88 \\
ogbn-products & 291.4 & 633.1 & 218.2 & 1142.7 & 51.51 & 13.53 & 312.0 & 633.3 & 608.3 & 1553.6 & 53.17 & 13.83 \\
ogbn-arxiv & 16.2 & 599.2 & 230.6 & 846.0 & 6.79 & 1.79 & 16.5 & 598.9 & 675.9 & 1291.3 & 10.80 & 1.79 \\
ogbn-proteins & 355.1 & 26.5 & 10.9 & 392.5 & 4.98 & 8.36 & 356.1 & 26.5 & 63.6 & 446.2 & 6.32 & 8.36 \\
Pokec & 332.2 & 3451.2 & 1427.6 & 5211.0 & 49.90 & 5.02 & 334.5 & 3451.8 & 2574.1 & 6360.4 & 50.50 & 5.05 \\
\bottomrule
\end{tabular*}
\caption{Runtime and memory at $\rho=0.1$.}
\label{tab:profiling-rho01}
\end{table}

\begin{table}[H]
\centering
\tiny
\setlength{\tabcolsep}{2pt}
\renewcommand{\arraystretch}{0.9}
\rowcolors{2}{white}{black!3}
\begin{tabular*}{\linewidth}{@{\extracolsep{\fill}}l rrrrrr rrrrrr@{}}
\toprule
& \multicolumn{6}{c}{\textbf{\edis-Lite}} & \multicolumn{6}{c}{\textbf{\edis-Full}} \\
\cmidrule(lr){2-7}\cmidrule(lr){8-13}
\textbf{Dataset} & Constr. & Train & Eval & Total & GPU & RAM & Constr. & Train & Eval & Total & GPU & RAM \\
\midrule
Reddit & 1547.0 & 257.5 & 112.0 & 1916.5 & 11.40 & 14.90 & 1447.4 & 257.4 & 323.4 & 2028.2 & 12.60 & 14.88 \\
ogbn-products & 525.8 & 791.4 & 290.7 & 1607.9 & 55.47 & 13.91 & 432.1 & 791.1 & 607.7 & 1830.9 & 56.76 & 13.88 \\
ogbn-arxiv & 26.9 & 793.3 & 326.4 & 1146.6 & 8.62 & 1.95 & 24.6 & 793.3 & 676.1 & 1494.0 & 10.84 & 1.93 \\
ogbn-proteins & 384.6 & 51.1 & 22.7 & 458.4 & 6.68 & 10.44 & 363.2 & 51.0 & 63.5 & 477.7 & 7.50 & 10.76 \\
Pokec & 745.7 & 3926.3 & 1648.1 & 6320.1 & 51.79 & 5.47 & 918.4 & 3925.6 & 2573.2 & 7417.2 & 52.26 & 5.35 \\
\bottomrule
\end{tabular*}
\caption{Runtime and memory at $\rho=0.3$.}
\label{tab:profiling-rho03}
\end{table}

\begin{table}[H]
\centering
\tiny
\setlength{\tabcolsep}{2pt}
\renewcommand{\arraystretch}{0.9}
\rowcolors{2}{white}{black!3}
\begin{tabular*}{\linewidth}{@{\extracolsep{\fill}}l rrrrrr rrrrrr@{}}
\toprule
& \multicolumn{6}{c}{\textbf{\edis-Lite}} & \multicolumn{6}{c}{\textbf{\edis-Full}} \\
\cmidrule(lr){2-7}\cmidrule(lr){8-13}
\textbf{Dataset} & Constr. & Train & Eval & Total & GPU & RAM & Constr. & Train & Eval & Total & GPU & RAM \\
\midrule
Reddit & 1717.8 & 386.0 & 172.8 & 2276.6 & 15.95 & 12.40 & 1621.2 & 386.1 & 323.3 & 2330.6 & 16.79 & 14.93 \\
ogbn-products & 1422.7 & 984.3 & 380.7 & 2787.7 & 59.43 & 14.07 & 1402.5 & 984.2 & 607.8 & 2994.5 & 60.35 & 14.06 \\
ogbn-arxiv & 37.1 & 987.1 & 423.4 & 1447.6 & 10.44 & 2.05 & 37.3 & 987.0 & 676.0 & 1700.3 & 10.82 & 2.05 \\
ogbn-proteins & 449.4 & 75.2 & 34.3 & 558.9 & 8.46 & 8.79 & 444.5 & 75.2 & 63.7 & 583.4 & 9.05 & 8.79 \\
Pokec & 1105.3 & 4467.3 & 1899.5 & 7472.1 & 53.69 & 6.13 & 1206.8 & 4467.1 & 2573.8 & 8247.7 & 54.02 & 6.21 \\
\bottomrule
\end{tabular*}
\caption{Runtime and memory at $\rho=0.5$.}
\label{tab:profiling-rho05}
\end{table}
\endgroup

\subsection{End-to-End Runtime Against Baselines}
\label{app:end-to-end-baselines}

All methods in Tables~\ref{tab:runtime-sparsification} and~\ref{tab:runtime-end-to-end}, \edis{}-Lite and every baseline, are measured on the same single H100 80GB GPU, so the comparison is not confounded by hardware.

\paragraph{Sparsification time.}
Table~\ref{tab:runtime-sparsification} compares standalone sparsification time
at the same retention ratio, $\rho=0.3$, across five large-scale graphs.
\edis{}-Lite's reported times use each dataset's own tuned round cap
$K_{\max}$ (Appendix~\ref{app:hyperparameters}, $10$--$80$ across these
five graphs), since a deeper cap means more extraction rounds and
shifts cost from composition toward precomputation
(Appendix~\ref{app:kmax-sensitivity}), so these numbers reflect each
dataset's own cap rather than a single fixed setting.
\edis{}-Lite's one-time extraction costs more than simple degree- or
neighborhood-based heuristics (SCAN, L-Spar, G-Spar, L-Sim, Local degree),
is comparable to or below Spectral, and is far below Forest Fire. This
cost is paid once and reused across epochs and retention budgets
(Appendix~\ref{app:coverage-reuse}), so it should be weighed against the
training savings it buys, not against sparsification cost in isolation.

\begin{table}[!htbp]
\centering
\caption{Sparsification time (s) at $\rho=0.3$ on five large-scale graphs.}
\label{tab:runtime-sparsification}

\scriptsize
\setlength{\tabcolsep}{4pt}
\renewcommand{\arraystretch}{1.05}
\rowcolors{2}{white}{black!3}

\begin{tabular*}{\linewidth}{@{}l@{\extracolsep{\fill}}rrrrr@{}}
\toprule
\textbf{Method}
& \textbf{Reddit}
& \textbf{\shortstack{ogbn-\\products}}
& \textbf{\shortstack{ogbn-\\arxiv}}
& \textbf{\shortstack{ogbn-\\proteins}}
& \textbf{Pokec} \\
\midrule

Random
& \textbf{1.23}
& \textbf{1.37}
& \textbf{0.015}
& \textbf{0.819}
& \textbf{0.582} \\

Local degree
& 41.20
& 30.39
& 0.543
& 24.01
& 10.02 \\

Rank degree
& 249.83
& 69.20
& 4.05
& 169.86
& 27.50 \\

Forest Fire
& 2501.61
& ---
& 10.33
& 1414.05
& 692.14 \\

SCAN
& 78.66
& 47.57
& 0.649
& 40.52
& 13.32 \\

L-Spar
& 80.46
& 49.19
& 0.594
& 40.83
& 11.48 \\

G-Spar
& 78.30
& 47.63
& 0.654
& 40.20
& 13.43 \\

L-Sim
& 82.70
& 54.08
& 0.664
& 42.61
& 13.04 \\

Spectral$^{\ddagger}$
& 387.50
& 379.95
& 5.80
& 286.24
& 108.09 \\

Spanning tree$^{\dagger}$
& \underline{36.21}
& \underline{17.25}
& \underline{0.225}
& \underline{2.10}
& \underline{3.32} \\

\midrule

\textbf{\edis{}-Lite}
& 313.73
& 179.28
& 3.64
& 98.06
& 86.01 \\

\bottomrule
\end{tabular*}

\par\vspace{2pt}
\begin{minipage}{\linewidth}\scriptsize
Best and second-best reported times are bold and underlined, respectively.
$\dagger$: spanning-tree reference with unmatched retention;
---: unavailable.

$\ddagger$: approximate effective resistance; using the Spielman--Srivastava JL method
\citep{Spielman:2008effres}, avoiding explicit Laplacian pseudoinversion.
We use $d=\lceil4\ln n\rceil$ Gaussian projections with approximate-Cholesky
solves~\citep{Gao:2023laplacian} ($10^{-2}$ tolerance) on the smaller graph,
64 Rademacher projections with PCG on \textsc{ogbn-products},
\textsc{ogbn-proteins}, and \textsc{Pokec}, and
TGT+~\citep{Zhang:2023spanning} with 128 eigenpairs on \textsc{Reddit}.
\end{minipage}
\end{table}

\paragraph{End-to-end runtime.}
Table~\ref{tab:runtime-end-to-end} compares end-to-end runtime at
$\rho=0.3$ over 300 epochs, splitting $E_{300}$ into the one-time cost
$S$ and training cost $T_{300}$. Per epoch, \edis{}-Lite trains faster
than Full-GCN on all three datasets, \textsc{ogbn-arxiv} ($0.12$ vs.\
$0.32$\,s), \textsc{ogbn-products} ($1.35$ vs.\ $3.85$\,s), and
\textsc{Reddit} ($1.73$ vs.\ $2.38$\,s).
These timings are a separate measurement from the accuracies in
Table~\ref{tab:main-large-acc}: timing uses a fixed 300-epoch budget,
whereas accuracy follows each dataset's training protocol
(Table~\ref{tab:presets}: $2000$ epochs for \textsc{ogbn-arxiv}, $300$ for
\textsc{ogbn-products} and \textsc{Reddit}). For \textsc{ogbn-arxiv}, the two
therefore refer to different training durations, and we make no joint
speed--accuracy claim. In Table~\ref{tab:main-large-acc}, \edis{}-Lite
matches or exceeds Full-GCN's accuracy on \textsc{ogbn-products}
($80.78\%$ vs.\ $80.67\%$) and \textsc{Reddit} ($95.61\%$ vs.\ $95.44\%$)
but not on \textsc{ogbn-arxiv} ($71.41\%$ vs.\ $72.83\%$).
On runtime alone, \edis{}-Lite beats Full-GCN's end-to-end runtime within 300 epochs on
\textsc{ogbn-arxiv} ($39.5$\,s vs.\ $96.7$\,s) and \textsc{ogbn-products}
($583.8$\,s vs.\ $1154.0$\,s). On \textsc{Reddit}, the largest one-time
cost ($313.7$\,s) is not yet amortized at 300 epochs, so the total
($832.4$\,s) still exceeds Full-GCN's ($713.4$\,s). It crosses below
after roughly 480 epochs, sooner with reuse across budgets or runs. DSpar, a lighter one-shot spectral
sparsifier, beats \edis{}-Lite on \textsc{ogbn-products} and
\textsc{Reddit} but not \textsc{ogbn-arxiv} ($70.5$\,s vs.\ $39.5$\,s).
Its value there is accuracy and multi-budget reuse, not raw speed. MoG, AdaGLT, and
Unified-LTH remain far slower across all three datasets. The marked
AdaGLT and Unified-LTH configurations exceed the 3600\,s run limit and
are projected.
\enlargethispage{2\baselineskip}

\begin{table}[!htbp]
\centering
\caption{End-to-end runtime (s) at $\rho=0.3$ over 300 epochs.
$S$: sparsification; $T_{300}$: training; $E_{300}=S+T_{300}$.}
\label{tab:runtime-end-to-end}

\scriptsize
\setlength{\tabcolsep}{2pt}
\renewcommand{\arraystretch}{0.92}
\rowcolors{2}{white}{black!3}

\begin{tabular*}{\linewidth}{@{}l@{\extracolsep{\fill}}rrrrrrrrr@{}}
\toprule
\textbf{Method}
& \multicolumn{3}{c}{\textbf{Reddit}}
& \multicolumn{3}{c}{\textbf{ogbn-products}}
& \multicolumn{3}{c}{\textbf{ogbn-arxiv}} \\
\cmidrule(lr){2-4}
\cmidrule(lr){5-7}
\cmidrule(lr){8-10}
& $S$ & $T_{300}$ & $E_{300}$
& $S$ & $T_{300}$ & $E_{300}$
& $S$ & $T_{300}$ & $E_{300}$ \\
\midrule
Full-GCN
& 0.0 & 713.4 & \underline{713.4}
& 0.0 & 1154.0 & 1154.0
& 0.0 & 96.7 & 96.7 \\
\textbf{\edis{}-Lite}
& 313.7 & 518.7 & 832.4
& 179.3 & 404.5 & \underline{583.8}
& 3.6 & 35.9 & \textbf{39.5} \\
DSpar
& 4.0 & 224.4 & \textbf{228.4}
& 4.1 & 491.8 & \textbf{496.0}
& 0.1 & 70.4 & \underline{70.5} \\
\midrule
MoG
& incl. & 1432.3 & 1432.3
& incl. & 2536.4 & 2536.4
& incl. & 154.1 & 154.1 \\

AdaGLT
& incl. & 8819.6 & 8819.6$^{\dagger}$
& incl. & 15213.3 & 15213.3$^{\dagger}$
& incl. & 916.3 & 916.3 \\

Unified-LTH
& 5.4 & 7144.1 & 7149.5$^{\dagger}$
& 6.7 & 13161.1 & 13167.8$^{\dagger}$
& 0.1 & 395.9 & 396.0 \\

\bottomrule
\end{tabular*}

\par\vspace{2pt}
\begin{minipage}{\linewidth}\scriptsize
Best and second-best $E_{300}$ per dataset are bold and underlined, respectively.
$^{\dagger}$ marks AdaGLT/Unified-LTH totals exceeding the 3600\,s run limit and are projected from the epochs they ran;
``incl.'' means graph-selection cost is included in training.
\end{minipage}
\end{table}

\clearpage
\section{Ablations and Additional Profiles}
\label{app:additional-profiles}

\setlength{\intextsep}{4pt}
\setlength{\textfloatsep}{6pt plus 1pt minus 1pt}
\setlength{\floatsep}{6pt plus 1pt minus 1pt}

Sections~\ref{app:selector-profiles}--\ref{app:sage-robustness} sweep
the selector, score, mechanism, inference-graph, and backbone axes on
the 14 homophilic/heterophilic datasets. The five large-scale graphs
are profiled separately, for extraction depth
(\ref{app:kmax-sensitivity}) and decomposition reuse
(\ref{app:coverage-reuse}).

\subsection{Extended selector ablation}
\label{app:selector-profiles}

The \edis{} framework does not depend on which selector is used. Any of the seven arms can substitute for another with no other changes to the pipeline, and which selector performs best depends on the dataset rather than being fixed. So MaxCF is a strong default, not the choice that always wins.

Table~\ref{tab:selector-main} extends the main-text selector comparison to the 14 datasets. We fix the GCN backbone, cosine scores, and 30\% retention. Hybrid combines MaxCF, then $k$-NN, then degree-normalized extraction, using the $K=10$ round limit from Appendix~\ref{app:implementation}. Figure~\ref{fig:selector-trends-full} extends the comparison across the full measured retention range, testing whether MaxCF's advantage as the default selector holds throughout. Selectors diverge most at low retention, where the choice of edges to keep is most consequential, and converge as $\rho\to1$, where every selector necessarily retains the full edge set.

\begin{table}[H]
\centering
\footnotesize
\setlength{\tabcolsep}{2pt}
\renewcommand{\arraystretch}{1.02}
\begin{tabular*}{\linewidth}{@{}l@{\extracolsep{\fill}}ccccccc@{}}
\toprule
Dataset & \shortstack{MaxCF\\(default)} & MinCF & Degree & $k$-NN & Spanner & Random & Hybrid \\
\midrule
Cora & 77.16{\scriptsize$\pm$0.98} & 77.92{\scriptsize$\pm$0.71} & \underline{77.94}{\scriptsize$\pm$0.72} & 76.04{\scriptsize$\pm$0.45} & \textbf{78.02}{\scriptsize$\pm$0.80} & 76.20{\scriptsize$\pm$0.57} & 77.24{\scriptsize$\pm$1.00} \\
\rowcolor[gray]{0.95}
CiteSeer & 66.08{\scriptsize$\pm$1.23} & 67.26{\scriptsize$\pm$1.05} & \underline{67.38}{\scriptsize$\pm$0.88} & 65.34{\scriptsize$\pm$1.00} & \textbf{67.94}{\scriptsize$\pm$0.95} & 67.06{\scriptsize$\pm$0.91} & 66.14{\scriptsize$\pm$1.03} \\
PubMed & \underline{77.28}{\scriptsize$\pm$0.57} & 76.52{\scriptsize$\pm$1.99} & 76.76{\scriptsize$\pm$0.84} & 74.54{\scriptsize$\pm$0.96} & 76.26{\scriptsize$\pm$0.29} & 75.16{\scriptsize$\pm$1.15} & \textbf{77.76}{\scriptsize$\pm$0.41} \\
\rowcolor[gray]{0.95}
Amazon-Computer & \textbf{93.35}{\scriptsize$\pm$0.21} & 92.98{\scriptsize$\pm$0.11} & \underline{93.06}{\scriptsize$\pm$0.18} & 91.54{\scriptsize$\pm$0.33} & 91.76{\scriptsize$\pm$0.53} & 91.57{\scriptsize$\pm$0.22} & 92.07{\scriptsize$\pm$0.22} \\
Amazon-Photo & \textbf{95.60}{\scriptsize$\pm$0.15} & \underline{95.56}{\scriptsize$\pm$0.14} & 95.51{\scriptsize$\pm$0.27} & 94.18{\scriptsize$\pm$0.33} & 95.39{\scriptsize$\pm$0.39} & 95.14{\scriptsize$\pm$0.21} & 95.46{\scriptsize$\pm$0.16} \\
\rowcolor[gray]{0.95}
Coauthor-CS & \textbf{95.72}{\scriptsize$\pm$0.06} & 95.06{\scriptsize$\pm$0.09} & 95.23{\scriptsize$\pm$0.14} & 95.39{\scriptsize$\pm$0.28} & 95.14{\scriptsize$\pm$0.22} & 95.42{\scriptsize$\pm$0.19} & \underline{95.44}{\scriptsize$\pm$0.09} \\
Coauthor-Physics & \underline{97.06}{\scriptsize$\pm$0.09} & 96.84{\scriptsize$\pm$0.06} & 97.03{\scriptsize$\pm$0.09} & 97.00{\scriptsize$\pm$0.08} & 96.70{\scriptsize$\pm$0.08} & 96.66{\scriptsize$\pm$0.08} & \textbf{97.11}{\scriptsize$\pm$0.10} \\
\rowcolor[gray]{0.95}
WikiCS & \textbf{80.25}{\scriptsize$\pm$0.79} & 77.85{\scriptsize$\pm$0.31} & \underline{79.61}{\scriptsize$\pm$0.45} & 78.72{\scriptsize$\pm$0.36} & 77.80{\scriptsize$\pm$0.35} & 77.34{\scriptsize$\pm$0.37} & 78.44{\scriptsize$\pm$0.50} \\
Squirrel & 42.63{\scriptsize$\pm$2.04} & 42.33{\scriptsize$\pm$1.27} & \underline{44.05}{\scriptsize$\pm$1.84} & 40.59{\scriptsize$\pm$2.42} & \textbf{44.39}{\scriptsize$\pm$1.35} & 41.22{\scriptsize$\pm$1.96} & 43.04{\scriptsize$\pm$1.89} \\
\rowcolor[gray]{0.95}
Chameleon & \textbf{43.75}{\scriptsize$\pm$4.14} & 40.05{\scriptsize$\pm$3.33} & 40.73{\scriptsize$\pm$3.14} & 39.25{\scriptsize$\pm$3.35} & \underline{42.79}{\scriptsize$\pm$2.34} & 38.92{\scriptsize$\pm$4.22} & 41.86{\scriptsize$\pm$2.29} \\
Roman-Empire & \underline{68.95}{\scriptsize$\pm$0.41} & \textbf{69.06}{\scriptsize$\pm$0.58} & 68.52{\scriptsize$\pm$0.46} & 68.84{\scriptsize$\pm$0.22} & 68.91{\scriptsize$\pm$0.29} & 68.37{\scriptsize$\pm$0.73} & 68.87{\scriptsize$\pm$0.59} \\
\rowcolor[gray]{0.95}
Amazon-Ratings & 52.31{\scriptsize$\pm$0.85} & \underline{52.72}{\scriptsize$\pm$0.44} & 52.34{\scriptsize$\pm$0.52} & 51.87{\scriptsize$\pm$0.42} & 51.99{\scriptsize$\pm$0.30} & 51.94{\scriptsize$\pm$0.51} & \textbf{52.76}{\scriptsize$\pm$0.31} \\
Minesweeper$^\dagger$ & 73.82{\scriptsize$\pm$0.97} & \textbf{77.84}{\scriptsize$\pm$0.82} & \underline{75.69}{\scriptsize$\pm$0.34} & 74.25{\scriptsize$\pm$1.25} & 75.01{\scriptsize$\pm$1.05} & 75.30{\scriptsize$\pm$1.51} & 73.74{\scriptsize$\pm$1.68} \\
\rowcolor[gray]{0.95}
Questions$^\dagger$ & \underline{74.12}{\scriptsize$\pm$0.88} & \textbf{74.73}{\scriptsize$\pm$1.04} & 73.46{\scriptsize$\pm$1.41} & 72.91{\scriptsize$\pm$0.96} & 72.79{\scriptsize$\pm$0.69} & 71.39{\scriptsize$\pm$0.66} & 73.69{\scriptsize$\pm$0.72} \\
\bottomrule
\end{tabular*}
\caption{Selector ablation for \edis{}-Lite (GCN, cosine scores, $\rho=0.30$). Values are mean $\pm$ standard deviation (\%). $^\dagger$Reported as ROC-AUC rather than accuracy. Best and second-best reported means are bold and underlined, respectively.}
\label{tab:selector-main}
\end{table}

\begin{figure}[!htbp]
\centering
\includegraphics[width=\linewidth]{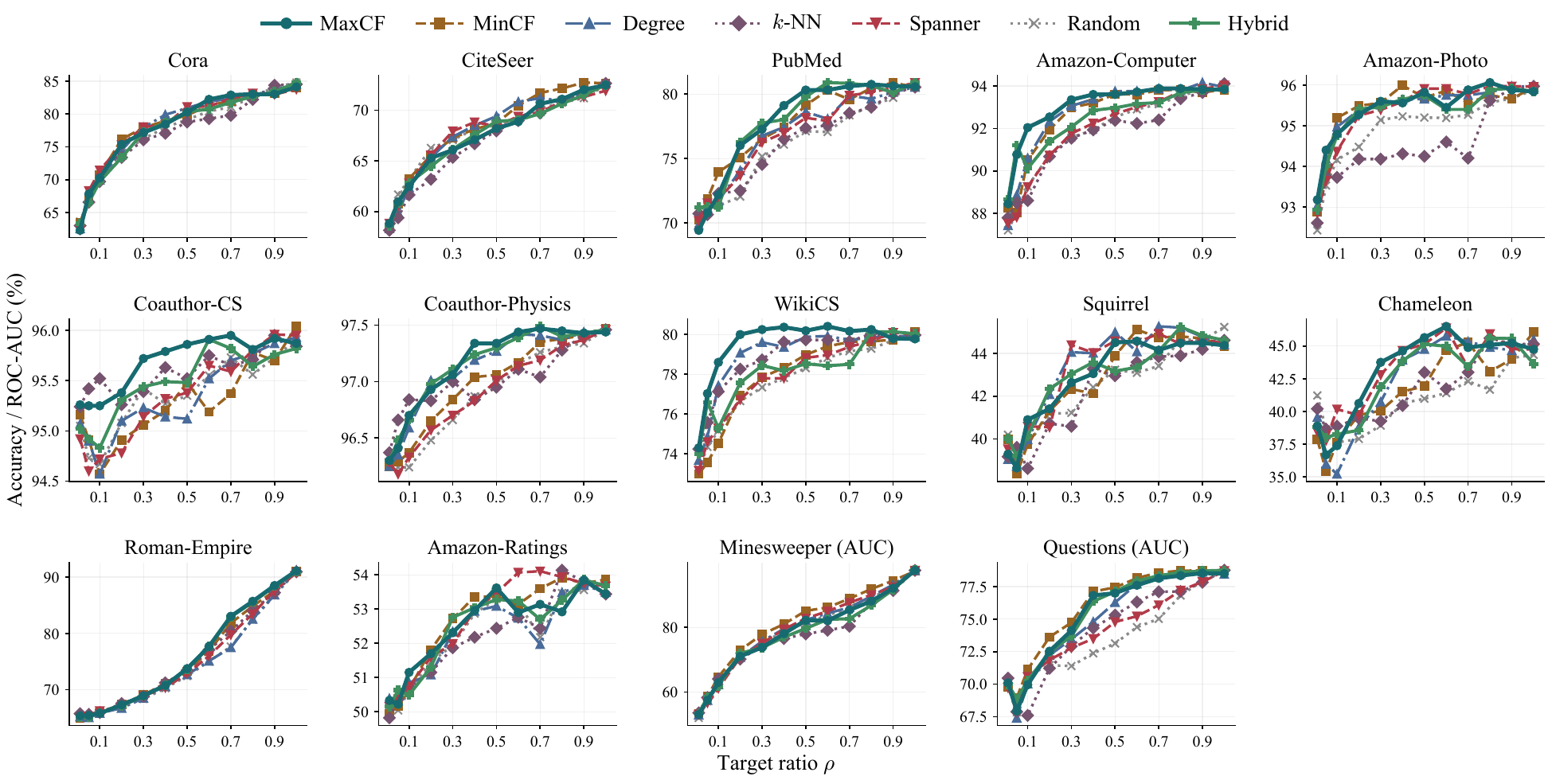}
\caption{\textbf{Selector ablation across the full retention range.} GCN backbone, cosine scores, all 14 datasets. MaxCF (default) and Hybrid are drawn with heavier lines.}
\label{fig:selector-trends-full}
\end{figure}

\clearpage
\subsection{Extended edge-score ablation}
\label{app:score-profiles}

The \edis{} framework does not depend on which edge score is used. Any of the five scores can substitute for another with no other changes to the pipeline, and which score performs best depends on the dataset rather than being fixed. So cosine is a strong default, not the choice that always wins.

Table~\ref{tab:score-main} extends the main-text edge-score comparison (Figure~\ref{fig:selector-gap}) to the 14 datasets. We fix MaxCF, the GCN backbone, and 30\% retention while varying only the edge score. Four of the
five (cosine, Euclidean, dot-product, and uniform) are fixed
formulas, while MLP instead learns its preference from a small
trained network over the two endpoint features. Figure~\ref{fig:score-trends-full} extends the comparison across the full measured retention range, testing whether cosine's advantage as the default score holds throughout. Scores diverge most at low retention, where edge weighting is most consequential, and converge as $\rho\to1$, where every score sees the same edge set.

\begin{table}[H]
\centering
\footnotesize
\setlength{\tabcolsep}{2pt}
\renewcommand{\arraystretch}{1.02}
\begin{tabular*}{\linewidth}{@{}l@{\extracolsep{\fill}}ccccc@{}}
\toprule
Dataset & \shortstack{Cosine\\(default)} & Euclidean & Dot & Uniform & MLP \\
\midrule
Cora & 77.16{\scriptsize$\pm$0.98} & 75.46{\scriptsize$\pm$2.16} & \underline{77.78}{\scriptsize$\pm$0.69} & \textbf{77.94}{\scriptsize$\pm$1.09} & 77.74{\scriptsize$\pm$1.20} \\
\rowcolor[gray]{0.95}
CiteSeer & 66.08{\scriptsize$\pm$1.23} & \textbf{66.66}{\scriptsize$\pm$1.04} & 66.18{\scriptsize$\pm$0.84} & 66.38{\scriptsize$\pm$0.54} & \underline{66.54}{\scriptsize$\pm$1.10} \\
PubMed & 77.28{\scriptsize$\pm$0.57} & \textbf{77.80}{\scriptsize$\pm$0.28} & \underline{77.30}{\scriptsize$\pm$0.81} & 76.82{\scriptsize$\pm$1.19} & -- \\
\rowcolor[gray]{0.95}
Amazon-Computer & \underline{93.35}{\scriptsize$\pm$0.21} & 92.58{\scriptsize$\pm$0.03} & \textbf{94.07}{\scriptsize$\pm$0.13} & 92.91{\scriptsize$\pm$0.08} & 93.06{\scriptsize$\pm$0.11} \\
Amazon-Photo & 95.60{\scriptsize$\pm$0.15} & 95.69{\scriptsize$\pm$0.46} & \textbf{96.01}{\scriptsize$\pm$0.18} & \underline{95.80}{\scriptsize$\pm$0.06} & 95.56{\scriptsize$\pm$0.44} \\
\rowcolor[gray]{0.95}
Coauthor-CS & \textbf{95.72}{\scriptsize$\pm$0.06} & \underline{95.71}{\scriptsize$\pm$0.05} & 95.51{\scriptsize$\pm$0.13} & 95.04{\scriptsize$\pm$0.18} & 95.28{\scriptsize$\pm$0.15} \\
Coauthor-Physics & 97.06{\scriptsize$\pm$0.09} & \underline{97.09}{\scriptsize$\pm$0.02} & \textbf{97.14}{\scriptsize$\pm$0.12} & \textbf{97.14}{\scriptsize$\pm$0.05} & 97.06{\scriptsize$\pm$0.06} \\
\rowcolor[gray]{0.95}
WikiCS & \textbf{80.25}{\scriptsize$\pm$0.79} & \underline{80.24}{\scriptsize$\pm$0.67} & 79.81{\scriptsize$\pm$0.29} & 79.01{\scriptsize$\pm$0.41} & 79.21{\scriptsize$\pm$0.39} \\
Squirrel & 42.63{\scriptsize$\pm$2.04} & \textbf{43.53}{\scriptsize$\pm$2.17} & 42.45{\scriptsize$\pm$2.98} & \underline{43.21}{\scriptsize$\pm$2.59} & 42.28{\scriptsize$\pm$1.18} \\
\rowcolor[gray]{0.95}
Chameleon & \textbf{43.75}{\scriptsize$\pm$4.14} & 40.65{\scriptsize$\pm$3.24} & \underline{43.38}{\scriptsize$\pm$3.52} & 41.31{\scriptsize$\pm$2.65} & 40.21{\scriptsize$\pm$2.85} \\
Roman-Empire & 68.95{\scriptsize$\pm$0.41} & \underline{69.18}{\scriptsize$\pm$0.20} & 68.80{\scriptsize$\pm$0.15} & \textbf{70.53}{\scriptsize$\pm$0.49} & 68.90{\scriptsize$\pm$0.37} \\
\rowcolor[gray]{0.95}
Amazon-Ratings & 52.31{\scriptsize$\pm$0.85} & 52.45{\scriptsize$\pm$0.88} & \textbf{53.38}{\scriptsize$\pm$0.50} & \underline{52.94}{\scriptsize$\pm$0.59} & 52.74{\scriptsize$\pm$0.21} \\
Minesweeper$^\dagger$ & 73.82{\scriptsize$\pm$0.97} & 74.26{\scriptsize$\pm$0.26} & 74.00{\scriptsize$\pm$0.80} & \underline{76.67}{\scriptsize$\pm$0.61} & \textbf{83.70}{\scriptsize$\pm$1.57} \\
\rowcolor[gray]{0.95}
Questions$^\dagger$ & \underline{74.12}{\scriptsize$\pm$0.88} & 72.72{\scriptsize$\pm$1.83} & \underline{74.12}{\scriptsize$\pm$0.17} & \textbf{74.15}{\scriptsize$\pm$0.08} & 73.38{\scriptsize$\pm$0.87} \\
\bottomrule
\end{tabular*}
\caption{Score-function ablation for \edis{}-Lite (GCN, MaxCF selector, $\rho=0.30$). Values are mean $\pm$ standard deviation (\%). $^\dagger$Reported as ROC-AUC rather than accuracy. Best and second-best reported means are bold and underlined, respectively. A dash marks PubMed--MLP, which is not applicable in this implementation.}
\label{tab:score-main}
\end{table}

\begin{keyquestion}
\textbf{Diagnostic.} Minesweeper is the outlier in Table~\ref{tab:score-main}. MLP reaches $83.70\pm1.57$ ROC-AUC, while the cosine default only reaches $73.82\pm0.97$. That is a 9.88-point gap, about six times the next-largest gap in the panel (Roman-Empire, 1.58 points). Figure~\ref{fig:score-trends-full} below shows this gap holds across most of the retention range, not just at $\rho=0.30$. So on Minesweeper, the edge score matters more than the selector.
\end{keyquestion}

\vspace{8pt}
\begin{figure}[!htbp]
\centering
\includegraphics[width=\linewidth]{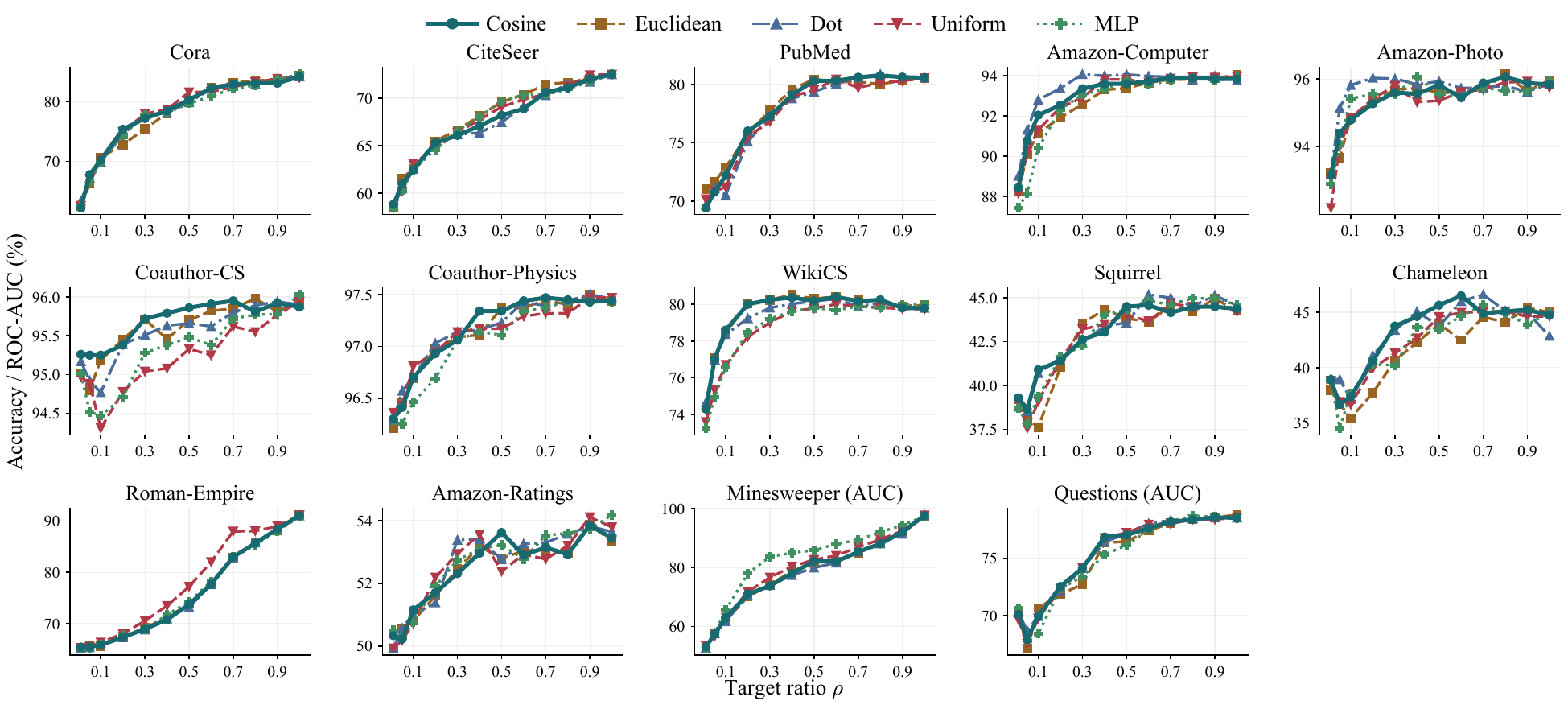}
\caption{\textbf{Score-function ablation across the full retention range.} GCN backbone, MaxCF selector, all 14 datasets. Cosine scores (default) are drawn with a heavier line.}
\label{fig:score-trends-full}
\end{figure}

\clearpage
\subsection{Extended mechanism ablation}
\label{app:mechanism-profile}

At 30\% retention, EDiS-Lite has a higher reported mean than Frozen on
13 of 14 datasets, Direct on 11, and Shuffled groups on 12. It leads
all three controls on nine datasets (Table~\ref{tab:mechanism-main}).
The largest control advantage in this table is 0.72 points for Shuffled
groups on Squirrel. These descriptive comparisons support the value of
decomposition and resampling across much of the panel, while showing
that neither is uniformly beneficial. The retention sweep further
examines where the Frozen gap is largest.

Table~\ref{tab:mechanism-main} extends the main-text mechanism comparison (Figure~\ref{fig:mechanism-profile}) to the 14 datasets. We fix the GCN backbone, MaxCF selector, and cosine scores at 30\% retention. Frozen reuses one sampled graph. Direct picks edges by score without forming subgraphs. Shuffled groups keeps subgraph sizes but scrambles which edges belong
together, testing the contribution of group membership at matched
subgraph sizes. Figure~\ref{fig:mechanism-heatmap} extends the comparison across the full measured retention range, with each cell showing the gap between EDiS-Lite and one control, at one dataset and one ratio. This shows where each control is weakest, not just its average, and a heatmap fits three controls times twelve ratios in one view better than a line per dataset would.

\begin{table}[H]
\centering
\footnotesize
\setlength{\tabcolsep}{2pt}
\renewcommand{\arraystretch}{1.02}
\begin{tabular*}{\linewidth}{@{}l@{\extracolsep{\fill}}cccc@{}}
\toprule
\textbf{Dataset} & \textbf{\edis{}-Lite} & \textbf{Frozen} & \textbf{Direct} & \textbf{Shuffled groups} \\
\midrule
Cora & \cellcolor[gray]{0.95}\underline{77.16}{\scriptsize$\pm$0.98} & 74.46{\scriptsize$\pm$1.00} & \cellcolor[rgb]{0.88,0.93,0.98}\textbf{77.44}{\scriptsize$\pm$1.21} & 76.46{\scriptsize$\pm$1.98} \\
\rowcolor[gray]{0.97}
CiteSeer & \cellcolor[gray]{0.95}\underline{66.08}{\scriptsize$\pm$1.23} & 65.24{\scriptsize$\pm$0.67} & \cellcolor[rgb]{0.88,0.93,0.98}\textbf{66.64}{\scriptsize$\pm$0.79} & 65.96{\scriptsize$\pm$1.05} \\
PubMed & \cellcolor[rgb]{0.88,0.93,0.98}\textbf{77.28}{\scriptsize$\pm$0.57} & \cellcolor[gray]{0.95}\underline{76.40}{\scriptsize$\pm$0.77} & 74.52{\scriptsize$\pm$0.72} & 73.88{\scriptsize$\pm$1.63} \\
\rowcolor[gray]{0.97}
Amazon-Computer & \cellcolor[rgb]{0.88,0.93,0.98}\textbf{93.35}{\scriptsize$\pm$0.21} & \cellcolor[gray]{0.95}\underline{92.96}{\scriptsize$\pm$0.32} & 91.44{\scriptsize$\pm$0.19} & 91.14{\scriptsize$\pm$0.26} \\
Amazon-Photo & \cellcolor[rgb]{0.88,0.93,0.98}\textbf{95.60}{\scriptsize$\pm$0.15} & \cellcolor[gray]{0.95}\underline{95.06}{\scriptsize$\pm$0.59} & 95.01{\scriptsize$\pm$0.39} & 94.94{\scriptsize$\pm$0.38} \\
\rowcolor[gray]{0.97}
Coauthor-CS & \cellcolor[rgb]{0.88,0.93,0.98}\textbf{95.72}{\scriptsize$\pm$0.06} & \cellcolor[gray]{0.95}\underline{95.59}{\scriptsize$\pm$0.05} & 94.97{\scriptsize$\pm$0.22} & 95.05{\scriptsize$\pm$0.33} \\
Coauthor-Physics & \cellcolor[gray]{0.95}\underline{97.06}{\scriptsize$\pm$0.09} & \cellcolor[rgb]{0.88,0.93,0.98}\textbf{97.10}{\scriptsize$\pm$0.11} & 96.74{\scriptsize$\pm$0.09} & 96.84{\scriptsize$\pm$0.08} \\
\rowcolor[gray]{0.97}
WikiCS & \cellcolor[rgb]{0.88,0.93,0.98}\textbf{80.25}{\scriptsize$\pm$0.79} & \cellcolor[gray]{0.95}\underline{79.98}{\scriptsize$\pm$0.35} & 77.47{\scriptsize$\pm$0.49} & 77.54{\scriptsize$\pm$0.56} \\
Squirrel & 42.63{\scriptsize$\pm$2.04} & 41.40{\scriptsize$\pm$2.08} & \cellcolor[gray]{0.95}\underline{43.13}{\scriptsize$\pm$1.12} & \cellcolor[rgb]{0.88,0.93,0.98}\textbf{43.35}{\scriptsize$\pm$2.37} \\
\rowcolor[gray]{0.97}
Chameleon & \cellcolor[rgb]{0.88,0.93,0.98}\textbf{43.75}{\scriptsize$\pm$4.14} & \cellcolor[gray]{0.95}\underline{43.73}{\scriptsize$\pm$3.19} & 42.14{\scriptsize$\pm$1.98} & 39.79{\scriptsize$\pm$3.50} \\
Roman-Empire & \cellcolor[rgb]{0.88,0.93,0.98}\textbf{68.58}{\scriptsize$\pm$0.12} & 67.34{\scriptsize$\pm$0.49} & 68.38{\scriptsize$\pm$0.59} & \cellcolor[gray]{0.95}\underline{68.54}{\scriptsize$\pm$0.38} \\
\rowcolor[gray]{0.97}
Amazon-Ratings & \cellcolor[rgb]{0.88,0.93,0.98}\textbf{52.31}{\scriptsize$\pm$0.85} & 50.03{\scriptsize$\pm$0.81} & \cellcolor[gray]{0.95}\underline{52.15}{\scriptsize$\pm$0.57} & 51.72{\scriptsize$\pm$0.39} \\
Minesweeper$^\dagger$ & \cellcolor[gray]{0.95}\underline{73.82}{\scriptsize$\pm$0.97} & 69.13{\scriptsize$\pm$0.97} & 73.15{\scriptsize$\pm$0.69} & \cellcolor[rgb]{0.88,0.93,0.98}\textbf{74.20}{\scriptsize$\pm$0.44} \\
\rowcolor[gray]{0.97}
Questions$^\dagger$ & \cellcolor[rgb]{0.88,0.93,0.98}\textbf{74.12}{\scriptsize$\pm$0.88} & \cellcolor[gray]{0.95}\underline{73.59}{\scriptsize$\pm$0.66} & 72.21{\scriptsize$\pm$0.95} & 72.38{\scriptsize$\pm$0.73} \\
\bottomrule
\end{tabular*}
\caption{Mechanism ablation for \edis{}-Lite at $\rho=0.30$ with GCN, MaxCF, and cosine scores. Values are mean $\pm$ standard deviation (\%). $^\dagger$Reported as ROC-AUC rather than accuracy. Best and second-best means are bold (blue) and underlined (gray).}
\label{tab:mechanism-main}
\end{table}

\begin{figure}[!htbp]
\centering
\includegraphics[width=\linewidth]{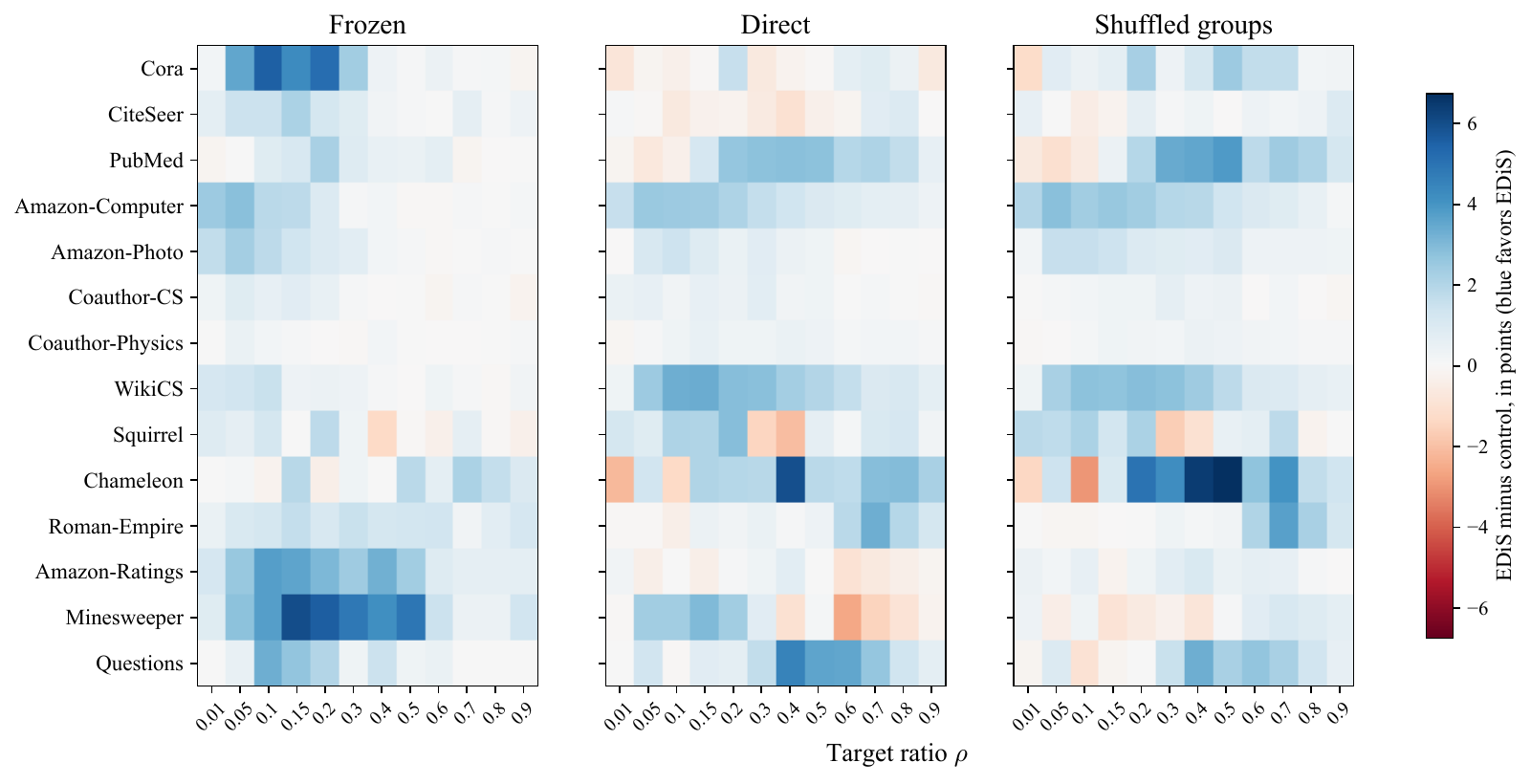}
\caption{\textbf{Mechanism ablation across the full retention range.} GCN backbone, MaxCF selector, cosine scores, all 14 datasets. Each cell is EDiS-Lite minus the named control, in accuracy or ROC-AUC points. Blue means EDiS-Lite wins.}
\label{fig:mechanism-heatmap}
\end{figure}

\clearpage
\subsection{Extended inference-graph ablation}
\label{app:inference-profile}

Evaluating on the full graph generally recovers some accuracy over EDiS-Lite, but how much depends heavily on the dataset. The gain is largest at low retention and shrinks toward zero as retention grows, since Sparse and Full become the same graph there. EDiS still evaluates on the sparse graph by default, which gives up some accuracy but keeps the method usable on very large graphs. Switching only the evaluation graph does not always close the gap to the independently-tuned Full GCN reference. EDiS-Full closely tracks that reference on several datasets (e.g., Coauthor-Physics, Amazon-Photo, WikiCS, Coauthor-CS), even slightly exceeding it on three of them, but a double-digit gap remains on Roman-Empire and Minesweeper, where the sparse training graph itself costs real accuracy.

The main text evaluates EDiS-Lite on the same sparse graph it trains on, since full-graph inference is often infeasible at the scales this method targets. Table~\ref{tab:inference-main} quantifies what that choice costs, comparing evaluation protocols for the 14 datasets. We fix GCN, MaxCF, and cosine scores at 30\% training retention. EDiS-Lite trains and evaluates on the sparse graph. EDiS-Full trains on the same sparse graph but evaluates on the full graph, isolating what the evaluation graph contributes on its own. Full GCN trains and evaluates on the full graph throughout, as a ratio-independent reference. Figure~\ref{fig:inference-graph-compact} extends this comparison across the full measured retention range, with each dataset given its own $y$-scale since the recovered amount varies by more than $40\times$ across datasets.

\begin{table}[H]
\centering
\footnotesize
\setlength{\tabcolsep}{2pt}
\renewcommand{\arraystretch}{1.02}
\begin{tabular*}{\linewidth}{@{}l@{\extracolsep{\fill}}cccc@{}}
\toprule
\textbf{Dataset} & \textbf{\edis{}-Lite} & \textbf{\edis{}-Full} & \textbf{Full GCN (ref.)} & \textbf{Gap (Full$-$Lite)} \\
\midrule
Cora & 77.16{\scriptsize$\pm$0.98} & 84.08{\scriptsize$\pm$0.83} & 84.26{\scriptsize$\pm$0.66} & $+6.92$ \\
\rowcolor[gray]{0.95}
CiteSeer & 66.08{\scriptsize$\pm$1.23} & 70.88{\scriptsize$\pm$0.78} & 72.50{\scriptsize$\pm$0.36} & $+4.80$ \\
PubMed & 77.28{\scriptsize$\pm$0.57} & 80.20{\scriptsize$\pm$0.46} & 80.76{\scriptsize$\pm$0.55} & $+2.92$ \\
\rowcolor[gray]{0.95}
Amazon-Computer & 93.35{\scriptsize$\pm$0.21} & 93.59{\scriptsize$\pm$0.05} & 93.70{\scriptsize$\pm$0.31} & $+0.24$ \\
Amazon-Photo & 95.60{\scriptsize$\pm$0.15} & 96.14{\scriptsize$\pm$0.19} & 95.95{\scriptsize$\pm$0.14} & $+0.54$ \\
\rowcolor[gray]{0.95}
Coauthor-CS & 95.72{\scriptsize$\pm$0.06} & 95.98{\scriptsize$\pm$0.15} & 95.61{\scriptsize$\pm$0.36} & $+0.26$ \\
Coauthor-Physics & 97.06{\scriptsize$\pm$0.09} & 97.45{\scriptsize$\pm$0.03} & 97.51{\scriptsize$\pm$0.07} & $+0.39$ \\
\rowcolor[gray]{0.95}
WikiCS & 80.25{\scriptsize$\pm$0.79} & 80.39{\scriptsize$\pm$0.29} & 80.57{\scriptsize$\pm$0.06} & $+0.14$ \\
Squirrel & 42.63{\scriptsize$\pm$2.04} & 43.26{\scriptsize$\pm$2.43} & 44.79{\scriptsize$\pm$1.82} & $+0.63$ \\
\rowcolor[gray]{0.95}
Chameleon & 43.75{\scriptsize$\pm$4.14} & 43.13{\scriptsize$\pm$2.53} & 44.67{\scriptsize$\pm$3.84} & $-0.62$ \\
Roman-Empire & 68.58{\scriptsize$\pm$0.12} & 75.00{\scriptsize$\pm$0.51} & 90.74{\scriptsize$\pm$0.03} & $+6.42$ \\
\rowcolor[gray]{0.95}
Amazon-Ratings & 52.31{\scriptsize$\pm$0.85} & 53.16{\scriptsize$\pm$0.68} & 53.52{\scriptsize$\pm$0.42} & $+0.85$ \\
Minesweeper$^\dagger$ & 73.82{\scriptsize$\pm$0.97} & 82.62{\scriptsize$\pm$1.09} & 97.49{\scriptsize$\pm$0.18} & $+8.80$ \\
\rowcolor[gray]{0.95}
Questions$^\dagger$ & 74.12{\scriptsize$\pm$0.88} & 77.70{\scriptsize$\pm$0.53} & 78.49{\scriptsize$\pm$0.18} & $+3.58$ \\
\bottomrule
\end{tabular*}
\caption{Inference-graph ablation for \edis{} at $\rho=0.30$ (GCN, MaxCF, cosine). \textbf{\edis{}-Lite} trains and evaluates on the sparse graph. \textbf{\edis{}-Full} trains on the sparse graph but evaluates on the full graph. \textbf{Full GCN} trains and evaluates on the full graph. Gap is \edis{}-Full minus \edis{}-Lite. $^\dagger$ROC-AUC rather than accuracy.}
\label{tab:inference-main}
\end{table}

\begin{figure}[!htbp]
\centering
\includegraphics[width=0.95\linewidth]{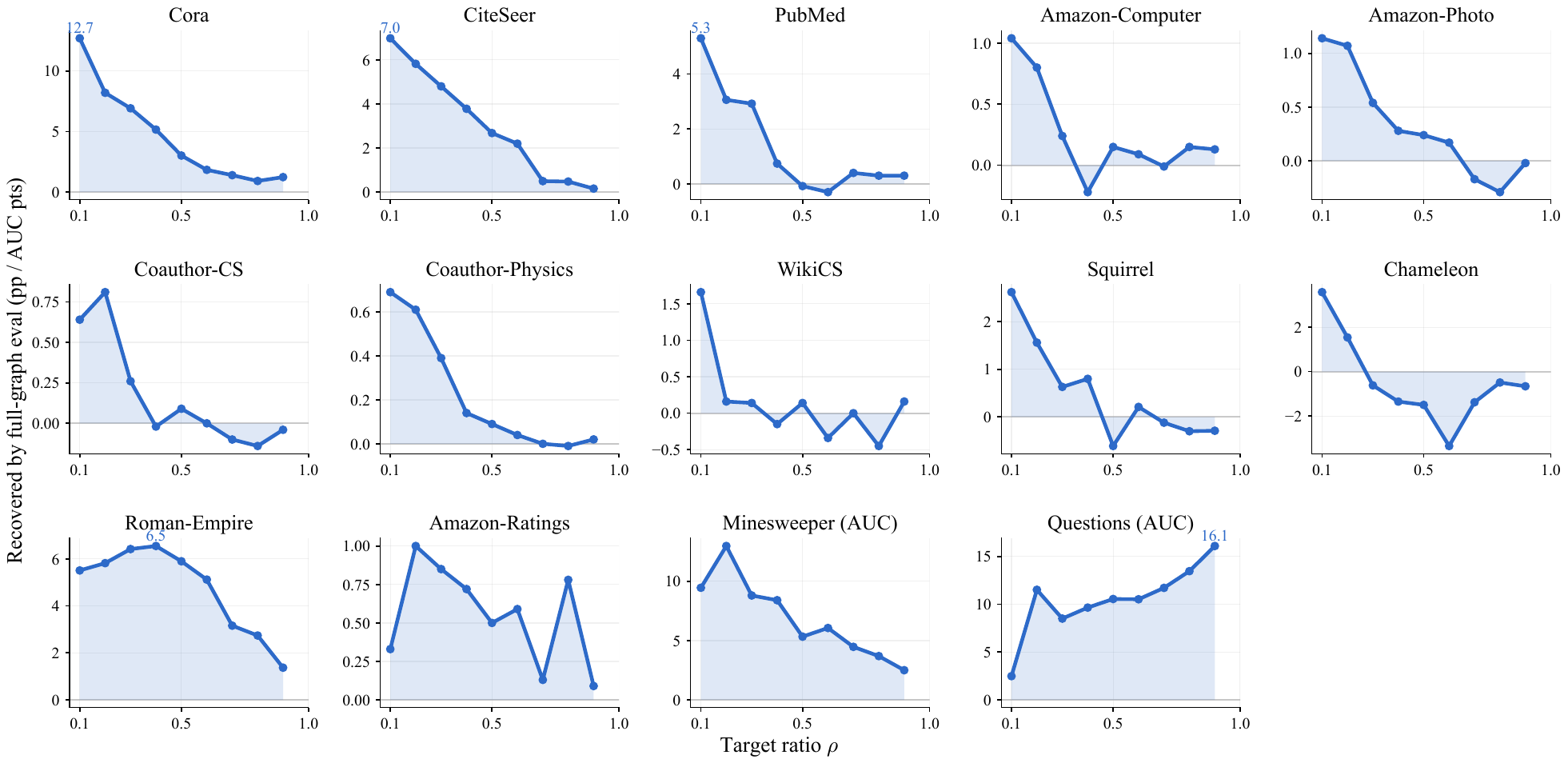}
\caption{\textbf{How much full-graph evaluation recovers over EDiS-Lite, across the full retention range.} GCN backbone, MaxCF selector, cosine scores. One dataset per panel, each with its own $y$-scale since the recovered amount varies by more than $40\times$ across datasets.}
\label{fig:inference-graph-compact}
\end{figure}

\clearpage
\subsection{Extended backbone comparison}
\label{app:backbone-profile}

EDiS is a framework that works with any backbone, not just GCN or GraphSAGE, and neither backbone dominates across this panel. How well EDiS performs on a given dataset depends more on how expressive that backbone is for the dataset, a property of the backbone's own architecture, than on the sparsification procedure itself. EDiS simply plugs into whichever backbone a task calls for and inherits its expressivity, rather than being tied to one. The largest gap between backbones appears on Squirrel, a few points at most, while most of the panel differs by a point or less, so the two backbones are close substitutes almost everywhere despite that occasional larger split. The stronger backbone does not track homophily in any simple way. GCN wins on some of the most heterophilic datasets in the panel (Squirrel, Chameleon) and some of the most homophilic (Cora, Coauthor-Physics) alike, so backbone choice reflects the architecture-dataset match itself rather than a graph statistic we could read off in advance.

Table~\ref{tab:backbone-main} extends the main-text backbone comparison to the 14 datasets, at $\rho\in\{0.30,0.50,0.70\}$. We fix MaxCF and cosine scores and vary only the backbone. Both training and evaluation use the sparse graph. Figure~\ref{fig:backbone-trends-full} extends the comparison across the full measured retention range, one line per backbone per dataset.

\begin{table}[H]
\centering
\footnotesize
\setlength{\tabcolsep}{2pt}
\renewcommand{\arraystretch}{1.02}
\begin{tabular*}{\linewidth}{@{}l@{\extracolsep{\fill}}cccccc@{}}
\toprule
& \multicolumn{2}{c}{$\rho=0.30$} & \multicolumn{2}{c}{$\rho=0.50$} & \multicolumn{2}{c}{$\rho=0.70$} \\
\cmidrule(lr){2-3}\cmidrule(lr){4-5}\cmidrule(lr){6-7}
Dataset & \textbf{GCN} & \textbf{GraphSAGE} & \textbf{GCN} & \textbf{GraphSAGE} & \textbf{GCN} & \textbf{GraphSAGE} \\
\midrule
Cora & \textbf{77.16{\scriptsize$\pm$0.98}} & 75.34{\scriptsize$\pm$0.91} & \textbf{80.28{\scriptsize$\pm$0.75}} & 78.90{\scriptsize$\pm$1.03} & \textbf{82.88{\scriptsize$\pm$0.50}} & 81.32{\scriptsize$\pm$0.34} \\
\rowcolor[gray]{0.95}
CiteSeer & \textbf{66.08{\scriptsize$\pm$1.23}} & 64.54{\scriptsize$\pm$1.51} & \textbf{68.20{\scriptsize$\pm$0.61}} & 66.72{\scriptsize$\pm$0.87} & \textbf{70.64{\scriptsize$\pm$0.84}} & 69.30{\scriptsize$\pm$0.78} \\
PubMed & \textbf{77.28{\scriptsize$\pm$0.57}} & 75.36{\scriptsize$\pm$1.52} & \textbf{80.32{\scriptsize$\pm$0.72}} & 77.52{\scriptsize$\pm$1.03} & \textbf{80.62{\scriptsize$\pm$0.54}} & 77.50{\scriptsize$\pm$1.56} \\
\rowcolor[gray]{0.95}
Amazon-Computer & \textbf{93.35{\scriptsize$\pm$0.21}} & 92.84{\scriptsize$\pm$0.30} & \textbf{93.60{\scriptsize$\pm$0.14}} & 92.98{\scriptsize$\pm$0.18} & \textbf{93.88{\scriptsize$\pm$0.11}} & 92.73{\scriptsize$\pm$0.09} \\
Amazon-Photo & 95.60{\scriptsize$\pm$0.15} & \textbf{96.71{\scriptsize$\pm$0.08}} & 95.82{\scriptsize$\pm$0.19} & \textbf{96.38{\scriptsize$\pm$0.08}} & 95.88{\scriptsize$\pm$0.05} & \textbf{96.56{\scriptsize$\pm$0.20}} \\
\rowcolor[gray]{0.95}
Coauthor-CS & 95.72{\scriptsize$\pm$0.06} & \textbf{96.06{\scriptsize$\pm$0.12}} & 95.86{\scriptsize$\pm$0.15} & \textbf{96.11{\scriptsize$\pm$0.09}} & 95.95{\scriptsize$\pm$0.11} & \textbf{96.30{\scriptsize$\pm$0.08}} \\
Coauthor-Physics & \textbf{97.06{\scriptsize$\pm$0.09}} & 96.98{\scriptsize$\pm$0.05} & \textbf{97.34{\scriptsize$\pm$0.05}} & 97.18{\scriptsize$\pm$0.03} & \textbf{97.47{\scriptsize$\pm$0.07}} & 97.08{\scriptsize$\pm$0.17} \\
\rowcolor[gray]{0.95}
WikiCS & \textbf{80.25{\scriptsize$\pm$0.79}} & 79.97{\scriptsize$\pm$0.32} & 80.19{\scriptsize$\pm$0.33} & \textbf{80.76{\scriptsize$\pm$0.48}} & 80.17{\scriptsize$\pm$0.43} & \textbf{80.69{\scriptsize$\pm$0.30}} \\
Squirrel & \textbf{42.63{\scriptsize$\pm$2.04}} & 39.30{\scriptsize$\pm$2.29} & \textbf{44.50{\scriptsize$\pm$1.69}} & 40.37{\scriptsize$\pm$2.48} & \textbf{44.15{\scriptsize$\pm$1.65}} & 40.10{\scriptsize$\pm$2.19} \\
\rowcolor[gray]{0.95}
Chameleon & \textbf{43.75{\scriptsize$\pm$4.14}} & 42.21{\scriptsize$\pm$4.98} & \textbf{45.64{\scriptsize$\pm$2.72}} & 42.77{\scriptsize$\pm$4.50} & \textbf{44.88{\scriptsize$\pm$4.25}} & 43.53{\scriptsize$\pm$5.64} \\
Roman-Empire & \textbf{68.95{\scriptsize$\pm$0.41}} & 68.94{\scriptsize$\pm$0.28} & 73.75{\scriptsize$\pm$0.48} & \textbf{73.93{\scriptsize$\pm$0.40}} & 83.03{\scriptsize$\pm$0.32} & \textbf{83.35{\scriptsize$\pm$0.05}} \\
\rowcolor[gray]{0.95}
Amazon-Ratings & 52.31{\scriptsize$\pm$0.85} & \textbf{53.19{\scriptsize$\pm$0.36}} & 53.62{\scriptsize$\pm$0.58} & \textbf{54.37{\scriptsize$\pm$0.31}} & 53.14{\scriptsize$\pm$0.27} & \textbf{54.58{\scriptsize$\pm$0.44}} \\
Minesweeper$^\dagger$ & 73.82{\scriptsize$\pm$0.97} & \textbf{74.06{\scriptsize$\pm$0.92}} & \textbf{82.21{\scriptsize$\pm$0.60}} & 81.93{\scriptsize$\pm$0.26} & 85.37{\scriptsize$\pm$0.59} & \textbf{85.51{\scriptsize$\pm$0.54}} \\
\rowcolor[gray]{0.95}
Questions$^\dagger$ & \textbf{74.12{\scriptsize$\pm$0.88}} & 72.72{\scriptsize$\pm$0.68} & \textbf{76.99{\scriptsize$\pm$0.54}} & 74.87{\scriptsize$\pm$0.86} & \textbf{78.13{\scriptsize$\pm$0.28}} & 75.28{\scriptsize$\pm$0.81} \\
\bottomrule
\end{tabular*}
\caption{Backbone-generalization ablation for \edis{}-Lite with MaxCF and cosine scores, at $\rho\in\{0.30,0.50,0.70\}$. Values are mean $\pm$ standard deviation (\%). $^\dagger$Reported as ROC-AUC rather than accuracy. The higher-scoring backbone at each ratio is bold. No single backbone dominates across the panel.}
\label{tab:backbone-main}
\end{table}

\begin{figure}[!htbp]
\centering
\includegraphics[width=\linewidth]{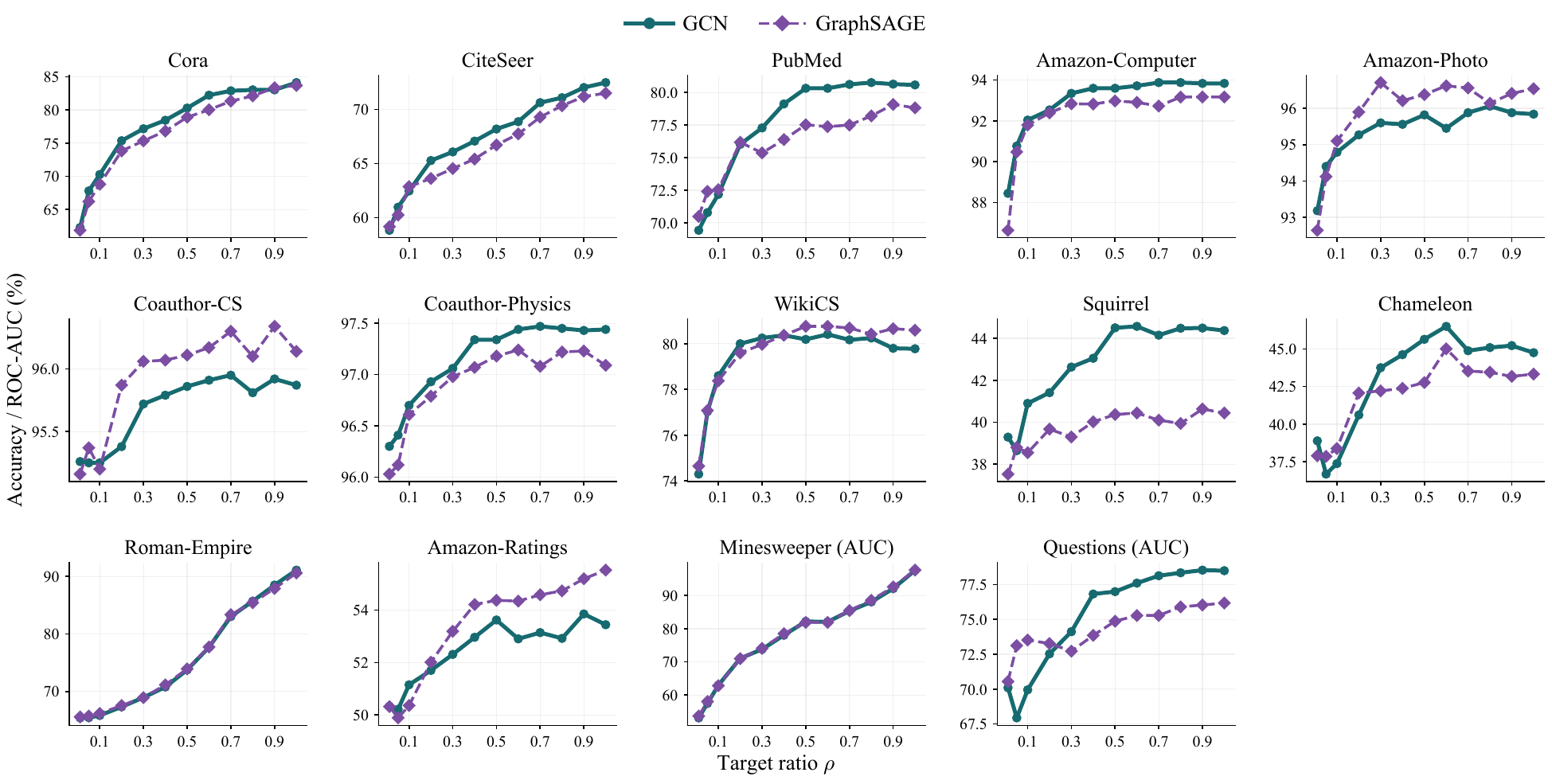}
\caption{\textbf{Backbone comparison across the full retention range.} MaxCF selector, cosine scores, all 14 datasets. GCN is drawn heavier since it is the backbone used everywhere else in this paper.}
\label{fig:backbone-trends-full}
\end{figure}

\clearpage
\subsection{Selector and score robustness under GraphSAGE}
\label{app:sage-robustness}

The same broad pattern holds under GraphSAGE as under GCN. No single selector or edge score wins on every dataset, and the defaults remain reasonable choices rather than the best possible pick everywhere, though the specific winners are not identical across backbones. Degree and $k$-NN never win outright under GCN anywhere in this appendix, yet under GraphSAGE Degree wins on three datasets (Coauthor-Physics, WikiCS, Minesweeper) and $k$-NN wins one (Roman-Empire), so which selectors are merely acceptable versus genuinely competitive is itself backbone-dependent. Minesweeper's large MLP advantage, flagged for GCN earlier in this appendix, also appears under GraphSAGE, so it is a property of the dataset rather than the backbone.

RQ5 and RQ7 each vary one axis of the pipeline while holding the other fixed, and this section checks whether the two interact. If GraphSAGE preferred a different selector or score than GCN does, RQ5's conclusion would only hold for the one backbone it was tested under. Tables~\ref{tab:selector-main} and~\ref{tab:score-main} show that no single selector or edge score dominates under GCN. Tables~\ref{tab:selector-sage-main} and~\ref{tab:score-sage-main} repeat that check under GraphSAGE, at $\rho=0.30$. The selector table fixes cosine scores. The score table fixes MaxCF. The PubMed--MLP score is marked N/A because its training split cannot supply the balanced labeled-edge sample required to train the scorer.

\begin{table}[H]
\centering
\footnotesize
\setlength{\tabcolsep}{2pt}
\renewcommand{\arraystretch}{1.02}
\begin{tabular*}{\linewidth}{@{}l@{\extracolsep{\fill}}ccccccc@{}}
\toprule
Dataset & \shortstack{MaxCF\\(default)} & MinCF & Degree & $k$-NN & Spanner & Random & Hybrid \\
\midrule
Cora & \underline{75.34}{\scriptsize$\pm$0.91} & \textbf{75.42}{\scriptsize$\pm$1.22} & 75.14{\scriptsize$\pm$0.87} & 73.96{\scriptsize$\pm$0.98} & 75.26{\scriptsize$\pm$1.10} & 74.38{\scriptsize$\pm$1.14} & 74.64{\scriptsize$\pm$1.29} \\
\rowcolor[gray]{0.95}
CiteSeer & 64.54{\scriptsize$\pm$1.51} & \underline{65.62}{\scriptsize$\pm$0.35} & 65.18{\scriptsize$\pm$0.92} & 64.54{\scriptsize$\pm$1.33} & \textbf{66.36}{\scriptsize$\pm$0.70} & 65.32{\scriptsize$\pm$0.71} & 64.54{\scriptsize$\pm$0.69} \\
PubMed & 75.36{\scriptsize$\pm$1.52} & \textbf{75.62}{\scriptsize$\pm$0.39} & \underline{75.54}{\scriptsize$\pm$2.02} & 73.86{\scriptsize$\pm$1.26} & 75.26{\scriptsize$\pm$1.20} & 73.60{\scriptsize$\pm$1.01} & 75.32{\scriptsize$\pm$1.76} \\
\rowcolor[gray]{0.95}
Amazon-Computer & \underline{92.84}{\scriptsize$\pm$0.30} & \textbf{93.01}{\scriptsize$\pm$0.23} & 92.62{\scriptsize$\pm$0.10} & 91.37{\scriptsize$\pm$0.48} & 91.62{\scriptsize$\pm$0.15} & 91.37{\scriptsize$\pm$0.57} & 92.18{\scriptsize$\pm$0.29} \\
Amazon-Photo & \textbf{96.71}{\scriptsize$\pm$0.08} & 96.12{\scriptsize$\pm$0.16} & \underline{96.41}{\scriptsize$\pm$0.33} & 94.71{\scriptsize$\pm$0.14} & 95.48{\scriptsize$\pm$0.22} & 95.27{\scriptsize$\pm$0.21} & 95.88{\scriptsize$\pm$0.29} \\
\rowcolor[gray]{0.95}
Coauthor-CS & \textbf{96.06}{\scriptsize$\pm$0.12} & 95.55{\scriptsize$\pm$0.15} & 95.55{\scriptsize$\pm$0.05} & 95.72{\scriptsize$\pm$0.34} & 95.65{\scriptsize$\pm$0.25} & 95.40{\scriptsize$\pm$0.09} & \underline{95.91}{\scriptsize$\pm$0.25} \\
Coauthor-Physics & \underline{96.98}{\scriptsize$\pm$0.05} & 96.75{\scriptsize$\pm$0.13} & \textbf{97.01}{\scriptsize$\pm$0.02} & 96.70{\scriptsize$\pm$0.08} & 96.64{\scriptsize$\pm$0.09} & 96.61{\scriptsize$\pm$0.07} & 96.97{\scriptsize$\pm$0.17} \\
\rowcolor[gray]{0.95}
WikiCS & \underline{79.97}{\scriptsize$\pm$0.32} & 79.38{\scriptsize$\pm$0.53} & \textbf{80.05}{\scriptsize$\pm$0.40} & 79.07{\scriptsize$\pm$0.16} & 78.36{\scriptsize$\pm$0.47} & 78.11{\scriptsize$\pm$0.29} & 78.73{\scriptsize$\pm$0.56} \\
Squirrel & 39.30{\scriptsize$\pm$2.29} & \underline{40.38}{\scriptsize$\pm$2.01} & 38.97{\scriptsize$\pm$2.02} & 38.96{\scriptsize$\pm$2.12} & \textbf{40.42}{\scriptsize$\pm$1.37} & 39.85{\scriptsize$\pm$1.00} & 39.14{\scriptsize$\pm$1.51} \\
\rowcolor[gray]{0.95}
Chameleon & \underline{42.21}{\scriptsize$\pm$4.98} & \textbf{42.41}{\scriptsize$\pm$4.74} & 40.49{\scriptsize$\pm$6.17} & 41.00{\scriptsize$\pm$4.42} & 40.00{\scriptsize$\pm$6.78} & 40.28{\scriptsize$\pm$4.23} & 41.56{\scriptsize$\pm$7.15} \\
Roman-Empire & 68.94{\scriptsize$\pm$0.28} & \underline{69.25}{\scriptsize$\pm$0.64} & 69.02{\scriptsize$\pm$0.33} & \textbf{69.31}{\scriptsize$\pm$0.19} & 68.88{\scriptsize$\pm$0.85} & 68.88{\scriptsize$\pm$0.28} & 68.98{\scriptsize$\pm$0.55} \\
\rowcolor[gray]{0.95}
Amazon-Ratings & \textbf{53.19}{\scriptsize$\pm$0.36} & \underline{53.09}{\scriptsize$\pm$0.26} & 51.96{\scriptsize$\pm$0.33} & 52.46{\scriptsize$\pm$0.35} & 52.16{\scriptsize$\pm$0.53} & 52.96{\scriptsize$\pm$0.35} & 52.76{\scriptsize$\pm$0.33} \
 \\
Minesweeper$^\dagger$ & 74.06{\scriptsize$\pm$0.92} & 75.00{\scriptsize$\pm$0.60} & \textbf{76.25}{\scriptsize$\pm$0.70} & 73.32{\scriptsize$\pm$0.18} & 75.03{\scriptsize$\pm$0.94} & \underline{75.84}{\scriptsize$\pm$1.01} & 73.53{\scriptsize$\pm$1.81} \\
\rowcolor[gray]{0.95}
Questions$^\dagger$ & 72.72{\scriptsize$\pm$0.68} & \textbf{74.65}{\scriptsize$\pm$1.23} & \underline{73.93}{\scriptsize$\pm$0.49} & 72.39{\scriptsize$\pm$0.96} & 72.91{\scriptsize$\pm$0.96} & 73.59{\scriptsize$\pm$0.86} & 72.84{\scriptsize$\pm$0.72} \\
\bottomrule
\end{tabular*}
\caption{Selector ablation for \edis{}-Lite under GraphSAGE (cosine scores, $\rho=0.30$). Values are mean $\pm$ standard deviation (\%). $^\dagger$Reported as ROC-AUC rather than accuracy. Best and second-best reported means are bold and underlined, respectively.}
\label{tab:selector-sage-main}
\end{table}

\begin{table}[H]
\centering
\footnotesize
\setlength{\tabcolsep}{2pt}
\renewcommand{\arraystretch}{1.02}
\begin{tabular*}{\linewidth}{@{}l@{\extracolsep{\fill}}ccccc@{}}
\toprule
Dataset & \shortstack{Cosine\\(default)} & Euclidean & Dot & Uniform & MLP \\
\midrule
Cora & 75.34{\scriptsize$\pm$0.91} & 74.82{\scriptsize$\pm$1.42} & 74.82{\scriptsize$\pm$0.37} & \underline{75.48}{\scriptsize$\pm$1.65} & \textbf{75.68}{\scriptsize$\pm$1.04} \\
\rowcolor[gray]{0.95}
CiteSeer & 64.54{\scriptsize$\pm$1.51} & \textbf{65.82}{\scriptsize$\pm$0.99} & 64.32{\scriptsize$\pm$0.48} & 63.76{\scriptsize$\pm$2.00} & \underline{65.56}{\scriptsize$\pm$1.07} \\
PubMed & 75.36{\scriptsize$\pm$1.52} & \textbf{75.80}{\scriptsize$\pm$2.34} & 75.16{\scriptsize$\pm$1.74} & \underline{75.56}{\scriptsize$\pm$1.77} & -- \\
\rowcolor[gray]{0.95}
Amazon-Computer & 92.84{\scriptsize$\pm$0.30} & 92.29{\scriptsize$\pm$0.15} & \textbf{93.38}{\scriptsize$\pm$0.14} & \underline{93.37}{\scriptsize$\pm$0.27} & 92.94{\scriptsize$\pm$0.16} \\
Amazon-Photo & \textbf{96.71}{\scriptsize$\pm$0.08} & 96.32{\scriptsize$\pm$0.12} & \underline{96.58}{\scriptsize$\pm$0.08} & 96.43{\scriptsize$\pm$0.08} & \underline{96.58}{\scriptsize$\pm$0.03} \\
\rowcolor[gray]{0.95}
Coauthor-CS & \textbf{96.06}{\scriptsize$\pm$0.12} & \underline{96.01}{\scriptsize$\pm$0.13} & 95.75{\scriptsize$\pm$0.10} & 95.70{\scriptsize$\pm$0.11} & 95.56{\scriptsize$\pm$0.09} \\
Coauthor-Physics & \underline{96.98}{\scriptsize$\pm$0.05} & \underline{96.98}{\scriptsize$\pm$0.08} & \textbf{96.99}{\scriptsize$\pm$0.08} & 96.96{\scriptsize$\pm$0.08} & 96.75{\scriptsize$\pm$0.09} \\
\rowcolor[gray]{0.95}
WikiCS & 79.97{\scriptsize$\pm$0.32} & 80.02{\scriptsize$\pm$0.20} & \underline{80.17}{\scriptsize$\pm$0.34} & \textbf{80.39}{\scriptsize$\pm$0.48} & 79.63{\scriptsize$\pm$0.33} \\
Squirrel & 39.30{\scriptsize$\pm$2.29} & \underline{40.65}{\scriptsize$\pm$1.51} & 40.01{\scriptsize$\pm$1.30} & \textbf{41.07}{\scriptsize$\pm$1.55} & 39.22{\scriptsize$\pm$2.21} \\
\rowcolor[gray]{0.95}
Chameleon & 42.21{\scriptsize$\pm$4.98} & 40.21{\scriptsize$\pm$4.71} & \textbf{43.56}{\scriptsize$\pm$5.47} & \underline{42.41}{\scriptsize$\pm$4.42} & 41.52{\scriptsize$\pm$4.34} \\
Roman-Empire & 68.94{\scriptsize$\pm$0.28} & 69.26{\scriptsize$\pm$0.16} & 69.30{\scriptsize$\pm$0.23} & \textbf{70.88}{\scriptsize$\pm$0.27} & \underline{69.76}{\scriptsize$\pm$0.52} \\
\rowcolor[gray]{0.95}
Amazon-Ratings & 53.19{\scriptsize$\pm$0.36} & \underline{53.49}{\scriptsize$\pm$0.50} & \textbf{53.70}{\scriptsize$\pm$0.68} & 53.48{\scriptsize$\pm$0.18} & 52.95{\scriptsize$\pm$0.39} \\
Minesweeper$^\dagger$ & 74.06{\scriptsize$\pm$0.92} & 74.20{\scriptsize$\pm$0.34} & 72.90{\scriptsize$\pm$0.57} & \underline{76.84}{\scriptsize$\pm$0.80} & \textbf{82.82}{\scriptsize$\pm$1.82} \\
\rowcolor[gray]{0.95}
Questions$^\dagger$ & 72.72{\scriptsize$\pm$0.68} & 71.99{\scriptsize$\pm$1.19} & \underline{73.03}{\scriptsize$\pm$0.89} & 72.93{\scriptsize$\pm$0.61} & \textbf{73.46}{\scriptsize$\pm$0.96} \\
\bottomrule
\end{tabular*}
\caption{Score-function ablation for \edis{}-Lite under GraphSAGE (MaxCF selector, $\rho=0.30$). Values are mean $\pm$ standard deviation (\%). $^\dagger$Reported as ROC-AUC rather than accuracy. Best and second-best reported means are bold and underlined, respectively. A dash marks PubMed--MLP, which is not applicable in this implementation.}
\label{tab:score-sage-main}
\end{table}

\clearpage
\subsection{Extraction depth on large graphs}
\label{app:kmax-sensitivity}

This subsection asks how much the round cap $K_{\max}$ matters once
training already uses a fixed edge-budget. $K_{\max}$ limits how
many subgraphs extraction builds, not how many edges training uses, so
a shallow cap could starve composition or change nothing at all.

Table~\ref{tab:kmax-main} extends the main-text extraction-depth comparison (RQ6, Figure~\ref{fig:kmax-sensitivity}) with the actual extracted depth and the construction-time breakdown behind it. We examine $K_{\max}\in\{5,10,20,40,80,160,320\}$ at 30\% retention on the five large graphs.

\begin{table}[H]
\centering
\footnotesize
\setlength{\tabcolsep}{3pt}
\renewcommand{\arraystretch}{1.05}
\begin{tabular*}{\linewidth}{@{}l@{\extracolsep{\fill}}ccccc@{}}
\toprule
Dataset & \shortstack{Saturation\\depth} & \shortstack{Best $K_{\max}$\\(Lite)} & \shortstack{Best $K_{\max}$\\(Full)} & \shortstack{Precompute\\range (s)} & \shortstack{Composition\\range (s)} \\
\midrule
Pokec & 67 & 5 & 80 & 47--86 & 410--438 \\
\rowcolor[gray]{0.95}
ogbn-products & 114 & 10 & 10 & 122--378 & 192--1155 \\
ogbn-proteins$^\dagger$ & 320 & 10 & 10 & 75--1081 & 40--477 \\
\rowcolor[gray]{0.95}
ogbn-arxiv & 43 & 320 & 80 & 2--3 & 12--12 \\
Reddit & 320 & 20 & 5 & 135--1906 & 196--1232 \\
\bottomrule
\end{tabular*}
\caption{Extraction-depth summary for the five large graphs at $\rho=0.30$. Saturation depth is the largest actual group count reached in the sweep. Best $K_{\max}$ is the cap with the highest reported mean under each evaluation protocol. Precompute and composition ranges span the Lite sweep from $K_{\max}=5$ to the largest measured cap. $^\dagger$Reported as ROC-AUC rather than accuracy.}
\label{tab:kmax-main}
\end{table}

Extraction saturates once the residual graph runs out of edges to
add, so increasing the cap does not necessarily deepen the
decomposition. Arxiv, Products, and Pokec saturate well below the
cap, at 43, 114, and 67 subgraphs, respectively, while Proteins and
Reddit exhaust the full cap of 320. Once a dataset saturates, its
stored decomposition no longer changes, so any accuracy variation
observed above that point, as on Arxiv, cannot be attributed to
additional extraction rounds.

Even where extraction does not saturate, deeper extraction shifts
cost from composition to precomputation rather than delivering an
end-to-end speedup. On Reddit, for instance, Lite precomputation
time rises from cap 5 to cap 320 while composition time falls.

To separate a shifting best-observed cap from practically meaningful
sensitivity, Table~\ref{tab:kmax-common-best} compares a common
$K_{\max}=20$ against each dataset's highest observed test mean
across the sweep, the latter reported only as a retrospective
reference, not a tuning target. On ogbn-arxiv, the best cap shifts
to $K=320$ for Lite and $K=80$ for Full, yet neither exceeds
$K=20$ by more than a third of a percentage point. On Reddit,
$K=20$ is already the best cap for Lite, leaving nothing on the
table. The largest gain in the entire sweep, on ogbn-proteins,
remains under one percentage point.
The identity of the best cap shifts considerably more than its
performance advantage does.

\begin{table}[H]
\centering
\footnotesize
\setlength{\tabcolsep}{4pt}
\renewcommand{\arraystretch}{1.05}
\begin{tabular*}{\linewidth}{@{}l@{\extracolsep{\fill}}c cc cc@{}}
\toprule
 & & \multicolumn{2}{c}{\textbf{\edis-Lite}} & \multicolumn{2}{c}{\textbf{\edis-Full}} \\
\cmidrule(lr){3-4}\cmidrule(lr){5-6}
Dataset & $K_{\mathrm{common}}$ & Best $K$ & $\Delta$ (pt) & Best $K$ & $\Delta$ (pt) \\
\midrule
ogbn-arxiv & 20 & 320 & $+0.27$ & 80 & $+0.32$ \\
\rowcolor[gray]{0.95}
ogbn-products & 20 & 10 & $+0.42$ & 10 & $+0.40$ \\
ogbn-proteins$^\dagger$ & 20 & 10 & $+0.59$ & 10 & $+0.83$ \\
\rowcolor[gray]{0.95}
Pokec & 20 & 5 & $+0.23$ & 80 & $+0.75$ \\
Reddit & 20 & 20 & $0.00$ & 5 & $+0.08$ \\
\bottomrule
\end{tabular*}
\caption{Common-$K$ vs.\ best observed $K$ at $\rho=0.3$ on the five
large graphs ($K_{\mathrm{common}}=20$). $\Delta$ is the best cap's
gain over the default, in points, and is a retrospective reference,
not a tuning rule. $^\dagger$ROC-AUC rather than accuracy.}
\label{tab:kmax-common-best}
\end{table}

\begin{keyquestion}
\textbf{Observation.} The extraction-depth argmax varies across datasets and inference protocols, but this overstates practical sensitivity. Across all five large graphs, the common $K_{\max}=20$ configuration remains within one percentage point of the best observed mean for both Lite and Full. $K_{\max}$ mainly trades precomputation time against composition time, not accuracy, so it needs no precise per-dataset tuning.
\end{keyquestion}

\clearpage
\subsection{Coverage, overlap, and decomposition reuse}
\label{app:coverage-reuse}

\captionsetup{font=footnotesize}
\setlength{\intextsep}{3pt}

Using the Frozen and Direct controls from the mechanism ablation at
$\rho=0.3$, we measure how training views cover the edge set over
time and how one extraction serves multiple retention budgets.

\underline{\textbf{Coverage and temporal overlap.}}\quad
Figure~\ref{fig:coverage-overlap} measures cumulative edge coverage,
the fraction of edges used at least once, and the Jaccard overlap
between consecutive epoch graphs. Direct and Frozen mark the two
extremes, one with full turnover and almost no overlap, the other
with one fixed graph and no turnover at all. \edis{} sits between
them in these measurements,
exploring nearly the whole edge set like Direct while keeping more
structure in common between epochs. In these runs it covers $99$--$100\%$ of
edges, close to Direct, but its Jaccard overlap ($0.22$--$0.34$) is
roughly double Direct's ($0.17$--$0.18$), still well short of
Frozen's trivial $1.0$.

\begin{figure}[H]
\centering
\includegraphics[width=\linewidth]{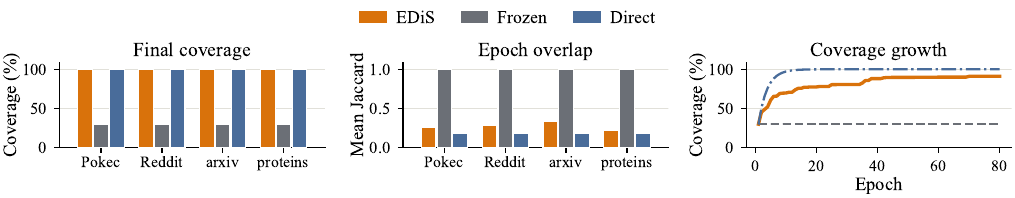}
\caption{\textbf{Coverage and temporal overlap at $\rho=0.3$.}
Cumulative edge coverage (left), mean consecutive-epoch Jaccard
(middle), and coverage over epochs for Pokec (right). Direct
maximizes turnover, Frozen fixes one graph, and \edis{} lies between
them, near-complete coverage with greater temporal persistence than
Direct.}
\label{fig:coverage-overlap}
\end{figure}

\edis{} does not maximize raw turnover. Direct explores at least as
broadly, but exploration alone is not sufficient. \edis{} exceeds
Direct on 11 of the 14 datasets (Appendix~\ref{app:mechanism-profile},
$\rho=0.3$), and Shuffled groups is worse despite matching \edis{}'s
subgraph sizes, so how edges are organized into subgraphs matters, not
just how many are seen. This analysis measures turnover from subgraph
resampling and trimming together and does not isolate the residual
pool's contribution. That separation is left to future work.

\underline{\textbf{Cost of dynamic composition.}}\quad Direct pays a repeated
cost because it resamples from the full edge set every epoch,
whereas \edis{} draws from a decomposition computed once. Training,
evaluation, and peak GPU memory stay essentially unchanged across
all three methods, isolating the difference to graph generation
alone. There, \edis{} costs roughly half of Direct's per-epoch
composition time ($1196$ ms against $2513$ ms) while still varying
the graph every epoch. Frozen is cheaper still, so
\edis{} costs more than a fixed graph but less than per-epoch
resampling.

\underline{\textbf{Decomposition reuse across edge-budgets.}}\quad \edis{} computes
its decomposition independently of the edge-budget, so once cached it
serves multiple $\rho$ without rerunning extraction
(Figure~\ref{fig:amortization}). Rebuilding separately for each of
$q$ edge-budgets grows construction cost linearly with $q$. \edis{} pays
it once, cutting cumulative cost by $5\times$ over five independent
rebuilds.

\begin{figure}[H]
\centering
\includegraphics[width=\linewidth]{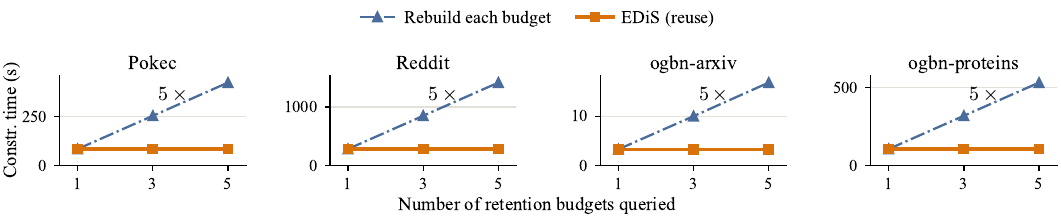}
\caption{\textbf{Amortization of decomposition across edge-budgets.}
\edis{} reuses one cached decomposition. The rebuild-each-edge-budget
curve is the cumulative cost of repeating extraction per edge-budget,
growing linearly, whereas \edis{} pays it once.}
\label{fig:amortization}
\end{figure}

\begin{keyquestion}
\textbf{Observation.} \edis{} is neither a fixed sparsifier nor a
maximizer of random edge turnover. In the profiled runs it covers nearly all edges over
training, keeps more overlap between consecutive epochs than Direct,
and costs less per epoch to compose. With the Frozen, Direct, and
Shuffled groups controls, performance depends not only on how many
distinct edges are seen but on how changing sparse views are
organized and reused.
\end{keyquestion}

\newpage

\section{Additional Experiments}
\label{app:gern_spangnn}

\subsection{Comparison with SpanGNN and GERN}
\label{app:spangnn_gern_sub}

Table~\ref{tab:relwork-comparison} (Appendix~\ref{app:related_work})
compares \edis{} against AGS-GNN \citep{Das:2024agsgnn}, SpanGNN
\citep{Gu:2024spangnn}, and GERN \citep{Bonchi:2025gern} qualitatively,
by budget control, reusability, multi-structure exposure, and
edge-disjointness. Here we compare SpanGNN and GERN quantitatively and
discuss our attempted AGS-GNN evaluation.

\underline{\textbf{AGS-GNN}} \citep{Das:2024agsgnn} samples edges by a
combination of attribute similarity and diversity, selecting diverse
yet locally similar neighborhoods through greedy submodular function
maximization. Its sampling objective differs from \edis{}'s
score-based composition, and  its published
version provides no analysis of edge-disjoint or reusable structural
properties.
We attempted to include AGS-GNN under the same matched-budget
protocol as our other 17 baselines, but its submodular diversity
step was extremely slow and failed to complete on our evaluation
hardware. AGS-GNN's own paper notes that this precomputation is
embarrassingly parallel and relies on scaling up the number of
worker threads to run in practice, a requirement outside the
modest budget we allocate uniformly across all 19 datasets and
three retention ratios. We omit AGS-GNN from
Table~\ref{tab:gern_spangnn}.

\underline{\textbf{SpanGNN}} \citep{Gu:2024spangnn} instead grows a single
spanning subgraph incrementally across epochs to reduce memory during
training, rather than composing a fresh sparse graph from reusable
structure each epoch. We adapt SpanGNN to our training framework and
compare it against \edis{} under matched training conditions.

\underline{\textbf{GERN}} \citep{Bonchi:2025gern} instead trains a GNN over a
sequence of random spanning trees, each converted into a path graph
and used as that epoch's sparse training structure, so it exposes the
model to many cheap, independently resampled sparse views rather than
a persistent, budget-controlled decomposition. We adapt GERN to our
training framework, running it with the same tuned GCN
backbone, optimizer, training protocol, and dataset splits as
\edis{}, isolating the sparsification and composition mechanism from
incidental training differences.

Both methods are \underline{\textbf{evaluated under full-graph inference}}, so this
table reads against \textbf{EDiS-Full}, not EDiS-Lite, on the five
datasets with SpanGNN and GERN results. Both reuse a sequence of
sparse structures across epochs, as \edis{} does, but neither
separates one-time decomposition from per-epoch composition
(Section~\ref{sec:RelatedWork}).

\vspace{0.5\baselineskip}
\begingroup
\definecolor{gernBand}{RGB}{235,235,235}
\newcommand{\gv}[2]{\shortstack{#1\\[-1pt]{\fontsize{5.5pt}{6pt}\selectfont\color{black!80}\textpm#2}}}
\begin{table}[h]
\centering
\setlength{\tabcolsep}{3pt}
\renewcommand{\arraystretch}{1.35}
\begin{adjustbox}{max width=\textwidth}
\small
\begin{tabular}{l ccc ccc ccc ccc ccc}
\toprule
& \multicolumn{3}{c}{Cora} & \multicolumn{3}{c}{CiteSeer} & \multicolumn{3}{c}{PubMed} & \multicolumn{3}{c}{WikiCS} & \multicolumn{3}{c}{Amazon-Ratings} \\
\cmidrule(lr){2-4}\cmidrule(lr){5-7}\cmidrule(lr){8-10}\cmidrule(lr){11-13}\cmidrule(lr){14-16}
Method & 0.3 & 0.5 & 0.7 & 0.3 & 0.5 & 0.7 & 0.3 & 0.5 & 0.7 & 0.3 & 0.5 & 0.7 & 0.3 & 0.5 & 0.7 \\
\midrule
SpanGNN
 & \cellcolor{gernBand}\gv{74.60}{0.76} & \cellcolor{gernBand}\gv{77.74}{1.19} & \cellcolor{gernBand}\gv{77.98}{1.66}
 & \gv{60.60}{2.20} & \gv{64.52}{1.55} & \gv{62.82}{0.64}
 & \cellcolor{gernBand}\gv{73.14}{0.55} & \cellcolor{gernBand}\gv{75.66}{0.89} & \cellcolor{gernBand}\gv{76.58}{0.87}
 & \gv{78.70}{0.22} & \gv{78.51}{0.06} & \gv{78.56}{0.31}
 & \cellcolor{gernBand}\gv{51.05}{0.48} & \cellcolor{gernBand}\gv{51.35}{0.28} & \cellcolor{gernBand}\gv{51.45}{0.50} \\
\midrule
GERN
 & \cellcolor{gernBand}\gv{80.46}{0.72} & \cellcolor{gernBand}\gv{80.68}{0.57} & \cellcolor{gernBand}\gv{81.10}{0.97}
 & \gv{68.22}{1.24} & \gv{69.34}{1.06} & \gv{70.06}{0.86}
 & \cellcolor{gernBand}\gv{75.88}{1.21} & \cellcolor{gernBand}\gv{76.00}{1.33} & \cellcolor{gernBand}\gv{75.58}{1.29}
 & \gv{80.37}{0.23} & \gv{\textbf{80.41}}{0.14} & \gv{\textbf{80.42}}{0.19}
 & \cellcolor{gernBand}\gv{46.33}{0.19} & \cellcolor{gernBand}\gv{49.35}{0.31} & \cellcolor{gernBand}\gv{51.04}{0.60} \\
\midrule
\textbf{EDiS-Full}
 & \cellcolor{gernBand}\gv{\textbf{84.08}}{0.83} & \cellcolor{gernBand}\gv{\textbf{83.30}}{0.61} & \cellcolor{gernBand}\gv{\textbf{84.28}}{0.55}
 & \gv{\textbf{70.88}}{0.78} & \gv{\textbf{70.88}}{0.47} & \gv{\textbf{71.14}}{0.29}
 & \cellcolor{gernBand}\gv{\textbf{80.20}}{0.46} & \cellcolor{gernBand}\gv{\textbf{80.24}}{0.61} & \cellcolor{gernBand}\gv{\textbf{81.02}}{0.38}
 & \gv{\textbf{80.39}}{0.29} & \gv{80.33}{0.32} & \gv{80.17}{0.40}
 & \cellcolor{gernBand}\gv{\textbf{53.16}}{0.68} & \cellcolor{gernBand}\gv{\textbf{54.12}}{0.77} & \cellcolor{gernBand}\gv{\textbf{53.27}}{0.84} \\
\bottomrule
\end{tabular}
\end{adjustbox}
\vspace{9pt}
\caption{\textbf{Accuracy (\%) under full-graph inference, EDiS-Full
vs.\ SpanGNN and GERN.} Mean above, standard deviation below, at
retained-edge fraction $\rho$ (columns), by method (rows).}
\label{tab:gern_spangnn}
\end{table}
\endgroup

\subsection{Comparison with MoG under GraphSAGE}
\label{app:mog_graphsage}

\underline{\textbf{MoG}} \citep{Zhang:2025mog} mixes several sparsification
experts, each favoring a different structural criterion, into one
adaptive sparse graph. \edis{} instead decomposes the graph once and
composes each epoch's graph from that cached decomposition. MoG is
one of the 17 baselines in Section~\ref{sec:experiments} under our
own GCN protocol. Here we use MoG's own published GraphSAGE results
as an additional reference and run \edis{}-Lite under the same
backbone, datasets, and evaluation.

\enlargethispage{2.75\baselineskip}
\begingroup
\definecolor{mogBand}{RGB}{235,235,235}
\begin{table}[h]
\centering
\setlength{\tabcolsep}{6pt}
\renewcommand{\arraystretch}{1.0}
\small
\begin{tabular}{l cc cc cc}
\toprule
& \multicolumn{2}{c}{ogbn-arxiv (Acc.)} & \multicolumn{2}{c}{ogbn-proteins (ROC-AUC)} & \multicolumn{2}{c}{ogbn-products (Acc.)} \\
\cmidrule(lr){2-3}\cmidrule(lr){4-5}\cmidrule(lr){6-7}
Method & $\rho{=}0.30$ & $\rho{=}0.50$ & $\rho{=}0.30$ & $\rho{=}0.50$ & $\rho{=}0.30$ & $\rho{=}0.50$ \\
\midrule
MoG \citep{Zhang:2025mog}
 & \cellcolor{mogBand}$67.31$ & \cellcolor{mogBand}$69.06$
 & $76.12$ & $76.46$
 & \cellcolor{mogBand}$74.91$ & \cellcolor{mogBand}$76.98$ \\
\midrule
\textbf{EDiS-Lite}
 & \cellcolor{mogBand}$\mathbf{70.35}$ & \cellcolor{mogBand}$\mathbf{72.32}$
 & $\mathbf{79.02}$ & $\mathbf{78.81}$
 & \cellcolor{mogBand}$\mathbf{79.68}$ & \cellcolor{mogBand}$\mathbf{80.14}$ \\
\bottomrule
\end{tabular}
\vspace{5pt}
\caption{\textbf{MoG's published GraphSAGE results vs.\ EDiS-Lite.}
Sparsity ($70\%$, $50\%$ removed) is converted to
$\rho=1-\text{sparsity}$. Bold marks the higher value per column.}
\label{tab:mog_graphsage}
\end{table}
\endgroup

\newpage
\section{Limitations}
\label{app:limitations}

Our experiments focus on node classification, so the benefits of
reusable edge-disjoint decompositions for graph- and link-level
prediction remain to be established. Our default edge score,
residual selector, and extraction-depth cap all come from an
empirical sweep rather than a principled per-dataset selection rule,
and no single choice dominates across datasets, so automatic tuning
of any of the three remains open. The composed graph is not
guaranteed to remain connected at every fixed edge budget, and our
default trimming rule favors total cached weight over expected
retained score, a choice we did not test against that alternative
empirically. \edis{} also assumes the full graph is available before
extraction begins, so inductive or streaming settings where edges
arrive afterward are not addressed. Finally, our empirical study
focuses on GCN and GraphSAGE backbones, and broader architectural
validation remains future work.

\end{document}